\DeclareMathOperator{\st}{subject\; to}
\DeclareMathOperator{\trace}{tr}
\DeclareMathOperator{\grad}{grad}
\renewcommand{\mathbf}{\boldsymbol}
\newcommand{\mb}{\mathbf}
\newcommand{\mc}{\mathcal}
\newcommand{\bb}{\mathbb}
\newcommand{\msf}{\mathsf}
\newcommand{\eps}{\varepsilon}
\newcommand{\indicator}[1]{\mathbbm 1_{#1}}
\newcommand{\norm}[2]{\left\| #1 \right\|_{#2}}
\newcommand{\abs}[1]{\left| #1 \right|}
\newcommand{\innerprod}[2]{\left\langle #1,  #2 \right\rangle}
\newtheorem{claim}[theorem]{Claim}
\begin{document}

\jmlrheading{21}{2020}{1-\pageref{LastPage}}{9/19; Revised
8/20}{8/20}{19-755}{Yuexiang Zhai, Zitong Yang, Zhenyu Liao, John Wright, and Yi Ma}
\ShortHeadings{Complete Dictionary Learning over the Orthogonal Group}{Zhai, Yang, Liao, Wright, and Ma}

\title{Complete Dictionary Learning via $\ell^4$-Norm Maximization over the Orthogonal Group}

\author{\name Yuexiang Zhai$^{\dagger}$ \email ysz@berkeley.edu
       \AND
       \name Zitong Yang$^{\dagger}$ \email zitong@berkeley.edu
       \AND
       \name Zhenyu Liao$^{\ddagger}$ \email liaozhenyu2004@gmail.com
       \AND
       \name John Wright$^{\diamond}$ \email jw2966@columbia.edu
       \AND
       \name Yi Ma$^{\dagger}$ \email yima@eecs.berkeley.edu
       \AND
       \addr $^\dagger$Department of Electrical Engineering and Computer Science\\
       University of California, Berkeley, CA 94720-1770\\
       \addr $^\ddagger$Kuaishou Technology.\\
       \addr $^\diamond$Department of Electrical Engineering\\ 
       Columbia University, New York, NY, 10027
       }

\editor{Julien Mairal}

\maketitle

\begin{abstract}
This paper considers the fundamental problem of learning a complete (orthogonal) dictionary from samples of sparsely generated signals. Most existing methods solve the dictionary (and sparse representations) based on heuristic algorithms, usually without theoretical guarantees for either optimality or complexity. The recent $\ell^1$-minimization based methods do provide such guarantees but the associated algorithms recover the dictionary one column at a time. In this work, we propose a new formulation that {\em maximizes} the $\ell^4$-norm over the orthogonal group, to learn the entire dictionary. We prove that under a random data model, with nearly minimum sample complexity, the global optima of the $\ell^4$-norm are very close to signed permutations of the ground truth. Inspired by this observation, we give a conceptually simple and yet effective algorithm based on ``{\em matching, stretching, and projection}'' (MSP). The algorithm provably converges locally and cost per iteration is merely an SVD. In addition to strong theoretical guarantees, experiments show that the new algorithm is significantly more efficient and effective than existing methods, including KSVD and $\ell^1$-based methods. Preliminary experimental results on mixed real imagery data clearly demonstrate advantages of so learned dictionary over classic PCA bases.
\end{abstract}

\begin{keywords}
  sparse dictionary learning, $\ell^4$-norm maximization, orthogonal group, measure concentration, fixed point algorithm
\end{keywords}

\section{Introduction and Overview}
\subsection{Motivation}
One of the most fundamental problems in signal processing or data analysis is that given an observed signal $\mb y$, either continuous or discrete, we would like to find a transform $\mathcal{F}$ such that after applying the transform, the resulting signal $\mb x = \mathcal{F}(\mb y)$ becomes much more compact, sparse, or compressible. We believe such a compact representation $\mb x$ can help reveal intrinsic structures of the observed signal $\mb y$ and is also more amenable to storage, processing, and transmission. 

For computational purposes, the transforms considered are typically orthogonal linear transforms so that both $\mathcal{F}$ and $\mathcal{F}^{-1}$ are easy to represent and compute: In this case, we have $\mb y = \mb D \mb x$ or $\mb x= \mb D^* \mb y$ for some orthogonal matrix (or linear operator) $\mb D$. Examples include the classical Fourier transform \citep{oppenheim1999discrete,vetterli2014foundations} or various wavelets \citep{vetterli1995wavelets}. Conventionally, the best transform to use is typically by ``{\em design}'': By assuming the signals of interest have certain physical or mathematical properties (e.g. band-limited, piece-wise smooth, or scale-invariant), one may derive or design the optimal transforms associated with different classes of functions or signals. 
This classical approach has found its deep mathematical roots in functional analysis \citep{kreyszig1978introductory} and harmonic analysis \citep{stanton1981l4,katznelson2004introduction} and has seen great empirical successes in digital signal processing \citep{oppenheim1999discrete}.

Nevertheless, in the modern big data era, both science and engineering are inundated with tremendous high-dimensional data, such as images, audios, languages, and genetics etc. Many of such data may or may not belong to the classes of functions or signals for which we know the optimal transforms. Assumptions about their intrinsic (low-dim) structures are not clear enough for us to derive any new transform either. Therefore, we are compelled to change our practice and ask whether we can ``{\em learn}'' an optimal transform (if exists) directly from the observed data. That is, if we observe many, say $p$, samples $\mb y_i \in \bb R^n$ from a model:
\begin{equation*}
    \mb y_i = \mb D_o \mb x_i, \quad i = 1, \ldots, p    
\end{equation*}
where $\mb x_i \in\bb R^m$ is presumably much more compact or sparse, can we both learn $\mb D_o$ and recover the associated $\mb x_i$ from such $\mb y_i$? Here $\mb D_o$ is called the ground truth ``dictionary'' and the problem is known as ``Dictionary Learning.'' Dictionary learning is a fundamental problem in data science since finding a sparse representation of data appears in different applications such as computational neural science \citep{olshausen1996emergence,olshausen1997sparse}, machine learning \citep{argyriou2008convex,ranzato-nips-07}, and computer vision \citep{elad2006image,yang2010image,mairal2014sparse}. Note that here we know neither the dictionary $\mb D_o$ nor the hidden state $\mb x$. So in machine learning, dictionary learning belongs to the category of {\em unsupervised learning} problems, and a fundamental one that is.

In this work, we consider the problem of {\em learning a complete dictionary}\footnote{In this work, we only consider the case the dictionary is complete. The more general setting in which the dictionary $\mb D_o$ is over-complete, $m > n$, is beyond the scope of this paper.} from sparsely generated sample signals. More precisely, an $n$-dimensional sample $\mb y \in \bb R^n$ is assumed to be a sparse superposition of columns of a non-singular complete dictionary $\mb D_o \in \bb R^{n\times n}$:
$\mb y = \mb D_o \mb x,$
where $\mb x \in \bb R^n$ is a sparse (coefficient) vector. A typical statistical model for the sparse coefficient is that entries of $\mb x$ are i.i.d. Bernoulli-Gaussian $\{x_i\} \sim_{iid}\text{BG}(\theta)$\footnote{I.e., each entry $x_i$ is a product of independent Bernoulli and standard normal random variables: $x_{i}=\Omega_{i} V_{i}$, where $\Omega_{i}\sim_{iid} \text{Ber}(\theta)$ and $V_{i}\sim_{iid} \mc N(0,1)$.} \citep{spielman2012exact,sun2015complete,bai2018subgradient}. 

Suppose we are given a collection of sample signals $\mb Y=[\mb y_1,\mb y_2,\dots,\mb y_p]\in\bb R^{n\times p}$, each of which is generated as $\mb y_i = \mb D_o \mb x_i$ for a nonsingular matrix $\mb D_o$. Write $\mb X_o = [ \mb x_1, \mb x_2, \dots, \mb x_p] \in \bb R^{n \times p}$. In this notation, we have:
\begin{equation}
    \label{eq:Y=DXAssumption}
    \mb Y = \mb D_o \mb X_o.   
\end{equation}
{\em Dictionary learning} is the problem of recovering both the dictionary $\mb D_o$ and the sparse coefficients $\mb X_o$, given only the samples $\mb Y$. Equivalently, we wish to factorize $\mb Y$ as $\mb Y = \mb D \mb X$, where $\mb D$ is an estimate of the true dictionary $\mb D_o$ and $\mb X$ is the sparsest possible. 

Under the Bernoulli-Gaussian assumption, the problem of learning an arbitrary complete dictionary can be reduced to that of learning an {\em orthogonal} dictionary: As shown in \cite{sun2015complete}, when the objective is smooth, the problem can be converted to the orthogonal case through a preconditioning: \begin{equation*}{\mb Y} \leftarrow \left(\frac{1}{p\theta} \mb Y\mb Y^*\right)^{-\frac{1}{2}} \mb Y.
\end{equation*}
So without loss of generality, we can assume that $\mb D_o$ is an orthogonal matrix: $\mb D_o \in \msf O(n; \bb R)$.

Because $\mb Y$ is sparsely generated, the optimal estimate $\mb D_\star$ should make the associated coefficients $\mb X_\star$ maximally sparse. In other words, $\ell^0$-norm, the number of non-zero entries, of $\mb X_\star$ should be as small as possible, therefore, one may formulate the following optimization program to find $\mb X_\star$:
\begin{equation}
    \label{eq:L0Min}
    \min_{\mb X,\mb D}\norm{\mb X}{0},\quad \st \quad \mb {Y} = \mb {D}\mb {X}, \;\mb D \in \msf O(n; \bb R).
\end{equation}
Under fairly mild conditions, the global minimizer of the $\ell^0$-norm recovers the true dictionary $\mb D_o$ \cite{spielman2012exact}.  However, global minimization of the $\ell^0$-norm is a challenging NP-hard problem \citep{donoho2006most,candes2005decoding,natarajan1995sparse}. Traditionally, one resorts to local heuristics such as orthogonal matching pursuit, as in the KSVD algorithm \citep{aharon2006k,rubinstein2010dictionaries}.\footnote{\citep{peyre2010best,ravishankar2015l0} also provide algorithms for learning orthogonal sparsifying transformations.} This approach has been widely practiced but does not give any strong guarantees for optimality of the algorithm nor correctness of the solution, as we will see through experiments compared with our method.

Since $\ell^1$-norm minimization promotes sparsity \citep{candes2014mathematics} and the $\ell^1$-norm is convex and continuous, one may reformulate dictionary learning as an $\ell^1$-minimization problem:
\begin{equation}
    \label{eq:L1Min}
    \min_{\mb X,\mb D}\norm{\mb X}{1},\quad \st \quad \mb {Y} = \mb {D}\mb {X}, \;\mb D \in \msf O(n; \bb R).
\end{equation}
This reformulation \eqref{eq:L1Min} remains a nonsmooth optimization with nonconvex constraints, which is in general still NP-hard \citep{murty1987some}. Nevertheless, many heuristic algorithms  \citep{mairal2008discriminative,mairal2009supervised,mairal2012task} have attempted to reformulate optimization  \eqref{eq:L1Min} as an unconstrained optimization problem with $\ell^1$-regularization by removing the constraint $\mb Y = \mb {DX}$.

Although the $\ell^0$- or $\ell^1$-minimization has been widely practiced in dictionary learning, rigorous justification for optimality and correctness is only provided recently.
\cite{geng2014local} is the first to show the local optimality of the $\ell^1$-minimization. \cite{spielman2012exact} further proves that a complete (square and invertible) $\mb D$ can be recovered from $\mb Y$, when each column of $\mb X$ contains no more than $O(\sqrt n)$ of nonzero entries. Subsequent works \citep{agarwal2013exact,agarwal2014learning,arora2014new,arora2015simple} have provided provable algorithms for overcomplete dictionary learning, under the assumption that each column of $X$ has $\tilde{O}(\sqrt n)$ nonzero entries.\footnote{$\tilde{O}$ suppresses logarithm factors.}

In this work, we mainly focus on {\it complete dictionary learning}, which implies $\text{row}(\mb Y) = \text{row}(\mb X)$. \cite{spielman2012exact} has proposed to find the sparsest vector $\mb d^*\mb Y$ in $\text{row}(\mb Y)$ one by one via solving the following optimization: 
\begin{equation}
    \label{eq:L1MinByColumn}
    \min_{\mb d\in\bb R^n} \norm{\mb d^*\mb Y}{1},\quad \st \quad \mb d\neq \mb 0
\end{equation}
$n$ times, instead of solving the hard nonconvex optimization \eqref{eq:L1Min} directly. \eqref{eq:L1MinByColumn} is easier to solve, since it can be further reduced to a linear programming. \cite{sun2015complete} has proposed a new formulation with spherical constraint that finds one column of the dictionary via solving:
\begin{equation}
    \label{eq:L1MinSphereByColumn}
    \min_{\mb d\in\bb R^n} \norm{\mb d^*\mb Y}{1},\quad \st \quad \norm{\mb d}{2}=1.
\end{equation}

For escaping saddle points, \cite{sun2015complete} has proposed a provably correct second-order Riemannian Trust Region method \citep{absil2009optimization} to solve \eqref{eq:L1MinSphereByColumn} and improved the sparsity level of $\mb X$ to constant.\footnote{Each column of $\mb X$ can contain $O(n)$ non zero entries.} However, as addressed by \cite{gilboa2018efficient}, the computational complexity of a second-order algorithm is high, not to mention one needs to solve \eqref{eq:L1MinSphereByColumn} $n$ times! To mitigate the computation complexity, \cite{gilboa2018efficient} suggests that the first-order gradient descent method with random initialization has the same performance as a second-order one, and \cite{bai2018subgradient} shows that a randomly initialized first-order projected subgradient descent is able to solve \eqref{eq:L1MinSphereByColumn}. But these approaches fail to overcome the main cause for the high complexity -- one needs to break down the 
complete dictionary learning problem \eqref{eq:L1Min} into solving $n$ optimization programs like \eqref{eq:L1MinByColumn} (or \eqref{eq:L1MinSphereByColumn}).

Besides the $\ell^1$-minimization based dictionary learning framework, works based on the sum-of-square (SoS) SDP hierarchy \citep{Barak-2015,ma2016polynomial,schramm2017fast} also provide guarantee for exact recovery of (overcomplete) dictionary learning problem in polynomial time, under some specific statistical assumption of the data model. But one still needs to solve the SoS based SDP Problem $n$ times to recover a dictionary with $n$ components,\footnote{See Theorem 1.1 in \cite{schramm2017fast}} let alone the high computational complexity for solving a high-dimensional SDP programming each time \citep{qu2014finding,bai2018subgradient}.

\subsection{Our Approach and Connection to Prior Works}
In this paper, we show that one can actually efficiently learn a complete (orthogonal) dictionary holistically via solving one $\ell^4$-norm optimization over the entire orthogonal group $\msf O(n;\bb R)$:
\begin{equation}
    \label{eq:L4MaxOrthogonalGroupDY}
    \max_{\mb D} \norm{\mb D^*\mb Y}{4}^4,\quad \st \quad  \mb D\in \msf O(n;\bb R),
\end{equation}
where the $\ell^4$-norm of a matrix means the sum of $4^\text{th}$ powers of all entries: $\forall \mb A\in \bb R^{n\times m},\norm{\mb A}{4}^4=\sum_{i,j}a_{i,j}^4$. The intuition for \eqref{eq:L4MaxOrthogonalGroupDY} comes from:
\begin{equation}
    \label{eq:L4MaxOrthogonalGroupX}
    \max_{\mb X,\mb D}\norm{\mb X}{4}^4,\quad \st \quad \mb {Y} = \mb {D}\mb {X},\mb D\in \msf O(n;\bb R),
\end{equation}
where maximizing the $\ell^4$-norm of $\mb X$ (over a sphere) promotes ``spikiness'' or ``sparsity'' of $\mb X$ \citep{zhang2018structured} . It is easy to see this as the sparsest points on a unit $\ell^2$-sphere (points $(0,1)$, $(0,-1)$, $(1,0)$, and $(-1,0)$) have the smallest $\ell^1$-norm and largest $\ell^4$-norm, as shown in Figure \ref{fig:2dLpUnitBall}. Since the columns of orthogonal matrices have unit norm, the constraint $\mb D\in \msf O(n;\bb R)$ can be viewed as simultaneously enforcing orthonormal constraints on $n$ vectors on the unit $\ell^2$-sphere $\bb S^1$. Moreover, comparing to the $\ell^1$-norm, the $\ell^4$-norm objective is everywhere smooth, so we expect it is amenable to better optimization. 

\begin{figure}[H]
    \centering
    \includegraphics[width=0.5\textwidth]{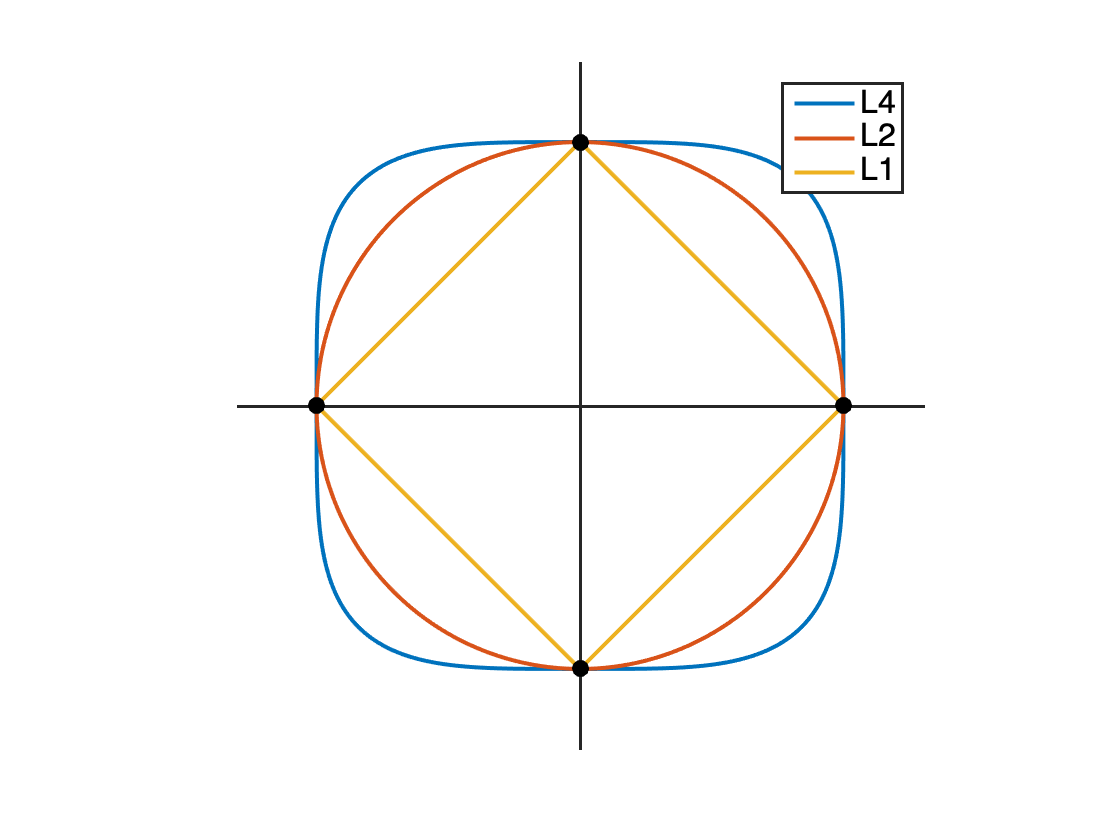}
    \caption{Unit $\ell^1$-,$\ell^2$-, and $\ell^4$- spheres in $\bb R^{2}$, a similar picture can be found in Figure 1 of \cite{li2018global}.}
    \label{fig:2dLpUnitBall}
\end{figure}

\subsubsection{Spherical Harmonic Analysis} The property of the $\ell^4$-norm has long been realized and used in seeking (orthonormal) functions with similar properties since 1970's, if not any earlier. For instance for spherical harmonics, the Stanton-Weinstein Theorem \citep{stanton1981l4,Lu-1987} have shown that ``among all the $\ell^2$-normalized spherical harmonics of a given degree, the $\ell^4$-norm is locally maximized by the `highest-weight' function,'' which among all the eigenfunctions of the Laplacian on the sphere, is the ``most concentrated'' in measure (see Theorem 1 of \cite{stanton1981l4}). In quantum mechanics, such functions represent trajectories that are the most probable and most closely approximate classical trajectories. 

\subsubsection{Independent Component Analysis} We should note that $4^\text{th}$-order statistical cumulants have been widely used in blind source separation or independent component analysis (ICA) since the 1990's, see \cite{Hyvarinen1997a,Hyvarinen1997-NC} and references therein. So if $\mb x$ are $n$ independent components, by finding extrema of the so-called {\em kurtosis}:
$\text{kurt}(\mb d^* \mb y) \doteq \bb E[(\mb d^* \mb y)^4] - 3\bb E[(\mb d^* \mb y)^2]^2,$
one can identify one independent (non-Gaussian) component $x_i$ at a time. Algorithm wise, this is similar to using the $\ell^1$-minimization \eqref{eq:L1MinSphereByColumn} to identify one column $\mb d_i$ at a time for $\mb D$. Fast fixed-point like algorithms have been developed for this purpose \citep{Hyvarinen1997a,Hyvarinen1997-NC}. If $\mb x$ are indeed i.i.d. Bernoulli-Gaussian, with $\|\mb d\|_2^2 = 1$, the second term in $\text{kurt}(\mb d^* \mb y)$ would become a constant and the objective of ICA coincides with maximizing the sparsity-promoting $\ell^4$-norm of a vector over a sphere.

\subsubsection{Sum of Squares} The use of $\ell^4$-norm can also be justified from the perspective of sum of squares (SoS). The works of \cite{Barak-2015,ma2016polynomial,schramm2017fast} show that in theory, when $\mb x$ is sufficiently sparse, one can utilize properties of higher order sum of squares polynomials (such as the fourth order polynomials) to correctly recover $\mb D$. Although \cite{schramm2017fast} has improved proposed faster tensor decomposition based on SoS method, again the algorithm only recovers one column $\mb d_i$ at a time. 

\subsubsection{Blind Deconvolution} Recent works in blind deconvolution \citep{zhang2018structured,li2018global} have also explored the sparsity promoting property of the $\ell^4$-norm in their objective function. \cite{zhang2018structured,li2018global} have shown that, for any filter on a unit sphere, all local maxima of the $\ell^4$-objective are close to the inverse of the ground truth filter (up to the intrinsic sign and shift ambiguity). Moreover, the global geometry of the $\ell^4$-norm over the sphere is good -- all saddle points have negative curvatures. Such nice global geometry guarantees a randomly initialized first-order Riemannian gradient descent algorithm to escape saddle points and find the ground truth, for both single channel \citep{zhang2018structured} and multi-channel \citep{li2018global,qu2019nonconvex} tasks.

\subsection{Main Results}

We shall first note that there is an intrinsic ``signed permutation'' ambiguity in all dictionary learning formulation \eqref{eq:L0Min}, \eqref{eq:L1Min}, and \eqref{eq:L4MaxOrthogonalGroupX}. For any input data matrix $\mb Y$, suppose $\mb D_\star\in \msf O(n;\bb R)$ is the optimal orthogonal dictionary that gives the sparsest coefficient matrix $\mb X_\star$ satisfying $\mb Y = \mb D_\star\mb X_\star$. Then for any matrix $\mb P$ in the signed permutation group $\msf{SP}(n)$, the group of orthogonal matrices that only contain $0, \pm1$, we have:
$$\mb Y = \mb D_\star \mb X_\star = \mb D_\star\mb P \mb P^* \mb X_\star,$$
where $\mb P^* \mb X_\star$ is equally sparse as $\mb X_\star$ and $\mb D_\star\mb P\in \msf O(n;\bb R)$. So we can only expect to recover the correct dictionary (and sparse coefficient matrix) {\em up to an arbitrary signed permutation.} Therefore, we say the ground truth dictionary $\mb D_o$ is successfully recovered, if any {\em signed permuted} version $\mb D_o\mb P$ is found.\footnote{In mathematical terms, we are looking for a solution in the quotient space between the orthogonal group and the signed permutation group: $\msf O(n;\bb R)/\msf{SP}(n)$.} Unlike approaches \citep{spielman2012exact,sun2015complete,bai2018subgradient} that solve one column at a time for $\mb D$, we here attempt recover the entire dictionary $\mb D$ from solving the problem \eqref{eq:L4MaxOrthogonalGroupDY}. The signed permutation ambiguity would create numerous equivalent global maximizers, which poses a serious challenge to analysis and optimization. 
 
In this paper, we adopt the Bernoulli-Gaussian model as in prior works \citep{spielman2012exact,sun2015complete,bai2018subgradient}. We assume our observation matrix $\mb Y\in \bb R^{n\times p}$ is produced by the product of a ground truth orthogonal dictionary $\mb D_o$, and a Bernoulli-Gaussian matrix $\mb X_o\in \bb R^{n\times p}$:
\begin{equation}
    \label{eq:BGDictionaryLearningFormulation}
    \mb Y = \mb D_o\mb X_o,\quad \mb D_o \in \msf O(n;\bb R),\; \{\mb X_o\}_{i,j} \sim_{iid} \text{BG}(\theta). 
\end{equation}
The Bernoulli-Gaussian model can be considered as a prototype for dictionary learning because one may adjust $\theta$ to control the sparsity level of the ground truth $\mb X_o$. With the Bernoulli-Gaussian assumption, we now state our main result.

\subsubsection{Correctness of the Proposed Objective Function}
\begin{theorem}[Correctness of Global Optima, informal version of Theorem \ref{Thm:MainResult}]
        \label{Thm:InformalMainResult}
        $\forall \theta\in(0,1)$, let $\mb X_o\in \bb R^{n\times p}$, $ x_{i,j}\sim_{iid}\text{BG}(\theta)$, $\mb D_o\in \msf O(n;\bb R)$ an arbitrary orthogonal matrix, and $\mb Y = \mb D_o\mb X_o$. Suppose $\hat{\mb A}_\star$ is a global maximizer of the optimization problem:
    \begin{equation}
        \label{eq:L4OrthObjective}
        \max_{\mb A} \norm{\mb A\mb Y}{4}^4,\quad \st \quad \mb A \in \msf O(n;\bb R), 
    \end{equation}
    then for any $\eps\in[0,1]$, there exists a signed permutation matrix $\mb P \in \msf{SP}(n)$, such that
    \begin{equation}
        \label{eqn:main-result-perturb}
        \frac{1}{n}\norm{\hat{\mb A}_\star^*-\mb D_o\mb P}{F}^2\leq C\eps,
     \end{equation}
    holds with high probability, when $p$ is large enough, and $C$ is a constant depends on $\theta$.
    
\end{theorem}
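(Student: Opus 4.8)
\emph{Strategy and reduction.} The plan is to reduce the problem to a statement about the orthogonal group acting on $\mb X_o$, compute the population objective, upgrade it to a uniform concentration bound over $\msf O(n;\bb R)$, and then convert a small optimality gap into a small distance to $\msf{SP}(n)$ by an elementary rounding argument. First I would substitute $\mb Z := \mb A\mb D_o$. Since $\mb D_o\in\msf O(n;\bb R)$, the map $\mb A\mapsto\mb A\mb D_o$ is a bijection of $\msf O(n;\bb R)$, so \eqref{eq:L4OrthObjective} is equivalent to $\max_{\mb Z\in\msf O(n;\bb R)}\norm{\mb Z\mb X_o}{4}^4$, whose maximizer $\mb Z_\star$ satisfies $\hat{\mb A}_\star^*=\mb D_o\mb Z_\star^*$; as left multiplication by $\mb D_o$ is an $\norm{\cdot}{F}$-isometry and $\mb P\mapsto\mb P^*$ preserves $\msf{SP}(n)$, it suffices to show $\min_{\mb P\in\msf{SP}(n)}\norm{\mb Z_\star-\mb P}{F}^2\le Cn\eps$. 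Next, using independence of the entries of $\mb X_o$ together with $\expect{x^2}=\theta$ and $\expect{x^4}=3\theta$ for $x\sim\text{BG}(\theta)$, a fourth-moment expansion gives for any unit vector $\mb z$ that $\expect{(\mb z^*\mb x)^4}=3\theta^2+3\theta(1-\theta)\sum_j z_j^4$, hence
\[
  \tfrac1p\,\expect{\norm{\mb Z\mb X_o}{4}^4}=3\theta^2 n+3\theta(1-\theta)\norm{\mb Z}{4}^4 .
\]
Because $\theta<1$, on $\msf O(n;\bb R)$ this is maximized exactly where $\norm{\mb Z}{4}^4$ is, and $\norm{\mb Z}{4}^4\le\big(\max_{i,j}z_{ij}^2\big)\norm{\mb Z}{F}^2\le n$ with equality iff $\mb Z\in\msf{SP}(n)$; so the population problem already has precisely the right maximizers.

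\emph{Uniform concentration.} I would then prove that for any target $\delta>0$, with high probability
\[
  \sup_{\mb Z\in\msf O(n;\bb R)}\Big|\tfrac1p\norm{\mb Z\mb X_o}{4}^4-3\theta^2 n-3\theta(1-\theta)\norm{\mb Z}{4}^4\Big|\le\delta,
\]
provided $p\ge\mathrm{poly}(n)\cdot\mathrm{polylog}(np)/\delta^2$. Pointwise this is a Bernstein-type bound: $\tfrac1p\norm{\mb Z\mb X_o}{4}^4$ is an average of $p$ i.i.d.\ copies of a fixed degree-$4$ polynomial of Gaussian and Bernoulli variables, which is sub-Weibull of order $1/2$ and so deviates from its mean by $\lesssim\mathrm{poly}(n)/\sqrt p$. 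Uniformity over the $n^2$-dimensional manifold $\msf O(n;\bb R)$ would then follow from an $\eta$-net of cardinality $(C/\eta)^{n^2}$, a union bound, and a Lipschitz estimate for $\mb Z\mapsto\tfrac1p\norm{\mb Z\mb X_o}{4}^4$ whose constant is controlled by $\norm{\mb X_o}{\infty}\lesssim\sqrt{\log(np)}$, after choosing $\eta$ polynomially small in $n$.

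\emph{Rounding and conclusion.} Comparing the objective at $\mb Z_\star$ with its value at any fixed $\mb P_0\in\msf{SP}(n)$ and invoking the uniform bound twice gives $n-\norm{\mb Z_\star}{4}^4\le\eps':=\tfrac{2\delta}{3\theta(1-\theta)}$. It then remains to prove a deterministic lemma: if $\mb Z\in\msf O(n;\bb R)$ and $n-\norm{\mb Z}{4}^4\le\eps'$ for a sufficiently small absolute constant $\eps'$, then $\min_{\mb P\in\msf{SP}(n)}\norm{\mb Z-\mb P}{F}^2\le 2\eps'$. For this, let $m_i=\max_j z_{ij}^2$ with $j(i)$ an index attaining it; from $\sum_j z_{ij}^4\le m_i$ one gets $\sum_i(1-m_i)\le n-\norm{\mb Z}{4}^4\le\eps'$ and $\norm{\mb z_i-\mathrm{sign}(z_{i,j(i)})\,\mb e_{j(i)}}{2}^2\le 2(1-m_i)$; orthonormality of the rows forces $i\mapsto j(i)$ to be a bijection once $\eps'$ is small (two rows sharing $j(i)$ would be nearly parallel, contradicting their orthogonality), so $\mb P$ with $P_{i,j(i)}=\mathrm{sign}(z_{i,j(i)})$ lies in $\msf{SP}(n)$ and $\norm{\mb Z-\mb P}{F}^2\le 2\eps'$. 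Tracing constants back and choosing $\delta$ a suitable $\theta$-dependent quantity of order $\min\{1,n\eps\}$ then yields \eqref{eqn:main-result-perturb} for $p$ large enough, with $C$ depending only on $\theta$.

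\emph{Main obstacle.} The heart of the argument is the uniform concentration step: establishing a deviation inequality for heavy-tailed (degree-$4$ polynomial) statistics that holds simultaneously over the whole $n^2$-dimensional orthogonal group, and pinning down the resulting dependence of the required sample size $p$ on $n$ and $\eps$. By comparison, the population moment identity and the rounding lemma are elementary.
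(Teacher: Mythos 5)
Your proposal is correct and follows essentially the same route as the paper: the change of variables $\mb Z=\mb A\mb D_o$, the population identity $\tfrac1p\expect{\norm{\mb Z\mb X_o}{4}^4}=3\theta^2 n+3\theta(1-\theta)\norm{\mb Z}{4}^4$ (Lemma \ref{lemma:orthproperty}), uniform concentration over $\msf O(n;\bb R)$ via truncation, an $\eps$-net, and Bernstein's inequality (Lemma \ref{lemma:HatfUnionConcentrationBound}), and the rounding step converting $n-\norm{\mb Z_\star}{4}^4\le\eps'$ into proximity to a signed permutation (Lemmas \ref{lemma:G(A)bound} and \ref{lemma:L4ExtremaBoundOrthGroup}, whose row-wise proof is the same argument you give, including the orthogonality-forces-bijection step). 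The only differences are cosmetic (sub-Weibull phrasing versus explicit truncation-plus-Bernstein, and a less explicit sample-complexity bound than the paper's $p=\Omega(\theta n^2\ln n/\eps^2)$), which is fine for this informal statement.
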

Theorem \ref{Thm:InformalMainResult} is obtained through the following line of reasoning: 1) $\forall \mb A \in \msf O(n;\bb R)$, we can view  $\frac{1}{p}\norm{\mb A\mb Y}{4}^4 = \frac{1}{p}\norm{\mb A\mb D_o\mb X_o}{4}^4=\frac{1}{p}\sum_{j=1}^p \norm{\mb A\mb D_o\mb x_j}{4}^4$ as the mean of $p$ i.i.d. random variables, so it will concentrate to its expectation $\bb E_{\mb x_j} \norm{\mb A \mb D_o \mb x_j}{4}^4$; 2) the value of $\bb E_{\mb x_j} \norm{\mb A \mb D_o \mb x_j}{4}^4$ is largely characterized by $\norm{\mb A\mb D_o}{4}^4$, a deterministic function (with respect to $\mb A$) on the orthogonal group, whose global optima $\mb A_\star$ satisfy $\mb A_\star = \mb P^*\mb D^*_o,\forall \mb P \in \msf{SP}(n)$. Therefore, when $p$ is large enough, maximizing $\norm{\mb A\mb Y}{4}^4$ over the orthogonal
group is equivalent to (with high probability) maximizing the deterministic objective $\norm{\mb A\mb D_o}{4}^4$ over the orthogonal group, which yields the desire result. Formal statements and proofs are given in Section \ref{sec:StatCharacter} and the Appendices.

\subsubsection{A Fast Optimization Algorithm}
\label{sec:ResultAlgorithm}
Unlike almost all previous algorithms that find the correct dictionary one column $\mb d_i$ at a time, in Section \ref{sec:Algorithm} we introduce a novel {\em matching, stretching, and projection} (MSP) algorithm that solves the program \eqref{eq:L4OrthObjective} directly for the entire $\mb D \in \msf O(n;\bb R)$. The MSP algorithm directly computes (transpose of) the optimal dictionary $\mb A_\star$ as the ``fixed point'' to the following iteration:
\begin{equation}
    \mb A_{t+1} = \mathcal{P}_{\mathsf O(n;\bb R)} \big[(\mb A_t \mb Y)^{\circ 3} \mb Y^*\big],
    \label{eqn:fixed-point-iteration}
\end{equation}
where $\mathcal{P}_{\mathsf O(n;\bb R)}(\cdot)$ is the projection onto the orthogonal group $\mathsf O(n;\bb R)$, which can be easily calculated from SVD (see Lemma \ref{lemma:OrthProj}). Meanwhile, the MSP algorithm also efficiently maximizes the deterministic objective $\norm{\mb A\mb D_o}{4}^4$ over $\msf O(n;\bb R)$ by the following iteration:
\begin{equation}
    \mb A_{t+1} = \mathcal{P}_{\mathsf O(n;\bb R)} \big[(\mb A_t \mb D_o)^{\circ 3} \mb D_o^*\big].
    \label{eqn:fixed-point-iteration-det}
\end{equation}
Statistical analysis for proving Theorem \ref{Thm:InformalMainResult} suggests that estimates from random samples converge to their expectation. For the deterministic objective $\norm{\mb A \mb D_o}{4}^4$, we further show that the proposed algorithm converges fast with a {\em cubic rate} around each global maximizer. 

Essentially, the update of the MSP algorithm \eqref{eqn:fixed-point-iteration}, \eqref{eqn:fixed-point-iteration-det} are performing projected gradient ascend with {\it infinite} step size with respect to objective function $\norm{\mb{AY}}{4}^4$, $\norm{\mb{AD}_o}{4}^4$, over $\msf O(n;\bb R)$ respectively. The MSP algorithm is similar to the FastICA algorithm \citep{Hyvarinen1997-NC} for independent component analysis (ICA) and such similarity is extensively discussed in a follow-up work \citep{zhai2019understanding}.

\begin{theorem}[Cubic Convergence Rate, informal version of Theorem \ref{Thm:MSPLocalConvergence}]
\label{Thm:MSPLocalConvergenceInformal}
Given an orthogonal matrix $\mb A\in \msf O(n;\bb R)$, if $\norm{\mb A-\mb I}{F}^2=\eps$ for a small $\eps<0.579$, and let $\mb A^\prime$ denote the result after one iteration of our proposed MSP algorithm \eqref{eqn:fixed-point-iteration-det}, then we have $\norm{\mb A^\prime - \mb I}{F}^2 \leq O(\eps^3)$.
\end{theorem}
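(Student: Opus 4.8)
I would first strip $\mb D_o$ from the problem. Since $\mathcal{P}_{\msf O(n;\bb R)}(\mb M\mb Q)=\mathcal{P}_{\msf O(n;\bb R)}(\mb M)\mb Q$ for every $\mb Q\in\msf O(n;\bb R)$, the substitution $\mb B=\mb A\mb D_o$ turns iteration~\eqref{eqn:fixed-point-iteration-det} into $\mb B'=\mathcal{P}_{\msf O(n;\bb R)}(\mb B^{\circ 3})$, for which $\mb I$ is a fixed point; so it suffices to treat $\mb D_o=\mb I$, i.e.\ $\mb A'=\mathcal{P}_{\msf O(n;\bb R)}(\mb A^{\circ 3})$, and show $\norm{\mb A'-\mb I}{F}^2=O(\eps^3)$ whenever $\mb A\in\msf O(n;\bb R)$ with $\norm{\mb A-\mb I}{F}^2=\eps$. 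Writing $\mb A=\mb I+\mb\Delta$, the next step is to record what orthogonality forces on $\mb\Delta$: unit-norm columns give $2\Delta_{ii}+\Delta_{ii}^2+\sum_{k\neq i}\Delta_{ki}^2=0$, so $\Delta_{ii}\in(-1,0]$ with $\abs{\Delta_{ii}}\le\sum_{k\neq i}\Delta_{ki}^2$, and summing (using $\sum_i\big(\sum_{k\neq i}\Delta_{ki}^2\big)^2\le\big(\sum_i\sum_{k\neq i}\Delta_{ki}^2\big)^2$) shows the diagonal mass $s:=\sum_i\Delta_{ii}^2$ is $O(\eps^2)$ while the off-diagonal mass $r:=\sum_{i\neq j}\Delta_{ij}^2=\eps-s=\eps-O(\eps^2)$; also $\Delta_{ij}^2\le\eps$ entrywise.

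\textbf{Effect of the cube.} I would then write $\mb M:=\mb A^{\circ 3}=\mb I+\mb E$, where $E_{ij}=\Delta_{ij}^3$ for $i\neq j$ and $E_{ii}=(1+\Delta_{ii})^3-1=\Delta_{ii}(3+3\Delta_{ii}+\Delta_{ii}^2)$. Because $0\le 3+3\Delta_{ii}+\Delta_{ii}^2\le 3$ on $(-1,0]$, the diagonal block obeys $\norm{\mb E_{\mathrm{diag}}}{F}^2\le 9s=O(\eps^2)$, and by monotonicity of $\ell^p$-norms the off-diagonal block obeys $\norm{\mb E_{\mathrm{off}}}{F}^2=\sum_{i\neq j}(\Delta_{ij}^2)^3\le r^3\le\eps^3$. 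So cubing shrinks the (already dominant) off-diagonal part of the perturbation from size $\eps$ to size $\eps^{3/2}$, whereas the diagonal part of $\mb E$ is still of size $\eps$ --- but it is exactly symmetric, and the projection will take care of that.

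\textbf{Variational estimate for the projection.} The heart of the argument is a one-line variational inequality. Since $\mb A'=\arg\max_{\mb Q\in\msf O(n;\bb R)}\innerprod{\mb Q}{\mb M}$ and $\mb I\in\msf O(n;\bb R)$, we have $\innerprod{\mb A'}{\mb M}\ge\innerprod{\mb I}{\mb M}$; inserting $\mb M=\mb I+\mb E$ and $\trace(\mb A')=n-\tfrac12\norm{\mb A'-\mb I}{F}^2$ yields
\begin{equation*}
\tfrac12\norm{\mb A'-\mb I}{F}^2\ \le\ \innerprod{\mb A'-\mb I}{\mb E}\ =\ \innerprod{\mb A'-\mb I}{\mb E_{\mathrm{diag}}}+\innerprod{\mb A'-\mb I}{\mb E_{\mathrm{off}}}.
\end{equation*}
I would bound the first term using that orthogonality of $\mb A'$ forces $(\mb A'-\mb I)_{ii}=-\tfrac12\|\mathrm{col}_i(\mb A'-\mb I)\|_2^2$, hence $\|\mathrm{diag}(\mb A'-\mb I)\|_F\le\tfrac12\norm{\mb A'-\mb I}{F}^2$ and (Cauchy--Schwarz) $\innerprod{\mb A'-\mb I}{\mb E_{\mathrm{diag}}}\le\tfrac12\norm{\mb A'-\mb I}{F}^2\norm{\mb E_{\mathrm{diag}}}{F}$; the second term I would bound by Cauchy--Schwarz directly, $\innerprod{\mb A'-\mb I}{\mb E_{\mathrm{off}}}\le\norm{\mb A'-\mb I}{F}\norm{\mb E_{\mathrm{off}}}{F}$. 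With $\gamma:=\norm{\mb A'-\mb I}{F}$ this reads $\tfrac12\gamma^2\le\tfrac12\gamma^2\norm{\mb E_{\mathrm{diag}}}{F}+\gamma\norm{\mb E_{\mathrm{off}}}{F}$, and once $\norm{\mb E_{\mathrm{diag}}}{F}<1$ it rearranges to $\gamma\le 2\norm{\mb E_{\mathrm{off}}}{F}/(1-\norm{\mb E_{\mathrm{diag}}}{F})=O(\eps^{3/2})$, i.e.\ $\norm{\mb A'-\mb I}{F}^2=O(\eps^3)$.

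\textbf{The threshold and the main obstacle.} The explicit bound $\eps<0.579$ should be exactly the range in which the above stays legitimate: it keeps $\mb A^{\circ 3}$ nonsingular (so the projection is single-valued) and keeps $\norm{\mb E_{\mathrm{diag}}}{F}<1$ so the last rearrangement is allowed, and the precise constant will fall out of tracking the bounds in the first two steps. I expect the third step to be the real obstacle: a naive polar-perturbation estimate gives only $\gamma=O(\norm{\mb E}{F})=O(\eps)$ and hence no improvement, because $\mb E$ has an $O(\eps)$ diagonal; the gain comes entirely from recognizing that the orthogonal projection annihilates the first-order (symmetric) part of $\mb E$, and from implementing this quantitatively --- pairing the $O(\eps)$ symmetric diagonal of $\mb E$ only against the $O(\gamma^2)$ diagonal deviation of the orthogonal matrix $\mb A'$, and pairing the genuinely small $O(\eps^{3/2})$ off-diagonal of $\mb E$ against all of $\mb A'-\mb I$. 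Making that split and the accompanying constant-chasing airtight, leaning on the structural fact that $\mb\Delta$'s own diagonal is already $O(\eps^2)$, is where the work lies.
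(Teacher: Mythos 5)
Your proposal is correct, but it proves the cubic rate by a genuinely different route than the paper. The paper (proof of Theorem \ref{Thm:MSPLocalConvergence}) uses the same diagonal/off-diagonal split and the same key estimate $\norm{\mb N^{\circ 3}}{F}\le\norm{\mb N}{F}^3\le\eps^{3/2}$, but then views $\mb A^{\circ 3}$ as a perturbation of the positive diagonal matrix $\mb D^{\circ 3}$ (whose polar factor is $\mb I$) and invokes external matrix-perturbation results --- Li's bound for the unitary polar factor (Lemma \ref{lemma:PolarPerturbation}) together with a singular-value perturbation lemma (Lemma \ref{lemma:SingularValuePerturbation}) and the trace-based lower bound $\sigma_n(\mb D^{\circ3})\ge(1-\eps/2)^3$ --- to get $\norm{\mb A'-\mb I}{F}\le 2\eps^{3/2}\big/\big(2(1-\eps/2)^3-\eps^{3/2}\big)$, from which both the cubic rate and the explicit contraction threshold $\eps<0.579$ fall out. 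You instead dispense with perturbation theory entirely: the optimality inequality $\innerprod{\mb A'}{\mb A^{\circ3}}\ge\innerprod{\mb I}{\mb A^{\circ3}}$ for the projection, combined with the fact that the diagonal of an orthogonal matrix near $\mb I$ deviates only quadratically (used once for $\mb A$, to show $\sum_i\Delta_{ii}^2=O(\eps^2)$, and once for $\mb A'$, to pair the $O(\eps)$ symmetric diagonal of $\mb E$ against an $O(\gamma^2)$ quantity), yields $\gamma\le 2\norm{\mb E_{\mathrm{off}}}{F}/(1-\norm{\mb E_{\mathrm{diag}}}{F})=O(\eps^{3/2})$ in a self-contained way; it also isolates conceptually why a naive polar-perturbation estimate would only give $O(\eps)$. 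The one thing your write-up does not deliver is the explicit range: with your stated bound $\norm{\mb E_{\mathrm{diag}}}{F}\le 3\eps$ the final rearrangement requires $\eps<1/3$, so matching the paper's threshold $0.579$ would need the sharper bookkeeping you flag (e.g.\ bounding $\abs{E_{ii}}=1-(1+\Delta_{ii})^3$ directly rather than by $3\abs{\Delta_{ii}}$); since the statement asserts only $\norm{\mb A'-\mb I}{F}^2\le O(\eps^3)$ for small $\eps$, this is a matter of constants rather than a gap in the argument. Also, your side remark that $0.579$ is what keeps the projection single-valued is not quite how the paper obtains it --- there it is the contraction condition $2\eps-2(1-\eps/2)^3-\eps^{3/2}<0$ --- but nothing in your argument depends on that remark.
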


In Theorem \ref{Thm:MSPLocalConvergenceInformal}, showing the local convergence of to identity matrix a general orthogonal matrix $\mb A$ to $\mb I$  suffices to characterize the local convergence of $\mb {AD}_o$ to a signed permutation matrix $\mb P$, because of the signed permutation symmetry in the $\ell^4$ norm and the orthogonal invariant of SVD. More details are provided in Section \ref{sec:StatCharacter} and Section \ref{sec:Algorithm}.  

Although Theorem \ref{Thm:InformalMainResult} characterizes the properties of the randomized objective $\norm{\mb {AY}}{4}^4$ while Theorem \ref{Thm:MSPLocalConvergenceInformal} describes a deterministic result, they are highly related with each other. We use iteration \eqref{eqn:fixed-point-iteration} to maximize the randomized objective $\norm{\mb {AY}}{4}^4$ of Theorem \ref{Thm:InformalMainResult} and Theorem \ref{Thm:MSPLocalConvergenceInformal} characterizes the local convergence of iteration \eqref{eqn:fixed-point-iteration-det}. $(\mb {AY})^{\circ3}\mb Y^*$ in \eqref{eqn:fixed-point-iteration} concentrates to its expectation when there is enough samples and the expectation $\bb E[(\mb {AY})^{\circ3}\mb Y^*]$ highly depends on $(\mb {AD}_o)^{\circ3}\mb D_o$ of \eqref{eqn:fixed-point-iteration-det}. Such algorithmic relationship between \eqref{eqn:fixed-point-iteration} and \eqref{eqn:fixed-point-iteration-det} is discussed in Proposition \ref{prop:L4MSPExp} and Proposition \ref{prop:GradHatfUnionConcentrationBound}
of Section \ref{sec:Algorithm}.

As our algorithm is very efficient and scalable, we can test it over very large range of dimensions and settings. Extensive simulations suggest that the MSP algorithm converges globally to the correct solution under broad conditions. We give a global convergence proof for the case $n=2$ (on $\msf O(2;\bb R)$) and conjecture that similar results hold for arbitrary $n$ (under mild conditions). Extensive experiments show that the algorithm is far more efficient than existing heuristic algorithms and (Riemannian) gradient or subgradient based algorithms. With this efficient algorithm, we characterize empirically the range of success for the program \eqref{eq:L4OrthObjective}, which goes well beyond any existing theoretical guarantees \citep{sun2015complete,bai2018subgradient} for the complete dictionary case.

\subsubsection{Observations and Implications} 
Notice that at first sight, the optimization problem associated with dictionary learning is highly nonconvex, with nontrivial orthogonal constraints, and with numerous critical points and ambiguities. Hence, understandably, many recent approaches focus on introducing regularization to the objective function so as to relax the constraints  \citep{aharon2006k,mairal2008discriminative,mairal2009supervised,mairal2012task,wu2015local,wang2019unique} or analyzing and utilizing local information and design heuristic or gradient descent schemes for such objective functions (with or without regularization) \citep{agarwal2014learning,arora2015simple,ge2015escaping,ma2017implicit,li2018learning,du2018gradient,allen2018convergence,davis2018stochastic}.

However, in our work, we have observed a surprising phenomenon that is rather contrary to conventional views: The discrete signed permutation symmetry $\msf{SP}(n)$ associated with the orthogonal group $\msf O(n;\bb R)$ are beneficial. In our work, they both play an important role in making the algorithm efficient and effective, instead of being nuisances or difficulties to be dealt with. As we will see, such discrete symmetry of the orthogonal group makes the global landscape of the objective function amenable to global convergence and enables a fixed-point type algorithm that converges at a super-linear rate to a correct solution in the quotient space $\msf O(n;\bb R)/\msf{SP}(n)$. Similar phenomena that symmetry facilitates global convergence of non-convex programs have been also been observed and reported in recent works \citep{ge2015escaping,sun2015complete,zhang2018structured,li2018global,chi2018nonconvex,kuo2019geometry}. This is a direction that deserves better and deeper study in the future which encourages significant confluence of geometry, algebra, and statistics.

\subsection{Notations}
We use a bold uppercase and a bold lowercase letter to denote a matrix and a vector, respectively: $\mb X\in \bb R^{n\times p}, \mb x\in \bb R^n$. Moreover, for a matrix $\mb X\in \bb R^{n\times p}$, we use $\mb x_j\in \bb R^n,\forall j\in [p]$ to denote its $j^{\text{th}}$ column vector as default. We use $\mb X^*$ or $\mb x^*$ to denoted the (conjugate) transpose of a matrix or a vector, respectively. We reserve lower-case letter for scalar: $x\in\bb R$. We use $\norm{\mb X}{4}$ to denote the element-wise $\ell^4$-norm of a matrix $\mb X$ ($\norm{\mb X}{4}^4 = \sum_{i,j}x_{i,j}^4$). We use $\mb D_o$ to denote the ground truth orthogonal dictionary, and $\mb A$ is an estimate of $\mb D^*_o$ from solving \eqref{eq:L4OrthObjective}. $\theta\in (0,1)$ to is sparsity level of the ground truth Bernoulli-Gaussian sparse coefficient $\mb X_o$: $\mb X_o\sim \text{BG}(\theta)$. We use $\circ$ to denote the Hadamard product: $\forall \mb A,\mb B\in \bb R^{n\times m}$, $\{\mb A\circ\mb B\}_{i,j} = a_{i,j}b_{i,j}$, and $\{\mb A^{\circ r}\}_{i,j}=a_{i,j}^r$ is the element-wise $r^\text{th}$ power of $\mb A$.

Given an input data matrix $\mb Y$ randomly generated from $\mb Y = \mb D_o\mb X_o$, $\mb X_o\sim_{iid} \text{BG}(\theta)$, for any orthogonal matrix $\mb A\in\msf O(n;\bb R)$, we define $\hat{f}:\msf O(n;\bb R)\times \bb R^{n\times p}\mapsto \bb R$ as the $4^\text{th}$ power of $\ell^4$-norm of $\mb {AY}$: 
    \begin{equation}
        \hat{f}(\mb A,\mb Y) \doteq \norm{\mb {AY}}{4}^4.
    \end{equation}
We define $f:\msf O(n;\bb R)\mapsto \bb R$ as the expectation of $\hat{f}$ over $\mb X_o$:
    \begin{equation}
        f(\mb A) \doteq \bb E_{\mb X_o}[\hat{f}(\mb A,\mb Y)] = \bb E_{\mb X_o}\big[\norm{\mb {AY}}{4}^4\big].
    \end{equation}
For any orthogonal matrix $\mb W\in \msf O(n;\bb R)$, we define $g:\msf O(n;\bb R)\mapsto \bb R$ as $4^\text{th}$ power of its $\ell^4$-norm:
\begin{equation}
    g(\mb W) \doteq \norm{\mb W}{4}^4.    
\end{equation}

\subsection{Organization of this Paper}
Rest of the paper is organized as follows. In Section \ref{sec:StatCharacter}, we characterize the global maximizers of \eqref{eq:L4OrthObjective} statistically via measure concentration. In Section \ref{sec:Algorithm}, we describe the proposed MSP algorithm. We characterize fixed points of the algorithm and show its convergence results in Section \ref{sec:Analysis}, All related proofs can be found in the appendices. Finally, in Section \ref{sec:Experiments}, we conduct extensive experiments to show effectiveness and efficiency of our method, by comparing with the state of the art. 

\section{Key Analysis and Main Result}
\label{sec:StatCharacter}
\subsection{Expectation and Concentration of the $\ell^4$-Objective}

In this section, we statistically justify that one can recover the ground truth dictionary $\mb D_o$ by solving
\begin{equation}
    \label{eq:maxHatf}
    \max_{\mb A} \hat{f}(\mb A,\mb Y)=\norm{\mb {AY}}{4}^4,\quad \st \quad \mb A\in \msf O(n;\bb R).
\end{equation}
Notice that the solution to the above $\ell^4$-norm optimization problem: 
\begin{equation*}
    \hat{\mb A}_\star = \underset{\mb A\in\msf O(n;\bb R)}{\arg\max} \hat{f}(\mb A,\mb Y)
\end{equation*}
is a random variable that depends on the random samples $\mb Y$. We need to characterize how ``close'' an estimate $\hat{\mb A}_\star$ is to the ground truth $\mb D_o$. A key technique is to show that the random objective function actually concentrates on its expectation (a deterministic function) as the number of observations $p$ increases. We first calculate $f(\mb A)$, the expectation of $\hat{f}(\mb A,\mb Y)$ and provide its concentration bound in Lemma \ref{lemma:orthproperty} and Lemma \ref{lemma:HatfUnionConcentrationBound} respectively.


\begin{lemma} [Properties of $f(\mb A)$]
    \label{lemma:orthproperty}
    $\forall \theta \in (0,1)$, let $\mb X_o\in \bb R^{n\times p}$, $x_{i,j}\sim_{iid}\text{BG}(\theta)$, $\mb D_o\in \msf O(n;\bb R)$ is an orthogonal matrix, and $\mb Y = \mb D_o\mb X_o$. Then, $\forall \mb A\in \msf O(n;\bb R)$, we have
    \begin{equation}
        \label{eq:orthproperty}
        \frac{1}{3p\theta}f(\mb A) = (1-\theta)g(\mb {AD}_o) + \theta n.
    \end{equation}
\end{lemma}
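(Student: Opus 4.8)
The plan is to reduce the identity to a single scalar fourth-moment computation for a Bernoulli--Gaussian vector, and then sum over the rows of $\mb A\mb D_o$. First I would absorb $\mb D_o$: since $\mb Y=\mb D_o\mb X_o$ we have $\mb A\mb Y = \mb W\mb X_o$ with $\mb W \doteq \mb A\mb D_o\in\msf O(n;\bb R)$, because the orthogonal group is closed under multiplication. Writing the rows of $\mb W$ as $\mb w_1^*,\dots,\mb w_n^*$ (so each $\mb w_i\in\bb R^n$ has $\norm{\mb w_i}{2}=1$) and the columns of $\mb X_o$ as $\mb x_1,\dots,\mb x_p$, the $(i,j)$ entry of $\mb W\mb X_o$ is $\mb w_i^*\mb x_j$, hence
\[
\hat f(\mb A,\mb Y)=\norm{\mb W\mb X_o}{4}^4=\sum_{j=1}^p\sum_{i=1}^n(\mb w_i^*\mb x_j)^4 .
\]
Since the $\mb x_j$ are i.i.d.\ copies of a generic vector $\mb x\in\bb R^n$ with i.i.d.\ $\text{BG}(\theta)$ entries, linearity of expectation gives $f(\mb A)=p\sum_{i=1}^n\bb E_{\mb x}\big[(\mb w_i^*\mb x)^4\big]$, so it suffices to evaluate $\bb E[(\mb w^*\mb x)^4]$ for an arbitrary \emph{unit} vector $\mb w$.

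Next I would expand this fourth moment. Write $x_k=\Omega_k V_k$ with $\Omega_k\sim_{iid}\text{Ber}(\theta)$ and $V_k\sim_{iid}\mc N(0,1)$; then the $x_k$ are i.i.d., mean zero, with $\bb E[x_k^2]=\theta$ and $\bb E[x_k^4]=3\theta$ (using $\Omega_k^2=\Omega_k^4=\Omega_k$ together with $\bb E[V_k^2]=1$, $\bb E[V_k^4]=3$), and all odd moments vanish. Expanding
\[
(\mb w^*\mb x)^4=\sum_{k_1,k_2,k_3,k_4}w_{k_1}w_{k_2}w_{k_3}w_{k_4}\,x_{k_1}x_{k_2}x_{k_3}x_{k_4},
\]
independence and the vanishing of $\bb E[x_k]$ kill every index pattern except (i) all four indices equal, contributing $\sum_k w_k^4\,\bb E[x_k^4]=3\theta\sum_k w_k^4$, and (ii) two distinct matched pairs, contributing $3\theta^2\sum_{k\neq l}w_k^2 w_l^2$ (the factor $3$ counts the three ways to pair the positions $\{1,2,3,4\}$). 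Because $\norm{\mb w}{2}^2=1$ we have $\sum_{k\neq l}w_k^2 w_l^2=\big(\sum_k w_k^2\big)^2-\sum_k w_k^4=1-\sum_k w_k^4$, so
\[
\bb E[(\mb w^*\mb x)^4]=3\theta\sum_k w_k^4+3\theta^2\Big(1-\sum_k w_k^4\Big)=3\theta(1-\theta)\sum_k w_k^4+3\theta^2 .
\]

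Finally I would sum over the rows $\mb w=\mb w_i$ of $\mb W=\mb A\mb D_o$: since $\sum_{i=1}^n\sum_k w_{ik}^4=\norm{\mb A\mb D_o}{4}^4=g(\mb A\mb D_o)$, we obtain $f(\mb A)=p\big[3\theta(1-\theta)g(\mb A\mb D_o)+3\theta^2 n\big]=3p\theta\big[(1-\theta)g(\mb A\mb D_o)+\theta n\big]$, which is exactly \eqref{eq:orthproperty} after dividing by $3p\theta$. I do not expect any genuine obstacle here; the only points requiring care are the bookkeeping in the fourth-moment expansion (correctly getting the combinatorial factor $3$ for the two-pair terms and discarding the odd patterns via mean-zero) and using orthogonality of $\mb A\mb D_o$ at two places --- once to know the rows are unit vectors, which turns $\sum_{k\neq l}w_k^2 w_l^2$ into $1-\sum_k w_k^4$ and makes the $g$-term emerge, and implicitly in writing $\mb A\mb Y=\mb W\mb X_o$ with $\mb W$ again orthogonal.
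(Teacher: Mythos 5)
Your proposal is correct and follows essentially the same route as the paper's proof: reduce to $\mb W=\mb A\mb D_o$, compute the Bernoulli--Gaussian fourth moment $3\theta\sum_k w_{i,k}^4+3\theta^2\sum_{k\neq l}w_{i,k}^2w_{i,l}^2$ per entry, and use orthonormality of the rows to convert the cross term into $1-\sum_k w_{i,k}^4$. The only difference is cosmetic: you apply the unit-norm identity row by row, whereas the paper sums it over all rows first via $n=\sum_i\big(\sum_k w_{i,k}^2\big)^2$.
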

\begin{proof}
    See \ref{proof:OrthProperty}
\end{proof}

\begin{lemma}[Concentration Bound of $\ell^4$-Norm]
    \label{lemma:HatfUnionConcentrationBound}
    $\forall \theta\in(0,1)$, if $\mb X\in \bb R^{n\times p}$, $x_{i,j}\sim_{iid}\text{BG}(\theta)$, $\forall \delta>0$, the following inequality holds:
    \begin{equation}
         \begin{split}
            &\bb P\left(\sup_{\mb W\in \msf O(n;\bb R)}\frac{1}{np}\abs{\norm{\mb W \mb X}{4}^4-\bb E \norm{\mb W \mb X}{4}^4}\geq \delta\right)< \frac{1}{p},
        \end{split}
    \end{equation}
    when $p=\Omega(\theta n^2\ln n/\delta^2)$.
\end{lemma}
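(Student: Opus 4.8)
The plan is to prove the uniform concentration bound in Lemma~\ref{lemma:HatfUnionConcentrationBound} by a standard three-step argument: (i) pointwise concentration for a fixed $\mb W$, (ii) a Lipschitz/continuity estimate in $\mb W$, and (iii) an $\varepsilon$-net union bound over $\msf O(n;\bb R)$ to upgrade pointwise control to uniform control.

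First I would fix $\mb W\in\msf O(n;\bb R)$ and analyze $\norm{\mb W\mb X}{4}^4=\sum_{j=1}^p\norm{\mb W\mb x_j}{4}^4$, a sum of $p$ i.i.d.\ terms since the columns $\mb x_j$ are independent. Each term $\norm{\mb W\mb x_j}{4}^4=\sum_{i=1}^n (\mb w_i^*\mb x_j)^4$ is a fourth-order polynomial in a Bernoulli--Gaussian vector; since $\norm{\mb w_i}{2}=1$, the inner product $\mb w_i^*\mb x_j$ is sub-Gaussian-ish with bounded moments, so $\norm{\mb W\mb x_j}{4}^4$ is a (centered, after subtracting its mean) sub-exponential-type random variable — more precisely its tails are like those of a squared-sub-Gaussian / fourth power, which is sub-Weibull of order $1/2$. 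I would therefore invoke a Bernstein-type inequality for sums of independent sub-exponential (or sub-Weibull) variables to get, for fixed $\mb W$,
\begin{equation*}
\bb P\!\left(\frac{1}{np}\abs{\norm{\mb W\mb X}{4}^4-\bb E\norm{\mb W\mb X}{4}^4}\ge \frac{\delta}{2}\right)\le 2\exp\!\big(-c\,p\,\delta^2/n^2\big),
\end{equation*}
where the $n^2$ in the exponent comes from the per-term variance proxy scaling like $n$ (one typically gets $\mr{Var}\big(\norm{\mb W\mb x_j}{4}^4\big)=O(n)$ after normalizing, and the fluctuation scale $np$ in the denominator forces the $n^2$). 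Tracking the constants carefully here so that they match the claimed sample complexity $p=\Omega(\theta n^2\ln n/\delta^2)$ is where the $\theta$-dependence enters, since the moments of $x_{i,j}$ scale with $\theta$.

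Next I would establish continuity in $\mb W$: for $\mb W,\mb W'\in\msf O(n;\bb R)$, bound $\abs{\norm{\mb W\mb X}{4}^4-\norm{\mb W'\mb X}{4}^4}$ by $L\norm{\mb W-\mb W'}{F}$ on a high-probability event where $\norm{\mb X}{}$ (operator or Frobenius norm) is controlled, say $\norm{\mb X}{F}^2=O(np\theta)$ or $\norm{\mb X}{op}=O(\sqrt{p})$; using $\abs{a^4-b^4}\le 4\max(\abs a,\abs b)^3\abs{a-b}$ entrywise and Cauchy--Schwarz, the Lipschitz constant $L$ is polynomial in $n,p$ (something like $L=O(p^{3/2}\cdot\mr{poly}(n))$). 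Then I pick an $\varepsilon$-net $\mc N$ of $\msf O(n;\bb R)$ in Frobenius distance with $\abs{\mc N}\le (C/\varepsilon)^{n^2}$, choose $\varepsilon$ small enough (inverse-polynomial in $p,n$) that $L\varepsilon/(np)\le \delta/2$, and union-bound the pointwise estimate over $\mc N$. The net cardinality contributes $n^2\ln(C/\varepsilon)=O(n^2\ln p)$ to the exponent; combined with the requirement $c\,p\,\delta^2/n^2\ge n^2\ln(C/\varepsilon)+\ln(2p)$, this yields the stated condition $p=\Omega(\theta n^2\ln n/\delta^2)$ (the $\ln p$ absorbs into $\ln n$ up to constants for the relevant polynomial range of $p$), and the failure probability is at most $1/p$.

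The main obstacle I expect is the pointwise sub-exponential tail bound with the right dependence on $n$ and $\theta$: $\norm{\mb W\mb x_j}{4}^4$ is a sum of $n$ dependent fourth powers of correlated linear forms in $\mb x_j$ (the forms $\mb w_i^*\mb x_j$ are uncorrelated but not independent, and their fourth powers are heavier-tailed than sub-exponential), so getting a clean variance proxy that scales like $n$ (not $n^2$) and concentration constants that depend correctly on $\theta$ requires care — one likely route is to use hypercontractivity / moment bounds for low-degree polynomials of independent Bernoulli--Gaussian variables, or to handle the Gaussian part via Hanson--Wright-type inequalities after conditioning on the Bernoulli support. The Lipschitz/net part is routine once the event $\{\norm{\mb X}{op}=O(\sqrt p)\}$ is in hand (itself standard for Bernoulli--Gaussian matrices), and only contributes logarithmic overhead.
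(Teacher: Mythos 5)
Your overall architecture — pointwise Bernstein concentration, a Lipschitz estimate in $\mb W$, and an $\varepsilon$-net union bound over $\msf O(n;\bb R)$ — is exactly the paper's strategy (the paper additionally truncates the entries of $\mb X$ at $B=\ln p$ so that plain bounded Bernstein applies, and then controls the bias $\abs{\bb E\norm{\mb W\mb X}{4}^4-\bb E\norm{\mb W\bar{\mb X}}{4}^4}$ via Cauchy--Schwarz and an eighth-moment bound; your alternative of sub-Weibull Bernstein or conditioning on the Bernoulli support plus Hanson--Wright is a plausible substitute for that truncation step). However, there is a genuine quantitative gap at the heart of your argument: the pointwise tail bound you assert, $2\exp(-c\,p\,\delta^2/n^2)$, is both internally inconsistent with your own variance heuristic and too weak to close the union bound at the claimed sample complexity. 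The net has cardinality $\exp\big(n^2\ln(C/\varepsilon)\big)$ with $\ln(C/\varepsilon)=O(\ln(np/\delta))$, so your requirement $c\,p\,\delta^2/n^2\gtrsim n^2\ln(C/\varepsilon)$ forces $p=\Omega(n^4\ln n/\delta^2)$, not $p=\Omega(\theta n^2\ln n/\delta^2)$; the final arithmetic in your last step simply does not yield the stated condition, and no $\theta$ ever appears in your exponent.

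The correct accounting is the following. Writing $\norm{\mb W\mb X}{4}^4=\sum_{j=1}^p Z_j$ with $Z_j=\norm{\mb W\mb x_j}{4}^4$, the event $\frac{1}{np}\abs{\sum_j(Z_j-\bb E Z_j)}\geq \delta/2$ concerns a deviation of size $t=np\delta/2$ at the level of the sum — note the extra factor of $n$ coming from the $1/(np)$ normalization. The per-sample second moment satisfies $\bb E Z_j^2=\bb E\norm{\mb W\mb x_j}{4}^8\leq C n^2\theta$ (this is the paper's Lemma on the second moment of the $\ell^4$-norm, proved by conditioning on the Bernoulli support and using Gaussian moment formulas — essentially the hypercontractivity-type computation you gesture at), so Bernstein gives an exponent of order
\begin{equation*}
\frac{t^2}{p\,\bb E Z_j^2 + b\,t}\;\asymp\;\frac{n^2p^2\delta^2}{p\,n^2\theta + b\,np\delta}\;\asymp\;\frac{p\delta^2}{\theta + (b/n)\,\delta},
\end{equation*}
where $b$ is the (truncated or sub-exponential) scale of a single $Z_j$. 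The two factors of $n^2$ cancel, the exponent is $\asymp p\delta^2/\theta$ up to the boundedness correction, and beating $n^2\ln(np/\delta)$ then requires exactly $p=\Omega(\theta n^2\ln n/\delta^2)$ — this is also precisely where the $\theta$ in the sample complexity comes from. So your proof can be repaired, but you must replace the heuristic "variance proxy scaling like $n$, hence exponent $p\delta^2/n^2$" with the explicit bound $\bb E\norm{\mb W\mb x_j}{4}^8\lesssim n^2\theta$ (or a comparable moment/variance bound with the right $\theta$-dependence) and redo the deviation bookkeeping with threshold $np\delta$; as written, the key step that makes the $n^2\ln n$ sample complexity possible is missing.
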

\begin{proof}
    See \ref{proof:HatfUnionConcentrationBound}.
\end{proof}

Lemma \ref{lemma:HatfUnionConcentrationBound} implies that, for any orthogonal transformation $\mb {WX},\mb W\in \msf O(n;\bb R)$ of a Bernoulli-Gaussian matrix $\mb X\in \bb R^{n\times p}$, $\frac{1}{np}\norm{\mb {WX}}{4}^4$ concentrates onto its expectation as long as the sample size $p$ is large enough -- in the order $\Omega(\theta n^2\ln n/\delta^2)$. By our definition, $\hat{f}(\mb A,\mb Y) = \norm{\mb {AY}}{4}^4 = \norm{\mb {W}\mb {X}_o}{4}^4$ ($\mb W=\mb {AD}_o$ is an orthogonal matrix) satisfies the concentration inequality in Lemma \ref{lemma:HatfUnionConcentrationBound}. Therefore, including designing optimization algorithms, $f(\mb A)$ can be considered as a good proxy to the original objective $\hat{f}(\mb A,\mb Y)$ and we can consider \eqref{eq:maxHatf} as maximizing its expectation:
\begin{equation}
\label{eq:maxf}
    \max_{\mb A} f(\mb A) = \bb E \norm{\mb {AY}}{4}^4 \quad\st\mb A \in \msf O(n;\bb R).
\end{equation}

The function $f(\mb A)$ is a deterministic function and its geometric property would help us understand the landscape of the original objective. Moreover, Lemma \ref{lemma:orthproperty} states that the global maximizers of the mean $f(\mb A)$ are exactly the global maximizers of $g(\mb {AD}_o) = \|\mb A\mb D_o\|_4^4$. To understand how such a function can be effectively optimized, we need to study the extrema of $\ell^4$-norm $g(\cdot)$ over the orthogonal group $\msf O(n;\bb R)$.

\subsection{Property of the $\ell^4$-Norm over $\msf O(n;\bb R)$}



\begin{lemma} [Extrema of $\ell^4$-Norm over Orthogonal Group]
    \label{lemma:G(A)bound}
     For any orthogonal matrix $\mb A\in \msf O(n;\bb R)$, $g(\mb A) = \norm{\mb A}{4}^4\in[1, n]$ and $g(\mb A)$ reaches maximum if and only if $\mb A\in \msf{SP}(n)$.
\end{lemma}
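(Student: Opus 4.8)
The plan is to handle the two bounds separately, both via one-line convexity inequalities applied to the squared entries, and then to read off the equality characterization directly from the proof of the upper bound.

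For the \emph{upper bound}, I would use that the columns of $\mb A$ are unit vectors: writing $\mb a_j$ for the $j$-th column, $\sum_{i=1}^n a_{ij}^2 = 1$. Setting $t_i = a_{ij}^2 \in [0,1]$, we have $\sum_i t_i^2 \le (\max_i t_i)\sum_i t_i \le 1$, hence $\norm{\mb a_j}{4}^4 = \sum_i a_{ij}^4 \le 1$. Summing over $j$ gives $g(\mb A) = \sum_{j=1}^n \norm{\mb a_j}{4}^4 \le n$. For the \emph{lower bound}, I would use that $\mb A$ orthogonal implies $\norm{\mb A}{F}^2 = \trace(\mb A^*\mb A) = n$; then Cauchy--Schwarz (equivalently the power-mean inequality) over the $n^2$ entries gives $n^2 = \big(\sum_{i,j} a_{ij}^2\big)^2 \le n^2 \sum_{i,j} a_{ij}^4 = n^2 g(\mb A)$, i.e.\ $g(\mb A) \ge 1$.

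For the equality case in the upper bound: $g(\mb A) = n$ forces $\norm{\mb a_j}{4}^4 = 1$ for every $j$. With $t_i = a_{ij}^2$, the identity $\sum_i t_i = \sum_i t_i^2 = 1$ yields $\sum_i t_i(1-t_i) = 0$; since each summand is nonnegative, every $t_i \in \{0,1\}$, so exactly one entry of $\mb a_j$ equals $\pm 1$ and the rest vanish. Thus each column of $\mb A$ is $\pm$ a standard basis vector. Orthonormality of the columns then forces these basis vectors to be pairwise distinct, so $\mb A$ arises from a permutation matrix by flipping signs of its nonzero entries, i.e.\ $\mb A \in \msf{SP}(n)$. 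The converse is immediate: every $\mb P \in \msf{SP}(n)$ has exactly one nonzero entry per column equal to $\pm 1$, so $g(\mb P) = n$.

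As for difficulty, there is essentially no serious obstacle: the whole argument is two elementary convexity inequalities plus bookkeeping in the equality case. The only point worth a moment's care is checking that the per-column analysis, combined with \emph{orthogonality} and not merely column normalization, genuinely pins the matrix down to a signed permutation rather than to an arbitrary matrix with one $\pm 1$ entry per column --- column orthogonality is precisely what rules out repeated supports.
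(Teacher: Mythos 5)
Your proof is correct and follows essentially the same route as the paper: the paper bounds $g(\mb A)$ row-wise by writing $n=\sum_i\big(\sum_j a_{i,j}^2\big)^2$ and discarding the nonnegative cross terms, with equality forcing at most one nonzero entry per row, which together with orthonormality yields $\mb A\in\msf{SP}(n)$; your column-wise inequality and equality analysis is the same elementary argument. The only substantive addition on your side is the Cauchy--Schwarz proof of the lower bound $g(\mb A)\ge 1$, which the lemma asserts but the paper's own proof omits, plus the explicit remark that orthogonality is what rules out repeated supports.
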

\begin{proof}
    See \ref{proof:G(A)bound}.
\end{proof}
This lemma implies that if $\mb A_\star$ is a global maximizer of $g(\mb {AD}_o)$, i.e. $\norm{\mb {A}_\star\mb{D}_o}{4}^4 = n$, then it differs from $\mb D_o^*$ by a signed permutation. However, this lemma does not say the function may or may not have other local minima or maxima. Nevertheless, our experiments in Section \ref{sec:experiment-DL} will show that even if such critical points exist, they are unlikely to be stable (or attractive). As a direct corollary to Lemma \ref{lemma:orthproperty} and Lemma \ref{lemma:G(A)bound}, we know that $\forall \mb A\in \msf O(n;\bb R),\theta\in(0,1)$, the maximum value of $f(\mb A)$ satisfies:
\begin{equation}
    \frac{1}{3p\theta}f(\mb A)\leq n,
\end{equation}
and the equality holds if and only if $\mb {AD}_o\in \msf{SP}(n)$. Although we know that in the stochastic setting, \eqref{eq:maxHatf} concentrates to the following $\ell^4$-norm maximization over $\msf O(n;\bb R)$:
\begin{equation}
    \label{eq:maxg}
    \max_{\mb A} g(\mb A \mb D_o) = \norm{\mb {AD}_o}{4}^4,\quad \st \quad \mb A\in \msf O(n;\bb R),
\end{equation}
we cannot hope that the estimate from $\hat{\mb A}_\star = \arg\max_{\mb A\in \msf O(n;\bb R)} \hat{f}(\mb A,\mb Y)$ would achieve the maximal value of $g(\mb A \mb D_o)$ precisely. But if the value is close to the maximal, how close would $\hat{\mb A}_\star$ be to a global maximizer? The following lemma shows that when the $\ell^4$-norm of an orthogonal matrix $\mb A$ is close to the maximum value $n$, it is also close to a signed permutation matrix in Frobenius norm. 
\begin{lemma}[Approximate Maxima of $\ell^4$-Norm over Orthogonal Group]
    \label{lemma:L4ExtremaBoundOrthGroup}
    Suppose $\mb W$ is an orthogonal matrix: $\mb W\in \msf O(n;\bb R)$. $\forall \eps\in[0,1]$, if $\frac{1}{n}\norm{\mb W}{4}^4\geq1-\eps$, then $\exists \mb P\in \msf{SP}(n)$, such that 
    \begin{equation}
        \frac{1}{n}\norm{\mb W - \mb P}{F}^2 \leq 2\eps.
    \end{equation}
\end{lemma}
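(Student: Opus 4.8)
The plan is to exploit the fact that each row of $\mb W$ is a unit vector (since $\mb W\in\msf O(n;\bb R)$), so that for a single unit vector $\mb w\in\bb R^n$ we have $\norm{\mb w}{4}^4 = \sum_i w_i^4 \le \max_i w_i^2 \cdot \sum_i w_i^2 = \max_i w_i^2 \le 1$, with the quantity $1-\norm{\mb w}{4}^4$ controlling how far $\mb w$ is from a signed standard basis vector. More precisely, if $\norm{\mb w}{4}^4 \ge 1-\eps_i$ for row $i$, I would show $\mb w$ is within (squared Euclidean) distance $2\eps_i$ of $\pm\mb e_{k}$ for the coordinate $k=\arg\max_i w_i^2$: writing $a = w_k^2$, we have $1-\eps_i \le \norm{\mb w}{4}^4 \le a\cdot a + (1-a)\cdot(1-a) = a^2+(1-a)^2$ is too crude, so instead bound $\sum_{i\ne k} w_i^4 \le (\max_{i\ne k} w_i^2)\sum_{i\ne k} w_i^2 \le (1-a)^2$, giving $1-\eps_i \le a^2 + (1-a)^2 = 1 - 2a(1-a)$, hence $a(1-a)\le \eps_i/2$, so either $a \ge 1-\eps_i$ or $a\le \eps_i$; since $a=\max_i w_i^2 \ge 1/n$, for $\eps_i$ small the first case holds and $\norm{\mb w - \mathrm{sign}(w_k)\mb e_k}{2}^2 = 2 - 2|w_k| \le 2(1-a) \le 2\eps_i$.

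The first step, then, is this per-row analysis: for each row $i$, pick the coordinate $k(i)$ of largest magnitude, set $\mb p_i = \mathrm{sign}(w_{i,k(i)})\,\mb e_{k(i)}^*$, and record $\eps_i := 1 - \norm{\mb w_{i\cdot}}{4}^4$ (the deficiency of row $i$), establishing $\norm{\mb w_{i\cdot} - \mb p_i}{2}^2 \le 2\eps_i$ whenever $\eps_i$ is small enough that $a\le\eps_i$ is excluded — and I need to verify that the hypothesis $\frac1n\norm{\mb W}{4}^4 \ge 1-\eps$ with $\eps\le 1$ is enough to rule out the bad case row-by-row, or else handle the degenerate rows separately. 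The second step is to assemble the rows $\mb p_i$ into a matrix $\mb P$ and argue $\mb P\in\msf{SP}(n)$, i.e. that the map $i\mapsto k(i)$ is a permutation. The third step is the bookkeeping: $\sum_i \eps_i = \sum_i(1-\norm{\mb w_{i\cdot}}{4}^4) = n - \norm{\mb W}{4}^4 \le n\eps$, so $\frac1n\norm{\mb W-\mb P}{F}^2 = \frac1n\sum_i\norm{\mb w_{i\cdot}-\mb p_i}{2}^2 \le \frac1n\sum_i 2\eps_i \le 2\eps$.

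The main obstacle is the second step: showing that $k:\{1,\dots,n\}\to\{1,\dots,n\}$ is injective, hence a permutation, so that $\mb P$ is genuinely a signed permutation matrix rather than an arbitrary $\{0,\pm1\}$ matrix with repeated or missing columns. The natural argument is that if $k(i) = k(i')$ for $i\ne i'$, then rows $i$ and $i'$ of $\mb W$ are both close to $\pm\mb e_{k(i)}$, hence close to each other (up to sign), contradicting orthonormality of the rows of $\mb W$ — but this requires $\eps$ to be small enough that the triangle-inequality slack is smaller than $\sqrt2$ (the distance between orthonormal vectors). So I would either (a) prove the permutation property only in a regime of small $\eps$ and then observe that for $\eps$ not small the bound $\frac1n\norm{\mb W-\mb P}{F}^2\le 2\eps$ can be made to hold trivially (e.g. any nearest $\mb P\in\msf{SP}(n)$ satisfies $\frac1n\norm{\mb W-\mb P}{F}^2 \le \frac1n(2n) = 2 \le 2\eps$ once $\eps\ge 1$, with the boundary case $\eps=1$ covered), or (b) carry out a more careful counting argument showing $k$ must be a bijection because $\mb W\mb W^* = \mb I$ forces the "mass" $\max_j w_{i,j}^2$ to be spread across distinct columns. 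I expect approach (a), splitting into $\eps$ small versus $\eps$ near $1$, to be the cleanest; the arithmetic in the small-$\eps$ regime (tracking exactly how small $\eps$ must be for the injectivity argument, and confirming the constant $2$ survives) is the only delicate part, and it should go through comfortably with room to spare given the generous constant.
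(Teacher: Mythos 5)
Your route is essentially the paper's: reduce to single unit vectors (the paper works with columns, you with rows, which is immaterial), show that a unit vector with small $\ell^4$ deficiency is close to a signed coordinate vector, assemble the dominant coordinates into a candidate $\mb P$, and spread the total budget $\sum_i \eps_i \le n\eps$ across the rows. Your step 1 and step 3 are fine, but only after you replace your crude bound by the sharp inequality you yourself wrote down at the start: for a unit vector $\mb w$ one has $\norm{\mb w}{4}^4 \le \max_k w_k^2 = a$, so $a \ge 1-\eps_i$ holds \emph{unconditionally}, the spurious bad case $a\le\eps_i$ never arises, and $\norm{\mb w - \mathrm{sign}(w_k)\,\mb e_k}{2}^2 = 2-2\sqrt{a} \le 2(1-a) \le 2\eps_i$. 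This is exactly the argument of the paper's Lemma \ref{lemma:L4ExtremaBoundSphere} and Lemma \ref{lemma:L4MultipleVectorExtremaBoundSphere}; your detour through $a^2+(1-a)^2$ is what manufactures the dichotomy and the "degenerate rows" you then have to worry about.

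The genuine gap is step 2 together with your remedy (a). The triangle-inequality injectivity argument needs the two colliding rows to satisfy $\sqrt{2\eps_i}+\sqrt{2\eps_{i'}}<\sqrt{2}$, i.e.\ $\sqrt{\eps_i}+\sqrt{\eps_{i'}}<1$; the hypothesis only gives $\eps_i+\eps_{i'}\le n\eps$, so this is guaranteed roughly when $\eps < 1/(2n)$, not merely when $\eps$ is small in an absolute sense. At the other end, your trivial fallback $\frac1n\norm{\mb W-\mb P}{F}^2\le 2\le 2\eps$ applies only at $\eps=1$. Hence plan (a) leaves the entire intermediate range (roughly $1/(2n)\le\eps<1$) uncovered, and the failure mode is real: already in $\msf O(3;\bb R)$ there are orthogonal matrices, with deficiency $\eps$ well below $1$, in which two rows have their largest-magnitude entry in the same column, so the greedily assembled $\mb P$ is simply not a signed permutation (the lemma is still true for such $\mb W$, but via a different $\mb P$). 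The paper handles this point by asserting injectivity directly from orthogonality, i.e.\ your plan (b) is where the real work sits, and it needs more than the bare statement "both rows are close to $\pm\mb e_k$". One clean way to close the gap with no case analysis at all is to abandon the greedy assignment and choose $\mb P$ maximizing $\trace(\mb P^*\mb W)$ over $\msf{SP}(n)$: since the matrix of squared entries $(w_{ij}^2)$ is doubly stochastic and $|w_{ij}|\le 1$, the Birkhoff--von Neumann theorem gives $\max_{\mb P\in\msf{SP}(n)}\trace(\mb P^*\mb W)\ \ge\ \sum_{i,j}w_{ij}^2\,|w_{ij}|\ \ge\ \norm{\mb W}{4}^4\ \ge\ n(1-\eps)$, whence $\frac1n\norm{\mb W-\mb P}{F}^2 = 2-\frac{2}{n}\trace(\mb P^*\mb W)\le 2\eps$ for every $\eps\in[0,1]$.
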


\begin{proof}
    See \ref{proof:L4ExtremaBoundOrthGroup}.
\end{proof}
This result is useful whenever we evaluate how close a solution given by an algorithm is to the optimal solution, in terms of value of the objective function. With all the above results, we are now ready to characterize in what sense a global maximizer of $\hat{\mb A}_\star = \arg\max_{\mb A\in \msf O(n;\bb R)} \hat{f}(\mb A,\mb Y)$ gives a ``correct'' estimate of the ground truth dictionary $\mb D_o$. 

\subsection{Main Statistical Result}

\begin{theorem}[Correctness of the Global Optima]
    \label{Thm:MainResult}
    $\forall \theta\in(0,1)$, let $\mb X_o\in \bb R^{n\times p}$, $x_{i,j}\sim_{iid}\text{BG}(\theta)$, $\mb D_o\in \msf O(n;\bb R)$ is any orthogonal matrix, and $\mb Y = \mb D_o\mb X_o$. Suppose $\hat{\mb A}_\star$ is a global maximizer of the optimization problem:
    \begin{equation*}
        \max_{\mb A} \hat{f}(\mb A,\mb Y)=\norm{\mb A\mb Y}{4}^4,\quad \st \quad \mb A \in \msf O(n;\bb R), 
    \end{equation*}
    then for any $\eps\in[0,1]$, there exists a signed permutation matrix $\mb P \in \msf{SP}(n)$, such that
    \begin{equation}
        \frac{1}{n}\norm{\hat{\mb A}_\star^*-\mb D_o\mb P}{F}^2\leq C\eps,
     \end{equation}
     with probability at least $1-\frac{1}{p}$, when $p=\Omega(\theta n^2\ln n/\eps^2)$, for a constant $C>\frac{4}{3\theta(1-\theta)}$. 
\end{theorem}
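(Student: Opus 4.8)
The plan is to convert the three structural lemmas already in hand — the exact mean formula (Lemma~\ref{lemma:orthproperty}), the uniform concentration bound (Lemma~\ref{lemma:HatfUnionConcentrationBound}), and the stability estimate (Lemma~\ref{lemma:L4ExtremaBoundOrthGroup}) — into the theorem by a single sandwiching argument. The enabling observation is that for \emph{every} $\mb A\in\msf O(n;\bb R)$ one has $\hat f(\mb A,\mb Y)=\norm{\mb A\mb D_o\mb X_o}{4}^4=\norm{\mb W\mb X_o}{4}^4$ with $\mb W\doteq\mb A\mb D_o\in\msf O(n;\bb R)$, and $\mb W$ ranges over all of $\msf O(n;\bb R)$ as $\mb A$ does; hence Lemma~\ref{lemma:HatfUnionConcentrationBound}, applied to $\mb X=\mb X_o$ with $\delta\doteq\eps$, produces a single event $\event$ with $\bb P(\event)\ge 1-\tfrac1p$ on which
\[
\sup_{\mb A\in\msf O(n;\bb R)}\frac1{np}\,\bigl|\hat f(\mb A,\mb Y)-f(\mb A)\bigr|\ <\ \eps,\qquad\text{provided } p=\Omega(\theta n^2\ln n/\eps^2).
\]
The entire argument then runs on $\event$, and the point of the supremum is that it simultaneously controls the \emph{random} maximizer $\hat{\mb A}_\star$ and a fixed \emph{deterministic} anchor.

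For the anchor I would take $\mb A=\mb D_o^*$: by Lemma~\ref{lemma:orthproperty} and Lemma~\ref{lemma:G(A)bound} (so $g(\mb I)=n$), $f(\mb D_o^*)=3p\theta\bigl((1-\theta)n+\theta n\bigr)=3p\theta n$, which is the global maximum of $f$. Optimality of $\hat{\mb A}_\star$ gives $\hat f(\hat{\mb A}_\star,\mb Y)\ge\hat f(\mb D_o^*,\mb Y)$, and chaining this with the two instances of the concentration bound on $\event$,
\[
f(\hat{\mb A}_\star)+np\eps\ \ge\ \hat f(\hat{\mb A}_\star,\mb Y)\ \ge\ \hat f(\mb D_o^*,\mb Y)\ \ge\ f(\mb D_o^*)-np\eps\ =\ 3p\theta n-np\eps ,
\]
so $f(\hat{\mb A}_\star)\ge 3p\theta n-2np\eps$.

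Next I would translate this value bound back into geometry. Writing $\mb W_\star\doteq\hat{\mb A}_\star\mb D_o\in\msf O(n;\bb R)$, Lemma~\ref{lemma:orthproperty} reads $\tfrac1{3p\theta}f(\hat{\mb A}_\star)=(1-\theta)g(\mb W_\star)+\theta n$, and combined with the previous display this rearranges to $\tfrac1n\norm{\mb W_\star}{4}^4=\tfrac1n g(\mb W_\star)\ge 1-\eps'$ with $\eps'\doteq\tfrac{2\eps}{3\theta(1-\theta)}$. Apply Lemma~\ref{lemma:L4ExtremaBoundOrthGroup} to $\mb W_\star$ (the bound $\tfrac1n\norm{\mb W_\star-\mb Q}{F}^2\le 2\eps'$ it gives when $\eps'\le1$ also holds trivially when $\eps'>1$, since one may flip the sign of any signed permutation so that the two matrices are nonnegatively correlated, making $\tfrac1n\norm{\cdot}{F}^2\le2<2\eps'$): there is $\mb Q\in\msf{SP}(n)$ with $\tfrac1n\norm{\mb W_\star-\mb Q}{F}^2\le 2\eps'$. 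Since the Frobenius norm is invariant under transposition and under left/right multiplication by orthogonal matrices, $\norm{\mb W_\star-\mb Q}{F}=\norm{\hat{\mb A}_\star\mb D_o-\mb Q}{F}=\norm{\mb D_o^*\hat{\mb A}_\star^*-\mb Q^*}{F}=\norm{\hat{\mb A}_\star^*-\mb D_o\mb Q^*}{F}$, and $\mb P\doteq\mb Q^*\in\msf{SP}(n)$; therefore
\[
\frac1n\norm{\hat{\mb A}_\star^*-\mb D_o\mb P}{F}^2\ \le\ 2\eps'\ =\ \frac{4\eps}{3\theta(1-\theta)}\ <\ C\eps
\]
for any constant $C>\tfrac4{3\theta(1-\theta)}$, on $\event$, i.e.\ with probability at least $1-\tfrac1p$ whenever $p=\Omega(\theta n^2\ln n/\eps^2)$.

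As for difficulty: once the three lemmas are available the theorem is essentially bookkeeping, and the only care needed is (i) invoking Lemma~\ref{lemma:HatfUnionConcentrationBound} as one supremum over $\msf O(n;\bb R)$ so that it bounds $\hat{\mb A}_\star$ and the anchor $\mb D_o^*$ at once, and (ii) tracking the constant through the rescalings $\tfrac1{3p\theta}$ and $(1-\theta)$ (from Lemma~\ref{lemma:orthproperty}) and the factor $2$ (from Lemma~\ref{lemma:L4ExtremaBoundOrthGroup}) to arrive at $C>\tfrac4{3\theta(1-\theta)}$. The genuinely hard work lives upstream — in Lemma~\ref{lemma:HatfUnionConcentrationBound}, which needs an $\eps$-net over the orthogonal group together with tail/moment control of $\norm{\mb W\mb X}{4}^4$ for Bernoulli–Gaussian $\mb X$ (this is where the $\theta n^2\ln n$ sample count is forced), and in Lemma~\ref{lemma:L4ExtremaBoundOrthGroup}, the quantitative ``$\ell^4$-norm near-maximal $\Rightarrow$ near a signed permutation'' estimate.
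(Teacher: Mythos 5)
Your proposal is correct and follows essentially the same route as the paper: uniform concentration (Lemma~\ref{lemma:HatfUnionConcentrationBound}) plus the mean formula (Lemma~\ref{lemma:orthproperty}) to show $\tfrac1n\norm{\hat{\mb A}_\star\mb D_o}{4}^4\ge 1-\tfrac{2\eps}{3\theta(1-\theta)}$, then Lemma~\ref{lemma:L4ExtremaBoundOrthGroup} and rotational invariance of the Frobenius norm to conclude with the constant $\tfrac{4}{3\theta(1-\theta)}$. Your only deviations are cosmetic — anchoring at the fixed matrix $\mb D_o^*$ (one-sided chain) rather than the paper's abstract maximizer $\mb A_\star$ of $f$, and explicitly handling the regime $\tfrac{2\eps}{3\theta(1-\theta)}>1$ where Lemma~\ref{lemma:L4ExtremaBoundOrthGroup}'s hypothesis fails (a point the paper glosses over) — both of which are fine.
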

\begin{proof}
    See \ref{proof:MainResult}.
\end{proof}
Theorem \ref{Thm:MainResult} states that with high probability, the global optimal solution to the $\ell^4$-norm optimization \eqref{eq:maxHatf} is close to the true solution (up to a signed permutation) as the sample size $p$ is of the order $\Omega(\theta n^2\ln n/\eps^2)$, where $\eps$ is the desired accuracy. This result quantifies the sample size needed to achieve certain accuracy of recovery and it matches the intuition that more observations lead to better recovering result. Moreover, this result corresponds to our phase transition curves in Figure \ref{fig:SPPR} of Section \ref{sec:Experiments}, and to the best of our knowledge, a sample complexity of $\Omega(n^2\ln n)$ is currently the best result for dictionary learning task.

\begin{remark}[Maximizing $\ell^{2k}$-Norm]

If one were to choose maximizing $\ell^{2k}$-norm to promoting sparsity, similar analysis of concentration bounds would reveal that for the same error bound, it requires much larger number $p$ of samples for the (random) objective function $\hat f(\mb A,\mb Y)$ to concentrate on its (deterministic) expectation $f(\mb A)$. We will discuss the choice of $k$ in more details in Section \ref{subsec:General2kNorm}. Experiments in Figure \ref{fig:SPOrder2k} also corroborate with the findings.
\end{remark}

\section{Algorithm: Matching, Stretching, and Projection (MSP)}
\label{sec:Algorithm}
In this section, we introduce an algorithm, based on a simple iterative {\em matching, stretching, and projection} (MSP) process, which efficiently solves the two related programs \eqref{eq:maxHatf} and \eqref{eq:maxg}.

\subsection{Algorithmic Challenges and Related Optimization Methods}
Although \eqref{eq:maxHatf} is everywhere smooth, the associated optimization is non-trivial in several ways. First, one needs to deal with the signed permutation ambiguity. The problem has exponentially many global maximizers. Furthermore, we are maximizing a convex function (or minimizing a concave function) over a constraint set. So conventional methods such as augmented Lagrangian \citep{bertsekas1997nonlinear} barely works. This is because the Lagrangian:
\begin{equation*}
    \mc L(\mb A,\mb \Lambda) \doteq -\norm{\mb A\mb Y}{4}^4 + \innerprod{\mb A \mb A - \mb I}{\mb \Lambda}
\end{equation*}
will go to negative infinity due to the concavity of the objective function $-\norm{\mb A\mb Y}{4}^4$. Notice that all of its global maximizers are on the constraint set, an ideal iterative algorithm should converge to a solution that {\em exactly lies on constraint set} $\msf O(n;\bb R)$.

Another natural way to optimize  \eqref{eq:maxHatf} (or \eqref{eq:maxg}) is to apply Riemannian gradient (or projected gradient) type methods \citep{edelman1998geometry,absil2009optimization} on $\msf O(n;\bb R)$. One can take small gradient steps to ensure convergence and such methods converge at best with a linear rate (say, if the objective function is strongly convex). Nevertheless, by better utilizing the special global geometry of the parameter space (the orthogonal group), we can choose an arbitrary large step size and the process converges much more rapidly (with a superlinear rate). 

We next introduce a very effective and efficient algorithm to solve problems \eqref{eq:maxHatf} and \eqref{eq:maxg}, that is based on simple iterative matching, stretching and projection (MSP) operators. Mathematically, our algorithm for solving \eqref{eq:maxHatf} or \eqref{eq:maxg} are essentially doing projected gradient ascent on objective $\norm{\mb {AD}_o}{4}^4$ or $\norm{\mb {AY}}{4}^4$ respectively, but with {\em infinite} step size. Such infinite step size gradient method is able to find global maximum of our proposed objective \eqref{eq:maxHatf} and \eqref{eq:maxg} because it exploits the coincidence between the signed permutation symmetry in our formulation and the symmetry of $\msf{SP}(n)$ over $\msf O(n;\bb R)$.

\subsection{$\ell^4$-Norm Maximization over the Orthogonal Group}
Since the objective function of dictionary learning  \eqref{eq:maxHatf} concentrates to the $\ell^4$-norm maximization problem \eqref{eq:maxg} w.h.p., we first introduce our algorithm for the simpler (deterministic) case:
\begin{equation}
    \max_{\mb A} g(\mb {AD}_o)=\norm{\mb {AD}_o}{4}^4,\quad \st \quad \mb A\in\msf O(n;\bb R).
\end{equation}
In this setting, we only have information of the values of $g(\mb {AD}_o)$ and $\nabla_{\mb A}g(\mb {AD}_o)$. We want to update $\mb A$ to recover $\mb D_o^*$ based on these information. We use the following lemma to enforce the orthogonal constraint.
\begin{lemma}[Projection onto the Orthogonal Group]
    \label{lemma:OrthProj}
    $\forall \mb A\in \bb R^{n\times n}$, the orthogonal matrix which is closest to $\mb A$ in Frobenius norm is the following:
    \begin{equation}
        \mathcal{P}_{\msf O(n;\bb R)}(\mb A) = \underset{\mb M \in\msf O(n;\bb R)}{\arg\min} \norm{\mb A-\mb M}{F}^2 = \mb {UV}^*,
    \end{equation}
    where $\mb {U\Sigma V}^* = \text{SVD}(\mb A)$.
\end{lemma}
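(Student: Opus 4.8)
The plan is to recognize this as the classical orthogonal Procrustes problem and solve it by reducing the Frobenius-norm minimization to a linear maximization over $\msf O(n;\bb R)$. First I would expand
\[
\norm{\mb A - \mb M}{F}^2 = \norm{\mb A}{F}^2 - 2\innerprod{\mb A}{\mb M} + \norm{\mb M}{F}^2,
\]
and observe that $\norm{\mb M}{F}^2 = \trace(\mb M^*\mb M) = \trace(\mb I) = n$ is constant over the constraint set, so minimizing $\norm{\mb A - \mb M}{F}^2$ over $\mb M\in\msf O(n;\bb R)$ is equivalent to maximizing $\innerprod{\mb A}{\mb M} = \trace(\mb A^*\mb M)$.

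Next I would plug in the SVD $\mb A = \mb U\mb\Sigma\mb V^*$ and use the cyclic invariance of the trace: $\trace(\mb A^*\mb M) = \trace(\mb V\mb\Sigma\mb U^*\mb M) = \trace(\mb\Sigma\,\mb U^*\mb M\mb V) = \trace(\mb\Sigma\mb Z)$, where $\mb Z \doteq \mb U^*\mb M\mb V \in\msf O(n;\bb R)$ since it is a product of orthogonal matrices. Because $\mb\Sigma = \mathrm{diag}(\sigma_1,\dots,\sigma_n)$ with all $\sigma_i\ge 0$, we get $\trace(\mb\Sigma\mb Z) = \sum_i \sigma_i z_{ii} \le \sum_i \sigma_i |z_{ii}| \le \sum_i \sigma_i$, where the last inequality uses that every entry of an orthogonal matrix has magnitude at most $1$ (each column has unit $\ell^2$-norm). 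Finally I would check that this upper bound is attained at $\mb Z = \mb I$, i.e. at $\mb M = \mb U\mb V^*$, which indeed lies in $\msf O(n;\bb R)$; hence $\mb U\mb V^*$ is a closest orthogonal matrix to $\mb A$.

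There is no genuinely hard step here; the only point deserving a word of care is the well-definedness of the $\arg\min$. When $\mb A$ is invertible (all $\sigma_i>0$) the maximizer forces $z_{ii}=1$ for every $i$, hence $\mb Z = \mb I$ and $\mb M = \mb U\mb V^*$ is the unique minimizer; when $\mb A$ is singular the minimizer need not be unique, but $\mb U\mb V^*$ is still among the minimizers, which is all that is needed. In our application $\mb A$ is of the form $(\mb A_t\mb Y)^{\circ 3}\mb Y^*$, generically of full rank, so this caveat is immaterial.
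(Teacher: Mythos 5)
Your proposal is correct and follows essentially the same route as the paper's proof: expand the Frobenius norm, note $\norm{\mb M}{F}^2=n$ is constant, reduce to maximizing $\trace(\mb A^*\mb M)$, and use the SVD to conclude the maximizer is $\mb U\mb V^*$. The only difference is that the paper invokes Von Neumann's trace inequality to bound the trace, whereas you bound $\trace(\mb \Sigma \mb Z)\leq \sum_i\sigma_i$ elementarily via $|z_{ii}|\leq 1$ for orthogonal $\mb Z$ — an equally valid (and slightly more self-contained) justification, and your remark on non-uniqueness in the singular case is a welcome extra precision.
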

\begin{proof}
See \ref{proof:OrthProj}. 
\end{proof}
The MSP algorithm that maximizing \eqref{eq:maxg} is outlined as Algorithm \ref{algo:SPOrthD}. 
\begin{algorithm}[H]
  \caption{MSP for $\ell^4$-Maximization over $\msf O(n;\bb R)$}\label{algo:SPOrthD}
  \begin{algorithmic}[1]
    \State \textbf{Initialize} $\mb A_0\in \msf O(n,\bb R)$\Comment{Initialize $\mb A_0$ for iteration}
      \For{$t=0,1,...,T-1$}
        \State $\partial \mb A_t \doteq 4 (\mb A_t\mb D_o)^{\circ3}\mb D_o^* $\Comment{$\nabla_{\mb A}\norm{\mb A\mb D_o}{4}^4=4(\mb A\mb D_o)^{\circ3}\mb D_o^* $}
        \State $ \mb {U\Sigma V}^* = \text{SVD}\big(\partial\mb A_t \big)$ 
        \State $\mb A_{t+1} = \mb {UV}^*$\Comment{Project $\mb A_{t+1}$ onto orthogonal group}
      \EndFor
      \State \textbf{Output} $\mb A_{T}$
  \end{algorithmic}
\end{algorithm}
In each iteration of Algorithm \ref{algo:SPOrthD}, we use $\norm{\mb {AD}_o}{4}^4/n$ to evaluate how ``close'' $\mb {AD}_o$ is to a signed permutation matrix, since Lemma \ref{lemma:G(A)bound} shows the global maximum of $\norm{\mb {AD}_o}{4}^4$ is $n$ and the maximal evaluation is therefore normalized to 1. In Step 3 of the MSP algorithm, the calculation of $ \partial\mb A_t = 4(\mb A_t\mb D_o)^{\circ3}\mb D_o^*$ does not require knowledge of $\mb D_o$. It is merely the gradient of the objective function: 
\begin{equation*}
    \nabla_{\mb A} g(\mb A \mb D_o) = \nabla_{\mb A}\norm{\mb A\mb D_o}{4}^4=4(\mb A\mb D_o)^{\circ3}\mb D^*_o.
\end{equation*}

As the name of the algorithm suggests, each iteration actually performs a {\em ``matching, stretching, and projection''} operation: It first matches the current estimate $\mb A_t$ to the true $\mb D_o$. Then the element-wise cubic function $(\cdot)^{\circ3}$ stretches all entries of $\mb A_t\mb D_o$ by promoting the large ones and suppressing the small ones. 
$\partial \mb A_t$ is the correlation between so ``sparsified'' pattern and the original basis $\mb D_o^*$, which is then projected back onto the closest orthogonal matrix $\mb A_{t+1}$ in Frobenius distance. 

Repeating this ``matching, stretching, and projection'' process,  $\mb A_t\mb D_o$ is increasingly sparsified while ensuring the orthogonality of each $\mb A_t$. Ideally the process will stop when $\mb A_t\mb D_o$ becomes the sparsest, that is, a signed permutation matrix. The iterative MSP algorithm utilizes the global geometry of the orthogonal group and acts more like the {\em power iteration} method or the fixed point algorithm \citep{Hyvarinen1997-NC}.
It is easy to see that for any fixed $\mb D_o$, the optimal $\mb A_\star$ is the ``fixed point'' to the following equation:
\begin{equation}
    \mb A_\star = \mathcal{P}_{\mathsf O(n;\bb R)} \big[(\mb A_\star \mb D_o)^{\circ 3} \mb D^*_o\big],
\end{equation}
where $\mathcal{P}_{\mathsf O(n;\bb R)}$ is the projection onto the orthogonal group $\mathsf O(n;\bb R)$. The proposed ``matching, stretching, and projection'' algorithm is precisely to compute the fixed point of this equation in the most natural way! Our analysis (Theorem \ref{Thm:MSPLocalConvergence}) will show that this scheme converges extremely well and actually achieves a {\em super-linear} local convergence rate.

\begin{example}[One Run of Algorithm \ref{algo:SPOrthD}]
\label{example:SPAlgo4}
To help better visualize how well the algorithm works, we consider a special case when $\mb D_o = \mb I$. The problem reduces to finding a matrix with the maximum $\ell^4$-norm over the orthogonal group:
\begin{equation}
    \max_{\mb A} g(\mb {A})\doteq\norm{\mb {A}}{4}^4,\quad \st \quad \mb A\in \msf O(n;\bb R).
\end{equation}
We randomly initialize the MSP algorithm with an orthogonal matrix $\mb A_0 \in \bb R^{3\times 3}$, and the sequences below show how the quickly the MSP algorithm quickly converges to a signed permutation matrix:
\begin{footnotesize}
\begin{equation*}
\begin{array}{llll}
    &\mb A_0 = 
     \begin{pmatrix}
       -0.8249 & 0.3820 & -0.4168 \\
       -0.5240 & -0.2398 & 0.8173 \\
       -0.2122 & -0.8925 &-0.3979 
     \end{pmatrix}        
    &\xrightarrow{\text{stretching}}
    &\mb A_0^{\circ3}=
    \begin{pmatrix}
        -0.5613 & 0.0557 & -0.0724 \\
        -0.1439 & -0.0138 & 0.5459 \\
        -0.0096 & -0.7109 & -0.0630 
    \end{pmatrix}\\
    \xrightarrow{\text{projection}}
    &\mb A_1=
    \begin{pmatrix}
       -0.9795 & 0.0621 & -0.1917 \\
       -0.1953 & -0.0594 & 0.9789 \\
       -0.0494 & -0.9963 & -0.0703 
    \end{pmatrix}
    &\xrightarrow{\text{stretching}}
    &\mb A_1^{\circ3}=
    \begin{pmatrix}
       -0.9397  & 0.0002 & -0.0070 \\
       -0.0075 &  -0.0002 & 0.9381 \\
       -0.0001 & -0.9889 & -0.0003
    \end{pmatrix}\\
    \xrightarrow{\text{projection}}
    &\mb A_2=
    \begin{pmatrix}
        -1.0000 & 0.0002 & -0.0077\\ 
        -0.0077 & -0.0003 &  1.000 \\ 
        -0.0002 &  -1.0000 & -0.0003
    \end{pmatrix}
    &\xrightarrow{\text{stretching}}
    &\mb A_2^{\circ3}=
    \begin{pmatrix}
        -0.9999 & 0.0000 & -0.0000 \\
        -0.0000 & -0.0000 & 0.9999 \\
        -0.0000 & -1.0000 & -0.0000
    \end{pmatrix}\\
    \xrightarrow{\text{projection}}
    &\mb A_3=
    \begin{pmatrix}
        -1 & 0 & 0 \\
        0  & 0 & 1 \\
        0 & -1 & 0 
    \end{pmatrix}
    &\xrightarrow{\text{output}}
    &\mb A_3^{\circ3}=
    \begin{pmatrix}
        -1 & 0 & 0 \\
        0  & 0 & 1 \\
        0 & -1 & 0 
    \end{pmatrix}.
\end{array}
\end{equation*}
\end{footnotesize}
\end{example}



\subsection{$\ell^4$-Norm Maximization for Dictionary Learning}

Having understood how to optimize the deterministic case for the expectation, we now consider the original dictionary learning problem \eqref{eq:maxHatf}:
\begin{equation*}
    \max_{\mb A} \hat{f}(\mb A,\mb Y)=\norm{\mb A \mb Y}{4}^4,\quad \st \quad \mb A\in \msf O(n;\bb R).
\end{equation*}
This naturally leads to a similar MSP algorithm, outlined as Algorithm \ref{algo:SPOrthDL}.

\begin{algorithm}[H]
  \caption{MSP for $\ell^4$-Maximization based Dictionary Learning}\label{algo:SPOrthDL}
  \begin{algorithmic}[1]
    \State \textbf{Initialize} $\mb A_0\in \msf O(n,\bb R)$ \Comment{Initialize $\mb A_0$ for iteration} 
      \For{$t=0,1,...,T-1$}
        \State $\partial \mb A_t \doteq 4(\mb {A}_t\mb {Y})^{\circ3}\mb Y^*$ \Comment{$\nabla_{\mb A}\norm{\mb {AY}}{4}^4=4(\mb {AY})^{\circ3}\mb Y^*$}
        \State $ \mb {U\Sigma V}^* = \text{SVD}\big(\partial \mb A_t\big)$ 
        \State $\mb A_{t+1} = \mb {UV}^*$\Comment{Project $\mb A_{t+1}$ onto orthogonal group}
      \EndFor
      \State \textbf{Output} $\mb A_{T},\norm{\mb {A}_{T}\mb Y}{4}^4/3np\theta$ 
  \end{algorithmic}
\end{algorithm}

Note that in the output we also normalize $\norm{\mb {AY}}{4}^4$ by dividing the maximum of its expectation: $3np\theta$ so that the maximal output value would be around 1. Similar to Algorithm \ref{algo:SPOrthD}, we also use $\norm{\mb {AD}_o}{4}^4/n$ to evaluate how ``close'' $\mb {AD}_o$ is to a signed permutation matrix in each iteration. 

The same intuition of ``matching, stretching, and projection'' for the deterministic case naturally carries over here. In Step 3, the estimate $\mb A_t$ is matched with the observation $\mb Y$. The cubic function $(\cdot)^{\circ3}$ re-scales the results and promotes entry-wise spikiness of $\mb X_t = \mb A_t \mb Y$ accordingly. Again, here $\partial\mb A_t = 4(\mb {A}_t\mb {Y})^{\circ3}\mb Y^*$ is the gradient $\nabla_{\mb A}\hat{f}(\mb A,\mb Y)$ of the objective function. However, the algorithm is not performing gradient ascent: The matrix $(\mb {A}_t\mb {Y})^{\circ3}\mb Y^*$ is actually the sample covariance of the following two random vectors: $(\mb A_t \mb y)^{\circ 3}$ and $ \mb y$. The subsequent projection of this sample covariance matrix onto the orthogonal group $\mathsf O(n;\bb R)$ normalizes the scale of the  operator $\mb A_{t+1}$, hence normalize the covariance of $\mb A_{t+1} \mb Y$ for the next iteration.

Similar to the deterministic case, for any given sparsely generated data matrix $\mb Y$, the optimal dictionary $\mb A_\star$ is the ``fixed point'' to the following equation:
\begin{equation}
    \mb A_\star = \mathcal{P}_{\mathsf O(n;\bb R)} \big[(\mb A_\star \mb Y)^{\circ 3} \mb Y^*\big],
\end{equation}
where $\mathcal{P}_{\mathsf O(n;\bb R)}$ is the projection onto the orthogonal group $\mathsf O(n;\bb R)$. The iterative ``matching, stretching, and projection'' scheme is precisely to compute the fixed point of this equation in the most natural way!

Although the data and the objective function are random here, Proposition \ref{prop:L4MSPExp} below clarifies the relationship between this expectation and the deterministic gradient $\nabla_{\mb A}g(\mb {AD}_o)$ and Proposition \ref{prop:GradHatfUnionConcentrationBound} further shows that $\nabla_{\mb A}\hat{f}(\mb A,\mb Y)$ concentrates to its expectation when $p$ increases.

\begin{proposition}[Expectation of $\nabla_{\mb A}\hat{f}(\mb A,\mb Y)$ ]
\label{prop:L4MSPExp}
Let $\mb X\in \bb R^{n\times p}$, $x_{i,j}\sim_{iid}\text{BG}(\theta)$, $\mb D_o\in \msf O(n;\bb R)$ is any orthogonal matrix, and $\mb Y = \mb D_o\mb X_o$. The expectation of $\nabla_{\mb A}\hat{f}(\mb A, \mb Y)$ satisfies this property:
\begin{equation}
    \bb E_{\mb X_o} \big[\nabla_{\mb A}\hat{f}(\mb A, \mb Y)\big] = 3p\theta(1-\theta)\nabla_{\mb A}g(\mb {AD}_o)+12p\theta^2\mb A.
\end{equation}
\end{proposition}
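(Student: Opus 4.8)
The plan is to compute $\bb E_{\mb X_o}\big[\nabla_{\mb A}\hat f(\mb A,\mb Y)\big] = \bb E_{\mb X_o}\big[4(\mb {AY})^{\circ 3}\mb Y^*\big]$ entrywise, reduce everything to moments of the i.i.d.\ Bernoulli-Gaussian entries of $\mb X_o$, and then reassemble the result in matrix form. Write $\mb W = \mb {AD}_o \in \msf O(n;\bb R)$, so that $\mb {AY} = \mb W\mb X_o$ and $\mb Y^* = \mb X_o^*\mb D_o^*$. Since the $p$ columns of $\mb X_o$ are i.i.d., it suffices to work one column at a time: $\bb E_{\mb X_o}\big[(\mb {AY})^{\circ 3}\mb Y^*\big] = p\,\bb E_{\mb x}\big[(\mb {Wx})^{\circ 3}(\mb x^*\mb D_o^*)\big]$ for a single $\mb x$ with $x_k \sim_{iid}\text{BG}(\theta)$. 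The $(i,\ell)$ entry of $\bb E_{\mb x}\big[(\mb {Wx})^{\circ 3}\mb x^*\big]$ is $\bb E_{\mb x}\big[(\sum_a w_{ia}x_a)^3\, x_\ell\big]$; after that I multiply on the right by $\mb D_o^*$ at the very end.

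The main computation is expanding $(\sum_a w_{ia}x_a)^3 x_\ell = \sum_{a,b,c} w_{ia}w_{ib}w_{ic}\,x_a x_b x_c x_\ell$ and taking expectations. Because the $x_k$ are independent, zero-mean, symmetric (hence all odd moments vanish), the only surviving terms are those in which the multiset $\{a,b,c,\ell\}$ pairs up perfectly. The relevant BG moments are $\bb E[x_k^2] = \theta$ and $\bb E[x_k^4] = 3\theta$. Two cases survive: (i) $\ell \in \{a,b,c\}$ and the remaining two indices are equal to each other but distinct from $\ell$ — this contributes, via the $3$ ways to choose which of $a,b,c$ equals $\ell$, a term $3\sum_{b\ne \ell} w_{i\ell}w_{ib}^2\,\bb E[x_\ell^2]\bb E[x_b^2] = 3\theta^2 w_{i\ell}\sum_{b\ne\ell} w_{ib}^2$; and (ii) $a=b=c=\ell$, contributing $w_{i\ell}^3\,\bb E[x_\ell^4] = 3\theta\, w_{i\ell}^3$. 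Combining, and using $\sum_b w_{ib}^2 = 1$ (rows of $\mb W$ are unit vectors) so that $\sum_{b\ne\ell} w_{ib}^2 = 1 - w_{i\ell}^2$, the $(i,\ell)$ entry equals $3\theta w_{i\ell}^3 + 3\theta^2 w_{i\ell}(1 - w_{i\ell}^2) = 3\theta(1-\theta) w_{i\ell}^3 + 3\theta^2 w_{i\ell}$. In matrix form this says $\bb E_{\mb x}\big[(\mb {Wx})^{\circ 3}\mb x^*\big] = 3\theta(1-\theta)\,\mb W^{\circ 3} + 3\theta^2\,\mb W$.

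Finally, multiply through: $\bb E_{\mb X_o}\big[(\mb {AY})^{\circ 3}\mb Y^*\big] = p\big(3\theta(1-\theta)\,\mb W^{\circ 3} + 3\theta^2\,\mb W\big)\mb D_o^* = 3p\theta(1-\theta)(\mb {AD}_o)^{\circ 3}\mb D_o^* + 3p\theta^2\,\mb {AD}_o\mb D_o^*$. Since $\mb D_o\mb D_o^* = \mb I$, the second term is $3p\theta^2\,\mb A$, and since $\nabla_{\mb A}g(\mb {AD}_o) = 4(\mb {AD}_o)^{\circ 3}\mb D_o^*$, multiplying the whole identity by $4$ gives $\bb E_{\mb X_o}\big[\nabla_{\mb A}\hat f(\mb A,\mb Y)\big] = 3p\theta(1-\theta)\nabla_{\mb A}g(\mb {AD}_o) + 12p\theta^2\,\mb A$, as claimed. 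The only step requiring care is the combinatorial bookkeeping in the moment expansion — correctly enumerating the pairing patterns and their multiplicities, and remembering that all cross terms with an unpaired index vanish by symmetry of the Gaussian; the rest is algebra. This same calculation (the vector-level identity $\bb E_{\mb x}[(\mb {Wx})^{\circ 3}\mb x^*] = 3\theta(1-\theta)\mb W^{\circ 3} + 3\theta^2\mb W$) presumably also underlies the proof of Lemma~\ref{lemma:orthproperty}, so consistency with $\frac{1}{3p\theta}f(\mb A) = (1-\theta)g(\mb {AD}_o) + \theta n$ serves as a useful sanity check (contract the above with $\mb W$ and sum).
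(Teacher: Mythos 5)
Your proposal is correct and follows essentially the same route as the paper's proof: both write $\mb W = \mb A\mb D_o$, expand the cubic entrywise, kill all terms with an unpaired index using symmetry, and use the BG moments $\bb E[x^2]=\theta$, $\bb E[x^4]=3\theta$ together with unit-norm rows of $\mb W$ to obtain $\bb E[(\mb W\mb x)^{\circ 3}\mb x^*]=3\theta(1-\theta)\mb W^{\circ 3}+3\theta^2\mb W$ before right-multiplying by $\mb D_o^*$. Your combinatorial bookkeeping (the factor $3$ and the two surviving pairing patterns) matches the paper's computation, so nothing is missing.
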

\begin{proof}
    See \ref{proof:L4MSPExp}.
\end{proof}
This proposition indicates that the expected stochastic gradient $\nabla_{\mb A}\hat{f}(\mb A, \mb Y)$ agrees well with the gradient of the deterministic objective $g(\mb A \mb D_o)$, expect for a bias that is linear in the current estimate $\mb A$. This suggests a possible improvement for the MSP algorithm in the stochastic case: If we have knowledge about the $\theta$ (or can estimate it online), we could subtract a bias term $\alpha\mb A$ ($ \alpha \in (0,12p\theta^2]$) from $\partial \mb A_t$ in Step 3 of Algorithm \ref{algo:SPOrthDL}. One can verify experimentally that this indeed helps further accelerate the convergence of the algorithm. 

\begin{proposition}[Concentration Bound of $\frac{1}{np}\nabla\hat{f}(\cdot,\cdot)$]
    \label{prop:GradHatfUnionConcentrationBound}
    If $\mb X_o\in \bb R^{n\times p},x_{i,j}\sim_{iid}\text{BG}(\theta)$, for any $\mb A,\mb D_o \in \msf O(n;\bb R)$, and $\mb Y=\mb D_o\mb X_o$, the following inequality holds
    \begin{equation}
        \begin{split}
            &\bb P\Bigg(\sup_{\mb A\in \msf O(n;\bb R)}\frac{1}{4np}\norm{\nabla_{\mb A} \hat{f}(\mb A,\mb Y)-\bb E\big[\nabla_{\mb A} \hat{f}(\mb A,\mb Y)]}{F}\geq \delta\Bigg)< \frac{1}{p},
        \end{split}
    \end{equation}
    when $p=\Omega(\theta n^2\ln n/\delta^2)$.
\end{proposition}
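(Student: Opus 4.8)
The plan is to reduce the claim to a uniform concentration inequality for the matrix $(\mb W\mb X)^{\circ3}\mb X^{*}$ over the orthogonal group and then prove that inequality by the same pointwise--plus--net scheme used for Lemma~\ref{lemma:HatfUnionConcentrationBound}. \emph{Reduction by orthogonal invariance.} Writing $\mb Y=\mb D_o\mb X_o$ and $\mb W=\mb A\mb D_o$, we have $\nabla_{\mb A}\hat f(\mb A,\mb Y)=4(\mb W\mb X_o)^{\circ3}\mb X_o^{*}\mb D_o^{*}$; since right multiplication by the orthogonal matrix $\mb D_o^{*}$ preserves $\norm{\cdot}{F}$ and $\mb A\mapsto\mb A\mb D_o$ is a bijection of $\msf O(n;\bb R)$, it suffices to show, for $\mb X$ with i.i.d.\ $\mathrm{BG}(\theta)$ entries,
\[
\bb P\Big(\sup_{\mb W\in\msf O(n;\bb R)}\tfrac{1}{np}\big\|(\mb W\mb X)^{\circ3}\mb X^{*}-\bb E[(\mb W\mb X)^{\circ3}\mb X^{*}]\big\|_{F}\ge\delta\Big)<\tfrac{1}{p}
\]
when $p=\Omega(\theta n^{2}\ln n/\delta^{2})$. \emph{Entrywise reduction.} For any matrix $\mb M$, $\frac{1}{np}\norm{\mb M}{F}\ge\delta$ forces $\frac1p\max_{k,\ell}|M_{k\ell}|\ge\delta$ by pigeonhole over the $n^{2}$ entries, so it is enough to control the scalar average $\frac1p\sum_{j\in[p]}\big((\mb W\mb x_j)_k^{3}x_{\ell,j}-\bb E[(\mb W\mb x_j)_k^{3}x_{\ell,j}]\big)$ for each fixed $\mb W$ and each $(k,\ell)$, and then union bound over entries and over a net.

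\emph{Pointwise tail bound.} Each summand $(\mb W\mb x_j)_k^{3}x_{\ell,j}$ is a degree-$4$ polynomial in the Bernoulli--Gaussian coordinates of $\mb x_j$; because $\sum_i W_{ki}^{2}=1$, conditioning on the Bernoulli support makes $(\mb W\mb x_j)_k$ Gaussian with variance at most $1$, so the summand is (cube of subgaussian)$\times$($\mathrm{BG}$), i.e.\ sub-Weibull with moment growth $\bb E|\cdot|^{m}\lesssim\theta\,(Cm)^{2m}$ and a variance proxy of order $\theta$. A Bernstein-type inequality for such heavy-tailed i.i.d.\ sums --- or, equivalently, truncation at a level $T=\mathrm{poly}(n)\,\mathrm{polylog}(p)$, a bounded Bernstein bound for the truncated part, and the moment estimate to dispose of the discarded tail --- yields $\bb P\big(\frac1p|M_{k\ell}(\mb W)|\ge\delta/2\big)\le 2\exp(-c\,p\delta^{2}/\theta)$ for $\delta$ not too large.

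\emph{Net and Lipschitz step.} On the high-probability event $\{\|\mb X\|_{\mathrm{op}}\le C(\sqrt{\theta p}+\sqrt n),\ \max_j\norm{\mb x_j}{2}\le C\sqrt{\theta n}+C\sqrt{\ln p}\}$ the map $\mb W\mapsto(\mb W\mb X)^{\circ3}\mb X^{*}$ is Lipschitz in Frobenius norm with constant $L=\mathrm{poly}(n,p)$. Taking an $\eta$-net $\mc N$ of $\msf O(n;\bb R)$ with $|\mc N|\le(C/\eta)^{n^{2}}$ and $\eta=\delta/(4L)$, union bounding the pointwise estimate over $\mc N\times[n]\times[n]$ turns the requirement $p\delta^{2}/\theta\gtrsim n^{2}\ln(1/\eta)+\ln(n^{2}p)$ into $p=\Omega(\theta n^{2}\ln n/\delta^{2})$ --- the logarithmic factors collapse to $O(\ln n)$ once $\delta$ and $1/p$ are taken polynomially bounded --- and adding the failure probability of the norm event keeps the total probability below $1/p$. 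Passing from the net back to all of $\msf O(n;\bb R)$ via the Lipschitz bound, and then back through the entrywise and orthogonal-invariance reductions, gives the claim.

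\emph{Main obstacle.} The delicate part is the pointwise tail bound: the summand $(\mb W\mb x)_k^{3}x_\ell$ is a product of a cubed near-Gaussian and a Bernoulli--Gaussian, hence genuinely heavier than subexponential (sub-Weibull of order $1/2$), so off-the-shelf Bernstein does not apply and one must either invoke a Bernstein inequality tailored to such variables or carry out the truncation argument carefully; moreover the $\theta$-dependence of the variance proxy must be tracked precisely so that the final sample complexity is $\Omega(\theta n^{2}\ln n/\delta^{2})$ and not $\Omega(n^{2}\ln n/\delta^{2})$. Everything else --- the Lipschitz estimate, the choice of net, and the union-bound bookkeeping --- is routine and mirrors the proof of Lemma~\ref{lemma:HatfUnionConcentrationBound} essentially verbatim.
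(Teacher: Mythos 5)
Your proposal is correct and follows essentially the same route as the paper: reduce by orthogonal invariance to concentrating $(\mb W\mb X)^{\circ3}\mb X^{*}$ uniformly over $\msf O(n;\bb R)$, pass to individual entries, handle the heavy-tailed summands $x_{\ell,j}\langle\mb w_k,\mb x_j\rangle^3$ by truncation plus Bernstein with a variance proxy of order $\theta$, and finish with an $\eps$-net and Lipschitz estimates whose $(6/\eps)^{n^2}$ cardinality yields $p=\Omega(\theta n^2\ln n/\delta^2)$. The "delicate part" you flag is exactly what the paper does via entrywise truncation at $B=\ln p$ together with the second-moment bound $\bb E z_j^2\leq C\theta$ of Lemma \ref{lemma:GradHatfSecondMoment}, feeding into the one-sided Bernstein inequality, so no new idea is missing.
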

\begin{proof}
    See \ref{proof:GradHatfUnionConcentrationBound}.
\end{proof}

\section{Analysis of the MSP Algorithm}
\label{sec:Analysis}
In this section, we provide convergence analysis of the proposed MSP Algorithm \ref{algo:SPOrthD} that maximizes $g(\mb {AD}_o)$ over the orthogonal group $\msf O(n;\bb R)$. Each iteration of Algorithm \ref{algo:SPOrthD} performs the following iteration:
\begin{equation}
    \label{eq:MSPUpdate}
    \mb A_{t+1} = \mathcal{P}_{\mathsf O(n;\bb R)} \big[(\mb A_t \mb D_o)^{\circ 3} \mb D_o^*\big],
\end{equation}
notice that both $\mb A$ and $\mb D_o$ are orthogonal matrices, we can further reduce \eqref{eq:MSPUpdate} to
\begin{equation}
    \mb A_{t+1}\mb D_o = \mathcal{P}_{\mathsf O(n;\bb R)} \big[(\mb A_t \mb D_o)^{\circ 3}\big].
\end{equation}
If we view $\mb A_t \mb D_o$ as another orthogonal matrix $\mb W_t$, the convergence analysis reduces to prove that the following iteration
\begin{equation}
\label{eq:MSPUpdateNew}
    \mb W_{t+1} = \mathcal{P}_{\mathsf O(n;\bb R)} \big[(\mb W_t)^{\circ 3}\big]
\end{equation}
will converge to a signed permutation matrix $\mb W_\infty$. Hence, we conclude that the MSP algorithm \ref{algo:SPOrthD} for maximizing $g(\mb {AD}_o)$ is invariant of orthogonal rotation. So without loss of generality, we only need to provide convergence analysis for the case $\mb D_o=\mb I$ (or slightly abuse the notation a bit by changing $\mb W_t$ in \eqref{eq:MSPUpdateNew} into $\mb A_t$). 

When $\mb D_o=\mb I$, we wish to show the MSP algorithm converges to a signed permutation matrix for the optimization problem:
\begin{equation}
    \label{eq:L4MaxOrthClean}
    \max_{\mb A}g(\mb A) = \norm{\mb A}{4}^4,\quad \st \quad \mb A\in \msf O(n;\bb R),
\end{equation}
starting from any randomly initialized $\mb A_0$ on $\msf O(n;\bb R)$ with probability 1. For this purpose, we first introduce some basic properties of the space $\msf O(n;\bb R)$ and our objective function $g(\cdot)$. 

\subsection{Properties of the Orthogonal Group}

The orthogonal group is a special type of Stiefel manifold \citep{absil2009optimization} with tangent space 
\begin{equation}
    \label{OrthTangentSpace}
    T_{\mb W}\msf O(n;\bb R)\doteq\{\mb Z\mid \mb Z^*\mb W+\mb W^*\mb Z=\mb 0\},
\end{equation}
and the projection operation $\mc P_{T_{\mb W}\msf O(n;\bb R)}:\bb R^{n\times n}\to T_{\mb W}\msf O(n;\bb R)$ onto tangent space of $\msf O(n;\bb R)$ is defined as:
\begin{equation}
    \label{OrthProjection}  
    \mc P_{T_{\mb W}\msf O(n;\bb R)}(\mb Z) \doteq \mb Z - \frac{1}{2}\mb W(\mb {W}^*\mb Z+\mb Z^*\mb W) = \frac{1}{2}(\mb Z - \mb {WZ}^*\mb W).
\end{equation}
We use $\nabla_{\mb W}g(\mb W)$ to denote the gradient of $g(\mb W)$ w.r.t. $\mb W$ in $\bb R^{n\times n}$, and $\grad g(\mb W)$ to denote the Riemannian gradient of $g(\mb W)$ w.r.t. $\mb W$ on $T_{\mb W}\msf O(n;\bb R)$. Thus, we can formulate the Riemannian gradient of $\mb W$ on $T_{\mb W}\msf O(n;\bb R)$ as following:
\begin{equation}
    \label{RieGradient}
    \grad g(\mb W) = \mc P_{T_{\mb W}\msf O(n;\bb R)}(\nabla_{\mb W}g(\mb W)).
\end{equation}
The following proposition introduces the critical points of $g(\mb W)$ on $T_{\mb W}\msf O(n;\bb R)$.
\begin{proposition}
    \label{prop:L4OrthCriticalPoints}
    The critical points of $g(\mb W)$ on manifold $\msf O(n;\bb R)$ satisfies the following condition:
    \begin{equation}
        \label{eq:L4OrthCriticalPoints}
        (\mb W^{\circ3})^*\mb W =\mb W^* \mb W^{\circ3}.
    \end{equation}
\end{proposition}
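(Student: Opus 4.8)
The plan is to characterize critical points of $g(\mb W) = \norm{\mb W}{4}^4$ on the orthogonal group via the vanishing of the Riemannian gradient, $\grad g(\mb W) = \mathbf 0$. First I would compute the Euclidean gradient: since $g(\mb W) = \sum_{i,j} w_{i,j}^4$, we have $\nabla_{\mb W} g(\mb W) = 4\,\mb W^{\circ 3}$. Then I would substitute this into the formula \eqref{RieGradient} together with the tangent-space projection \eqref{OrthProjection}, obtaining
\begin{equation*}
    \grad g(\mb W) = \mc P_{T_{\mb W}\msf O(n;\bb R)}\big(4\mb W^{\circ 3}\big) = 2\big(\mb W^{\circ 3} - \mb W (\mb W^{\circ 3})^* \mb W\big).
\end{equation*}
Setting this to zero gives $\mb W^{\circ 3} = \mb W (\mb W^{\circ 3})^* \mb W$.

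From here the remaining step is purely algebraic manipulation using orthogonality of $\mb W$. Left-multiplying $\mb W^{\circ 3} = \mb W (\mb W^{\circ 3})^* \mb W$ by $\mb W^*$ yields $\mb W^* \mb W^{\circ 3} = (\mb W^{\circ 3})^* \mb W$, which is exactly the claimed condition \eqref{eq:L4OrthCriticalPoints}. Conversely, if $(\mb W^{\circ 3})^* \mb W = \mb W^* \mb W^{\circ 3}$, then left-multiplying by $\mb W$ and using $\mb W \mb W^* = \mb I$ recovers $\mb W^{\circ 3} = \mb W (\mb W^{\circ 3})^* \mb W$, i.e. $\grad g(\mb W) = \mathbf 0$; so the two conditions are equivalent and the characterization is exact, not merely necessary.

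I do not anticipate a genuine obstacle here — the proof is a short computation. The only points requiring mild care are: (i) correctly differentiating the element-wise fourth power to get the Hadamard cube $\mb W^{\circ 3}$ (not a matrix power); (ii) applying the projection identity \eqref{OrthProjection} in the form $\mc P_{T_{\mb W}\msf O}(\mb Z) = \frac12(\mb Z - \mb W \mb Z^* \mb W)$ and tracking the factor of $2$ (which is irrelevant once we set the gradient to zero); and (iii) noting that the matrix $(\mb W^{\circ 3})^* \mb W$ being symmetric is the natural reading of \eqref{eq:L4OrthCriticalPoints}, since $\grad g(\mb W) = 0$ on the Stiefel/orthogonal manifold is classically equivalent to $\mb W^* \nabla g(\mb W)$ being symmetric. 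I would present the forward and backward implications together as a single chain of equivalences, then remark that the condition says precisely that $(\mb W^{\circ 3})^* \mb W$ is a symmetric matrix.
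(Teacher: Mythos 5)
Your proposal is correct and follows essentially the same route as the paper's proof: compute $\nabla_{\mb W} g(\mb W)=4\mb W^{\circ3}$, project it onto $T_{\mb W}\msf O(n;\bb R)$ via \eqref{OrthProjection}, set the Riemannian gradient to zero, and multiply by $\mb W^*$ to obtain \eqref{eq:L4OrthCriticalPoints}. The only (harmless) addition is that you spell out the converse implication explicitly, which the paper leaves implicit.
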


\begin{proof}
    See \ref{proof:L4OrthCriticalPoints}.
\end{proof}
Therefore, $\forall \mb W \in \bb R^{n\times n}$ we can write critical points condition of $\ell^4$-norm over $\msf O(n;\bb R)$ as this following equations:
\begin{equation}
\label{prop:CriticalPoints}
    \begin{cases}
        (\mb W^{\circ3})^*\mb W = \mb W^*\mb W^{\circ3},\\
        \mb W^*\mb W = \mb I.
    \end{cases}
\end{equation}
Since the orthogonal group $\msf O(n;\bb R)$ is a continuous manifold in $\bb R^{n\times n}$ \citep{absil2009optimization,Hall2015Lie}, this indicates that critical points of $g(\mb W) = \norm{\mb W}{4}^4$ over the orthogonal group $\mb W\in \msf O(n;\bb R)$ has measure 0.

\begin{proposition}
    \label{prop:DiscreteCritPoints}
    All global maximizers of $\ell^4$-norm over the orthogonal group are isolated (nondegenerate) critical points.
\end{proposition}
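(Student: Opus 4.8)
The plan is to show that each global maximizer $\mb P \in \msf{SP}(n)$ is an isolated critical point by establishing that the Riemannian Hessian of $g$ at $\mb P$ is strictly negative definite on the tangent space $T_{\mb P}\msf O(n;\bb R)$ — that is, $\mb P$ is a nondegenerate local maximizer. Once this is in hand, nondegeneracy immediately implies isolation: a critical point at which the Hessian is nonsingular cannot be a limit of other critical points, since by the Morse lemma (or the implicit function theorem applied to the critical-point equations \eqref{prop:CriticalPoints}) $g$ has no other critical points in a neighborhood.

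First, I would reduce to the case $\mb P = \mb I$. By the signed permutation symmetry of the $\ell^4$-norm and the fact that right-multiplication by a fixed orthogonal matrix is an isometry of $\msf O(n;\bb R)$ that carries $g$ to itself up to the permutation action, it suffices to analyze the critical point $\mb I$: if $\mb I$ is nondegenerate then so is every $\mb P \in \msf{SP}(n)$. Next, I would parametrize a neighborhood of $\mb I$ in $\msf O(n;\bb R)$ via the exponential map $\mb W = \exp(\mb\Omega)$ with $\mb\Omega$ skew-symmetric, $\mb\Omega^* = -\mb\Omega$, so that $\mb W = \mb I + \mb\Omega + \tfrac12\mb\Omega^2 + O(\|\mb\Omega\|^3)$. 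Then I would expand $g(\mb W) = \norm{\mb W}{4}^4 = \sum_{i,j} w_{i,j}^4$ to second order in the entries $\omega_{i,j}$. The diagonal entries satisfy $w_{i,i} = 1 - \tfrac12\sum_{k}\omega_{i,k}^2 + O(\|\mb\Omega\|^3)$ (using skew-symmetry and $\mb\Omega^2$ having diagonal $-\sum_k\omega_{i,k}^2$), contributing $w_{i,i}^4 = 1 - 2\sum_k\omega_{i,k}^2 + O(\|\mb\Omega\|^3)$; the off-diagonal entries are $w_{i,j} = \omega_{i,j} + O(\|\mb\Omega\|^2)$, contributing $w_{i,j}^4 = O(\|\mb\Omega\|^4)$, which is negligible at second order. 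Summing gives $g(\mb W) = n - 2\sum_{i,j}\omega_{i,j}^2 + O(\|\mb\Omega\|^3) = n - 2\norm{\mb\Omega}{F}^2 + O(\|\mb\Omega\|^3)$, so the Hessian quadratic form is $-4\norm{\mb\Omega}{F}^2$ restricted to skew-symmetric $\mb\Omega$ — strictly negative definite on $T_{\mb I}\msf O(n;\bb R)$, hence $\mb I$ is a nondegenerate critical point.

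Finally, I would conclude isolation. Since the Riemannian Hessian of $g$ at each $\mb P \in \msf{SP}(n)$ is negative definite (nonsingular), the Morse-theoretic local normal form — or, more elementarily, the inverse function theorem applied to the map $\grad g: \msf O(n;\bb R) \to T\msf O(n;\bb R)$ whose derivative at $\mb P$ is the invertible Hessian — shows that $\mb P$ is the only critical point of $g$ in some open neighborhood. Hence every global maximizer is an isolated critical point. The main obstacle is the careful bookkeeping in the second-order Taylor expansion of $\norm{\exp(\mb\Omega)}{4}^4$: one must confirm that no $O(\|\mb\Omega\|^2)$ contributions sneak in from the off-diagonal entries or from cross terms in the diagonal entries, and that the curvature term (the $\tfrac12\mb\Omega^2$ piece) is correctly the only second-order contributor, so that the quadratic form really is definite and not merely semidefinite. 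Everything else is routine, given Lemma \ref{lemma:G(A)bound} and the tangent-space formulas \eqref{OrthTangentSpace}–\eqref{RieGradient} already established.
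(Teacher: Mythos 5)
Your proof is correct, but it takes a genuinely different route from the paper's. The paper argues purely at first order on the critical-point equations: assuming the maximizer (reduced to $\mb I$ by signed-permutation invariance, as you also do) is not isolated, it takes a curve $\mb W(t)$ of critical points through $\mb I$, differentiates the two constraints $(\mb W^{\circ3})^*\mb W = \mb W^*\mb W^{\circ3}$ and $\mb W^*\mb W=\mb I$ at $t=0$, and gets $\mb W_1=\mb W_1^*$ together with $\mb W_1+\mb W_1^*=\mb 0$, hence $\mb W_1=\mb 0$ --- the set of critical points admits no tangent direction at $\mb I$, so $\mb I$ is isolated; the nondegeneracy in the statement is only asserted there (``by calculating the Hessian'') without computation. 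You instead carry out exactly that Hessian computation: your bookkeeping is right --- for skew-symmetric $\mb\Omega$ the diagonal entries of $\exp(t\mb\Omega)$ are $1-\tfrac{t^2}{2}\sum_k\omega_{i,k}^2+O(t^3)$ and contribute $-2t^2\sum_k\omega_{i,k}^2$ each, the off-diagonal entries contribute only at order $t^4$, and since $\mb I$ is a critical point the second derivative along geodesics is precisely the Riemannian Hessian form, giving $g(\exp(t\mb\Omega))=n-2t^2\norm{\mb\Omega}{F}^2+O(t^3)$ and $\mathrm{Hess}\,g(\mb I)[\mb\Omega,\mb\Omega]=-4\norm{\mb\Omega}{F}^2<0$ on $T_{\mb I}\msf O(n;\bb R)$; isolation then follows from the inverse function theorem applied to $\grad g$. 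What your approach buys is the explicit negative-definiteness, i.e.\ a full proof of the parenthetical ``nondegenerate'' claim that the paper leaves as a remark, and it sidesteps the paper's implicit step that non-isolation furnishes a smooth path of critical points; what the paper's approach buys is a lighter, purely algebraic first-order argument that needs only the critical-point identity of Proposition \ref{prop:L4OrthCriticalPoints} and no second-order expansion.
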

\begin{proof}
    See \ref{proof:DiscreteCritPoints}. 
\end{proof}
We conjecture that the all local maximizers of the $\ell^4$-norm over $\msf O(n;\bb R)$ are global maximizers, as we will discuss in the next subsection.

\subsection{Relation between MSP and Projected Gradient Ascent (PGA)}
\label{subsec:MSPRelation}
Although the MSP algorithm over orthogonal group (Algorithm \ref{algo:SPOrthD}) and for dictionary learning (Algorithm \ref{algo:SPOrthDL}) has the same optimization procedure -- they all performs \textit{infinite} step size projected gradient ascent w.r.t. the objective function $\norm{\mb {AD}_o}{4}^4$ and $\norm{\mb {AY}}{4}^4$ respectively, we shall state their intrinsic difference clearly.   

In Proposition \ref{prop:L4MSPExp} and Proposition \ref{prop:GradHatfUnionConcentrationBound}, we can see that each iteration of the MSP algorithm for dictionary learning (Algorithm \ref{algo:SPOrthDL}) concentrates onto one step of PGA w.r.t. the objective $\norm{\mb {AD}_o}{4}^4$ with \textit{fixed step size} $\frac{1-\theta}{4\theta}$, while the MSP algorithm over the orthogonal group performs PGA with \textit{infinite} step size. We test PGA Algorithm \ref{algo:PGAL4MaxOrth} with different step size $\alpha$, as shown in Table \ref{tab:PGAOrth} for detail. We observe that:  

\begin{itemize}
    \item PGA with \textit{arbitrary} fixed step size find \textit{global maximizers} of \eqref{eq:L4MaxOrthClean};
    \item PGA converges \textit{faster} with \textit{larger} step size.
\end{itemize}
This experimental phenomenon supports the efficiency of the MSP algorithm -- it is indeed faster than any first order gradient method.

\begin{algorithm}[ht]
  \caption{PGA for $\ell^4$-Maximization over $\msf O(n;\bb R)$}\label{algo:PGAL4MaxOrth}
  \begin{algorithmic}[1]
    \State \textbf{Initialize} $\mb A_0\in \msf O(n; \bb R)$, step size $\alpha>0$ \Comment{Initialize $\mb A_0$, and $\alpha$ for iteration}
      \For{$t=0,1,...,T-1$}
        \State $\partial \mb A_t \doteq 4\mb A^{\circ3}$ \Comment{$\nabla_{\mb A}\norm{\mb A}{4}^4=4(\mb A)^{\circ3}$}
        \State $ \mb U \mb \Sigma \mb V^* = \text{SVD}\big(\mb A + \alpha\partial \mb A_t\big)$ 
        \State $\mb A_{t+1} = \mb U \mb V^*$\Comment{Project $\mb A_{t+1}$ onto $\msf O(n;\bb R)$}
      \EndFor
      \State \textbf{Output} $\mb A_{T}$ 
  \end{algorithmic}
\end{algorithm}

\begin{table}[ht]
    \centering
    \setlength{\tabcolsep}{2.5pt}
    \renewcommand{\arraystretch}{1.2}
    \begin{tabular}{|c|cccc|}
    \hline
    \multicolumn{1}{|c|}{} & \multicolumn{4}{c|}{Iterations} \\
    & $\alpha = 1$&  $\alpha = 10$ & $\alpha = 100$ &$\alpha = +\infty$  \\
    \hline
    $\msf O(5;\bb R)$ & 13 & 5 & 4 & 4\\
    \hline
    $\msf O(25;\bb R)$ & 23 & 7 & 5 & 5 \\
    \hline
    $\msf O(50;\bb R)$ & 35 & 10 & 8 & 6 \\
    \hline
    $\msf O(100;\bb R)$ & 63 & 12 & 9 & 9 \\
    \hline
    $\msf O(200;\bb R)$ & 70 & 14 & 11 & 9 \\
    \hline
    \end{tabular}
    \caption{Number of iteration for PGA (Algorithm \ref{algo:PGAL4MaxOrth}) on orthogonal group of different dimension $n$ to reach global maximizers, with the same initialization for each dimension $n$ and different step size $\alpha$. We directly apply the MSP Algorithm \ref{algo:SPOrthD} when $\alpha = +\infty$.}
    \label{tab:PGAOrth}
\end{table}

\subsection{Convergence Analysis}
We first introduce the properties of the critical points of the $\ell^4$-objective \eqref{eq:L4MaxOrthClean} in Proposition \ref{prop:MSPFixedPoint} and Proposition \ref{prop:ConvergenceToSaddle} will show that PGA with \textit{any fixed step size} $\alpha$ (even $\alpha = +\infty$) finds a critical points of \eqref{eq:L4MaxOrthClean}.
\begin{proposition}[Fixed Point of the MSP Algorithm]
    \label{prop:MSPFixedPoint}
    Given $\mb W\in\msf {SO}(n;\bb R)$, $\mb W$ is a fix point of the MSP Algorithm \ref{algo:SPOrthD} if and only if $\mb W$ is a critical point of the $\ell^4$-norm over $\msf {SO}(n;\bb R)$.
\end{proposition}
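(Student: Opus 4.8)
The plan is to reduce both ``$\mb W$ is a fixed point of Algorithm~\ref{algo:SPOrthD}'' and ``$\mb W$ is a critical point of $g$ on $\msf O(n;\bb R)$'' to a single algebraic condition on the matrix $\mb B\doteq\mb W^*\mb W^{\circ 3}$, and then reconcile them. For the fixed point side, by Lemma~\ref{lemma:OrthProj} one step of the iteration \eqref{eq:MSPUpdateNew} sends $\mb W$ to $\mb U\mb V^*$, where $\mb U\mb\Sigma\mb V^*=\mathrm{SVD}(\mb W^{\circ 3})$; equivalently, writing the polar decomposition $\mb W^{\circ 3}=\mb Q\mb H$ with $\mb Q=\mb U\mb V^*\in\msf O(n;\bb R)$ and $\mb H=\mb V\mb\Sigma\mb V^*\succeq\mb 0$, the image is the orthogonal polar factor $\mb Q$ (using the left-invariance $\mathcal P_{\msf O(n;\bb R)}(\mb W\mb M)=\mb W\,\mathcal P_{\msf O(n;\bb R)}(\mb M)$, immediate from orthogonal invariance of $\norm{\cdot}{F}$). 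Hence $\mb W$ is a fixed point iff $\mb W$ is the orthogonal polar factor of $\mb W^{\circ 3}$, i.e.\ $\mb W^{\circ 3}=\mb W\mb B$ with $\mb B=\mb W^*\mb W^{\circ 3}$ \emph{symmetric and positive semidefinite}: reading off $\mb W^*\mb W^{\circ 3}=\mb V\mb U^*\mb U\mb\Sigma\mb V^*=\mb V\mb\Sigma\mb V^*\succeq\mb 0$ gives one direction, and conversely $\mathcal P_{\msf O(n;\bb R)}(\mb W\mb B)=\mb W\,\mathcal P_{\msf O(n;\bb R)}(\mb B)=\mb W$ forces $\mathcal P_{\msf O(n;\bb R)}(\mb B)=\mb I$, which for symmetric $\mb B$ means $\mb B\succeq\mb 0$.

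For the critical point side, the Riemannian-gradient formulas \eqref{OrthProjection}--\eqref{RieGradient} give $\grad g(\mb W)=2\bigl(\mb W^{\circ 3}-\mb W(\mb W^{\circ 3})^*\mb W\bigr)$, so by Proposition~\ref{prop:L4OrthCriticalPoints} the matrix $\mb W$ is a critical point of $g$ iff $\mb W^{\circ 3}=\mb W\mb B$ with the same $\mb B=\mb W^*\mb W^{\circ 3}$ merely \emph{symmetric}. Thus ``fixed point'' $=$ ``critical point'' $+$ ``$\mb B\succeq\mb 0$'', and the implication ``fixed point $\Rightarrow$ critical point'' is immediate.

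The substantive step is the converse: if $\mb W\in\msf{SO}(n;\bb R)$ satisfies $(\mb W^{\circ 3})^*\mb W=\mb W^*\mb W^{\circ 3}$, then $\mb B$ is automatically positive semidefinite; equivalently, a critical $\mb W$ attains the maximum in the variational identity $g(\mb W)=\innerprod{\mb W}{\mb W^{\circ 3}}\le\max_{\mb M\in\msf O(n;\bb R)}\innerprod{\mb M}{\mb W^{\circ 3}}=\norm{\mb W^{\circ 3}}{*}$, since equality there is exactly $\mb B\succeq\mb 0$. The tools I would bring to bear: (i) $\mb B^2=\mb B^*\mb B=(\mb W^{\circ 3})^*\mb W^{\circ 3}$, so the eigenvalues of the symmetric $\mb B$ are $\pm$ the singular values of $\mb W^{\circ 3}$; (ii) a Cauchy--Schwarz estimate using that every row of $\mb W$ is a unit vector yields $\norm{\mb W^{\circ 3}}{\mathrm{op}}\le 1$, hence $\mathrm{spec}(\mb B)\subseteq[-1,1]$; (iii) $\mathrm{tr}(\mb B)=\sum_{i,j}w_{i,j}^4=g(\mb W)\ge 1$ by Lemma~\ref{lemma:G(A)bound}, while $\norm{\mb B}{F}^2=\norm{\mb W}{6}^6\le\norm{\mb W}{4}^4=\mathrm{tr}(\mb B)$. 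I expect ruling out a strictly negative eigenvalue of $\mb B$ at a critical point to be the main obstacle: for a \emph{generic} linear functional $\mb M\mapsto\innerprod{\mb M}{\mb N}$ on $\msf O(n;\bb R)$ not every critical point is a maximizer, so (i)--(iii), which only constrain the spectrum of $\mb B$, must be supplemented by genuinely using that here $\mb N=\mb W^{\circ 3}$ is tied to $\mb W$ itself---for instance by combining the critical point equation with the local behavior (curvature/stability) of the MSP map $\mb A\mapsto\mathcal P_{\msf O(n;\bb R)}(\mb A^{\circ 3})$ near $\mb W$, or by propagating the entrywise identity $w_{i,j}^3=(\mb W\mb B)_{i,j}$ through the orthonormality relations of the columns of $\mb W$.
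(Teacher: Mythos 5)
Your reduction is clean, and the forward direction is fine: writing $\mb B\doteq\mb W^*\mb W^{\circ3}$, a fixed point of Algorithm~\ref{algo:SPOrthD} is exactly the orthogonal polar factor of $\mb W^{\circ3}$, so $\mb B=\mb V\mb\Sigma\mb V^*\succeq\mb 0$ is in particular symmetric and criticality follows from Proposition~\ref{prop:L4OrthCriticalPoints}; this matches the paper's argument for that implication. The genuine gap is the converse. You correctly identify that ``critical point $\Rightarrow$ fixed point'' amounts to showing $\mb B\succeq\mb 0$ at every critical point, but you do not prove it: your observations (i)--(iii) only pin the spectrum of the symmetric matrix $\mb B$ inside $[-1,1]$ with positive trace (indeed positive diagonal), which, as you yourself concede, cannot rule out an indefinite $\mb B$, and the closing suggestions (exploit curvature/stability of the MSP map, propagate the entrywise identity $w_{i,j}^3=(\mb W\mb B)_{i,j}$) are programmatic rather than an argument. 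As written, the proposal establishes one implication plus a correct reformulation of the other, not the stated equivalence.

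For comparison, the paper closes the converse by working directly with the SVD $\mb U\mb\Sigma\mb V^*=\mb W^{\circ3}$: the criticality condition $(\mb W^{\circ3})^*\mb W=\mb W^*\mb W^{\circ3}$ reads $\mb V\mb\Sigma\,\mb U^*\mb W=\mb W^*\mb U\,\mb\Sigma\mb V^*$, and by uniqueness of the polar decomposition the orthogonal matrix $\mb S\doteq\mb V\mb U^*\mb W$ equals its transpose, hence $\mb S^2=\mb I$; the paper then asserts $\mb S=\mb I$ using the restriction to $\msf{SO}(n;\bb R)$, which gives $\mb W=\mb U\mb V^*$. In your notation $\mb S=\mb I$ is precisely your missing claim $\mb B\succeq\mb 0$ (note $\mb B=\mb S^*\mb V\mb\Sigma\mb V^*$), so the step you isolate as the obstacle is exactly the step the paper resolves---though it does so rather tersely: a symmetric orthogonal involution of determinant one need not be the identity, so your instinct that the sign question is the delicate point is not unfounded. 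In any event, your proposal leaves that step open and therefore does not yet prove the proposition; to complete it along your lines you would need an actual argument excluding a negative eigenvalue of $\mb B$ at a critical point (equivalently, showing $\mb S=\mb I$), not just the spectral bounds (i)--(iii).
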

\begin{proof}
    See \ref{proof:MSPFixedPoint}.
\end{proof}

\begin{proposition}[Convergence of PGA with Arbitrary Step Size]
\label{prop:ConvergenceToSaddle}
Iterative PGA algorithm \ref{algo:PGAL4MaxOrth} with any fixed step size $\alpha>0$ ($\alpha$ can be $+\infty$ and PGA is equivalent to MSP Algorithm \ref{algo:SPOrthD} when $\alpha = +\infty$) finds a saddle point of optimization problem \eqref{eq:L4MaxOrthClean} 
\begin{equation*}
    \max_{\mb A\in \msf O(n;\bb R)}\norm{\mb A}{4}^4.
\end{equation*}
\end{proposition}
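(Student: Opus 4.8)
The plan is to run a standard monotone-improvement-plus-compactness argument, treating the finite-step-size case and the $\alpha=+\infty$ (MSP) case uniformly by working with the variational description of the projection onto $\msf O(n;\bb R)$ rather than with an explicit (possibly set-valued, possibly discontinuous) update map. Write $g(\mb A)=\norm{\mb A}{4}^4$, so that $\nabla_{\mb A}g(\mb A)=4\mb A^{\circ3}$ and $g$ is convex on $\bb R^{n\times n}$, being a sum of fourth powers of the entries. Since $\norm{\mb M}{F}^2=n$ for every $\mb M\in\msf O(n;\bb R)$, Lemma~\ref{lemma:OrthProj} and the identity $\norm{\mb Z-\mb M}{F}^2=\norm{\mb Z}{F}^2+n-2\innerprod{\mb M}{\mb Z}$ give
\begin{equation}
\mc P_{\msf O(n;\bb R)}(\mb Z)=\underset{\mb M\in\msf O(n;\bb R)}{\arg\max}\,\innerprod{\mb M}{\mb Z},
\end{equation}
understood as the (possibly non-singleton) set of nearest orthogonal matrices; its graph is closed, since taking limits in $\norm{\mb Z_k-\mb M_k}{F}\le\norm{\mb Z_k-\mb N}{F}$ (for all $\mb N\in\msf O(n;\bb R)$) shows limits of nearest points remain nearest points. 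This closed-graph property is what substitutes for continuity at the non-generic matrices where the SVD/projection fails to be unique.

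First I would show that $g$ is non-decreasing along the iteration. For finite $\alpha$, comparing $\mb A_{t+1}=\mc P_{\msf O(n;\bb R)}(\mb A_t+\alpha\nabla_{\mb A}g(\mb A_t))$ against the feasible competitor $\mb A_t$ and expanding the squared norm yields $\innerprod{\nabla_{\mb A}g(\mb A_t)}{\mb A_{t+1}-\mb A_t}\ge\frac{1}{2\alpha}\norm{\mb A_{t+1}-\mb A_t}{F}^2\ge0$; for $\alpha=+\infty$, the relation $\mb A_{t+1}\in\arg\max_{\mb M}\innerprod{\mb M}{\mb A_t^{\circ3}}$ gives $\innerprod{\nabla_{\mb A}g(\mb A_t)}{\mb A_{t+1}-\mb A_t}\ge0$ directly. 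In either case convexity of $g$ then gives $g(\mb A_{t+1})\ge g(\mb A_t)+\innerprod{\nabla_{\mb A}g(\mb A_t)}{\mb A_{t+1}-\mb A_t}\ge g(\mb A_t)$. Since $g\le n$ on $\msf O(n;\bb R)$ by Lemma~\ref{lemma:G(A)bound}, the monotone sequence $g(\mb A_t)$ converges, so $g(\mb A_{t+1})-g(\mb A_t)\to0$, which forces the ascent gap $\innerprod{\nabla_{\mb A}g(\mb A_t)}{\mb A_{t+1}-\mb A_t}\to0$ (and, when $\alpha<\infty$, also $\norm{\mb A_{t+1}-\mb A_t}{F}\to0$).

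Next I would use compactness of $\msf O(n;\bb R)$ to extract a subsequence with $\mb A_{t_k}\to\mb A_\infty$ and (after a further extraction) $\mb A_{t_k+1}\to\mb A'$. Continuity of $\nabla_{\mb A}g$ and the closed-graph property give $\mb A'\in\arg\max_{\mb M\in\msf O(n;\bb R)}\innerprod{\mb M}{\mb A_\infty+\alpha\nabla_{\mb A}g(\mb A_\infty)}$ (with $\mb A_\infty^{\circ3}$ in place of $\mb A_\infty+\alpha\nabla_{\mb A}g(\mb A_\infty)$ when $\alpha=+\infty$). Passing the vanishing gap to the limit gives $\innerprod{\nabla_{\mb A}g(\mb A_\infty)}{\mb A'-\mb A_\infty}=0$, so $\mb A_\infty$ achieves the same value of the linear functional $\innerprod{\cdot}{\mb A_\infty+\alpha\nabla_{\mb A}g(\mb A_\infty)}$ as the maximizer $\mb A'$, where for finite $\alpha$ one also uses $\innerprod{\mb A_\infty}{\mb A_\infty}=n\ge\innerprod{\mb M}{\mb A_\infty}$ to absorb the extra $\mb A_\infty$ term. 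Hence $\mb A_\infty$ is itself a maximizer, i.e.\ a fixed point of the PGA/MSP map. Finally, first-order optimality of this linear functional on the manifold at $\mb A_\infty$ forces $\mc P_{T_{\mb A_\infty}\msf O(n;\bb R)}\!\big(\mb A_\infty+\alpha\nabla_{\mb A}g(\mb A_\infty)\big)=0$; since $\mc P_{T_{\mb A_\infty}\msf O(n;\bb R)}(\mb A_\infty)=\tfrac12(\mb A_\infty-\mb A_\infty\mb A_\infty^*\mb A_\infty)=0$ by~\eqref{OrthProjection}, this reduces via linearity and~\eqref{RieGradient} to $\grad g(\mb A_\infty)=0$, i.e.\ $\mb A_\infty$ is a critical point of $g$ over $\msf O(n;\bb R)$ --- equivalently $(\mb A_\infty^{\circ3})^*\mb A_\infty=\mb A_\infty^*\mb A_\infty^{\circ3}$ by Proposition~\ref{prop:L4OrthCriticalPoints}; for $\alpha=+\infty$ this last implication is exactly Proposition~\ref{prop:MSPFixedPoint}. (As is customary in this line of work, ``saddle point'' here is meant loosely as ``critical point'', the sharper classification being taken up separately.)

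The step I expect to be the real obstacle is the limit-point analysis: $\mc P_{\msf O(n;\bb R)}(\cdot)$ is single-valued and continuous only where the relevant SVD is essentially unique --- no zero or repeated singular values of $\mb A_t+\alpha\nabla_{\mb A}g(\mb A_t)$ --- and nothing a priori keeps the trajectory away from such degenerate matrices. Insisting on the set-valued projection and its closed graph, together with the ``$n=\norm{\cdot}{F}^2$ is constant'' device that converts the vanishing ascent gap into the fixed-point identity, is what lets the argument go through with no genericity assumption and for every $\alpha\in(0,+\infty]$ at once; the $\alpha=+\infty$ case needs only the cosmetic change in ascent bookkeeping noted above.
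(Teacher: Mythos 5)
Your proposal is correct, but it takes a genuinely different route from the paper's. The paper disposes of the proposition by reduction to a cited result: it introduces the auxiliary convex function $h(\mb A)=\frac{\alpha}{4}\norm{\mb A}{4}^4+\frac{1}{2}\norm{\mb A}{F}^2$ for finite $\alpha$ (and $h(\mb A)=\norm{\mb A}{4}^4$ for $\alpha=+\infty$), observes that the PGA/MSP update is exactly the successive linear-maximization scheme $\mb A_{k+1}\in\arg\max_{\mb W\in\msf O(n;\bb R)}\{h(\mb A_k)+\innerprod{\partial h(\mb A_k)}{\mb W-\mb A_k}\}$ for a convex function over the compact set $\msf O(n;\bb R)$, invokes Theorem 1 of \cite{journee2010generalized} as a black box to get stationarity, and uses the constancy of $\norm{\mb A}{F}^2=n$ on the group to convert criticality of $h$ into criticality of $\norm{\cdot}{4}^4$. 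You instead reprove the relevant part of that theorem from scratch: your monotone-ascent inequality (projection comparison against the competitor $\mb A_t$ plus convexity of $g$) and your ``absorb the extra $\mb A_\infty$ term via $\innerprod{\mb A_\infty}{\mb A_\infty}=n\geq\innerprod{\mb M}{\mb A_\infty}$'' step use precisely the same two ingredients, convexity of objective-plus-quadratic and constancy of the Frobenius norm on $\msf O(n;\bb R)$, while compactness together with the closed graph of the set-valued nearest-point projection replaces the citation, and the final passage from fixed point to $\grad g(\mb A_\infty)=\mb 0$ recovers the content of Propositions \ref{prop:L4OrthCriticalPoints} and \ref{prop:MSPFixedPoint} directly from the tangent-space projection \eqref{OrthProjection}. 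What your version buys is self-containedness and explicit handling of the points where the SVD/projection is non-unique, which the paper delegates to the cited theorem; what the paper's version buys is brevity and an explicit identification of PGA with the generalized power method. Both arguments deliver the same (subsequential) conclusion, namely that accumulation points of the iterates are critical points of $\norm{\mb A}{4}^4$ over $\msf O(n;\bb R)$, with ``saddle point'' used loosely in exactly the sense you flag.
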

\begin{proof}   
    See \ref{proof:ConvergenceToSaddle}.
\end{proof}

In addition to Proposition \ref{prop:ConvergenceToSaddle}, the intuition for \textit{larger gradient converges faster} is due to the landscape of function $h(\mb A)$. As we can see in \eqref{eq:FunctionFDef}, when $\alpha$ decreases, the landscape of $h(\mb A)$ approaches becomes ``flat'', hence the convergence rate decreases. On the contrary, when $\alpha\to+\infty$, $h(\mb A)$ preserves the ``sharp'' curvature of $\norm{\mb A}{4}^4$, which yields the optimal convergence rate.

Although the function $g(\mb A) = \|\mb A\|_4^4$ may have many critical points, the signed permutation group $\msf{SP}(n)$ are the only global maximizers. As recent work has shown \citep{sun2015complete}, such discrete symmetry helps regulate the global landscape of the objective function and makes it amenable to global optimization. Indeed, we have observed through our extensive experiments that, under broad conditions, the proposed MSP algorithm always converges to the globally optimal solution (set), at a super-linear convergence rate.

We only give a local result on the convergence of the MSP algorithm in this paper.\footnote{We leave the study of ensuring global optimality and convergence to future work.} That is, when the initial orthogonal matrix $\mb A$ is ``close'' enough to a signed permutation matrix, the MSP algorithm converges to that signed permutation at a very fast rate. It is easy to verify the algorithm is permutation invariant. Hence w.l.o.g., we may assume the target signed permutation is the identity $\mb I$. 
\begin{theorem}[Cubic Convergence Rate around Global Maximizers]
\label{Thm:MSPLocalConvergence}
Given an orthogonal matrix $\mb A\in \msf O(n;\bb R)$, let $\mb A^\prime$ denote the output of the MSP Algorithm \ref{algo:SPOrthD} after one iteration: $\mb A^\prime = \mb {UV}^*$, where $\mb {U\Sigma V}^*=\text{SVD}(\mb A^{\circ3})$. If $\norm{\mb A-\mb I}{F}^2=\eps$, for $\eps<0.579$, then we have $\norm{\mb A^\prime - \mb I}{F}^2 < \norm{\mb A - \mb I}{F}^2$ and $\norm{\mb A^\prime - \mb I}{F}^2 < O(\eps^3)$.
\end{theorem}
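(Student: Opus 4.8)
The plan is to exploit the reduction already carried out in the text: it suffices to take $\mb D_o=\mb I$, so one MSP step sends an orthogonal $\mb A$ with $\norm{\mb A-\mb I}{F}^2=\eps$ to $\mb A'=\mc P_{\msf O(n;\bb R)}(\mb A^{\circ3})$, the orthogonal polar factor of $\mb A^{\circ3}$ (by Lemma~\ref{lemma:OrthProj}, $\mb A'=\mb{UV}^*$ with $\mb{U\Sigma V}^*=\text{SVD}(\mb A^{\circ3})$). The guiding observation is that the entire $O(\eps)$-sized obstruction to $\mb A^{\circ3}$ being near $\mb I$ is \emph{symmetric}, hence essentially invisible to the polar projection, while the part that actually survives is skew-symmetric and of size only $O(\eps^{3/2})$; that is why one iteration squares-and-then-some the error.

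First I would do the book-keeping near the identity. Writing $\sigma_j^2\doteq\sum_{i\ne j}a_{ij}^2$ for the off-diagonal energy of column $j$, orthogonality forces $a_{jj}=\sqrt{1-\sigma_j^2}$ (the sign is fixed since $\mb A$ is near $\mb I$), and $\tau^2\doteq\sum_{i\ne j}a_{ij}^2=\sum_j\sigma_j^2$. From $1-\sqrt{1-x}\le x$ one gets $\tau^2\le\eps\le\tau^2+\tau^4$, so $\eps$ and $\tau^2$ agree to leading order; moreover $\eps\ge(1-\sqrt{1-\sigma_j^2})^2+\sigma_j^2$ for each $j$, so $\eps<0.579$ gives an explicit upper bound on every $\sigma_j^2$. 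Also $\mathrm{sym}(\mb A-\mb I)=-\tfrac12(\mb A-\mb I)^*(\mb A-\mb I)$ has Frobenius norm $\le\eps/2$, hence $\sum_{i\ne j}(a_{ij}+a_{ji})^2\le\eps^2$. Then decompose $\mb A^{\circ3}-\mb I=\mb\Delta_{\mathrm s}+\mb\Delta_{\mathrm a}$ into symmetric and skew parts: the diagonal $(1-\sigma_j^2)^{3/2}-1=-\tfrac32\sigma_j^2+O(\sigma_j^4)$ and the symmetric off-diagonal $\tfrac12(a_{ij}^3+a_{ji}^3)=\tfrac12(a_{ij}+a_{ji})(a_{ij}^2-a_{ij}a_{ji}+a_{ji}^2)$ both go into $\mb\Delta_{\mathrm s}$, which the preceding estimates bound by $\norm{\mb\Delta_{\mathrm s}}{F}\le\tfrac32\tau^2+O(\eps^2)$ (with $\norm{\mb\Delta_{\mathrm s}}{2}$ controlled likewise), whereas $\mb\Delta_{\mathrm a}$ has entries $\tfrac12(a_{ij}^3-a_{ji}^3)$ and $\norm{\mb\Delta_{\mathrm a}}{F}^2\le\sum_{i\ne j}a_{ij}^6\le\big(\sum_{i\ne j}a_{ij}^2\big)^3=\tau^6\le\eps^3$.

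Next I would propagate through the polar factor by a bootstrap. Write $\mb A'=\mb I+\mb R$, $\delta\doteq\norm{\mb R}{F}$; since $\mb A'$ is orthogonal, $\mathrm{sym}(\mb R)=-\tfrac12\mb R^*\mb R$, so $\norm{\mathrm{skew}(\mb R)}{F}\ge\delta(1-\tfrac14\delta^2)$. The polar identity $\mb A^{\circ3}=\mb A'\mb H$ with $\mb H=((\mb A^{\circ3})^*\mb A^{\circ3})^{1/2}\succeq0$ symmetric gives, after taking skew parts, $\mathrm{skew}(\mb R)=\mb\Delta_{\mathrm a}-\tfrac12\big(\mb R(\mb H-\mb I)-(\mb H-\mb I)\mb R^*\big)$, hence $\norm{\mathrm{skew}(\mb R)}{F}\le\norm{\mb\Delta_{\mathrm a}}{F}+\norm{\mb H-\mb I}{2}\,\delta$. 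A separate estimate of $\mb H^2-\mb I=(\mb A^{\circ3})^*\mb A^{\circ3}-\mb I$ (diagonal $(1-\sigma_j^2)^3-1$ plus off-diagonal pieces of size $O(\tau^3)$), together with $\norm{\mb H-\mb I}{2}\le\norm{\mb H^2-\mb I}{2}$, yields $\norm{\mb H-\mb I}{2}\le\eta(\eps)$ for an explicit $\eta$ that stays strictly below $1$ exactly on $\eps<0.579$. Combining, $\delta\big(1-\tfrac14\delta^2-\eta\big)\le\norm{\mb\Delta_{\mathrm a}}{F}\le\eps^{3/2}$; the competitor inequality $\delta\le2\norm{\mb A^{\circ3}-\mb I}{F}=O(\eps)$ gives an a priori bound making $1-\tfrac14\delta^2-\eta>0$, so $\delta\le C(\eps)\,\eps^{3/2}$, i.e. $\norm{\mb A'-\mb I}{F}^2=O(\eps^3)$; and $C(\eps)^2\eps^2<1$ on $\eps<0.579$ delivers the monotonicity $\norm{\mb A'-\mb I}{F}^2<\eps$.

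The main obstacle is not the conceptual skeleton but the quantitative bookkeeping: one must make every constant honest — in particular nail down how $\eps<0.579$ bounds $\max_j\sigma_j^2$ and hence $\norm{\mb\Delta_{\mathrm s}}{2}$ and $\eta$, and keep the higher-order terms ($O(\eps^2)$ in $\mb\Delta_{\mathrm s}$, the $O(\delta^4)$ in the sym/skew split, the $O(\tau^3)$ corrections in $\mb H$) visible rather than absorbed into loose $O(\cdot)$'s, because near $\eps=0.579$ the bootstrap condition $1-\eta-\tfrac14\delta^2>0$ is tight. It is precisely the requirement $C(\eps)^2\eps^2<1$ that pins down the numerical threshold $0.579$, so carrying out the polar-factor perturbation step with fully explicit constants (rather than invoking a generic perturbation bound) is the principal technical burden.
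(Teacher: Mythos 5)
Your core mechanism is sound and genuinely different from the paper's: you split $\mb A^{\circ3}-\mb I$ into symmetric and skew parts, observe $\norm{\mathrm{skew}(\mb A^{\circ3}-\mb I)}{F}\le\big(\sum_{i\ne j}a_{ij}^2\big)^{3/2}\le\eps^{3/2}$, and bootstrap through the polar identity $\mb A^{\circ3}=\mb A'\mb H$; the paper instead writes $\mb A=\mb D+\mb N$ (diagonal plus off-diagonal), notes $\norm{\mb N^{\circ3}}{F}\le\eps^{3/2}$ and $\mathcal P_{\msf O(n;\bb R)}(\mb D^{\circ3})=\mb I$, and invokes the explicit unitary-polar-factor perturbation bound (Lemma \ref{lemma:PolarPerturbation}) with denominator $\sigma_n(\mb D^{\circ3})+\sigma_n(\mb A^{\circ3})$, controlled via Lemma \ref{lemma:SingularValuePerturbation} and $\min_j a_{jj}\ge 1-\eps/2$. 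For small $\eps$ your route does deliver $\norm{\mb A'-\mb I}{F}^2=O(\eps^3)$ and the contraction, and the algebra you state (the skew-part bound, $\norm{\mathrm{skew}(\mb R)}{F}\ge\delta\sqrt{1-\delta^2/4}$, $\norm{\mb H-\mb I}{2}\le\norm{\mb H^2-\mb I}{2}$, $\delta\le 2\norm{\mb A^{\circ3}-\mb I}{F}$) all checks out.

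The genuine gap is the quantitative claim that this reaches the stated threshold $\eps<0.579$. Your bootstrap needs $1-\eta-\tfrac14\delta^2>0$ with margin, and since $\eta=\max\{1-\sigma_n(\mb A^{\circ3}),\,\sigma_1(\mb A^{\circ3})-1\}$ we have $1-\eta\le\sigma_n(\mb A^{\circ3})$: your effective denominator is at most $\sigma_n(\mb A^{\circ3})$ itself. But the only provable lower bound your sketch supplies is of the form $\sigma_n(\mb A^{\circ3})\ge(1-\eps/2)^3-\eps^{3/2}$ (the route through $\norm{\mb H^2-\mb I}{2}$ is even worse, since the off-diagonal of $(\mb A^{\circ3})^*\mb A^{\circ3}$ contributes about $2\eps^{3/2}$), and this quantity vanishes already at $\eps\approx0.536$; moreover the final contraction requirement $\delta^2<\eps$ becomes, at leading order, $\eps<1-\eta-\tfrac14\delta^2$, which with these estimates fails for $\eps$ beyond roughly $0.35$--$0.4$. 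So the assertion that ``$C(\eps)^2\eps^2<1$ pins down $0.579$'' is not something your estimates can produce: the number $0.579$ in the paper is the root of $2\eps-2(1-\eps/2)^3-\eps^{3/2}=0$ coming from Li's bound, whose denominator $\tfrac12\big(\sigma_n(\mb D^{\circ3})+\sigma_n(\mb A^{\circ3})\big)\ge\tfrac12(1-\eps/2)^3-\tfrac12\eps^{3/2}$ remains usable even when $\sigma_n(\mb A^{\circ3})$ is near zero, precisely where your $1-\eta$ margin dies. To salvage the full range along your route you would need a substantially sharper lower bound on $\sigma_n(\mb A^{\circ3})$ (or replace the bootstrap by a Li-type comparison), not merely more careful constants; as written, your proof establishes the cubic rate and contraction only on a strictly smaller interval than the theorem claims.
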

\begin{proof}
    See \ref{proof:MSPLocalConvergence}.
\end{proof}

Theorem \ref{Thm:MSPLocalConvergence} shows that the MSP Algorithm \ref{algo:SPOrthD} achieves cubic convergence rate locally, which is much faster than any gradient descent methods. Our experiments in Section \ref{sec:Experiments} confirm this super-linear convergence rate for the MSP algorithms, at least in the deterministic case.

The above theorem only proves local convergence. As shown in section \ref{sec:experiment-DL}, We have observed in experiments that the algorithm actually converges {\em globally} under very broad conditions. We can see why this could be true in general from the special case when $n = 2$. Note that $\msf O(n,\bb R)$ is a disjoint manifold with two continuous parts $\msf {SO}(n,\bb R)$ and its reflection \citep{Hall2015Lie}, for convenience, we only consider $\msf {SO}(n,\bb R)$, which is a ``half'' of $\msf O(n,\bb R)$, with determinant $det(\mb A) = 1$. The next lemma shows the global convergence of our MSP algorithm in $\msf {SO}(2;\bb R)$.

\begin{proposition}[Global Convergence of the MSP Algorithm on $\msf {SO}(2;\bb R)$]
\label{prop:MSPGlobalConvn=2}
When $n=2$, if we parameterize our $\mb A_t\in \msf {SO}(2,\bb R)$ as the following:\footnote{The result of this lemma shows an update on the $\tan x$ function, which is a periodic function with period $\pi$, so we set $\theta\in[-\frac{\pi}{2},\frac{\pi}{2}]$ to avoid the periodic ambiguity, one can easily generalize this result to the case where $\theta\in[-\pi,-\frac{\pi}{2})\cup(\frac{\pi}{2},\pi]$, since the signs of $1$ won't affect our claim for pursuing a ``signed permutation'' matrix.}
\begin{equation}
\mb A_t = \begin{pmatrix}
       \cos\theta_t  &  -\sin\theta_t \\
       \sin\theta_t  &  \cos\theta_t 
    \end{pmatrix},\quad \forall \theta_t \in \Big[-\frac{\pi}{2},\frac{\pi}{2}\Big],\\
\end{equation}
then $\theta_t$ and $\theta_{t+1}$ satisfies the following relation
\begin{equation}
\theta_{t+1} = \tan^{-1}\big(\tan^3\theta_{t}\big).
\end{equation}
\end{proposition}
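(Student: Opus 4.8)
The plan is to carry out the single MSP update of Algorithm~\ref{algo:SPOrthD} in closed form in the $2\times 2$ case. First I would compute the element-wise cube of the rotation matrix: raising each entry to the third power preserves the pattern, so
\begin{equation*}
\mb A_t^{\circ 3} = \begin{pmatrix} \cos^3\theta_t & -\sin^3\theta_t \\ \sin^3\theta_t & \cos^3\theta_t \end{pmatrix}.
\end{equation*}
The key observation is that this is again a matrix of ``rotation type'' $\begin{pmatrix} a & -b \\ b & a \end{pmatrix}$, i.e.\ a nonnegative scalar multiple of an element of $\msf{SO}(2;\bb R)$. Setting $r_t \doteq \sqrt{\cos^6\theta_t + \sin^6\theta_t} > 0$, we may write $\mb A_t^{\circ 3} = r_t \begin{pmatrix} \cos\phi_t & -\sin\phi_t \\ \sin\phi_t & \cos\phi_t \end{pmatrix}$ with $\cos\phi_t = \cos^3\theta_t/r_t$ and $\sin\phi_t = \sin^3\theta_t/r_t$ (these satisfy $\cos^2\phi_t + \sin^2\phi_t = 1$). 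Since $\theta_t \in [-\tfrac{\pi}{2},\tfrac{\pi}{2}]$ forces $\cos^3\theta_t \ge 0$, the angle $\phi_t$ lies in $[-\tfrac{\pi}{2},\tfrac{\pi}{2}]$, and because $\tan\phi_t = \sin^3\theta_t/\cos^3\theta_t = \tan^3\theta_t$, we get $\phi_t = \tan^{-1}(\tan^3\theta_t)$ on the principal branch.

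Next I would invoke Lemma~\ref{lemma:OrthProj}: the projection onto $\msf O(2;\bb R)$ is $\mb U\mb V^*$ from $\mathrm{SVD}(\mb A_t^{\circ 3})$. For $\mb M = r_t\mb R$ with $\mb R\in\msf{SO}(2;\bb R)$ a rotation, $\mb M^*\mb M = r_t^2\mb I$, so all singular values equal $r_t$ and the polar factor of $\mb M$ is $\mb M(r_t\mb I)^{-1} = \mb R$; equivalently $\mb U\mb V^* = \mb R$ independently of the (non-unique) choice of $\mb U,\mb V$. Hence $\mb A_{t+1} = \mathcal P_{\msf O(2;\bb R)}(\mb A_t^{\circ 3})$ is precisely the rotation by angle $\phi_t$, and comparing with the parameterization of $\mb A_{t+1}$ yields $\theta_{t+1} = \phi_t = \tan^{-1}(\tan^3\theta_t)$.

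There is essentially no hard step; the two points that require care are (i) checking that the element-wise cube preserves the rotation-matrix form, which is what makes the SVD projection trivial, and (ii) confirming that the recovered angle sits in the correct branch of $\tan^{-1}$, which is exactly what the restriction $\theta_t\in[-\tfrac{\pi}{2},\tfrac{\pi}{2}]$ in the footnote guarantees (via $\cos^3\theta_t\ge 0$). The degenerate endpoints $\theta_t=\pm\tfrac{\pi}{2}$, where $\cos\theta_t=0$ and $r_t$ still equals $1$, are handled directly: there $\mb A_t^{\circ 3}$ is already a signed permutation matrix, hence a fixed point of the iteration, consistent with the formula in the limiting sense $\tan^{-1}(\pm\infty)=\pm\tfrac{\pi}{2}$. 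Finally, since $|\tan^3\theta_t| < |\tan\theta_t|$ for $\theta_t\in(-\tfrac{\pi}{4},\tfrac{\pi}{4})\setminus\{0\}$ and the map contracts toward $0$ while it expands toward $\pm\tfrac{\pi}{2}$ on the complementary intervals, this recursion makes global convergence to a signed permutation transparent, which is the payoff the proposition is aimed at.
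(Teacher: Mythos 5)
Your argument is correct, and it takes a genuinely different route from the paper's. The paper proves the proposition by writing a generic candidate $\mb M\in\msf O(2;\bb R)$ in both its rotation (det $=1$) and reflection (det $=-1$) parameterizations, expanding $\norm{\mb M-\mb A_t^{\circ3}}{F}^2$ in each case, observing that the reflection branch gives a constant objective that the rotation branch can beat by choosing the angle in the same quadrant as $\theta_t$, and then extracting $\tan\theta_{t+1}=\tan^3\theta_t$ from the first-order optimality condition $\sin\theta\cos^3\theta_t-\cos\theta\sin^3\theta_t=0$. You instead exploit the algebraic fact that the entrywise cube preserves the form $\begin{pmatrix} a & -b\\ b & a\end{pmatrix}$, so $\mb A_t^{\circ3}=r_t\mb R_{\phi_t}$ with $r_t=\sqrt{\cos^6\theta_t+\sin^6\theta_t}>0$; since such a matrix has both singular values equal to $r_t$, its unique orthogonal polar factor is $\mb R_{\phi_t}$ itself, and Lemma \ref{lemma:OrthProj} gives $\mb A_{t+1}=\mb R_{\phi_t}$ directly, with the branch check $\cos^3\theta_t\ge 0$ pinning $\phi_t=\tan^{-1}(\tan^3\theta_t)$. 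What your approach buys is that membership in $\msf{SO}(2;\bb R)$ and the explicit angle come for free from uniqueness of the polar factor of a nonsingular matrix, with no case split over the determinant and no optimization over critical points (the paper's first-order condition, strictly speaking, only identifies critical angles and needs the quadrant argument to select the minimizer, which your computation sidesteps); the paper's variational argument, on the other hand, stays closer to the definition of the projection and does not rely on the special "complex-number" structure of $2\times 2$ rotation-type matrices. Your handling of the endpoints $\theta_t=\pm\pi/2$ and the closing remarks on contraction/expansion are fine but not needed for the stated identity.
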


\begin{proof}
See \ref{proof:MSPGlobalConvn=2}.
\end{proof}

The previous proposition \ref{prop:MSPGlobalConvn=2} indicates that the MSP algorithm is essentially conducting the following iteration: 
    \begin{equation}
        \theta_{t+1} = \tan^{-1}\big(\tan^3\theta_t\big) \quad \text{or} \quad \tan \theta_{t+1} = \tan^3\theta_t.
    \end{equation}
See Figure \ref{fig:ConvergenceFunc} for a plot of this function. The iteration will converge superlinearly to the following values $\theta_\star = \pm \pi/2,0,\pm \pi$, which all correspond to signed permutation matrices in $\text{SP}(2)$. The fixed points of this iteration are $\theta = k\pi/4$, where $k\in\{-4,-3,-2,-1,0,1,2,3,4\}$. Among all these fixed points, the unstable ones are those with odd $k$, which correspond to Hadamard matrices. This phenomenon also justifies our experiments: Hadamard matrices are fixed point of the MSP iteration, but they are unstable and can be avoided with small random perturbation.

\begin{figure}[H]
    \centering
    \begin{subfigure}[t]{0.45\textwidth}
        \centering
        \includegraphics[width=\textwidth]{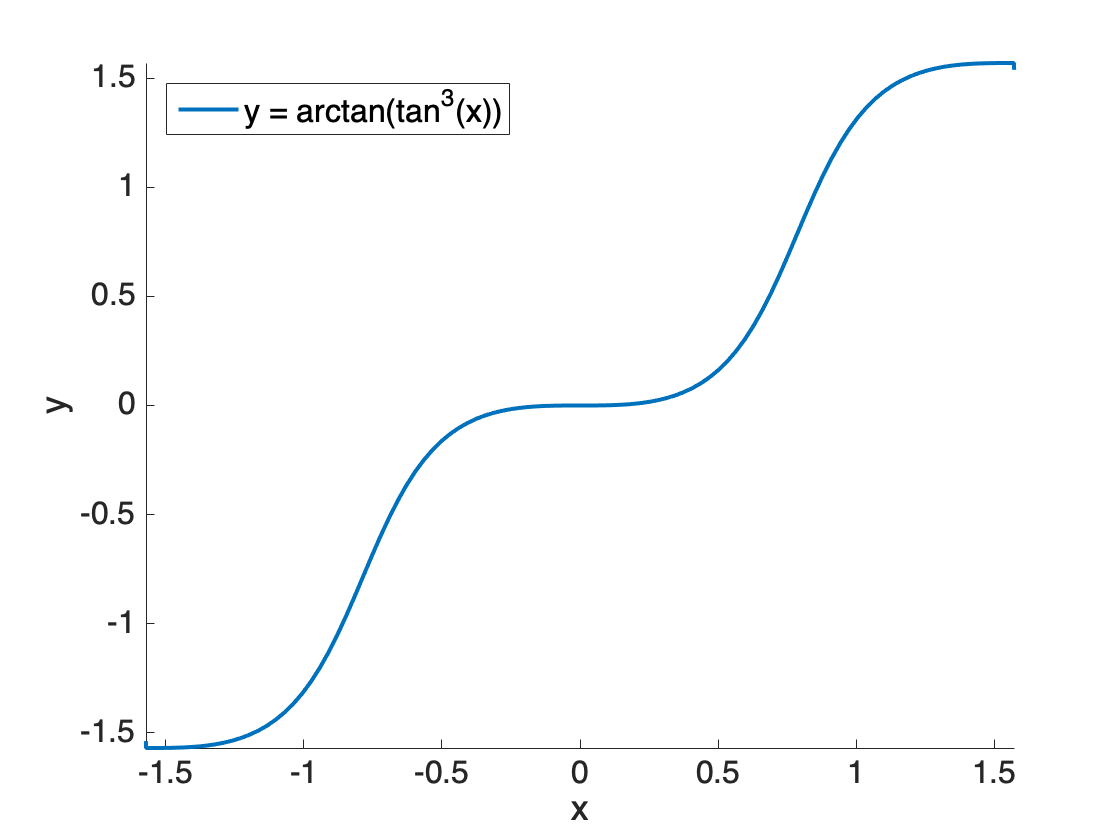}
        \caption{$x\in[-\frac{\pi}{2},\frac{\pi}{2}]$}
    \end{subfigure}
    ~
    \begin{subfigure}[t]{0.45\textwidth}
        \includegraphics[width=\textwidth]{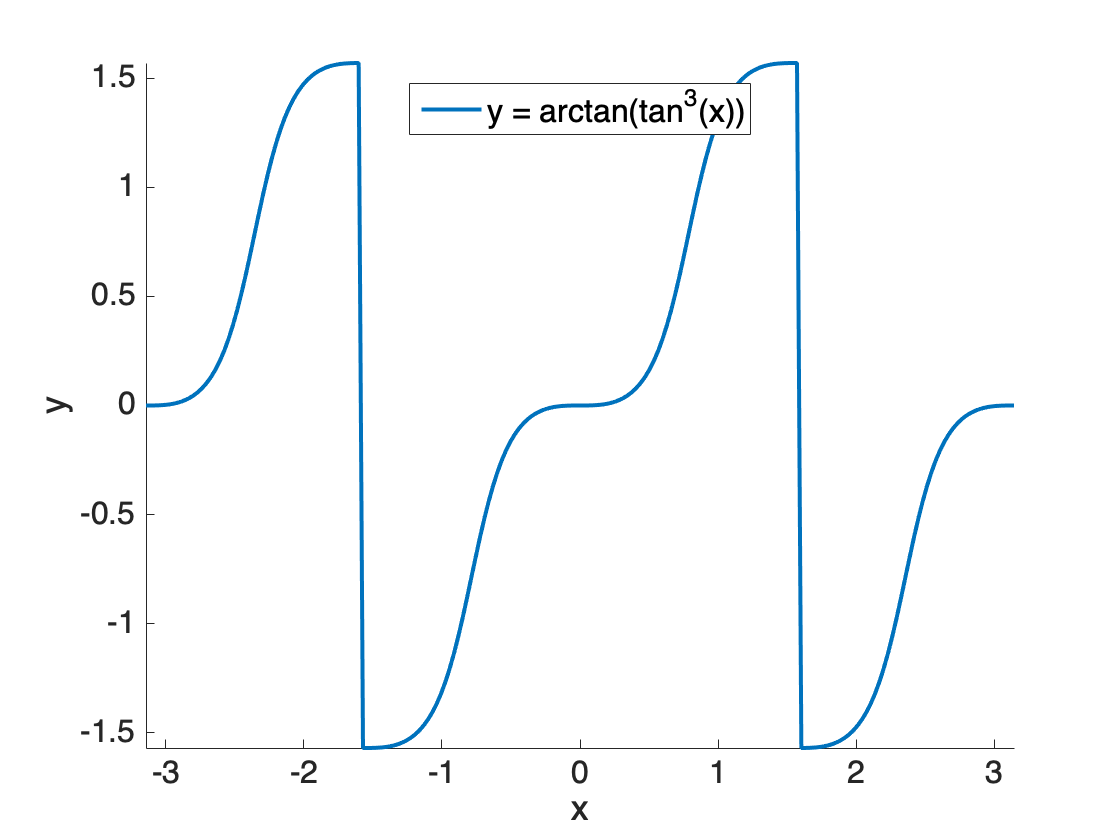}
    \caption{$x\in[-\pi,\pi]$}
    \end{subfigure}%
    \caption{Function $y = \tan^{-1}\big(\tan^3x\big)$, for different domain of $x$}
    \label{fig:ConvergenceFunc}
\end{figure}

\subsection{Generalized $\ell^{2k}$-Norm Maximization}
\label{subsec:General2kNorm}

One may notice that we can also promote sparsity over the orthogonal group by maximizing the $\ell^{2k}$-norm, $\forall k\geq2$ on the orthogonal group:\footnote{Note that when $2k=2$, $\norm{\mb A}{2}^2=n$ is a constant, so $\ell^{2k}$-norm only promotes sparsity when $2k\geq4$.}
\begin{equation}
    \max_{\mb {A}} \norm{\mb A}{2k}^{2k},\quad\st\quad \mb A\in \msf O(n;\bb R).
\end{equation}
In fact, the resulting algorithm would have a higher rate of convergence for the deterministic case, as the stretching with the power $(\cdot)^{\circ 2k-1}$ sparsifies the matrix more significantly with a larger $k$. See Section \ref{sec:2k-experiments} for experimental verification.
The following example provides one run of MSP algorithm when $2k=10$.
\begin{example}[One Run of MSP Algorithm with $2k=10$] 
\label{example:SPAlgo10}
This example shows how a random orthogonal matrix $\mb A_0 \in \msf O(n;\bb R)$ evolves into a signed permutation matrix, using $\ell^{10}$-norm:
\begin{footnotesize}
\begin{equation*}
\begin{array}{llll}
    &\mb A_0 = 
     \begin{pmatrix}
       -0.6142  &  0.3943 &  0.6836 \\
       -0.2039  &  0.7575 & -0.6201 \\
        0.7623  &  0.5203 &  0.3849
     \end{pmatrix}        
    &\xrightarrow{\text{stretching}}
    &\mb A_0^{\circ9}=
    \begin{pmatrix}
       -0.0124  &  0.0002  &  0.0326 \\
       -0.0000  &  0.0821  & -0.0136 \\
        0.0870  &  0.0028  &  0.0002
    \end{pmatrix}\\
    \xrightarrow{\text{projection}}
    &\mb A_1=
    \begin{pmatrix}
       -0.2657  &  0.0908  &  0.9598 \\
       -0.0150  &  0.9950  & -0.0983 \\
        0.9639  &  0.0406  &  0.2630
    \end{pmatrix}
    &\xrightarrow{\text{stretching}}
    &\mb A_1^{\circ9}=
    \begin{pmatrix}
       -0.0000  &  0.0000  &  0.6910 \\
       -0.0000  &  0.9562  & -0.0000 \\
        0.7185  &  0.0000  &  0.0000
    \end{pmatrix}\\
    \xrightarrow{\text{projection}}
    &\mb A_2=
    \begin{pmatrix}
        0.0000  & -0.0000  &  1.0000 \\
        0.0000  &  1.0000  & -0.0000 \\
        1.0000  & -0.0000  &  0.0000
    \end{pmatrix}
    &\xrightarrow{\text{output}}
    &\mb A_2=
    \begin{pmatrix}
         0   &  0  &  1 \\
         0   &  1  &  0 \\
         1   &  0  &  0
    \end{pmatrix}.
\end{array}
\end{equation*}
\end{footnotesize}
Comparing with Example \ref{example:SPAlgo4}, where $\ell^4$-norm MSP algorithm takes 6 iterations, the $\ell^{10}$-norm MSP finds a signed permutation matrix in only 2 iterations. 
\end{example} 
In fact, one can show that if $2k \to \infty$, the corresponding MSP algorithm converges with {\em only one} iteration for the deterministic case!

So why don't we choose $\ell^\infty$-norm instead? Notice that the above behavior is for the expectation of the objective function we actually care here. For dictionary learning, the object function $\hat{f}(\mb A,\mb Y)$ depends on the random samples $\mb Y$. Both $\hat{f}(\mb A,\mb Y)$ and $\nabla_{\mb A}\hat{f}(\mb A,\mb Y)$ concentrate on their expectations only when $p$ is large enough. For the same error threshold, if we choose larger $k$, then we will need much larger sample size $p$ for $\hat{f}(\mb A,\mb Y)$, $(\mb {AY})^{\circ2k-1}\mb Y^*$ to concentrate on their expectation $f(\mb A)$, $\bb E [(\mb {AY})^{\circ2k-1}\mb Y^*]$ respectively. As we will see in Section \ref{sec:2k-experiments} from experiments, the choice of $2k=4$ seems to be the best in terms of balancing these two contending factors of convergence rate and sample size.

\section{Experimental Verification}
\label{sec:Experiments}
\subsection{$\ell^4$-Norm Maximization over the Orthogonal Group}
First, to have some basic ideas about how fast and smoothly the proposed MSP algorithm maximizes $\ell^4$-norm. 
Figure \ref{fig:SPOrthOneRun} shows one run of the MSP Algorithm \ref{algo:SPOrthD} that maximizes $\norm{\mb A}{4}^4/n$. It reaches global maxima in less than 10 iterations for $n=50$ and $n = 100$.
\begin{figure}[t]
    \centering
    \begin{subfigure}[t]{0.4\textwidth}
        \centering
        \includegraphics[width=\textwidth]{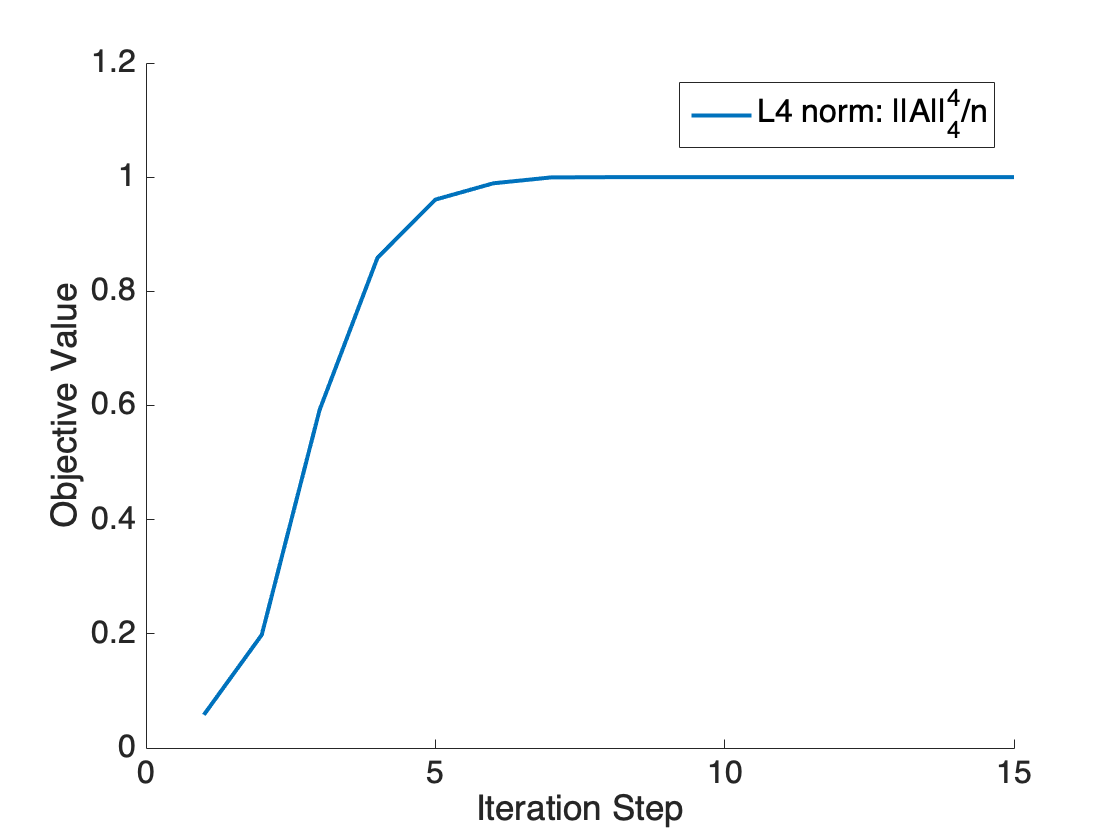}
        \caption{$n=50$}
    \end{subfigure}%
    ~ 
    \begin{subfigure}[t]{0.4\textwidth}
        \centering
        \includegraphics[width=\textwidth]{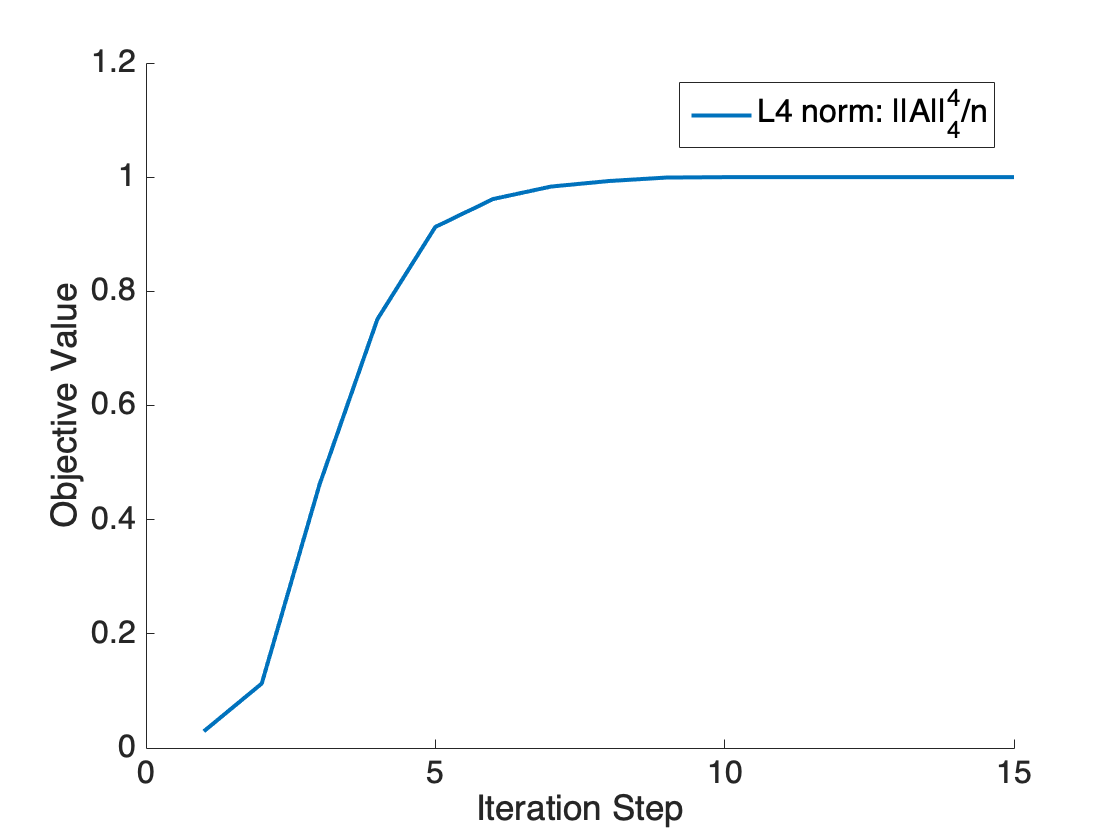}
        \caption{$n=100$}
    \end{subfigure}
    \caption{Normalized $\norm{\mb {A}}{4}^4$ in one run of MSP Algorithm \ref{algo:SPOrthD} for maximizing $\|\mb A\|_4^4$ on $\msf O(n;\bb R)$}
    \label{fig:SPOrthOneRun}
\end{figure}

\begin{figure}[t]
    \centering
    \begin{subfigure}[t]{0.4\textwidth}
        \centering
        \includegraphics[width=\textwidth]{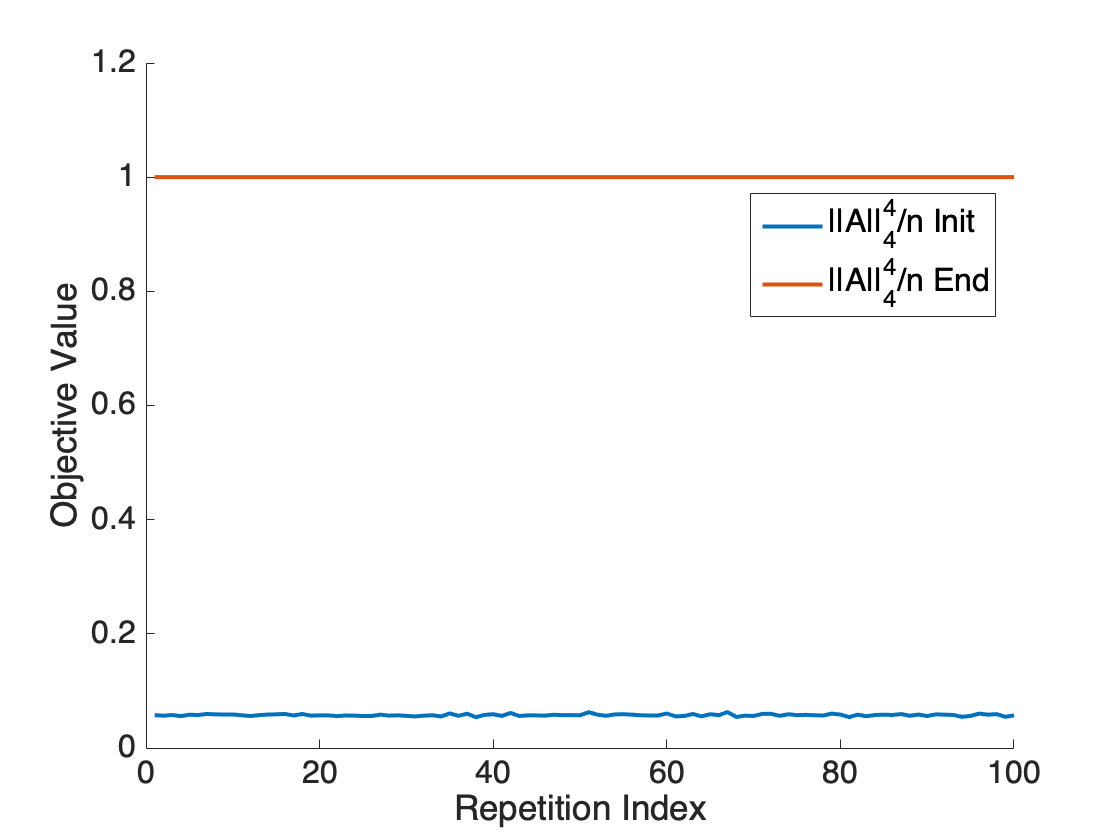}
        \caption{$n=50$}
    \end{subfigure}%
    ~ 
    \begin{subfigure}[t]{0.4\textwidth}
        \centering
        \includegraphics[width=\textwidth]{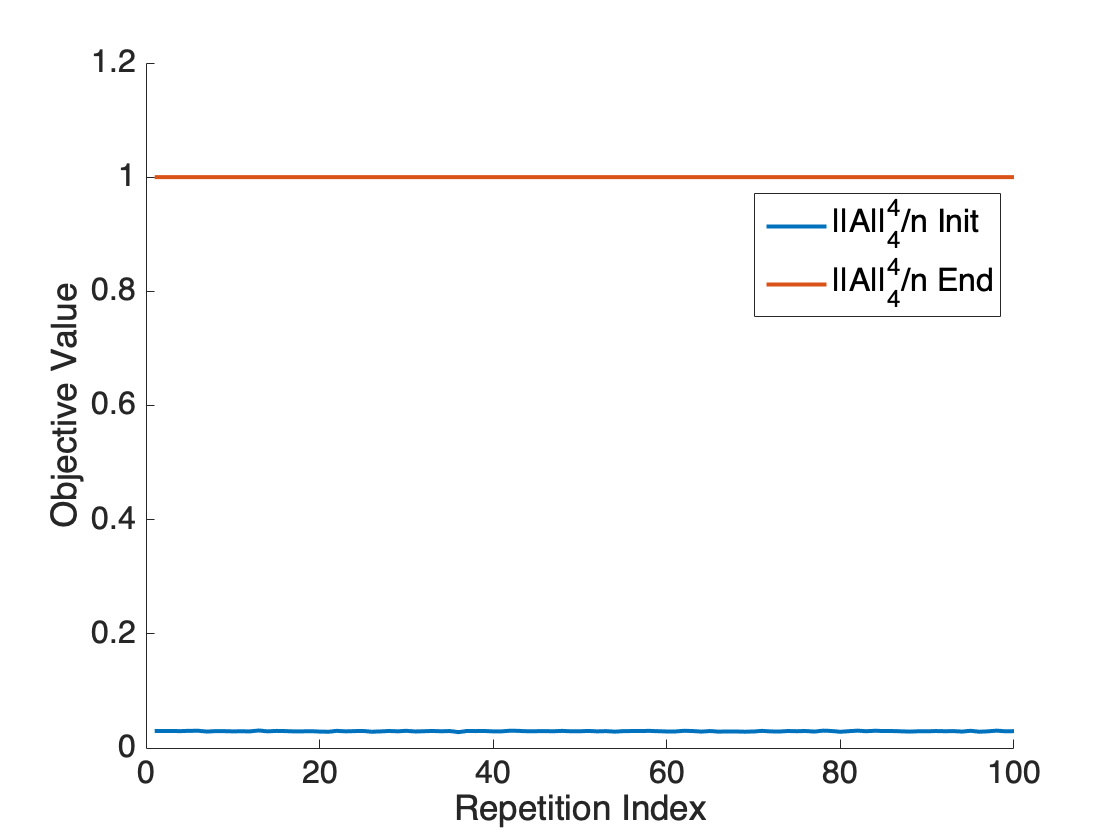}
        \caption{$n=100$}
    \end{subfigure}
    \caption{Normalized $\norm{\mb {A}}{4}^4/n$ for 100 trials of the MSP Algorithm \ref{algo:SPOrthD} on $\msf O(n;\bb R)$}
    \label{fig:SPOrth}
\end{figure}

Next we run 100 trials of the algorithm to maximize $\norm{\mb A}{4}^4/n$ on $\msf O(n;\bb R)$ with random initialization. Figure \ref{fig:SPOrth} shows that all 100 trials of the MSP algorithm reach the global maximum $1$. According to Lemma \ref{lemma:G(A)bound}, this indicates that all 100 trials (in different dimension $n=50,100$) converge to sign permutation matrices in $\msf{SP}(n)$.

\subsection{Dictionary Learning via MSP}\label{sec:experiment-DL}
Figure \ref{fig:SPDLOneRun}(a) presents one trial of the proposed MSP Algorithm \ref{algo:SPOrthDL} for dictionary learning with $\theta = 0.3$, $n=50$, and $p=20,000$. The result corroborates with statements in Lemma \ref{lemma:HatfUnionConcentrationBound} and Lemma \ref{lemma:orthproperty}: Maximizing $\hat{f}(\mb A,\mb Y)$ is largely equivalent to optimizing $g(\mb {AD}_o)$, and both values reach global maximum at the same time. Meanwhile, this result also shows our MSP algorithm is able to find the global maximum at ease, since $g(\mb {A D}_o)$ reaches its maximal value 1 (with minor errors) by maximizing $\hat{f}(\mb A,\mb Y)$. In Figure \ref{fig:SPDLOneRun}(b), we test the MSP Algorithm \ref{algo:SPOrthDL} in higher dimension $n=100,p=40,000,\theta=0.3$. In both cases, our algorithm is surprisingly efficient: It only takes around 20 iterations to recover the ground truth dictionary. 

\begin{figure}[ht]
    \centering
    \begin{subfigure}[t]{0.4\textwidth}
        \centering
        \includegraphics[width=\textwidth]{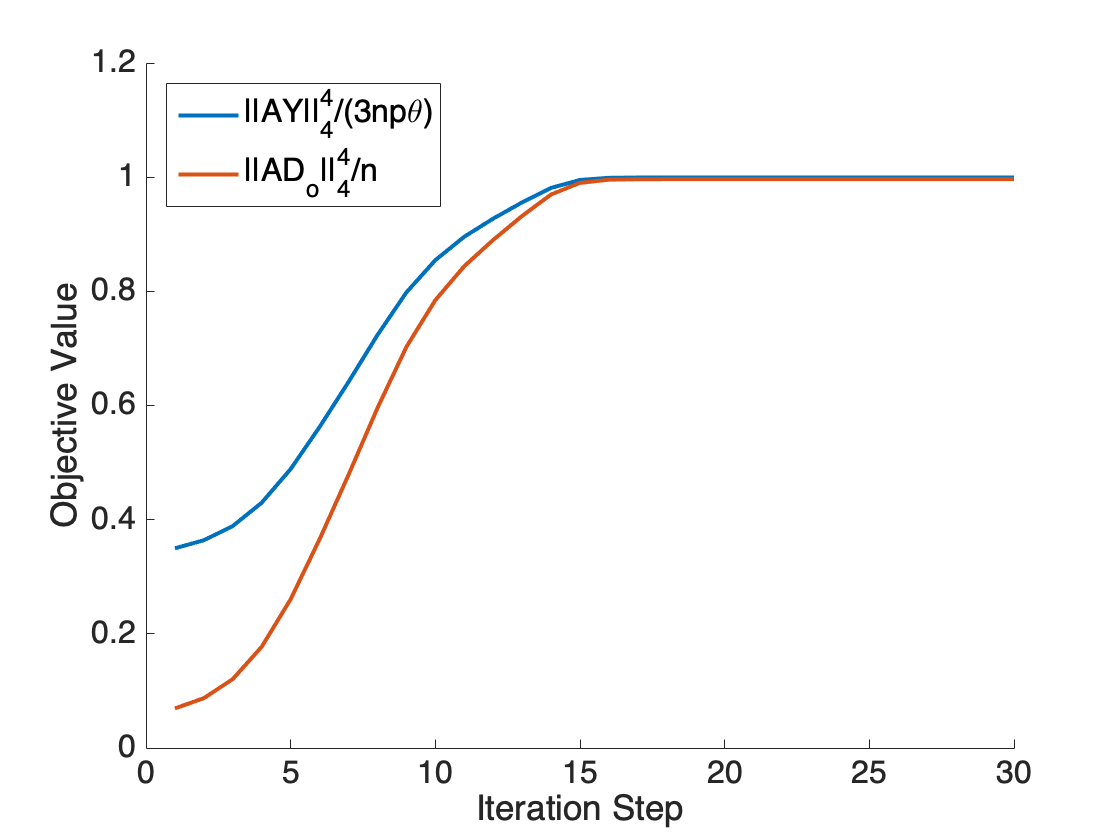}
        \caption{$n=50,p=20,000,\theta=0.3$}
    \end{subfigure}%
    ~ 
    \begin{subfigure}[t]{0.4\textwidth}
        \centering
        \includegraphics[width=\textwidth]{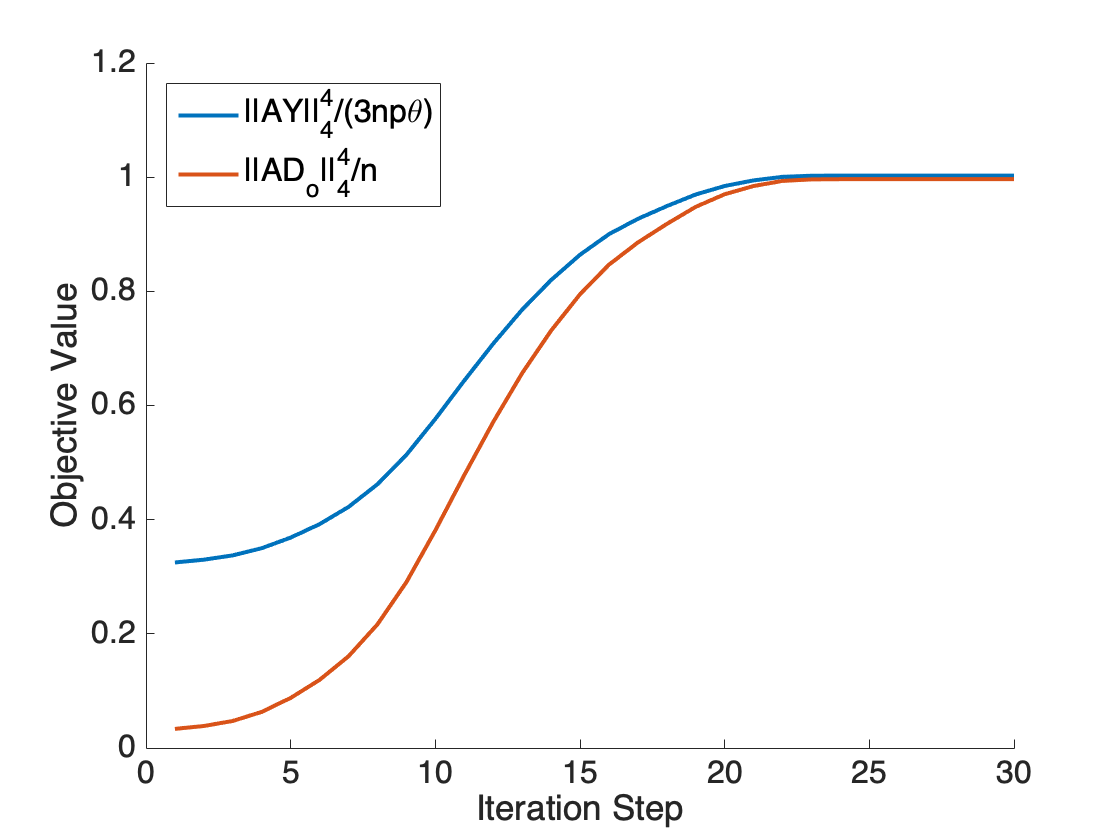}
        \caption{$n=100,p=40,000,\theta=0.3$}
    \end{subfigure}
    \caption{Normalized objective value $\norm{\mb {AD}_o}{4}^4/n$ and $\norm{\mb {AY}}{4}^4/3np\theta$ for individual trials of the MSP Algorithm \ref{algo:SPOrthDL}, with different parameters $n,p,\theta$. According to Lemma \ref{lemma:G(A)bound}, $g(\mb {AD}_o)/n$ reaches 1 indicates successful recovery for $\mb D_o$. This experiment shows the MSP algorithm finds global maxima of $\hat{f}(\mb A,\mb Y)$ thus recovers the correct dictionary $\mb D_o$.}
    \label{fig:SPDLOneRun}
\end{figure}

In Figure \ref{fig:SPDL}, we run the MSP Algorithm \ref{algo:SPOrthDL} for 100 random trials with the settings $n= 50, p = 20,000, \theta = 0.3$ and $n=100,p=40,000,\theta=0.3$. Among all 100 trials, $g(\mb {AD}_o)$ achieve the global maximal value (within statistical errors) via optimizing $\hat{f}(\mb A,\mb Y)$ in less than $30$ iterations. This experiment seems to support a conjecture: Within conditions of this experiment, the MSP algorithm recovers the globally optimal dictionary.

\begin{figure}[ht]
    \centering
    \begin{subfigure}[t]{0.4\textwidth}
        \centering
        \includegraphics[width=\textwidth]{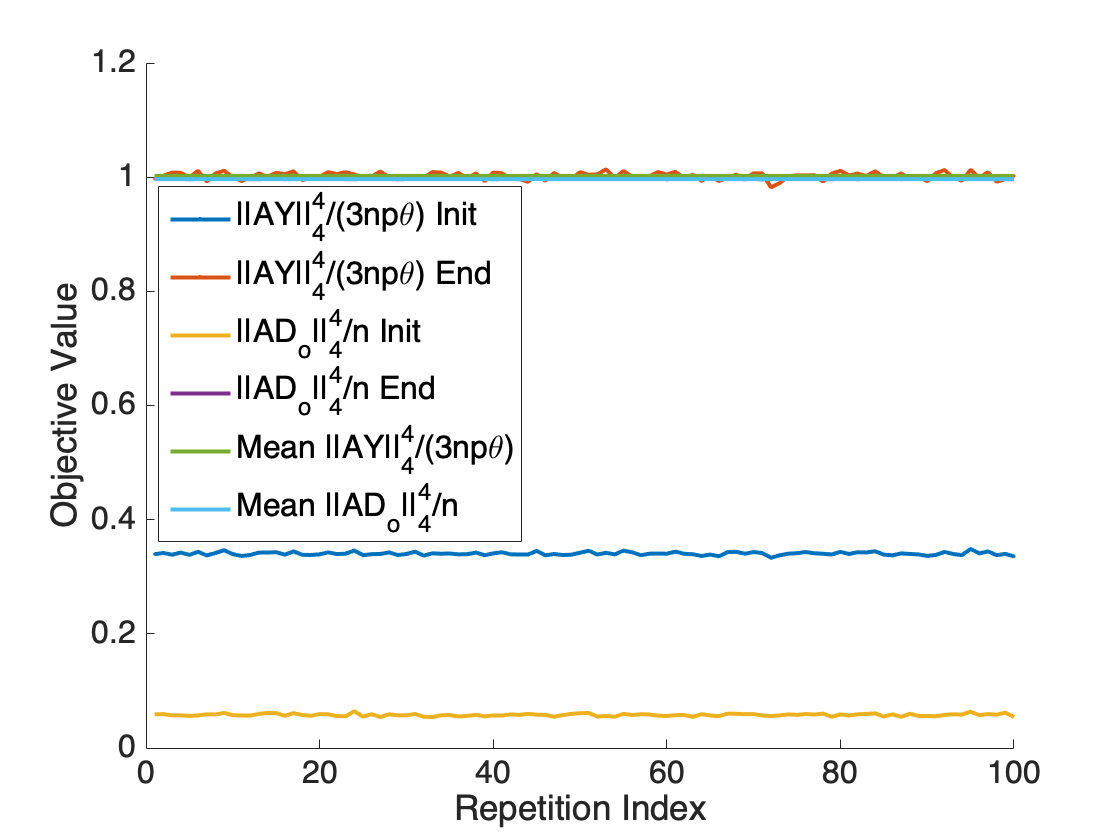}
        \caption{$n=50,p=20,000,\theta=0.3$, w/ initial values}
    \end{subfigure}%
    ~ 
    \begin{subfigure}[t]{0.4\textwidth}
        \centering
        \includegraphics[width=\textwidth]{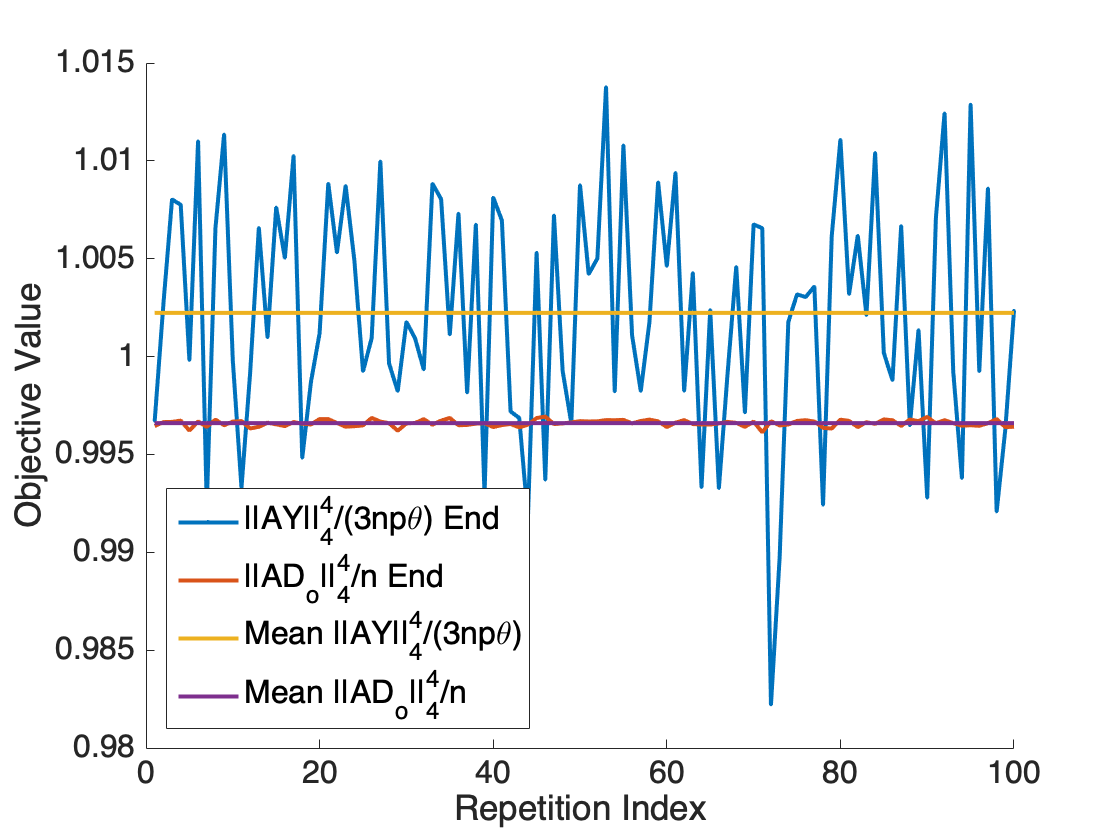}
        \caption{$n=50,p=20,000,\theta=0.3$, w/o initial values}
    \end{subfigure}
    \begin{subfigure}[t]{0.4\textwidth}
        \centering
        \includegraphics[width=\textwidth]{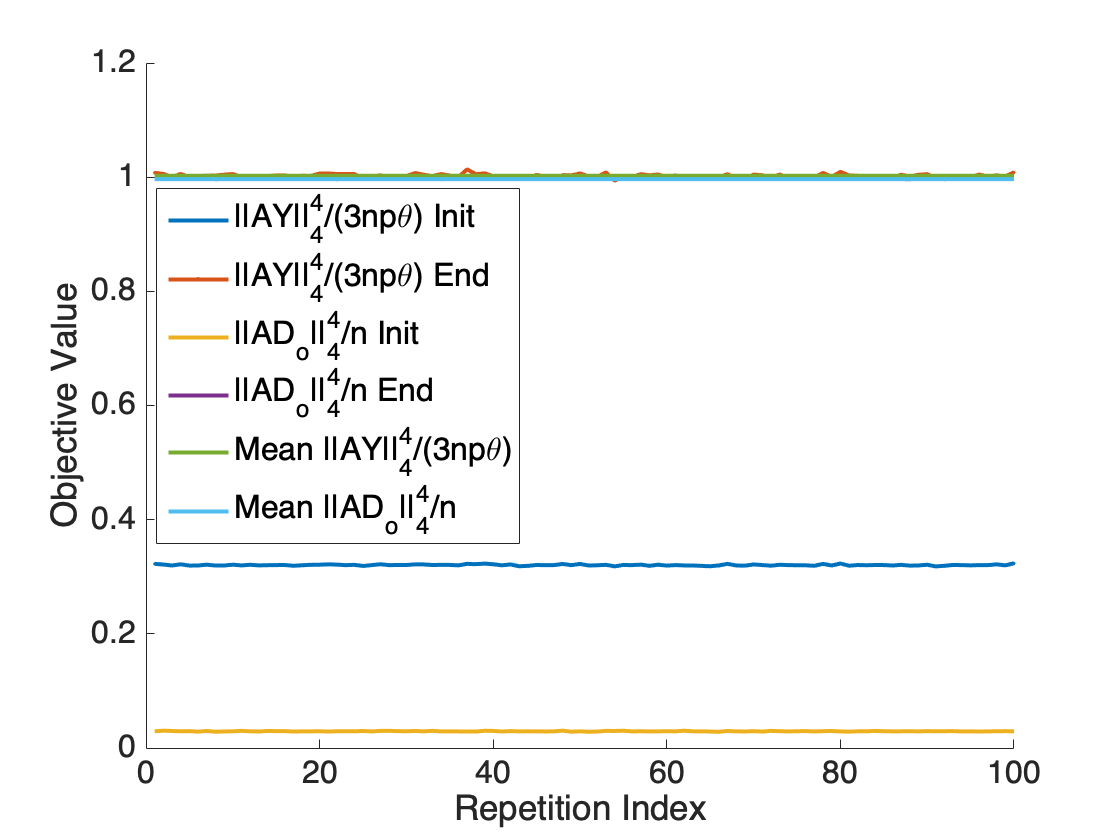}
        \caption{$n=100,p=40,000,\theta=0.3$, w/ initial values}
    \end{subfigure}%
    ~ 
    \begin{subfigure}[t]{0.4\textwidth}
        \centering
        \includegraphics[width=\textwidth]{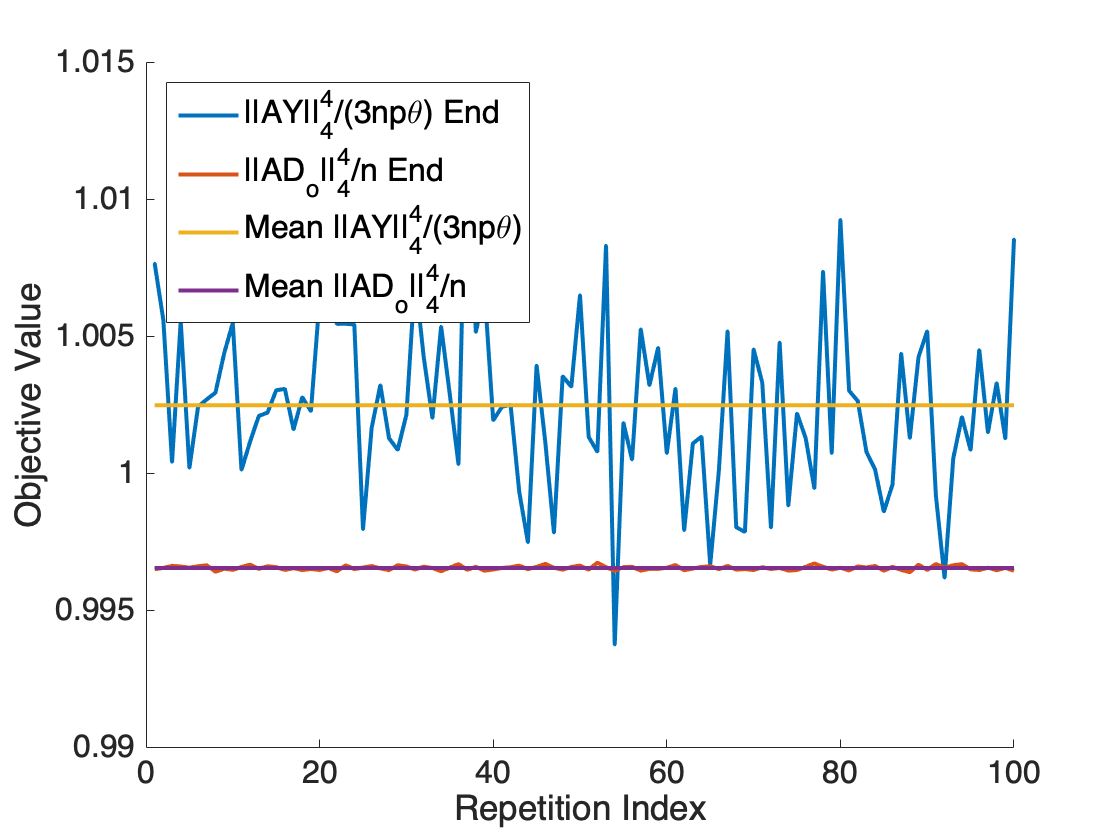}
        \caption{$n=100,p=40,000,\theta=0.3$, w/o initial values}
    \end{subfigure}
    \caption{Normalized initial and final objective values of $\norm{\mb {AD}_o}{4}^4/n$ and  $\norm{\mb {AY}}{4}^4/3np\theta$ for $100$ trials of the MSP Algorithm \ref{algo:SPOrthDL}, with $n = 50$ and $n=100$. Both objectives converge to 1 (with minor errors) for all $100$ trials. }
    \label{fig:SPDL}
\end{figure}

\subsection{$\ell^{2k}$-Norm Maximization over the Orthogonal Group}\label{sec:2k-experiments}
In Figure \ref{fig:SPOrder2k}, we conduct experiments to support the choice of $\ell^4$-norm. Figure \ref{fig:SPOrder2k}(a) shows that for the deterministic case, the MSP Algorithm \ref{algo:SPOrthD} finds signed permutation matrices faster with higher order $\ell^{2k}$-norm. But Figure \ref{fig:SPOrder2k}(b) indicates that as the order $2k$ increases, much more samples are needed by Algorithm \ref{algo:SPOrthDL} to achieve the same estimation error: $p$ grows drastically as $k$ increases. Hence, among all these sparsity-promoting norms ($\ell^{2k}$-norm), the $\ell^4$-norm strikes a good balance between sample size and convergence rate.\footnote{Later works  \citep{shen2020complete,xue2020blind} have shown that $\ell^3$-norm maximization also has the same sparsifying effect.}

\begin{figure}[ht]
    \centering
    \begin{subfigure}[t]{0.4\textwidth}
        \centering
        \includegraphics[width=\textwidth]{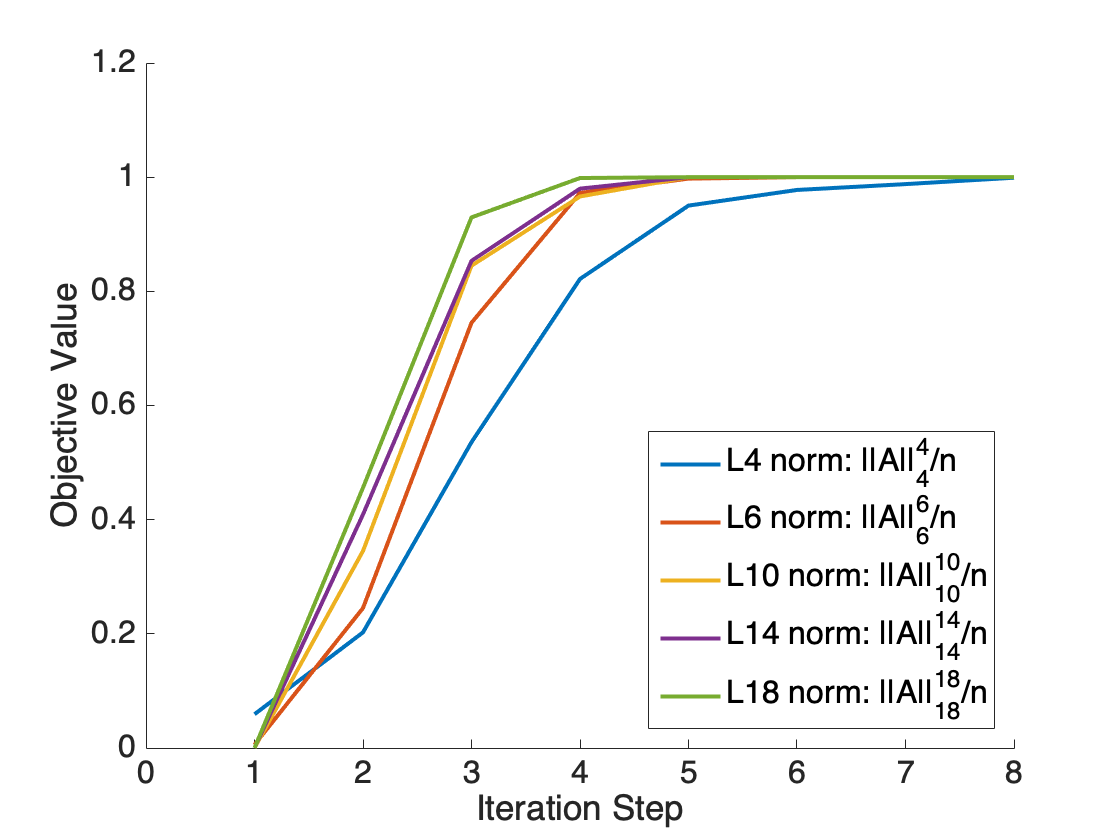}
        \caption{Convergence plots of the MSP Algorithm \ref{algo:SPOrthD} for the deterministic case, with the same initialization.}
    \end{subfigure}%
    ~ 
    \begin{subfigure}[t]{0.4\textwidth}
        \centering
        \includegraphics[width=\textwidth]{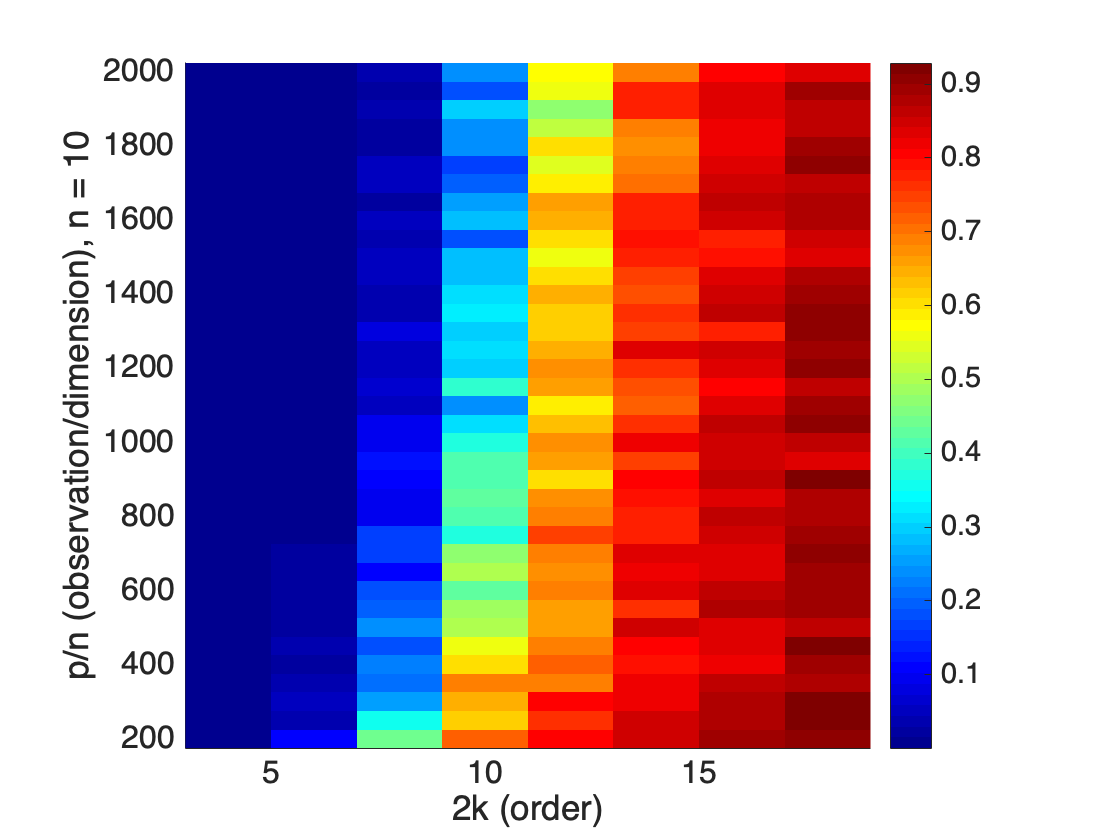}
        \caption{Average normalized error of MSP Algorithm \ref{algo:SPOrthDL} among 20 trials, varying $k$ and $p$, with $n = 10$ fixed.}
    \end{subfigure}
    \caption{Use different $\ell^{2k}$-norms for Algorithm \ref{algo:SPOrthD} and Algorithm \ref{algo:SPOrthDL}.}
    \label{fig:SPOrder2k}
\end{figure}

\subsection{Phase Transition Plots for Working Ranges of the MSP Algorithm}
\label{subsec:MSPPhaseTransition}
Encouraged by previous experiments, we conduct more extensive experiments of the MSP Algorithm \ref{algo:SPOrthDL} in broader settings to find its working range: 1) Figure \ref{fig:SPPR}(a) shows the result of varying the sparsity level $\theta$ (from 0 to 1) and sample size $p$ (from 500 to 50,000) with a fixed dimension $n=50$; 2) Figure \ref{fig:SPPR}(b) and (c) show results of changing dimension $n$ (from 10 to 1,000) and sample size $p$ (from 1,000 to 100,000) at a fixed sparsity level $\theta=0.5$. Notice that all figures demonstrate a clear phase transition for the working range. 

It is somewhat surprising to see in Figure \ref{fig:SPPR}(a) that the MSP algorithm is able to recover the dictionary correctly up to the sparsity level of $\theta \approx 0.6$ if $p$ is large enough, which almost doubles the best existing theoretical guarantee given in \cite{bai2018subgradient,sun2015complete}. We shall note that the inconsistency between the non linear phase transition curve in Figure \ref{fig:SPPR}(a) and the sample complexity $p=\Omega(\theta n^2 \ln n/\eps^2)$ is due to the constant $C>\frac{4}{3\theta(1-\theta)}$ in Theorem \ref{Thm:MainResult}.

Figure \ref{fig:SPPR}(b) and (c) show the working range for varying $n, p$ with a fixed $\theta=0.5$. Figure \ref{fig:SPPR}(b) is for a smaller range of $n$ (from 10 to 100) and Figure \ref{fig:SPPR}(c) for a larger range of $n$ (from 100 to 1,000). Figures (b) and (c) imply that the required sample size $p$ for the algorithm to succeed seems to be quadratic in the dimension $n$: $p=\Omega(n^2)$, which corresponds to our statistical results ($p=\Omega(\theta n^2\ln n)$) in Theorem \ref{Thm:MainResult} and Proposition \ref{prop:GradHatfUnionConcentrationBound}. This empirical bound is significantly better than the best theoretical bounds given in \cite{bai2018subgradient,sun2015complete} for the sample size required to ensure success. Similar observations have been reported in \cite{schramm2017fast,bai2018subgradient}.

\begin{figure}[ht]
    \centering
    \begin{subfigure}[t]{0.33\textwidth}
        \centering
        \includegraphics[trim={0.5cm 0cm 1cm 0.5cm},clip,width=\textwidth]{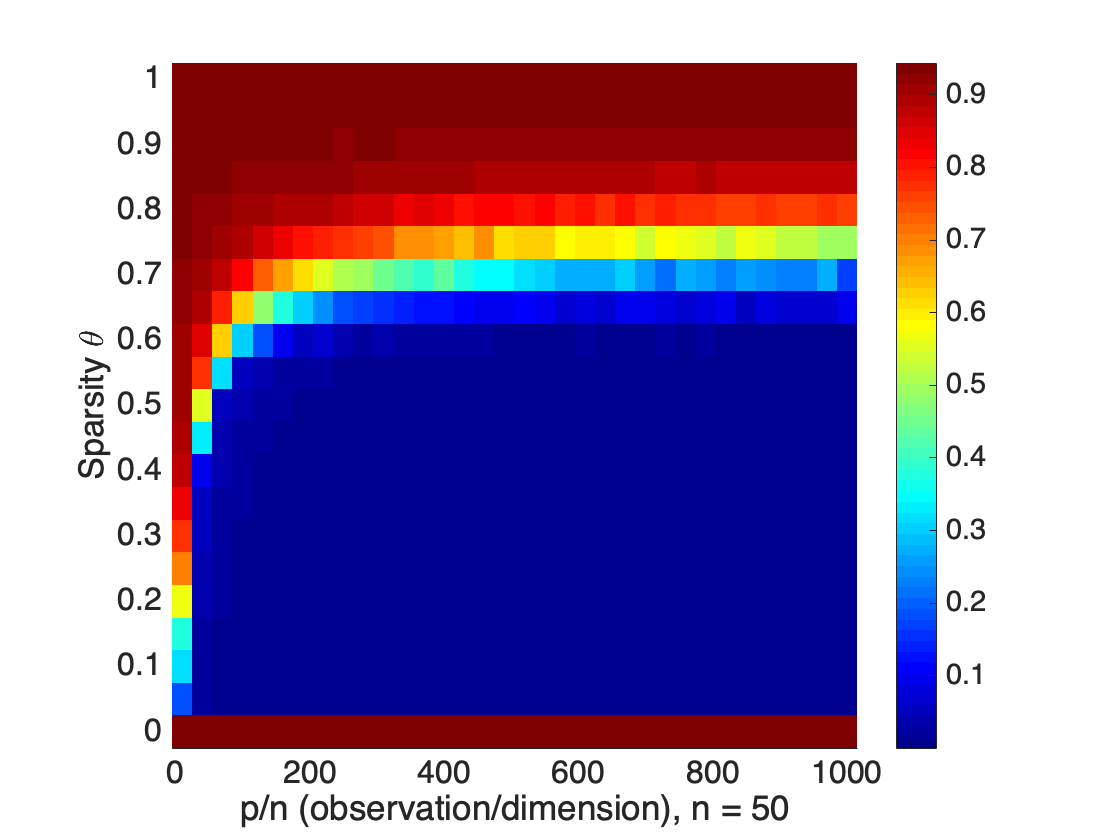}
        \caption{Changing $\theta$ from 0 to 1 and $p$ from 500 to 10,000, $n=50$.}
    \end{subfigure}%
    ~
    \begin{subfigure}[t]{0.33\textwidth}
        \centering
        \includegraphics[trim={0.5cm 0cm 1cm 0.5cm},clip,width=\textwidth]{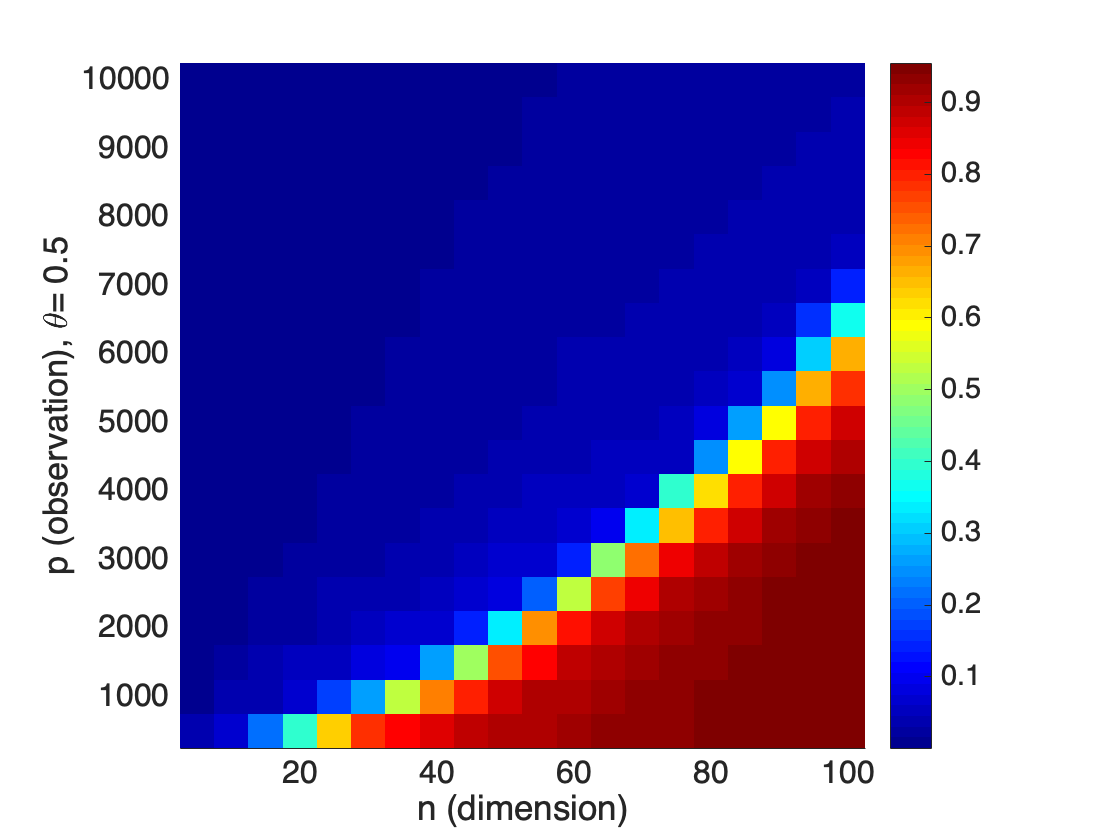}
        \caption{Changing $n$ from 10 to 100 and $p$ from $10^3$ to $10^4$, $\theta=0.5$.}
    \end{subfigure}%
    ~ 
    \begin{subfigure}[t]{0.33\textwidth}
        \centering
        \includegraphics[trim={1cm 0cm 1cm 0.5cm},clip,width=\textwidth]{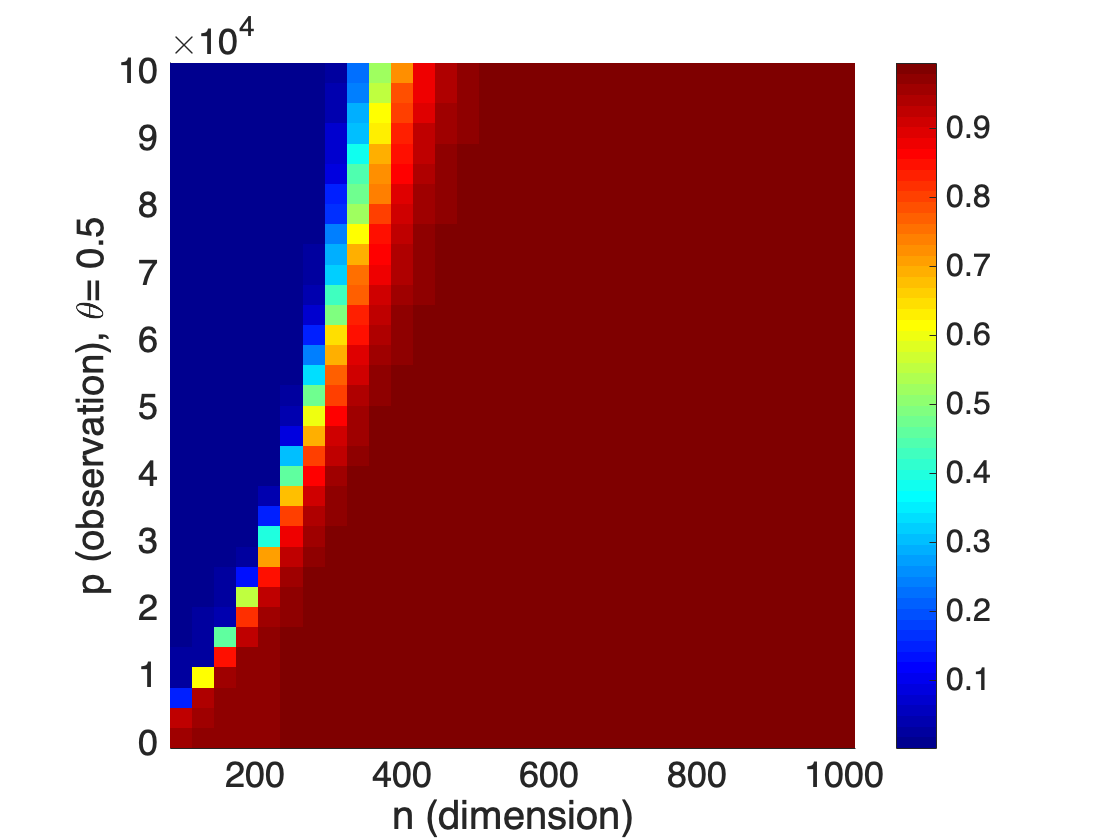}
        \caption{Changing $n$ from 100 to 1,000 and $p$ from $10^4$ to $10^5$, $\theta=0.5$.}
    \end{subfigure}
    \caption{Phase transition of average normalized error $|1-\norm{\mb {AD}_o}{4}^4/n|$ of 10 random trials for the MSP Algorithm \ref{algo:SPOrthDL} in different settings: {\bf (a):} Varying $\theta$, $p$ with fixed $n$; {\bf (b), (c): } Varying $n$, $p$ with fixed $\theta$. Red area indicates large error and blue area small error.}
    \label{fig:SPPR}
\end{figure}
\subsection{Comparison with Prior Arts}
Table \ref{tab:Comparison} compares the MSP method with the KSVD \citep{aharon2006k}, the SPAMS dictionary learning package \cite{jenatton2010proximal}, and the latest subgradient method \citep{bai2018subgradient} for different choices of $n,p$ under the same sparsity level $\theta = 0.3$. As one may see, our algorithm is significantly faster than the other algorithms in all trials. Further more, our algorithm has the potential for large scale experiments: It only takes 374.2 seconds to learn a $400\times 400$ dictionaries from $160,000$ samples. While the previous algorithms either fails to find the correct dictionary or is barely applicable. Within statistical errors, our algorithm gives slightly smaller values for $\norm{{\mb A\mb D}_o}{4}^4/n$ in some trials. But the subgradient method \citep{bai2018subgradient} uses information of the ground truth dictionary $\mb D_o$ in their stopping criteria. Our MSP algorithm removes this dependency with only mild loss in accuracy. We shall note that traditional $\ell^1$-minimization based dictionary learning is more accurate than the $\ell^4$-maximization based method, since $\ell^1$-minimization will decrease small entries to 0 while the $\ell^4$-maximization tolerate small noise. Meanwhile, the numerical implementation also affects the experimental performance -- the error of SPAMS is consistently better than the Subgradient method in terms of both accuracy and speed. 

Although the convex relaxation based on sum-of-square SDP hierarchy \citep{Barak-2015,ma2016polynomial,schramm2017fast} has theoretical guarantee for correctness under some specific statistical model, we do not include experimental comparison with the sum-of-square method only because the SoS method incurs high computational complexity in solving large scale SDP programmings or tensor decomposition problems. In fact, on the same computing devices, sum-of-square methods cannot even solve the smallest dictionary learning problem ($n=25$) in Table \ref{tab:Comparison} due to memory and computation complexity.

\begin{table}[ht]
    \centering
    \setlength{\tabcolsep}{2.5pt}
    \renewcommand{\arraystretch}{1.2}
    \begin{tabular}{|ccc|cc|cc|cc|cc|}
    \hline
    \multicolumn{3}{|c|}{}& \multicolumn{2}{c|}{KSVD}&
    \multicolumn{2}{c|}{SPAMS}&
    \multicolumn{2}{c|}{Subgradient} & \multicolumn{2}{c|}{MSP (Ours)} \\
    \hline $n$&$p$ $(\times 10^4)$  & $\theta$ & Error & Time & Error & Time & Error & Time & Error & Time  \\
    \hline
    25 & 1& 0.3&  12.16\% & 64.5s &0.02\% &5.53s & 0.25\% & 9.2s & 0.35\% & \textbf{0.2s} (15 iter)\\
    50 & 2& 0.3& 7.79\% & 165.4s & 0.02\% &23.4s & 0.27\% & 107.0s & 0.34\% & \textbf{1.3s} (20 iter)\\
    100 & 4& 0.3& 8.34\% & 569.4s &0.02\% & 365.9s & 0.27\% & 2807.9s & 0.35\% & \textbf{7.2s} (25 iter)\\
    200 & 8& 0.3&  8.46\% & 1064.0s & 0.02\%& 5420.8s & N/A & >12h & 0.35\% & \textbf{56.0s} (40 iter)\\
    400 & 16& 0.3&  11.94\% & 4646.3s & N/A & > 12h & N/A & >12h & 0.35\% & \textbf{494.4s} (60 iter) \\
    \hline
    \end{tabular}
    \caption{Comparison experiments with KSVD \citep{aharon2006k}, SPAMS \cite{jenatton2010proximal}, and Subgradient method \citep{bai2018subgradient} in different trials of dictionary learning: (a) $n=25,p=1\times 10^4,\theta = 0.3$; (b) $n=50,p=2\times 10^4,\theta = 0.3$; (c) $n=100,p=4\times 10^4,\theta = 0.3$; (d) $n=200,p=4\times 10^4,\theta = 0.3$; (e) $n=400,p=16\times 10^4,\theta = 0.3$.   Recovery error is measured as $\big|1-\norm{\mb {AD}_o}{4}^4/n\big|$, since Lemma \ref{lemma:G(A)bound} shows that a perfect recovery gives $\norm{\mb {AD}_o}{4}^4/n=1$. All experiments are averaged among 5 trials and conducted on a 2.7 GHz Intel Core i5 processor (CPU of a 13-inch Mac Pro 2015).}
    \label{tab:Comparison}
\end{table}

\subsection{Learning Sparsifying Dictionaries of Real Images}

Indeed, the theoretical results of the MSP algorithm relies heavily on the Bernoulli-Gaussian assumption of the ground truth sparse code $\mb x_i$, that is, all $\mb x_i$ has the same element-wise variance. In order to demonstrate that the MSP algorithm can be applied to broader application scenarios beyond the Bernoulli-Gaussian setting, we test our algorithm on the MNIST dataset of hand-written digits \citep{lecun1998gradient}, whose element-wise (pixel-wise) variance are barely equal. In our implementation, we vectorize each $28\times 28$ image $j$ in the MNIST dataset into a vector $\mb y_i \in \bb R^{784}$ and directly apply our MSP algorithm \ref{algo:SPOrthDL} to the whole data set $\mb Y=[\mb y_1,\mb y_2,\dots,\mb y_{50,000}]$.\footnote{The training set of MNIST contains 50,000 images of size $28\times 28$.} Figure \ref{fig:MSPMNIST}(a) shows some bases learned from these images which obviously capture shapes of the digits. Figure \ref{fig:MSPMNIST}(b) shows some less significant base vectors in the space of $\bb R^{784}$ that are orthogonal to the above. 

We also compare our results with bases learned from Principal Component Analysis (PCA). As shown in Figure \ref{fig:PCAMNIST}(a), the top bases learned from PCA are blurred with shapes from different digits mixed together, unlike those bases learned from the MSP in Figure \ref{fig:MSPMNIST}(a) that clearly capture features of multiple different digits.

\begin{figure}[ht]
\centering
    \begin{subfigure}[t]{0.4\textwidth}
    \centering
    \begin{minipage}[t]{0.15\linewidth}
    \centering
    \frame{\includegraphics[width=\linewidth]{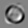}}\vspace{0.5mm}
    \frame{\includegraphics[width=\linewidth]{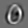}}
    \end{minipage}
    \begin{minipage}[t]{0.15\linewidth}
    \frame{\includegraphics[width=\linewidth]{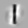}}\vspace{0.5mm}
    \frame{\includegraphics[width=\linewidth]{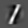}}
    \centering
    \end{minipage}
    \begin{minipage}[t]{0.15\linewidth}
    \centering
    \frame{\includegraphics[width=\linewidth]{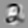}}\vspace{0.5mm}
    \frame{\includegraphics[width=\linewidth]{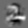}}
    \end{minipage}
    \begin{minipage}[t]{0.15\linewidth}
    \centering
    \frame{\includegraphics[width=\linewidth]{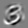}}\vspace{0.5mm}
    \frame{\includegraphics[width=\linewidth]{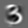}}
    \end{minipage}
    \begin{minipage}[t]{0.15\linewidth}
    \frame{\includegraphics[width=\linewidth]{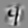}}\vspace{0.5mm}
    \frame{\includegraphics[width=\linewidth]{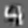}}
    \centering
    \end{minipage}
    
    \vspace{0.5mm}
    
    \begin{minipage}[t]{0.15\linewidth}
    \centering
    \frame{\includegraphics[width=\linewidth]{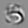}}\vspace{0.5mm}
    \frame{\includegraphics[width=\linewidth]{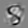}}
    \end{minipage}
    \begin{minipage}[t]{0.15\linewidth}
    \centering
    \frame{\includegraphics[width=\linewidth]{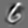}}\vspace{0.5mm}
    \frame{\includegraphics[width=\linewidth]{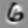}}
    \end{minipage}
    \begin{minipage}[t]{0.15\linewidth}
    \centering
    \frame{\includegraphics[width=\linewidth]{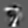}}\vspace{0.5mm}
    \frame{\includegraphics[width=\linewidth]{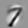}}
    \end{minipage}
    \begin{minipage}[t]{0.15\linewidth}
    \centering
    \frame{\includegraphics[width=\linewidth]{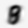}}\vspace{0.5mm}
    \frame{\includegraphics[width=\linewidth]{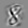}}
    \end{minipage}
    \begin{minipage}[t]{0.15\linewidth}
    \centering
    \frame{\includegraphics[width=\linewidth]{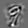}}\vspace{0.5mm}
    \frame{\includegraphics[width=\linewidth]{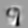}}
    \end{minipage}
    \caption{Some selected ``meaningful'' bases learned from the MNIST dataset.}
    \label{fig:MeaningfulMNIST}
    \end{subfigure}%
    ~
    \begin{subfigure}[t]{0.4\textwidth}
    \centering
    \begin{minipage}[t]{0.15\linewidth}
    \centering
    \frame{\includegraphics[width=\linewidth]{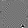}}\vspace{0.5mm}
    \frame{\includegraphics[width=\linewidth]{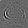}}
    \end{minipage}
    \begin{minipage}[t]{0.15\linewidth}
    \frame{\includegraphics[width=\linewidth]{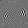}}\vspace{0.5mm}
    \frame{\includegraphics[width=\linewidth]{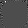}}
    \centering
    \end{minipage}
    \begin{minipage}[t]{0.15\linewidth}
    \centering
    \frame{\includegraphics[width=\linewidth]{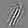}}\vspace{0.5mm}
    \frame{\includegraphics[width=\linewidth]{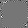}}
    \end{minipage}
    \begin{minipage}[t]{0.15\linewidth}
    \centering
    \frame{\includegraphics[width=\linewidth]{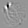}}\vspace{0.5mm}
    \frame{\includegraphics[width=\linewidth]{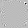}}
    \end{minipage}
    \begin{minipage}[t]{0.15\linewidth}
    \frame{\includegraphics[width=\linewidth]{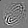}}\vspace{0.5mm}
    \frame{\includegraphics[width=\linewidth]{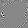}}
    \centering
    \end{minipage}
    
    \vspace{0.5mm}
    
    \begin{minipage}[t]{0.15\linewidth}
    \centering
    \frame{\includegraphics[width=\linewidth]{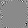}}\vspace{0.5mm}
    \frame{\includegraphics[width=\linewidth]{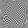}}
    \end{minipage}
    \begin{minipage}[t]{0.15\linewidth}
    \centering
    \frame{\includegraphics[width=\linewidth]{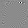}}\vspace{0.5mm}
    \frame{\includegraphics[width=\linewidth]{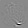}}
    \end{minipage}
    \begin{minipage}[t]{0.15\linewidth}
    \centering
    \frame{\includegraphics[width=\linewidth]{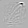}}\vspace{0.5mm}
    \frame{\includegraphics[width=\linewidth]{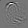}}
    \end{minipage}
    \begin{minipage}[t]{0.15\linewidth}
    \centering
    \frame{\includegraphics[width=\linewidth]{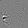}}\vspace{0.5mm}
    \frame{\includegraphics[width=\linewidth]{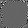}}
    \end{minipage}
    \begin{minipage}[t]{0.15\linewidth}
    \centering
    \frame{\includegraphics[width=\linewidth]{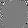}}\vspace{0.5mm}
    \frame{\includegraphics[width=\linewidth]{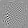}}
    \end{minipage}
    \caption{Other bases learned the MNIST dataset.}
    \label{fig:RandomMNIST}
    \end{subfigure}
    \caption{Learned dictionary through the MSP algorithm \ref{algo:SPOrthDL} on the MNIST dataset}
    \label{fig:MSPMNIST}
\end{figure}

\begin{figure}[H]
\centering
    \begin{subfigure}[t]{0.4\textwidth}
    \centering
    \begin{minipage}[t]{0.15\linewidth}
    \centering
    \frame{\includegraphics[width=\linewidth]{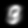}}\vspace{0.5mm}
    \frame{\includegraphics[width=\linewidth]{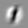}}
    \end{minipage}
    \begin{minipage}[t]{0.15\linewidth}
    \frame{\includegraphics[width=\linewidth]{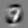}}\vspace{0.5mm}
    \frame{\includegraphics[width=\linewidth]{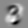}}
    \centering
    \end{minipage}
    \begin{minipage}[t]{0.15\linewidth}
    \centering
    \frame{\includegraphics[width=\linewidth]{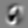}}\vspace{0.5mm}
    \frame{\includegraphics[width=\linewidth]{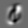}}
    \end{minipage}
    \begin{minipage}[t]{0.15\linewidth}
    \centering
    \frame{\includegraphics[width=\linewidth]{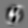}}\vspace{0.5mm}
    \frame{\includegraphics[width=\linewidth]{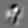}}
    \end{minipage}
    \begin{minipage}[t]{0.15\linewidth}
    \frame{\includegraphics[width=\linewidth]{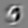}}\vspace{0.5mm}
    \frame{\includegraphics[width=\linewidth]{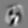}}
    \centering
    \end{minipage}
    
    \vspace{0.5mm}
    
    \begin{minipage}[t]{0.15\linewidth}
    \centering
    \frame{\includegraphics[width=\linewidth]{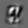}}\vspace{0.5mm}
    \frame{\includegraphics[width=\linewidth]{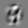}}
    \end{minipage}
    \begin{minipage}[t]{0.15\linewidth}
    \centering
    \frame{\includegraphics[width=\linewidth]{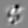}}\vspace{0.5mm}
    \frame{\includegraphics[width=\linewidth]{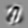}}
    \end{minipage}
    \begin{minipage}[t]{0.15\linewidth}
    \centering
    \frame{\includegraphics[width=\linewidth]{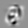}}\vspace{0.5mm}
    \frame{\includegraphics[width=\linewidth]{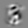}}
    \end{minipage}
    \begin{minipage}[t]{0.15\linewidth}
    \centering
    \frame{\includegraphics[width=\linewidth]{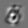}}\vspace{0.5mm}
    \frame{\includegraphics[width=\linewidth]{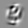}}
    \end{minipage}
    \begin{minipage}[t]{0.15\linewidth}
    \centering
    \frame{\includegraphics[width=\linewidth]{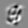}}\vspace{0.5mm}
    \frame{\includegraphics[width=\linewidth]{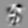}}
    \end{minipage}
    \caption{Top 20 bases from PCA}
    \label{fig:TopPCA}
    \end{subfigure}%
    ~
    \begin{subfigure}[t]{0.4\textwidth}
    \centering
    \begin{minipage}[t]{0.15\linewidth}
    \centering
    \frame{\includegraphics[width=\linewidth]{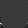}}\vspace{0.5mm}
    \frame{\includegraphics[width=\linewidth]{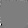}}
    \end{minipage}
    \begin{minipage}[t]{0.15\linewidth}
    \frame{\includegraphics[width=\linewidth]{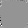}}\vspace{0.5mm}
    \frame{\includegraphics[width=\linewidth]{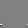}}
    \centering
    \end{minipage}
    \begin{minipage}[t]{0.15\linewidth}
    \centering
    \frame{\includegraphics[width=\linewidth]{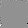}}\vspace{0.5mm}
    \frame{\includegraphics[width=\linewidth]{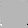}}
    \end{minipage}
    \begin{minipage}[t]{0.15\linewidth}
    \centering
    \frame{\includegraphics[width=\linewidth]{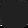}}\vspace{0.5mm}
    \frame{\includegraphics[width=\linewidth]{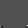}}
    \end{minipage}
    \begin{minipage}[t]{0.15\linewidth}
    \frame{\includegraphics[width=\linewidth]{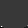}}\vspace{0.5mm}
    \frame{\includegraphics[width=\linewidth]{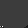}}
    \centering
    \end{minipage}
    
    \vspace{0.5mm}
    
    \begin{minipage}[t]{0.15\linewidth}
    \centering
    \frame{\includegraphics[width=\linewidth]{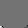}}\vspace{0.5mm}
    \frame{\includegraphics[width=\linewidth]{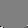}}
    \end{minipage}
    \begin{minipage}[t]{0.15\linewidth}
    \centering
    \frame{\includegraphics[width=\linewidth]{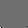}}\vspace{0.5mm}
    \frame{\includegraphics[width=\linewidth]{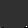}}
    \end{minipage}
    \begin{minipage}[t]{0.15\linewidth}
    \centering
    \frame{\includegraphics[width=\linewidth]{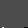}}\vspace{0.5mm}
    \frame{\includegraphics[width=\linewidth]{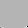}}
    \end{minipage}
    \begin{minipage}[t]{0.15\linewidth}
    \centering
    \frame{\includegraphics[width=\linewidth]{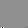}}\vspace{0.5mm}
    \frame{\includegraphics[width=\linewidth]{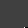}}
    \end{minipage}
    \begin{minipage}[t]{0.15\linewidth}
    \centering
    \frame{\includegraphics[width=\linewidth]{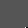}}\vspace{0.5mm}
    \frame{\includegraphics[width=\linewidth]{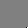}}
    \end{minipage}
    \caption{Last 20 bases from PCA}
    \label{fig:LastPCA}
    \end{subfigure}
    \caption{Learned Bases by PCA on MNIST dataset}
    \label{fig:PCAMNIST}
\end{figure}

We test the reconstruction results using the learned dictionary from the MSP algorithm and compare the reconstruction results using bases learned from PCA. In Figure \ref{fig:ReconMNIST}, we compare the original MNIST data \citep{lecun1998gradient}, the reconstruction results from the MSP algorithm and PCA, using the top 1, 2, 3, 4, 5, 10, and 25 bases, respectively. As we can see in Figure \ref{fig:ReconMNIST}, the reconstruction results for both the MSP and PCA improve, when the number of used bases increases. As shown in the case when only 2 bases are used, the MSP algorithm is able to represent clear shape for most digits already (see the three top left images), while PCA only provides mostly blurred results. When top 5 bases are used, the MSP algorithm almost capture all information of the input images, while PCA still gives rather blurred reconstruction. Both methods are able to capture salient information when the number of used bases are increased above 10. This is rather reasonable as there are about 10 different digits in this dataset and our method is precisely expected to have advantages when the number of bases is below 10. 

This experiment has shown that the proposed MSP algorithm can be applied to more general setting beyond the Bernoulli-Gaussian setting. Moreover, due to its efficiency, the MSP algorithm can be extend to other large scale visual dataset such as CIFAR10 \citep{krizhevsky2014cifar,zhai2019understanding}.

\begin{figure}[H]
\centering
    \begin{subfigure}[t]{0.48\textwidth}
    \centering
    \begin{minipage}[t]{0.08\linewidth}
    \centering
    \frame{\includegraphics[width=\linewidth]{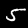}}\vspace{0.5mm}
    \frame{\includegraphics[width=\linewidth]{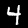}}
    \end{minipage}
    \begin{minipage}[t]{0.08\linewidth}
    \frame{\includegraphics[width=\linewidth]{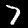}}\vspace{0.5mm}
    \frame{\includegraphics[width=\linewidth]{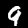}}
    \centering
    \end{minipage}
    \begin{minipage}[t]{0.08\linewidth}
    \centering
    \frame{\includegraphics[width=\linewidth]{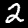}}\vspace{0.5mm}
    \frame{\includegraphics[width=\linewidth]{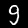}}
    \end{minipage}
    \begin{minipage}[t]{0.08\linewidth}
    \centering
    \frame{\includegraphics[width=\linewidth]{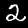}}\vspace{0.5mm}
    \frame{\includegraphics[width=\linewidth]{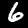}}
    \end{minipage}
    \begin{minipage}[t]{0.08\linewidth}
    \frame{\includegraphics[width=\linewidth]{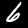}}\vspace{0.5mm}
    \frame{\includegraphics[width=\linewidth]{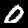}}
    \centering
    \end{minipage}
    \begin{minipage}[t]{0.08\linewidth}
    \centering
    \frame{\includegraphics[width=\linewidth]{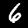}}\vspace{0.5mm}
    \frame{\includegraphics[width=\linewidth]{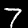}}
    \end{minipage}
    \begin{minipage}[t]{0.08\linewidth}
    \centering
    \frame{\includegraphics[width=\linewidth]{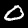}}\vspace{0.5mm}
    \frame{\includegraphics[width=\linewidth]{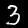}}
    \end{minipage}
    \begin{minipage}[t]{0.08\linewidth}
    \centering
    \frame{\includegraphics[width=\linewidth]{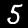}}\vspace{0.5mm}
    \frame{\includegraphics[width=\linewidth]{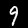}}
    \end{minipage}
    \begin{minipage}[t]{0.08\linewidth}
    \centering
    \frame{\includegraphics[width=\linewidth]{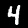}}\vspace{0.5mm}
    \frame{\includegraphics[width=\linewidth]{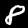}}
    \end{minipage}
    \begin{minipage}[t]{0.08\linewidth}
    \centering
    \frame{\includegraphics[width=\linewidth]{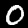}}\vspace{0.5mm}
    \frame{\includegraphics[width=\linewidth]{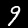}}
    \end{minipage}
    \caption{Original Images from the MNIST dataset}
    \label{fig:OriginalMNIST_PCA}
    \end{subfigure}%
    ~
    \begin{subfigure}[t]{0.48\textwidth}
    \centering
    \begin{minipage}[t]{0.08\linewidth}
    \centering
    \frame{\includegraphics[width=\linewidth]{Figure/Reconstruction/Ori/MNIST_Ori_81.jpg}}\vspace{0.5mm}
    \frame{\includegraphics[width=\linewidth]{Figure/Reconstruction/Ori/MNIST_Ori_82.jpg}}
    \end{minipage}
    \begin{minipage}[t]{0.08\linewidth}
    \frame{\includegraphics[width=\linewidth]{Figure/Reconstruction/Ori/MNIST_Ori_83.jpg}}\vspace{0.5mm}
    \frame{\includegraphics[width=\linewidth]{Figure/Reconstruction/Ori/MNIST_Ori_84.jpg}}
    \centering
    \end{minipage}
    \begin{minipage}[t]{0.08\linewidth}
    \centering
    \frame{\includegraphics[width=\linewidth]{Figure/Reconstruction/Ori/MNIST_Ori_85.jpg}}\vspace{0.5mm}
    \frame{\includegraphics[width=\linewidth]{Figure/Reconstruction/Ori/MNIST_Ori_86.jpg}}
    \end{minipage}
    \begin{minipage}[t]{0.08\linewidth}
    \centering
    \frame{\includegraphics[width=\linewidth]{Figure/Reconstruction/Ori/MNIST_Ori_87.jpg}}\vspace{0.5mm}
    \frame{\includegraphics[width=\linewidth]{Figure/Reconstruction/Ori/MNIST_Ori_88.jpg}}
    \end{minipage}
    \begin{minipage}[t]{0.08\linewidth}
    \frame{\includegraphics[width=\linewidth]{Figure/Reconstruction/Ori/MNIST_Ori_89.jpg}}\vspace{0.5mm}
    \frame{\includegraphics[width=\linewidth]{Figure/Reconstruction/Ori/MNIST_Ori_90.jpg}}
    \centering
    \end{minipage}
    \begin{minipage}[t]{0.08\linewidth}
    \centering
    \frame{\includegraphics[width=\linewidth]{Figure/Reconstruction/Ori/MNIST_Ori_91.jpg}}\vspace{0.5mm}
    \frame{\includegraphics[width=\linewidth]{Figure/Reconstruction/Ori/MNIST_Ori_92.jpg}}
    \end{minipage}
    \begin{minipage}[t]{0.08\linewidth}
    \centering
    \frame{\includegraphics[width=\linewidth]{Figure/Reconstruction/Ori/MNIST_Ori_93.jpg}}\vspace{0.5mm}
    \frame{\includegraphics[width=\linewidth]{Figure/Reconstruction/Ori/MNIST_Ori_94.jpg}}
    \end{minipage}
    \begin{minipage}[t]{0.08\linewidth}
    \centering
    \frame{\includegraphics[width=\linewidth]{Figure/Reconstruction/Ori/MNIST_Ori_95.jpg}}\vspace{0.5mm}
    \frame{\includegraphics[width=\linewidth]{Figure/Reconstruction/Ori/MNIST_Ori_96.jpg}}
    \end{minipage}
    \begin{minipage}[t]{0.08\linewidth}
    \centering
    \frame{\includegraphics[width=\linewidth]{Figure/Reconstruction/Ori/MNIST_Ori_97.jpg}}\vspace{0.5mm}
    \frame{\includegraphics[width=\linewidth]{Figure/Reconstruction/Ori/MNIST_Ori_98.jpg}}
    \end{minipage}
    \begin{minipage}[t]{0.08\linewidth}
    \centering
    \frame{\includegraphics[width=\linewidth]{Figure/Reconstruction/Ori/MNIST_Ori_99.jpg}}\vspace{0.5mm}
    \frame{\includegraphics[width=\linewidth]{Figure/Reconstruction/Ori/MNIST_Ori_100.jpg}}
    \end{minipage}
    \caption{Original Images from the MNIST dataset}
    \label{fig:OriginalMNIST}
    \end{subfigure}%
    \\
    \begin{subfigure}[t]{0.48\textwidth}
    \centering
    \begin{minipage}[t]{0.08\linewidth}
    \centering
    \frame{\includegraphics[width=\linewidth]{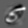}}\vspace{0.5mm}
    \frame{\includegraphics[width=\linewidth]{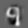}}
    \end{minipage}
    \begin{minipage}[t]{0.08\linewidth}
    \frame{\includegraphics[width=\linewidth]{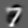}}\vspace{0.5mm}
    \frame{\includegraphics[width=\linewidth]{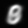}}
    \centering
    \end{minipage}
    \begin{minipage}[t]{0.08\linewidth}
    \centering
    \frame{\includegraphics[width=\linewidth]{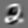}}\vspace{0.5mm}
    \frame{\includegraphics[width=\linewidth]{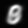}}
    \end{minipage}
    \begin{minipage}[t]{0.08\linewidth}
    \centering
    \frame{\includegraphics[width=\linewidth]{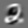}}\vspace{0.5mm}
    \frame{\includegraphics[width=\linewidth]{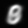}}
    \end{minipage}
    \begin{minipage}[t]{0.08\linewidth}
    \frame{\includegraphics[width=\linewidth]{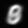}}\vspace{0.5mm}
    \frame{\includegraphics[width=\linewidth]{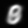}}
    \centering
    \end{minipage}
    \begin{minipage}[t]{0.08\linewidth}
    \centering
    \frame{\includegraphics[width=\linewidth]{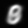}}\vspace{0.5mm}
    \frame{\includegraphics[width=\linewidth]{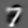}}
    \end{minipage}
    \begin{minipage}[t]{0.08\linewidth}
    \centering
    \frame{\includegraphics[width=\linewidth]{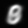}}\vspace{0.5mm}
    \frame{\includegraphics[width=\linewidth]{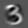}}
    \end{minipage}
    \begin{minipage}[t]{0.08\linewidth}
    \centering
    \frame{\includegraphics[width=\linewidth]{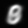}}\vspace{0.5mm}
    \frame{\includegraphics[width=\linewidth]{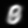}}
    \end{minipage}
    \begin{minipage}[t]{0.08\linewidth}
    \centering
    \frame{\includegraphics[width=\linewidth]{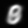}}\vspace{0.5mm}
    \frame{\includegraphics[width=\linewidth]{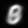}}
    \end{minipage}
    \begin{minipage}[t]{0.08\linewidth}
    \centering
    \frame{\includegraphics[width=\linewidth]{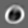}}\vspace{0.5mm}
    \frame{\includegraphics[width=\linewidth]{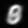}}
    \end{minipage}
    \caption{Reconstruction with top 1 Basis by MSP}
    \label{fig:MSPReconMNIST1}
    \end{subfigure}%
    ~
    \begin{subfigure}[t]{0.48\textwidth}
    \centering
    \begin{minipage}[t]{0.08\linewidth}
    \centering
    \frame{\includegraphics[width=\linewidth]{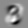}}\vspace{0.5mm}
    \frame{\includegraphics[width=\linewidth]{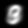}}
    \end{minipage}
    \begin{minipage}[t]{0.08\linewidth}
    \frame{\includegraphics[width=\linewidth]{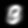}}\vspace{0.5mm}
    \frame{\includegraphics[width=\linewidth]{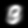}}
    \centering
    \end{minipage}
    \begin{minipage}[t]{0.08\linewidth}
    \centering
    \frame{\includegraphics[width=\linewidth]{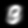}}\vspace{0.5mm}
    \frame{\includegraphics[width=\linewidth]{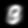}}
    \end{minipage}
    \begin{minipage}[t]{0.08\linewidth}
    \centering
    \frame{\includegraphics[width=\linewidth]{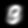}}\vspace{0.5mm}
    \frame{\includegraphics[width=\linewidth]{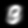}}
    \end{minipage}
    \begin{minipage}[t]{0.08\linewidth}
    \frame{\includegraphics[width=\linewidth]{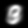}}\vspace{0.5mm}
    \frame{\includegraphics[width=\linewidth]{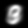}}
    \centering
    \end{minipage}
    \begin{minipage}[t]{0.08\linewidth}
    \centering
    \frame{\includegraphics[width=\linewidth]{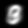}}\vspace{0.5mm}
    \frame{\includegraphics[width=\linewidth]{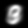}}
    \end{minipage}
    \begin{minipage}[t]{0.08\linewidth}
    \centering
    \frame{\includegraphics[width=\linewidth]{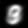}}\vspace{0.5mm}
    \frame{\includegraphics[width=\linewidth]{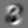}}
    \end{minipage}
    \begin{minipage}[t]{0.08\linewidth}
    \centering
    \frame{\includegraphics[width=\linewidth]{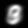}}\vspace{0.5mm}
    \frame{\includegraphics[width=\linewidth]{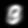}}
    \end{minipage}
    \begin{minipage}[t]{0.08\linewidth}
    \centering
    \frame{\includegraphics[width=\linewidth]{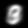}}\vspace{0.5mm}
    \frame{\includegraphics[width=\linewidth]{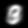}}
    \end{minipage}
    \begin{minipage}[t]{0.08\linewidth}
    \centering
    \frame{\includegraphics[width=\linewidth]{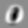}}\vspace{0.5mm}
    \frame{\includegraphics[width=\linewidth]{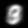}}
    \end{minipage}
    \caption{Reconstruction with top 1 Basis by PCA}
    \label{fig:PCAReconMNIST1}
    \end{subfigure}%
    \\
    \begin{subfigure}[b]{0.48\textwidth}
    \centering
    \begin{minipage}[t]{0.08\linewidth}
    \centering
    \frame{\includegraphics[width=\linewidth]{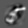}}\vspace{0.5mm}
    \frame{\includegraphics[width=\linewidth]{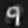}}
    \end{minipage}
    \begin{minipage}[t]{0.08\linewidth}
    \frame{\includegraphics[width=\linewidth]{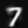}}\vspace{0.5mm}
    \frame{\includegraphics[width=\linewidth]{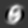}}
    \centering
    \end{minipage}
    \begin{minipage}[t]{0.08\linewidth}
    \centering
    \frame{\includegraphics[width=\linewidth]{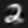}}\vspace{0.5mm}
    \frame{\includegraphics[width=\linewidth]{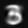}}
    \end{minipage}
    \begin{minipage}[t]{0.08\linewidth}
    \centering
    \frame{\includegraphics[width=\linewidth]{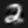}}\vspace{0.5mm}
    \frame{\includegraphics[width=\linewidth]{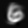}}
    \end{minipage}
    \begin{minipage}[t]{0.08\linewidth}
    \frame{\includegraphics[width=\linewidth]{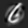}}\vspace{0.5mm}
    \frame{\includegraphics[width=\linewidth]{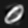}}
    \centering
    \end{minipage}
    \begin{minipage}[t]{0.08\linewidth}
    \centering
    \frame{\includegraphics[width=\linewidth]{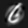}}\vspace{0.5mm}
    \frame{\includegraphics[width=\linewidth]{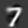}}
    \end{minipage}
    \begin{minipage}[t]{0.08\linewidth}
    \centering
    \frame{\includegraphics[width=\linewidth]{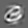}}\vspace{0.5mm}
    \frame{\includegraphics[width=\linewidth]{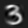}}
    \end{minipage}
    \begin{minipage}[t]{0.08\linewidth}
    \centering
    \frame{\includegraphics[width=\linewidth]{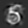}}\vspace{0.5mm}
    \frame{\includegraphics[width=\linewidth]{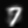}}
    \end{minipage}
    \begin{minipage}[t]{0.08\linewidth}
    \centering
    \frame{\includegraphics[width=\linewidth]{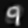}}\vspace{0.5mm}
    \frame{\includegraphics[width=\linewidth]{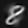}}
    \end{minipage}
    \begin{minipage}[t]{0.08\linewidth}
    \centering
    \frame{\includegraphics[width=\linewidth]{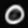}}\vspace{0.5mm}
    \frame{\includegraphics[width=\linewidth]{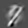}}
    \end{minipage}
    \caption{Reconstruction with top 2 Bases by MSP}
    \label{fig:MSPReconMNIST2}
    \end{subfigure}%
    ~
    \begin{subfigure}[b]{0.48\textwidth}
    \centering
    \begin{minipage}[t]{0.08\linewidth}
    \centering
    \frame{\includegraphics[width=\linewidth]{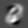}}\vspace{0.5mm}
    \frame{\includegraphics[width=\linewidth]{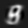}}
    \end{minipage}
    \begin{minipage}[t]{0.08\linewidth}
    \frame{\includegraphics[width=\linewidth]{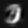}}\vspace{0.5mm}
    \frame{\includegraphics[width=\linewidth]{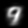}}
    \centering
    \end{minipage}
    \begin{minipage}[t]{0.08\linewidth}
    \centering
    \frame{\includegraphics[width=\linewidth]{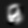}}\vspace{0.5mm}
    \frame{\includegraphics[width=\linewidth]{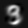}}
    \end{minipage}
    \begin{minipage}[t]{0.08\linewidth}
    \centering
    \frame{\includegraphics[width=\linewidth]{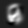}}\vspace{0.5mm}
    \frame{\includegraphics[width=\linewidth]{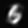}}
    \end{minipage}
    \begin{minipage}[t]{0.08\linewidth}
    \frame{\includegraphics[width=\linewidth]{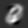}}\vspace{0.5mm}
    \frame{\includegraphics[width=\linewidth]{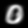}}
    \centering
    \end{minipage}
    \begin{minipage}[t]{0.08\linewidth}
    \centering
    \frame{\includegraphics[width=\linewidth]{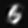}}\vspace{0.5mm}
    \frame{\includegraphics[width=\linewidth]{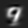}}
    \end{minipage}
    \begin{minipage}[t]{0.08\linewidth}
    \centering
    \frame{\includegraphics[width=\linewidth]{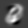}}\vspace{0.5mm}
    \frame{\includegraphics[width=\linewidth]{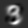}}
    \end{minipage}
    \begin{minipage}[t]{0.08\linewidth}
    \centering
    \frame{\includegraphics[width=\linewidth]{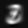}}\vspace{0.5mm}
    \frame{\includegraphics[width=\linewidth]{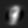}}
    \end{minipage}
    \begin{minipage}[t]{0.08\linewidth}
    \centering
    \frame{\includegraphics[width=\linewidth]{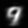}}\vspace{0.5mm}
    \frame{\includegraphics[width=\linewidth]{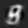}}
    \end{minipage}
    \begin{minipage}[t]{0.08\linewidth}
    \centering
    \frame{\includegraphics[width=\linewidth]{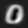}}\vspace{0.5mm}
    \frame{\includegraphics[width=\linewidth]{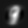}}
    \end{minipage}
    \caption{Reconstruction with top 2 Bases by PCA}
    \label{fig:PCAReconMNIST2}
    \end{subfigure}%
    \\
    \begin{subfigure}[t]{0.48\textwidth}
    \centering
    \begin{minipage}[t]{0.08\linewidth}
    \centering
    \frame{\includegraphics[width=\linewidth]{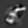}}\vspace{0.5mm}
    \frame{\includegraphics[width=\linewidth]{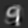}}
    \end{minipage}
    \begin{minipage}[t]{0.08\linewidth}
    \frame{\includegraphics[width=\linewidth]{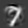}}\vspace{0.5mm}
    \frame{\includegraphics[width=\linewidth]{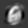}}
    \centering
    \end{minipage}
    \begin{minipage}[t]{0.08\linewidth}
    \centering
    \frame{\includegraphics[width=\linewidth]{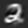}}\vspace{0.5mm}
    \frame{\includegraphics[width=\linewidth]{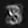}}
    \end{minipage}
    \begin{minipage}[t]{0.08\linewidth}
    \centering
    \frame{\includegraphics[width=\linewidth]{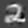}}\vspace{0.5mm}
    \frame{\includegraphics[width=\linewidth]{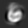}}
    \end{minipage}
    \begin{minipage}[t]{0.08\linewidth}
    \frame{\includegraphics[width=\linewidth]{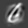}}\vspace{0.5mm}
    \frame{\includegraphics[width=\linewidth]{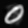}}
    \centering
    \end{minipage}
    \begin{minipage}[t]{0.08\linewidth}
    \centering
    \frame{\includegraphics[width=\linewidth]{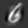}}\vspace{0.5mm}
    \frame{\includegraphics[width=\linewidth]{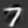}}
    \end{minipage}
    \begin{minipage}[t]{0.08\linewidth}
    \centering
    \frame{\includegraphics[width=\linewidth]{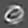}}\vspace{0.5mm}
    \frame{\includegraphics[width=\linewidth]{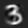}}
    \end{minipage}
    \begin{minipage}[t]{0.08\linewidth}
    \centering
    \frame{\includegraphics[width=\linewidth]{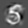}}\vspace{0.5mm}
    \frame{\includegraphics[width=\linewidth]{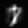}}
    \end{minipage}
    \begin{minipage}[t]{0.08\linewidth}
    \centering
    \frame{\includegraphics[width=\linewidth]{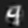}}\vspace{0.5mm}
    \frame{\includegraphics[width=\linewidth]{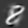}}
    \end{minipage}
    \begin{minipage}[t]{0.08\linewidth}
    \centering
    \frame{\includegraphics[width=\linewidth]{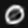}}\vspace{0.5mm}
    \frame{\includegraphics[width=\linewidth]{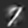}}
    \end{minipage}
    \caption{Reconstruction of with top 3 Bases by MSP}
    \label{fig:MSPReconMNIST3}
    \end{subfigure}%
    ~
    \begin{subfigure}[t]{0.48\textwidth}
    \centering
    \begin{minipage}[t]{0.08\linewidth}
    \centering
    \frame{\includegraphics[width=\linewidth]{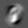}}\vspace{0.5mm}
    \frame{\includegraphics[width=\linewidth]{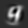}}
    \end{minipage}
    \begin{minipage}[t]{0.08\linewidth}
    \frame{\includegraphics[width=\linewidth]{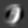}}\vspace{0.5mm}
    \frame{\includegraphics[width=\linewidth]{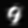}}
    \centering
    \end{minipage}
    \begin{minipage}[t]{0.08\linewidth}
    \centering
    \frame{\includegraphics[width=\linewidth]{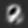}}\vspace{0.5mm}
    \frame{\includegraphics[width=\linewidth]{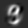}}
    \end{minipage}
    \begin{minipage}[t]{0.08\linewidth}
    \centering
    \frame{\includegraphics[width=\linewidth]{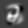}}\vspace{0.5mm}
    \frame{\includegraphics[width=\linewidth]{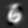}}
    \end{minipage}
    \begin{minipage}[t]{0.08\linewidth}
    \frame{\includegraphics[width=\linewidth]{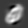}}\vspace{0.5mm}
    \frame{\includegraphics[width=\linewidth]{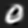}}
    \centering
    \end{minipage}
    \begin{minipage}[t]{0.08\linewidth}
    \centering
    \frame{\includegraphics[width=\linewidth]{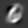}}\vspace{0.5mm}
    \frame{\includegraphics[width=\linewidth]{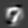}}
    \end{minipage}
    \begin{minipage}[t]{0.08\linewidth}
    \centering
    \frame{\includegraphics[width=\linewidth]{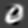}}\vspace{0.5mm}
    \frame{\includegraphics[width=\linewidth]{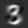}}
    \end{minipage}
    \begin{minipage}[t]{0.08\linewidth}
    \centering
    \frame{\includegraphics[width=\linewidth]{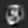}}\vspace{0.5mm}
    \frame{\includegraphics[width=\linewidth]{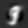}}
    \end{minipage}
    \begin{minipage}[t]{0.08\linewidth}
    \centering
    \frame{\includegraphics[width=\linewidth]{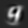}}\vspace{0.5mm}
    \frame{\includegraphics[width=\linewidth]{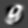}}
    \end{minipage}
    \begin{minipage}[t]{0.08\linewidth}
    \centering
    \frame{\includegraphics[width=\linewidth]{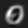}}\vspace{0.5mm}
    \frame{\includegraphics[width=\linewidth]{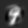}}
    \end{minipage}
    \caption{Reconstruction with top 3 Bases by PCA}
    \label{fig:PCAReconMNIST3}
    \end{subfigure}%
    \\
    \begin{subfigure}[b]{0.48\textwidth}
    \centering
    \begin{minipage}[t]{0.08\linewidth}
    \centering
    \frame{\includegraphics[width=\linewidth]{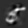}}\vspace{0.5mm}
    \frame{\includegraphics[width=\linewidth]{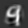}}
    \end{minipage}
    \begin{minipage}[t]{0.08\linewidth}
    \frame{\includegraphics[width=\linewidth]{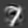}}\vspace{0.5mm}
    \frame{\includegraphics[width=\linewidth]{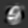}}
    \centering
    \end{minipage}
    \begin{minipage}[t]{0.08\linewidth}
    \centering
    \frame{\includegraphics[width=\linewidth]{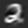}}\vspace{0.5mm}
    \frame{\includegraphics[width=\linewidth]{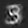}}
    \end{minipage}
    \begin{minipage}[t]{0.08\linewidth}
    \centering
    \frame{\includegraphics[width=\linewidth]{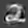}}\vspace{0.5mm}
    \frame{\includegraphics[width=\linewidth]{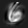}}
    \end{minipage}
    \begin{minipage}[t]{0.08\linewidth}
    \frame{\includegraphics[width=\linewidth]{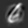}}\vspace{0.5mm}
    \frame{\includegraphics[width=\linewidth]{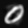}}
    \centering
    \end{minipage}
    \begin{minipage}[t]{0.08\linewidth}
    \centering
    \frame{\includegraphics[width=\linewidth]{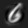}}\vspace{0.5mm}
    \frame{\includegraphics[width=\linewidth]{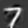}}
    \end{minipage}
    \begin{minipage}[t]{0.08\linewidth}
    \centering
    \frame{\includegraphics[width=\linewidth]{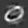}}\vspace{0.5mm}
    \frame{\includegraphics[width=\linewidth]{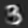}}
    \end{minipage}
    \begin{minipage}[t]{0.08\linewidth}
    \centering
    \frame{\includegraphics[width=\linewidth]{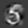}}\vspace{0.5mm}
    \frame{\includegraphics[width=\linewidth]{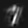}}
    \end{minipage}
    \begin{minipage}[t]{0.08\linewidth}
    \centering
    \frame{\includegraphics[width=\linewidth]{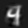}}\vspace{0.5mm}
    \frame{\includegraphics[width=\linewidth]{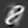}}
    \end{minipage}
    \begin{minipage}[t]{0.08\linewidth}
    \centering
    \frame{\includegraphics[width=\linewidth]{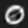}}\vspace{0.5mm}
    \frame{\includegraphics[width=\linewidth]{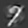}}
    \end{minipage}
    \caption{Reconstruction with top 4 Bases by MSP}
    \label{fig:MSPReconMNIST4}
    \end{subfigure}%
    ~
    \begin{subfigure}[b]{0.48\textwidth}
    \centering
    \begin{minipage}[t]{0.08\linewidth}
    \centering
    \frame{\includegraphics[width=\linewidth]{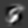}}\vspace{0.5mm}
    \frame{\includegraphics[width=\linewidth]{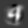}}
    \end{minipage}
    \begin{minipage}[t]{0.08\linewidth}
    \frame{\includegraphics[width=\linewidth]{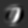}}\vspace{0.5mm}
    \frame{\includegraphics[width=\linewidth]{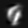}}
    \centering
    \end{minipage}
    \begin{minipage}[t]{0.08\linewidth}
    \centering
    \frame{\includegraphics[width=\linewidth]{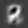}}\vspace{0.5mm}
    \frame{\includegraphics[width=\linewidth]{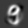}}
    \end{minipage}
    \begin{minipage}[t]{0.08\linewidth}
    \centering
    \frame{\includegraphics[width=\linewidth]{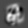}}\vspace{0.5mm}
    \frame{\includegraphics[width=\linewidth]{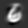}}
    \end{minipage}
    \begin{minipage}[t]{0.08\linewidth}
    \frame{\includegraphics[width=\linewidth]{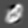}}\vspace{0.5mm}
    \frame{\includegraphics[width=\linewidth]{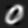}}
    \centering
    \end{minipage}
    \begin{minipage}[t]{0.08\linewidth}
    \centering
    \frame{\includegraphics[width=\linewidth]{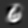}}\vspace{0.5mm}
    \frame{\includegraphics[width=\linewidth]{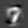}}
    \end{minipage}
    \begin{minipage}[t]{0.08\linewidth}
    \centering
    \frame{\includegraphics[width=\linewidth]{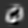}}\vspace{0.5mm}
    \frame{\includegraphics[width=\linewidth]{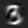}}
    \end{minipage}
    \begin{minipage}[t]{0.08\linewidth}
    \centering
    \frame{\includegraphics[width=\linewidth]{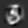}}\vspace{0.5mm}
    \frame{\includegraphics[width=\linewidth]{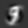}}
    \end{minipage}
    \begin{minipage}[t]{0.08\linewidth}
    \centering
    \frame{\includegraphics[width=\linewidth]{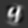}}\vspace{0.5mm}
    \frame{\includegraphics[width=\linewidth]{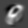}}
    \end{minipage}
    \begin{minipage}[t]{0.08\linewidth}
    \centering
    \frame{\includegraphics[width=\linewidth]{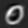}}\vspace{0.5mm}
    \frame{\includegraphics[width=\linewidth]{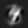}}
    \end{minipage}
    \caption{Reconstruction with top 4 Bases by PCA}
    \label{fig:PCAReconMNIST4}
    \end{subfigure}%
    \\
    \begin{subfigure}[b]{0.48\textwidth}
    \centering
    \begin{minipage}[t]{0.08\linewidth}
    \centering
    \frame{\includegraphics[width=\linewidth]{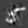}}\vspace{0.5mm}
    \frame{\includegraphics[width=\linewidth]{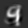}}
    \end{minipage}
    \begin{minipage}[t]{0.08\linewidth}
    \frame{\includegraphics[width=\linewidth]{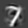}}\vspace{0.5mm}
    \frame{\includegraphics[width=\linewidth]{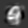}}
    \centering
    \end{minipage}
    \begin{minipage}[t]{0.08\linewidth}
    \centering
    \frame{\includegraphics[width=\linewidth]{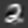}}\vspace{0.5mm}
    \frame{\includegraphics[width=\linewidth]{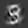}}
    \end{minipage}
    \begin{minipage}[t]{0.08\linewidth}
    \centering
    \frame{\includegraphics[width=\linewidth]{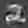}}\vspace{0.5mm}
    \frame{\includegraphics[width=\linewidth]{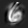}}
    \end{minipage}
    \begin{minipage}[t]{0.08\linewidth}
    \frame{\includegraphics[width=\linewidth]{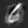}}\vspace{0.5mm}
    \frame{\includegraphics[width=\linewidth]{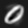}}
    \centering
    \end{minipage}
    \begin{minipage}[t]{0.08\linewidth}
    \centering
    \frame{\includegraphics[width=\linewidth]{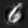}}\vspace{0.5mm}
    \frame{\includegraphics[width=\linewidth]{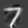}}
    \end{minipage}
    \begin{minipage}[t]{0.08\linewidth}
    \centering
    \frame{\includegraphics[width=\linewidth]{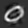}}\vspace{0.5mm}
    \frame{\includegraphics[width=\linewidth]{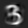}}
    \end{minipage}
    \begin{minipage}[t]{0.08\linewidth}
    \centering
    \frame{\includegraphics[width=\linewidth]{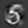}}\vspace{0.5mm}
    \frame{\includegraphics[width=\linewidth]{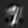}}
    \end{minipage}
    \begin{minipage}[t]{0.08\linewidth}
    \centering
    \frame{\includegraphics[width=\linewidth]{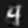}}\vspace{0.5mm}
    \frame{\includegraphics[width=\linewidth]{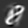}}
    \end{minipage}
    \begin{minipage}[t]{0.08\linewidth}
    \centering
    \frame{\includegraphics[width=\linewidth]{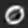}}\vspace{0.5mm}
    \frame{\includegraphics[width=\linewidth]{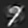}}
    \end{minipage}
    \caption{Reconstruction with top 5 Bases by MSP}
    \label{fig:MSPReconMNIST5}
    \end{subfigure}%
    ~
    \begin{subfigure}[b]{0.48\textwidth}
    \centering
    \begin{minipage}[t]{0.08\linewidth}
    \centering
    \frame{\includegraphics[width=\linewidth]{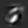}}\vspace{0.5mm}
    \frame{\includegraphics[width=\linewidth]{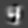}}
    \end{minipage}
    \begin{minipage}[t]{0.08\linewidth}
    \frame{\includegraphics[width=\linewidth]{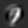}}\vspace{0.5mm}
    \frame{\includegraphics[width=\linewidth]{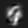}}
    \centering
    \end{minipage}
    \begin{minipage}[t]{0.08\linewidth}
    \centering
    \frame{\includegraphics[width=\linewidth]{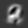}}\vspace{0.5mm}
    \frame{\includegraphics[width=\linewidth]{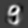}}
    \end{minipage}
    \begin{minipage}[t]{0.08\linewidth}
    \centering
    \frame{\includegraphics[width=\linewidth]{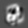}}\vspace{0.5mm}
    \frame{\includegraphics[width=\linewidth]{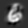}}
    \end{minipage}
    \begin{minipage}[t]{0.08\linewidth}
    \frame{\includegraphics[width=\linewidth]{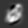}}\vspace{0.5mm}
    \frame{\includegraphics[width=\linewidth]{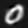}}
    \centering
    \end{minipage}
    \begin{minipage}[t]{0.08\linewidth}
    \centering
    \frame{\includegraphics[width=\linewidth]{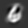}}\vspace{0.5mm}
    \frame{\includegraphics[width=\linewidth]{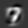}}
    \end{minipage}
    \begin{minipage}[t]{0.08\linewidth}
    \centering
    \frame{\includegraphics[width=\linewidth]{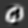}}\vspace{0.5mm}
    \frame{\includegraphics[width=\linewidth]{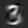}}
    \end{minipage}
    \begin{minipage}[t]{0.08\linewidth}
    \centering
    \frame{\includegraphics[width=\linewidth]{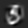}}\vspace{0.5mm}
    \frame{\includegraphics[width=\linewidth]{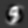}}
    \end{minipage}
    \begin{minipage}[t]{0.08\linewidth}
    \centering
    \frame{\includegraphics[width=\linewidth]{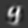}}\vspace{0.5mm}
    \frame{\includegraphics[width=\linewidth]{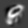}}
    \end{minipage}
    \begin{minipage}[t]{0.08\linewidth}
    \centering
    \frame{\includegraphics[width=\linewidth]{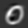}}\vspace{0.5mm}
    \frame{\includegraphics[width=\linewidth]{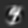}}
    \end{minipage}
    \caption{Reconstruction with top 5 Bases by PCA}
    \label{fig:PCAReconMNIST5}
    \end{subfigure}%
    \\
    \begin{subfigure}[b]{0.48\textwidth}
    \centering
    \begin{minipage}[t]{0.08\linewidth}
    \centering
    \frame{\includegraphics[width=\linewidth]{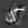}}\vspace{0.5mm}
    \frame{\includegraphics[width=\linewidth]{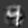}}
    \end{minipage}
    \begin{minipage}[t]{0.08\linewidth}
    \frame{\includegraphics[width=\linewidth]{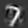}}\vspace{0.5mm}
    \frame{\includegraphics[width=\linewidth]{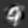}}
    \centering
    \end{minipage}
    \begin{minipage}[t]{0.08\linewidth}
    \centering
    \frame{\includegraphics[width=\linewidth]{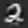}}\vspace{0.5mm}
    \frame{\includegraphics[width=\linewidth]{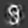}}
    \end{minipage}
    \begin{minipage}[t]{0.08\linewidth}
    \centering
    \frame{\includegraphics[width=\linewidth]{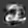}}\vspace{0.5mm}
    \frame{\includegraphics[width=\linewidth]{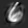}}
    \end{minipage}
    \begin{minipage}[t]{0.08\linewidth}
    \frame{\includegraphics[width=\linewidth]{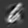}}\vspace{0.5mm}
    \frame{\includegraphics[width=\linewidth]{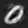}}
    \centering
    \end{minipage}
    \begin{minipage}[t]{0.08\linewidth}
    \centering
    \frame{\includegraphics[width=\linewidth]{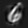}}\vspace{0.5mm}
    \frame{\includegraphics[width=\linewidth]{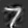}}
    \end{minipage}
    \begin{minipage}[t]{0.08\linewidth}
    \centering
    \frame{\includegraphics[width=\linewidth]{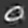}}\vspace{0.5mm}
    \frame{\includegraphics[width=\linewidth]{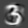}}
    \end{minipage}
    \begin{minipage}[t]{0.08\linewidth}
    \centering
    \frame{\includegraphics[width=\linewidth]{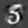}}\vspace{0.5mm}
    \frame{\includegraphics[width=\linewidth]{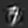}}
    \end{minipage}
    \begin{minipage}[t]{0.08\linewidth}
    \centering
    \frame{\includegraphics[width=\linewidth]{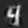}}\vspace{0.5mm}
    \frame{\includegraphics[width=\linewidth]{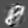}}
    \end{minipage}
    \begin{minipage}[t]{0.08\linewidth}
    \centering
    \frame{\includegraphics[width=\linewidth]{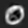}}\vspace{0.5mm}
    \frame{\includegraphics[width=\linewidth]{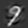}}
    \end{minipage}
    \caption{Reconstruction with top 10 Bases by MSP}
    \label{fig:MSPReconMNIST10}
    \end{subfigure}%
    ~
    \begin{subfigure}[b]{0.48\textwidth}
    \centering
    \begin{minipage}[t]{0.08\linewidth}
    \centering
    \frame{\includegraphics[width=\linewidth]{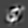}}\vspace{0.5mm}
    \frame{\includegraphics[width=\linewidth]{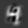}}
    \end{minipage}
    \begin{minipage}[t]{0.08\linewidth}
    \frame{\includegraphics[width=\linewidth]{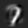}}\vspace{0.5mm}
    \frame{\includegraphics[width=\linewidth]{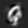}}
    \centering
    \end{minipage}
    \begin{minipage}[t]{0.08\linewidth}
    \centering
    \frame{\includegraphics[width=\linewidth]{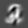}}\vspace{0.5mm}
    \frame{\includegraphics[width=\linewidth]{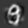}}
    \end{minipage}
    \begin{minipage}[t]{0.08\linewidth}
    \centering
    \frame{\includegraphics[width=\linewidth]{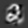}}\vspace{0.5mm}
    \frame{\includegraphics[width=\linewidth]{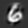}}
    \end{minipage}
    \begin{minipage}[t]{0.08\linewidth}
    \frame{\includegraphics[width=\linewidth]{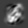}}\vspace{0.5mm}
    \frame{\includegraphics[width=\linewidth]{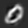}}
    \centering
    \end{minipage}
    \begin{minipage}[t]{0.08\linewidth}
    \centering
    \frame{\includegraphics[width=\linewidth]{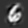}}\vspace{0.5mm}
    \frame{\includegraphics[width=\linewidth]{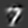}}
    \end{minipage}
    \begin{minipage}[t]{0.08\linewidth}
    \centering
    \frame{\includegraphics[width=\linewidth]{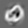}}\vspace{0.5mm}
    \frame{\includegraphics[width=\linewidth]{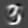}}
    \end{minipage}
    \begin{minipage}[t]{0.08\linewidth}
    \centering
    \frame{\includegraphics[width=\linewidth]{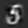}}\vspace{0.5mm}
    \frame{\includegraphics[width=\linewidth]{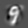}}
    \end{minipage}
    \begin{minipage}[t]{0.08\linewidth}
    \centering
    \frame{\includegraphics[width=\linewidth]{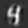}}\vspace{0.5mm}
    \frame{\includegraphics[width=\linewidth]{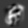}}
    \end{minipage}
    \begin{minipage}[t]{0.08\linewidth}
    \centering
    \frame{\includegraphics[width=\linewidth]{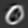}}\vspace{0.5mm}
    \frame{\includegraphics[width=\linewidth]{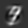}}
    \end{minipage}
    \caption{Reconstruction with top 10 Bases by PCA}
    \label{fig:PCAReconMNIST10}
    \end{subfigure}%
    \\
    \begin{subfigure}[b]{0.48\textwidth}
    \centering
    \begin{minipage}[t]{0.08\linewidth}
    \centering
    \frame{\includegraphics[width=\linewidth]{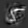}}\vspace{0.5mm}
    \frame{\includegraphics[width=\linewidth]{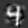}}
    \end{minipage}
    \begin{minipage}[t]{0.08\linewidth}
    \frame{\includegraphics[width=\linewidth]{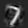}}\vspace{0.5mm}
    \frame{\includegraphics[width=\linewidth]{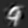}}
    \centering
    \end{minipage}
    \begin{minipage}[t]{0.08\linewidth}
    \centering
    \frame{\includegraphics[width=\linewidth]{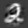}}\vspace{0.5mm}
    \frame{\includegraphics[width=\linewidth]{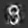}}
    \end{minipage}
    \begin{minipage}[t]{0.08\linewidth}
    \centering
    \frame{\includegraphics[width=\linewidth]{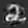}}\vspace{0.5mm}
    \frame{\includegraphics[width=\linewidth]{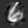}}
    \end{minipage}
    \begin{minipage}[t]{0.08\linewidth}
    \frame{\includegraphics[width=\linewidth]{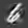}}\vspace{0.5mm}
    \frame{\includegraphics[width=\linewidth]{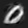}}
    \centering
    \end{minipage}
    \begin{minipage}[t]{0.08\linewidth}
    \centering
    \frame{\includegraphics[width=\linewidth]{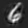}}\vspace{0.5mm}
    \frame{\includegraphics[width=\linewidth]{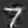}}
    \end{minipage}
    \begin{minipage}[t]{0.08\linewidth}
    \centering
    \frame{\includegraphics[width=\linewidth]{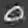}}\vspace{0.5mm}
    \frame{\includegraphics[width=\linewidth]{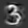}}
    \end{minipage}
    \begin{minipage}[t]{0.08\linewidth}
    \centering
    \frame{\includegraphics[width=\linewidth]{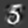}}\vspace{0.5mm}
    \frame{\includegraphics[width=\linewidth]{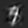}}
    \end{minipage}
    \begin{minipage}[t]{0.08\linewidth}
    \centering
    \frame{\includegraphics[width=\linewidth]{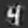}}\vspace{0.5mm}
    \frame{\includegraphics[width=\linewidth]{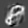}}
    \end{minipage}
    \begin{minipage}[t]{0.08\linewidth}
    \centering
    \frame{\includegraphics[width=\linewidth]{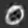}}\vspace{0.5mm}
    \frame{\includegraphics[width=\linewidth]{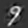}}
    \end{minipage}
    \caption{Reconstruction with top 15 Bases by MSP}
    \label{fig:MSPReconMNIST15}
    \end{subfigure}%
    ~
    \begin{subfigure}[b]{0.48\textwidth}
    \centering
    \begin{minipage}[t]{0.08\linewidth}
    \centering
    \frame{\includegraphics[width=\linewidth]{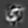}}\vspace{0.5mm}
    \frame{\includegraphics[width=\linewidth]{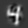}}
    \end{minipage}
    \begin{minipage}[t]{0.08\linewidth}
    \frame{\includegraphics[width=\linewidth]{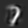}}\vspace{0.5mm}
    \frame{\includegraphics[width=\linewidth]{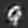}}
    \centering
    \end{minipage}
    \begin{minipage}[t]{0.08\linewidth}
    \centering
    \frame{\includegraphics[width=\linewidth]{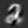}}\vspace{0.5mm}
    \frame{\includegraphics[width=\linewidth]{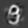}}
    \end{minipage}
    \begin{minipage}[t]{0.08\linewidth}
    \centering
    \frame{\includegraphics[width=\linewidth]{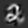}}\vspace{0.5mm}
    \frame{\includegraphics[width=\linewidth]{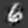}}
    \end{minipage}
    \begin{minipage}[t]{0.08\linewidth}
    \frame{\includegraphics[width=\linewidth]{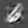}}\vspace{0.5mm}
    \frame{\includegraphics[width=\linewidth]{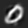}}
    \centering
    \end{minipage}
    \begin{minipage}[t]{0.08\linewidth}
    \centering
    \frame{\includegraphics[width=\linewidth]{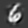}}\vspace{0.5mm}
    \frame{\includegraphics[width=\linewidth]{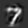}}
    \end{minipage}
    \begin{minipage}[t]{0.08\linewidth}
    \centering
    \frame{\includegraphics[width=\linewidth]{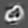}}\vspace{0.5mm}
    \frame{\includegraphics[width=\linewidth]{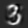}}
    \end{minipage}
    \begin{minipage}[t]{0.08\linewidth}
    \centering
    \frame{\includegraphics[width=\linewidth]{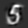}}\vspace{0.5mm}
    \frame{\includegraphics[width=\linewidth]{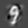}}
    \end{minipage}
    \begin{minipage}[t]{0.08\linewidth}
    \centering
    \frame{\includegraphics[width=\linewidth]{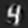}}\vspace{0.5mm}
    \frame{\includegraphics[width=\linewidth]{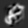}}
    \end{minipage}
    \begin{minipage}[t]{0.08\linewidth}
    \centering
    \frame{\includegraphics[width=\linewidth]{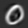}}\vspace{0.5mm}
    \frame{\includegraphics[width=\linewidth]{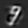}}
    \end{minipage}
    \caption{Reconstruction with top 15 Bases by PCA}
    \label{fig:PCAReconMNIST15}
    \end{subfigure}%
    \\
    \begin{subfigure}[b]{0.48\textwidth}
    \centering
    \begin{minipage}[t]{0.08\linewidth}
    \centering
    \frame{\includegraphics[width=\linewidth]{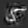}}\vspace{0.5mm}
    \frame{\includegraphics[width=\linewidth]{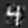}}
    \end{minipage}
    \begin{minipage}[t]{0.08\linewidth}
    \frame{\includegraphics[width=\linewidth]{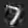}}\vspace{0.5mm}
    \frame{\includegraphics[width=\linewidth]{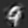}}
    \centering
    \end{minipage}
    \begin{minipage}[t]{0.08\linewidth}
    \centering
    \frame{\includegraphics[width=\linewidth]{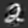}}\vspace{0.5mm}
    \frame{\includegraphics[width=\linewidth]{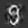}}
    \end{minipage}
    \begin{minipage}[t]{0.08\linewidth}
    \centering
    \frame{\includegraphics[width=\linewidth]{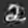}}\vspace{0.5mm}
    \frame{\includegraphics[width=\linewidth]{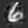}}
    \end{minipage}
    \begin{minipage}[t]{0.08\linewidth}
    \frame{\includegraphics[width=\linewidth]{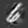}}\vspace{0.5mm}
    \frame{\includegraphics[width=\linewidth]{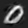}}
    \centering
    \end{minipage}
    \begin{minipage}[t]{0.08\linewidth}
    \centering
    \frame{\includegraphics[width=\linewidth]{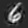}}\vspace{0.5mm}
    \frame{\includegraphics[width=\linewidth]{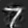}}
    \end{minipage}
    \begin{minipage}[t]{0.08\linewidth}
    \centering
    \frame{\includegraphics[width=\linewidth]{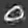}}\vspace{0.5mm}
    \frame{\includegraphics[width=\linewidth]{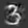}}
    \end{minipage}
    \begin{minipage}[t]{0.08\linewidth}
    \centering
    \frame{\includegraphics[width=\linewidth]{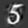}}\vspace{0.5mm}
    \frame{\includegraphics[width=\linewidth]{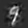}}
    \end{minipage}
    \begin{minipage}[t]{0.08\linewidth}
    \centering
    \frame{\includegraphics[width=\linewidth]{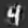}}\vspace{0.5mm}
    \frame{\includegraphics[width=\linewidth]{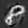}}
    \end{minipage}
    \begin{minipage}[t]{0.08\linewidth}
    \centering
    \frame{\includegraphics[width=\linewidth]{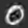}}\vspace{0.5mm}
    \frame{\includegraphics[width=\linewidth]{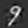}}
    \end{minipage}
    \caption{Reconstruction with top 20 Bases by MSP}
    \label{fig:MSPReconMNIST20}
    \end{subfigure}%
    ~
    \begin{subfigure}[b]{0.48\textwidth}
    \centering
    \begin{minipage}[t]{0.08\linewidth}
    \centering
    \frame{\includegraphics[width=\linewidth]{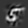}}\vspace{0.5mm}
    \frame{\includegraphics[width=\linewidth]{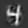}}
    \end{minipage}
    \begin{minipage}[t]{0.08\linewidth}
    \frame{\includegraphics[width=\linewidth]{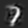}}\vspace{0.5mm}
    \frame{\includegraphics[width=\linewidth]{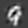}}
    \centering
    \end{minipage}
    \begin{minipage}[t]{0.08\linewidth}
    \centering
    \frame{\includegraphics[width=\linewidth]{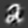}}\vspace{0.5mm}
    \frame{\includegraphics[width=\linewidth]{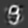}}
    \end{minipage}
    \begin{minipage}[t]{0.08\linewidth}
    \centering
    \frame{\includegraphics[width=\linewidth]{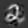}}\vspace{0.5mm}
    \frame{\includegraphics[width=\linewidth]{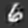}}
    \end{minipage}
    \begin{minipage}[t]{0.08\linewidth}
    \frame{\includegraphics[width=\linewidth]{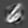}}\vspace{0.5mm}
    \frame{\includegraphics[width=\linewidth]{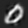}}
    \centering
    \end{minipage}
    \begin{minipage}[t]{0.08\linewidth}
    \centering
    \frame{\includegraphics[width=\linewidth]{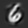}}\vspace{0.5mm}
    \frame{\includegraphics[width=\linewidth]{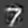}}
    \end{minipage}
    \begin{minipage}[t]{0.08\linewidth}
    \centering
    \frame{\includegraphics[width=\linewidth]{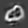}}\vspace{0.5mm}
    \frame{\includegraphics[width=\linewidth]{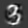}}
    \end{minipage}
    \begin{minipage}[t]{0.08\linewidth}
    \centering
    \frame{\includegraphics[width=\linewidth]{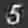}}\vspace{0.5mm}
    \frame{\includegraphics[width=\linewidth]{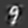}}
    \end{minipage}
    \begin{minipage}[t]{0.08\linewidth}
    \centering
    \frame{\includegraphics[width=\linewidth]{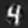}}\vspace{0.5mm}
    \frame{\includegraphics[width=\linewidth]{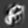}}
    \end{minipage}
    \begin{minipage}[t]{0.08\linewidth}
    \centering
    \frame{\includegraphics[width=\linewidth]{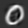}}\vspace{0.5mm}
    \frame{\includegraphics[width=\linewidth]{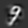}}
    \end{minipage}
    \caption{Reconstruction with top 20 Bases by PCA}
    \label{fig:PCAReconMNIST20}
    \end{subfigure}%
    \\
    \begin{subfigure}[b]{0.48\textwidth}
    \centering
    \begin{minipage}[t]{0.08\linewidth}
    \centering
    \frame{\includegraphics[width=\linewidth]{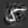}}\vspace{0.5mm}
    \frame{\includegraphics[width=\linewidth]{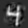}}
    \end{minipage}
    \begin{minipage}[t]{0.08\linewidth}
    \frame{\includegraphics[width=\linewidth]{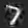}}\vspace{0.5mm}
    \frame{\includegraphics[width=\linewidth]{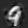}}
    \centering
    \end{minipage}
    \begin{minipage}[t]{0.08\linewidth}
    \centering
    \frame{\includegraphics[width=\linewidth]{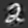}}\vspace{0.5mm}
    \frame{\includegraphics[width=\linewidth]{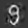}}
    \end{minipage}
    \begin{minipage}[t]{0.08\linewidth}
    \centering
    \frame{\includegraphics[width=\linewidth]{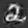}}\vspace{0.5mm}
    \frame{\includegraphics[width=\linewidth]{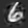}}
    \end{minipage}
    \begin{minipage}[t]{0.08\linewidth}
    \frame{\includegraphics[width=\linewidth]{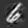}}\vspace{0.5mm}
    \frame{\includegraphics[width=\linewidth]{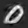}}
    \centering
    \end{minipage}
    \begin{minipage}[t]{0.08\linewidth}
    \centering
    \frame{\includegraphics[width=\linewidth]{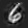}}\vspace{0.5mm}
    \frame{\includegraphics[width=\linewidth]{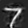}}
    \end{minipage}
    \begin{minipage}[t]{0.08\linewidth}
    \centering
    \frame{\includegraphics[width=\linewidth]{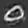}}\vspace{0.5mm}
    \frame{\includegraphics[width=\linewidth]{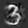}}
    \end{minipage}
    \begin{minipage}[t]{0.08\linewidth}
    \centering
    \frame{\includegraphics[width=\linewidth]{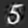}}\vspace{0.5mm}
    \frame{\includegraphics[width=\linewidth]{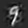}}
    \end{minipage}
    \begin{minipage}[t]{0.08\linewidth}
    \centering
    \frame{\includegraphics[width=\linewidth]{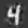}}\vspace{0.5mm}
    \frame{\includegraphics[width=\linewidth]{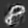}}
    \end{minipage}
    \begin{minipage}[t]{0.08\linewidth}
    \centering
    \frame{\includegraphics[width=\linewidth]{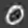}}\vspace{0.5mm}
    \frame{\includegraphics[width=\linewidth]{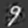}}
    \end{minipage}
    \caption{Reconstruction with top 25 Bases by MSP}
    \label{fig:MSPReconMNIST25}
    \end{subfigure}%
    ~
    \begin{subfigure}[b]{0.48\textwidth}
    \centering
    \begin{minipage}[t]{0.08\linewidth}
    \centering
    \frame{\includegraphics[width=\linewidth]{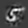}}\vspace{0.5mm}
    \frame{\includegraphics[width=\linewidth]{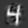}}
    \end{minipage}
    \begin{minipage}[t]{0.08\linewidth}
    \frame{\includegraphics[width=\linewidth]{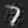}}\vspace{0.5mm}
    \frame{\includegraphics[width=\linewidth]{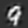}}
    \centering
    \end{minipage}
    \begin{minipage}[t]{0.08\linewidth}
    \centering
    \frame{\includegraphics[width=\linewidth]{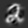}}\vspace{0.5mm}
    \frame{\includegraphics[width=\linewidth]{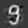}}
    \end{minipage}
    \begin{minipage}[t]{0.08\linewidth}
    \centering
    \frame{\includegraphics[width=\linewidth]{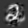}}\vspace{0.5mm}
    \frame{\includegraphics[width=\linewidth]{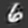}}
    \end{minipage}
    \begin{minipage}[t]{0.08\linewidth}
    \frame{\includegraphics[width=\linewidth]{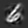}}\vspace{0.5mm}
    \frame{\includegraphics[width=\linewidth]{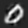}}
    \centering
    \end{minipage}
    \begin{minipage}[t]{0.08\linewidth}
    \centering
    \frame{\includegraphics[width=\linewidth]{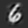}}\vspace{0.5mm}
    \frame{\includegraphics[width=\linewidth]{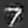}}
    \end{minipage}
    \begin{minipage}[t]{0.08\linewidth}
    \centering
    \frame{\includegraphics[width=\linewidth]{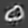}}\vspace{0.5mm}
    \frame{\includegraphics[width=\linewidth]{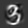}}
    \end{minipage}
    \begin{minipage}[t]{0.08\linewidth}
    \centering
    \frame{\includegraphics[width=\linewidth]{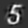}}\vspace{0.5mm}
    \frame{\includegraphics[width=\linewidth]{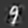}}
    \end{minipage}
    \begin{minipage}[t]{0.08\linewidth}
    \centering
    \frame{\includegraphics[width=\linewidth]{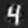}}\vspace{0.5mm}
    \frame{\includegraphics[width=\linewidth]{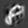}}
    \end{minipage}
    \begin{minipage}[t]{0.08\linewidth}
    \centering
    \frame{\includegraphics[width=\linewidth]{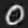}}\vspace{0.5mm}
    \frame{\includegraphics[width=\linewidth]{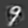}}
    \end{minipage}
    \caption{Reconstruction with top 25 Bases by PCA}
    \label{fig:PCAReconMNIST25}
    \end{subfigure}%
    
    \caption{Comparison of compact representation with learned dictionary from the MSP algorithm \ref{algo:SPOrthDL} with the PCA bases for the MNIST image dataset.}
    \label{fig:ReconMNIST}
\end{figure}

\section{Conclusions and Discussions}
\label{sec:Discussion}
In this paper, we see that a complete dictionary can be effectively and efficiently learned with nearly minimum sample complexity by the proposed simple MSP algorithm. The computational complexity of the algorithm is essentially a few dozens of SVDs, allowing us to learn dictionary for very high-dimensional data.  

Regarding sample complexity, the two main measure concentration results of Theorem \ref{Thm:MainResult} and Proposition \ref{prop:GradHatfUnionConcentrationBound} both require a sample complexity of $p=\Omega(\theta n^2\log n/\eps^2)$ for the global maximizers of the $\ell^4$-objective to be (close to) the correct $n\times n$ dictionary. This bound is consistent with our experiments in section \ref{subsec:MSPPhaseTransition}. \cite{Barak-2015,ma2016polynomial,schramm2017fast,bai2018subgradient} have reported similar empirical evidences that at least $p=\Omega(n^2)$ samples are needed to recover an $n\times n$ dictionary. 

Regarding the order of $\ell^{2k}$-norm, we adopt $\ell^4$-norm because $2k=4$ is the minimum even order that promotes sparsity. However, later works \citep{shen2020complete,xue2020blind} show that $\ell^3$-norm maximization also promotes sparsity and the MSP algorithm (PGA with infinite step size) for $\ell^4$-norm maximization naturally generalizes to $\ell^3$-norm maximization. 

Regarding optimality, Proposition \ref{prop:ConvergenceToSaddle} shows than random initialization of PGA algorithm with any step size finds a critical points of the $\ell^4$-norm over $\msf O(n;\bb R)$, Theorem \ref{Thm:MSPLocalConvergence} has shown the local convergence of the proposed MSP Algorithm \ref{algo:SPOrthD} with a cubic rate for general $n$ around global maximizers, and Proposition \ref{prop:MSPGlobalConvn=2} has proven its global convergence for $n=2$. It is natural to conjecture that the signed-permutations would be the only stable maximizers of the $\ell^4$-norm over the entire orthogonal group, hence the proposed algorithm converges globally, just like the experiments have indicated. Nevertheless, a rigorous proof is still elusive at this point. 

Although some initial experiments have already suggested that the proposed algorithm works stably with mild noise and real data -- showing clear advantages of the so learned dictionary over the classic PCA bases, the proposed algorithm actually generalizes well to data with dense noise, outliers, and sparse corruptions \citep{zhai2019understanding}. This paper has only addressed the case with a complete (square) dictionary. But there are ample reasons to believe that similar formulations and techniques presented in this paper can be extended to the over-complete case $\mb D \in \bb R^{n\times m}$ with $n < m$, at least when $m = O(n)$.

\section{Acknowledgement}
We would like to thank Yichao Zhou and Dr. Chong You of Berkeley EECS Department for stimulating discussions during preparation of this manuscript. We would like to thank professor Ju Sun of University of Minnesota CSE and Dr. Julien Mairal of Inria for references to existing work and experimental comparison. We would also like to thank Haozhi Qi of Berkeley EECS for help with the experiments. YZ would like to thank professor Yuejie Chi of CMU ECE for meaningful comments on the local convergence result. YZ would also like to thank Ye Xue from HKUST and Jinxing Wang from CUHK for insightful suggestions and proofreading. Yi likes to thank professor Bernd Sturmfels of Berkeley Math Department for help with analyzing algebraic properties of  critical points of the $\ell^4$-norm over the orthogonal group and thank professor Alan Weinstein of Berkeley Math Department for discussions and references on the role of $\ell^4$-norm in spherical harmonic analysis. Yi would like to acknowledge that this work is partially supported by research grant N00014-20-1-2002 from Office of Naval Research (ONR) and research grant from Tsinghua-Berkeley Shenzhen Institute (TBSI). JW gratefully acknowledges support from NSF grants  1733857, 1838061, 1740833, and 1740391, and thanks Sam Buchanan for discussions related to the geometry of $\ell^4$.

\newpage
\appendices
\section{Proofs of Section \ref{sec:StatCharacter}}

\subsection{Proof of Lemma \ref{lemma:orthproperty}}
\begin{claim}
    $\forall \theta \in (0,1)$, let $\mb X_o\in \bb R^{n\times p}$, $x_{i,j}\sim_{iid}\text{BG}(\theta)$, $\mb D_o\in \msf O(n;\bb R)$ is any orthogonal matrix, and $\mb Y = \mb D_o\mb X_o$. Then, $\forall \mb A\in \msf O(n;\bb R)$, we have
    \begin{equation}
        \frac{1}{3p\theta}f(\mb A) = (1-\theta)g(\mb {AD}_o) + \theta n.
    \end{equation}
\end{claim}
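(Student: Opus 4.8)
The plan is to exploit orthogonality to collapse the computation to a single fourth–moment evaluation. First I would set $\mb W \doteq \mb{AD}_o$, which lies in $\msf O(n;\bb R)$ since $\mb A$ and $\mb D_o$ are both orthogonal; then $\mb A\mb Y = \mb W\mb X_o$, so $f(\mb A) = \bb E_{\mb X_o}[\|\mb W\mb X_o\|_4^4]$. Because the columns $\mb x_1,\dots,\mb x_p$ of $\mb X_o$ are i.i.d., the objective decouples: $\|\mb W\mb X_o\|_4^4 = \sum_{j=1}^p \|\mb W\mb x_j\|_4^4$, and hence $f(\mb A) = p\,\bb E\|\mb W\mb x\|_4^4$ for a single $\mb x$ with $x_k = \Omega_k V_k$, $\Omega_k\sim_{iid}\text{Ber}(\theta)$, $V_k\sim_{iid}\mc N(0,1)$. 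Writing $\mb w_1,\dots,\mb w_n$ for the rows of $\mb W$ (each of unit $\ell^2$-norm, since $\mb W\mb W^* = \mb I$), we have $\|\mb W\mb x\|_4^4 = \sum_{i=1}^n (\mb w_i^*\mb x)^4$, so the whole problem reduces to computing $\bb E[(\mb w^*\mb x)^4]$ for an arbitrary unit vector $\mb w$.

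Next I would carry out the moment expansion. Since $\Omega_k\in\{0,1\}$, we have $\Omega_k^m=\Omega_k$ for $m\ge 1$, so $\bb E[x_k^2]=\theta$ and $\bb E[x_k^4]=\theta\,\bb E[V^4]=3\theta$, while each $x_k$ is symmetric about $0$ and the $x_k$ are independent. Expanding
\[
\bb E[(\mb w^*\mb x)^4]=\sum_{k_1,k_2,k_3,k_4} w_{k_1}w_{k_2}w_{k_3}w_{k_4}\,\bb E[x_{k_1}x_{k_2}x_{k_3}x_{k_4}],
\]
a term survives only if every index occurs an even number of times, i.e. either all four indices are equal or they split into two distinct values each occurring twice. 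The all-equal terms contribute $3\theta\sum_k w_k^4$. For the two-pair pattern, grouping by the three ways of splitting the four positions into two blocks and assigning distinct values $a\ne b$, the contribution is $3\theta^2\sum_{a\ne b} w_a^2 w_b^2 = 3\theta^2\bigl((\sum_a w_a^2)^2 - \sum_a w_a^4\bigr) = 3\theta^2\bigl(1 - \sum_a w_a^4\bigr)$, using $\|\mb w\|_2=1$. Adding the two pieces gives $\bb E[(\mb w^*\mb x)^4] = 3\theta(1-\theta)\sum_k w_k^4 + 3\theta^2$.

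Finally I would sum over the $n$ rows and reassemble: $\bb E\|\mb W\mb x\|_4^4 = 3\theta(1-\theta)\sum_{i,k} w_{ik}^4 + 3n\theta^2 = 3\theta(1-\theta)\,g(\mb W) + 3n\theta^2$, whence $f(\mb A) = 3p\theta(1-\theta)\,g(\mb{AD}_o) + 3pn\theta^2$, and dividing by $3p\theta$ yields the claim. I do not expect a genuine obstacle: the only delicate point is the bookkeeping in the fourth-moment expansion — correctly ruling out odd index patterns, counting the two-pair configurations, and remembering to use $\|\mb w\|_2 = 1$ to collapse $(\sum_a w_a^2)^2$ to $1$. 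The conceptual crux is simply that passing to $\mb W = \mb{AD}_o$, which is orthogonal, makes every row norm exactly one, which is precisely what makes the additive term come out as the clean constant $\theta n$.
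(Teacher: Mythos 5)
Your proof is correct and follows essentially the same route as the paper's: substitute $\mb W = \mb{AD}_o$, expand the fourth moment of $\mb w_i^*\mb x$ under the Bernoulli--Gaussian model, and use the unit row norms of the orthogonal matrix to collapse the cross terms (the paper does this via the aggregate identity $\sum_i\sum_{k_1<k_2} w_{i,k_1}^2 w_{i,k_2}^2 = (n-g(\mb W))/2$, which is the same step done per row in your version). No gaps.
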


\begin{proof}
    \label{proof:OrthProperty}
    For simplicity, since $\mb D_o\in \msf O(n;\bb R)$, let $\mb W = \mb {AD}_o$, we know that $\mb W\in \msf O(n;\bb R)$. By the fact that $\mb W\in \msf O(n;\bb R)$, we have:
    \begin{equation}
        \label{eq:GFRelation}
        \begin{split}
            &n = \sum_{i=1}^n\Big(\sum_{k=1}^nw_{i,k}^2\Big)^2= \underbrace{\sum_{i=1}^n\sum_{k=1}^nw_{i,k}^4}_{g(\mb W)} + 2\sum_{i=1}^n\sum_{1\leq k_1<k_2\leq n}w_{i,k_1}^2w_{i,k_2}^2\\
            \implies& \sum_{i=1}^n\sum_{1\leq k_1<k_2\leq n}w_{i,k_1}^2w_{i,k_2}^2 = \frac{n-g(\mb W)}{2}.
        \end{split}
    \end{equation}
    Next, we calculate $\frac{1}{3p\theta}f(\mb A)$:
    \begin{equation}
    \label{eq:L4ExpDerivation}
    \begin{split}
        \frac{f(\mb A)}{3p\theta} & = \frac{1}{3p\theta}\bb E_{\mb X_o}\hat{f}(\mb A; \mb Y) = \frac{1}{3p\theta}\bb E_{\mb X_o}\norm{\mb {AD}_o\mb X_o}{4}^4\\
        & = \frac{1}{3p\theta}\bb E_{\mb X_o}\norm{\mb W\mb X_o}{4}^4 = \frac{1}{3p\theta}\sum_{i=1}^n\sum_{j=1}^p\bb E_{\mb X_0}\Big(\sum_{k=1}^nw_{i,k}x_{k,j}\Big)^4\\
        & =\frac{1}{3p\theta}\sum_{i=1}^n\sum_{j=1}^p\Big[3\theta\sum_{k=1}^n w_{i,k}^4 + 6\theta^2\sum_{1\leq k_1<k_2\leq n}w_{i,k_1}^2w_{i,k_2}^2 \Big]\\
        & = \underbrace{\sum_{i=1}^n\sum_{k=1}^nw_{i,k}^4}_{g(\mb W)} + 2\theta \underbrace{\sum_{i=1}^n\sum_{1\leq k_1<k_2\leq n}w_{i,k_1}^2w_{i,k_2}^2}_{\frac{n-g(\mb W)}{2}}\\
        & = (1-\theta)g(\mb W)+\theta n = (1-\theta)g(\mb {AD}_o)+\theta n\quad \text{(By \eqref{eq:GFRelation})},
\end{split}
\end{equation}
which completes the proof.
\end{proof}

\subsection{Proof of Lemma \ref{lemma:HatfUnionConcentrationBound}}
\begin{claim}[Concentration Bound of $\frac{1}{np}\hat{f}(\cdot,\cdot)$]
    $\forall \theta\in(0,1)$, if $\mb X\in \bb R^{n\times p}$, $x_{i,j}\sim_{iid}\text{BG}(\theta)$, for any $\delta>0$, the following inequality holds  
    \begin{equation}
        \label{eq:HatfUnionConcentrationBound}
         \begin{split}
            &\bb P\Bigg(\sup_{\mb W\in \msf O(n;\bb R)}\frac{1}{np}\abs{\norm{\mb W \mb X}{4}^4-\bb E \norm{\mb W \mb X}{4}^4}\geq \delta\Bigg)\\
            <& \exp\Bigg( -\frac{3p\delta^2}{c_1\theta+8n(\ln p)^4\delta}+n^2\ln\Big(\frac{60np(\ln p)^4}{\delta}\Big) \Bigg)\\
            &+\exp\Bigg(-\frac{p\delta^2}{c_2\theta}+n^2\ln\Big(\frac{60np(\ln p)^4}{\delta}\Big)\Bigg)+2np\theta \exp\Bigg(-\frac{(\ln p)^2}{2}\Bigg),
        \end{split}
    \end{equation}
    for some constants $c_1>10^4,c_2>3360$. Moreover 
    \begin{equation}
        \label{eq:DecentHatfUnionConcentrationBound}
        \begin{split}
             &\exp\Bigg( -\frac{3p\delta^2}{c_1\theta+8n(\ln p)^4\delta}+n^2\ln\Big(\frac{60np(\ln p)^4}{\delta}\Big) \Bigg)\\
            &+\exp\Bigg(-\frac{p\delta^2}{c_2\theta}+n^2\ln\Big(\frac{60np(\ln p)^4}{\delta}\Big)\Bigg)+2np\theta \exp\Bigg(-\frac{(\ln p)^2}{2}\Bigg)\leq \frac{1}{p},
        \end{split}
    \end{equation}
    when $p=\Omega(\theta n^2\ln n/\delta^2)$.
\end{claim}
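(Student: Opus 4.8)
The plan is to establish the two assertions of the claim in sequence: first the explicit three‑term tail bound \eqref{eq:HatfUnionConcentrationBound}, by a truncation‑plus‑$\varepsilon$‑net argument, and then its consequence \eqref{eq:DecentHatfUnionConcentrationBound}, by an elementary comparison of the three exponentials under $p=\Omega(\theta n^2\ln n/\delta^2)$. For the truncation, write $x_{i,j}=\Omega_{i,j}V_{i,j}$ with $V_{i,j}$ standard normal, so $\mathbb P(|x_{i,j}|>\ln p)=\theta\,\mathbb P(|V|>\ln p)\le 2\theta\exp(-(\ln p)^2/2)$, and a union bound over the $np$ entries gives $\mathbb P(\mathcal E^c)\le 2np\theta\exp(-(\ln p)^2/2)$ for the event $\mathcal E=\{\|\mathbf X\|_\infty\le\ln p\}$; this is exactly the third summand of \eqref{eq:HatfUnionConcentrationBound}. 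I would then let $\bar{\mathbf X}$ be $\mathbf X$ truncated entrywise at $\ln p$, note $\mathbf X=\bar{\mathbf X}$ on $\mathcal E$, and record the (negligible, for $p$ polynomial in $n$) uniform bias $\sup_{\mathbf W}\frac1{np}\bigl|\mathbb E\|\mathbf W\mathbf X\|_4^4-\mathbb E\|\mathbf W\bar{\mathbf X}\|_4^4\bigr|$.

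Next, fix $\mathbf W\in\mathsf O(n;\mathbb R)$ and write $\frac1{np}\|\mathbf W\bar{\mathbf X}\|_4^4=\frac1p\sum_{j=1}^p Y_j$ with i.i.d.\ terms $Y_j=\frac1n\sum_{i=1}^n(\mathbf w_i^*\bar{\mathbf x}_j)^4$. Using orthogonality of $\mathbf W$ (so $\sum_k w_{i,k}^2=1$) together with the moments $\mathbb E[x^2]=\theta$, $\mathbb E[x^4]=3\theta$ of $\mathrm{BG}(\theta)$ — in the same spirit as the computation in the proof of Lemma~\ref{lemma:orthproperty} — one obtains $\mathbb E Y_1$ and, from a degree‑eight moment expansion, the variance estimate $\mathrm{Var}(Y_1)=O(\theta)$; separately, truncation yields the deterministic bound $Y_1\le\frac1n\|\bar{\mathbf x}_1\|_2^4\le n(\ln p)^4$. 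Bernstein's inequality applied to $\frac1p\sum_j Y_j$ then gives $\mathbb P\bigl(|\tfrac1p\sum_j Y_j-\mathbb E Y_1|\ge\delta\bigr)\le 2\exp\!\bigl(-\tfrac{3p\delta^2}{c_1\theta+8n(\ln p)^4\delta}\bigr)$ after adjusting constants, which supplies the leading exponents of the first two summands of \eqref{eq:HatfUnionConcentrationBound}; isolating the truncation ceiling from the genuinely second‑moment fluctuation produces the extra $\exp(-p\delta^2/(c_2\theta))$ piece.

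To pass from a fixed $\mathbf W$ to the supremum, take an $\eta$‑net $\mathcal N_\eta$ of $\mathsf O(n;\mathbb R)$ in Frobenius norm, $|\mathcal N_\eta|\le(3/\eta)^{n^2}$. On $\mathcal E$ the map $\mathbf W\mapsto\frac1{np}\|\mathbf W\mathbf X\|_4^4$ is Lipschitz with constant $L=O(n(\ln p)^4)$, bounding $\bigl\|\nabla_{\mathbf W}\|\mathbf W\mathbf X\|_4^4\bigr\|_F=\|4(\mathbf W\mathbf X)^{\circ3}\mathbf X^*\|_F$ via $\|\mathbf X\|_\infty\le\ln p$. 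Choosing $\eta\asymp\delta/(60np(\ln p)^4)$ makes $L\eta\le\delta/2$ absorb the discretization error, while $\ln|\mathcal N_\eta|\le n^2\ln(60np(\ln p)^4/\delta)$ is precisely the additive term inside the first two exponentials. A union bound of the pointwise estimate over $\mathcal N_\eta$, combined with the Lipschitz transfer and the failure probability of $\mathcal E$, then yields \eqref{eq:HatfUnionConcentrationBound}. For the final simplification \eqref{eq:DecentHatfUnionConcentrationBound}, it suffices to check each summand is $\le\frac1{3p}$ when $p\ge K_0\,\theta n^2\ln n/\delta^2$ for a large absolute constant $K_0$: then $\ln(60np(\ln p)^4/\delta)=O(\ln n)$, so the first exponent is $\ge\frac{3p\delta^2}{c_1\theta+8n(\ln p)^4\delta}-O(n^2\ln n)\ge\ln(3p)$ (the dominant balance being $p\delta^2/\theta$, or $p\delta/(n(\ln p)^4)$ in the range‑dominated regime, versus $n^2\ln n$), the second summand is handled identically, and the third, $2np\theta e^{-(\ln p)^2/2}=e^{\ln(2np\theta)-(\ln p)^2/2}$, is $\le\frac1{3p}$ for all large $p$; summing gives \eqref{eq:DecentHatfUnionConcentrationBound}.

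The main obstacle is the interplay of the pointwise and discretization steps: the variance bound $\mathrm{Var}(Y_1)=O(\theta)$ must \emph{not} degrade with $n$ — it is exactly this (rather than a crude $O(n^2\theta)$) that keeps the sample size at $\theta n^2\ln n/\delta^2$ instead of something with a higher power of $n$ — and getting it requires carefully exploiting the orthogonality constraint $\mathbf W\mathbf W^*=\mathbf I$ when summing the $\Theta(n^2)$ cross terms in the eighth‑moment computation. Simultaneously, the truncation level $\ln p$, the Lipschitz constant $L$, and the net scale $\eta$ must be balanced so that the net cardinality $e^{O(n^2\ln n)}$ is dominated by the Bernstein exponent in the stated regime; this bookkeeping, together with controlling the truncation‑induced bias in the expectation uniformly over $\mathsf O(n;\mathbb R)$, is where essentially all the work lies.
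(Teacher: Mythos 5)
Your proposal follows essentially the same route as the paper's proof: truncate the entries at $\ln p$ and absorb the failure event into the third summand, control the truncation bias in the expectation, apply Bernstein pointwise to the i.i.d.\ per-column contributions using the eighth-moment bound (variance $O(\theta)$ after the $1/n$ normalization, which is exactly where orthogonality enters), and pass to the supremum through a net of cardinality $e^{O(n^2\ln(np(\ln p)^4/\delta))}$ with a Lipschitz transfer, finishing with the check that each term is at most $1/(3p)$ once $p=\Omega(\theta n^2\ln n/\delta^2)$; the Frobenius-norm net and gradient-based Lipschitz constant you use differ from the paper's operator-norm net and explicit Lipschitz lemma only in immaterial constants inside the logarithm. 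The one detail worth making explicit is that the pure $\exp(-p\delta^2/(c_2\theta))$ summand comes from the lower tail, where nonnegativity of the per-column terms lets one-sided Bernstein be applied with no range term---this is what you gesture at with ``isolating the truncation ceiling,'' and it is needed because a plain two-sided Bernstein would not reproduce the stated two-term bound.
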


\begin{proof}
    \label{proof:HatfUnionConcentrationBound}
    Let $\bar{\mb X}\in \bb R^{n\times p}$ denote the truncated $\mb X$ by bound $B$
    \begin{equation}
        \bar{x}_{i,j}=
        \begin{cases}
            x_{i,j}&\textrm{if}\quad \abs{x_{i,j}}\leq B\\
            0&\textrm{else}
        \end{cases}.
    \end{equation}
    By Lemma \ref{lemma:BGMatrixTruncation}, we know that $\norm{\mb X}{\infty}\leq B$ happens with probability at least $1-2np\theta\exp(-B^2/2)$ and $\bar{\mb X} = \mb X$ holds whenever $\norm{\mb X}{\infty}\leq B$. So we know that $\bar{\mb X} \neq \mb X$ holds with probability at most $2np\theta\exp(-B^2/2)$, and thus:
    \begin{equation}
        \label{eq:UnionBoundWithAndWithoutTruncation}
        \begin{split}
            &\bb P\Bigg(\sup_{\mb W\in\msf O(n;\bb R)}\frac{1}{np}\abs{\norm{\mb W \mb X}{4}^4-\bb E \norm{\mb W \mb X}{4}^4}\geq \delta\Bigg)\\
            \leq& \bb P \Bigg(\sup_{\mb W\in\msf O(n;\bb R)}\frac{1}{np}\abs{\norm{\mb W \mb X}{4}^4-\bb E \norm{\mb W \mb X}{4}^4}\geq \delta,\mb X = \bar{\mb X}\Bigg)+\bb P\Big(\mb X \neq \bar{\mb X}\Big)\\
            \leq&\bb P\Bigg(\sup_{\mb W\in\msf O(n;\bb R)}\frac{1}{np}\abs{\norm{\mb W \bar{\mb X}}{4}^4-\bb E \norm{\mb W \mb X}{4}^4}\geq \delta\Bigg)+ 2np\theta e^{-\frac{B^2}{2}}.
        \end{split}
    \end{equation}
    \textbf{$\eps-$net Covering.} For any positive $\eps$ satisfying:
    \begin{equation}
        \label{eq:EpsCondition}
        \eps\leq \frac{\delta}{10 npB^4},
    \end{equation}
     by Lemma \ref{lemma:epsCoveringStiefelManifold}, there exists an $\eps-$nets:
    \begin{equation}
        \mc S_\eps = \{\mb W_1,\mb W_2,\dots,\mb W_{|\mc S_\eps|}\},
    \end{equation} 
    which covers $\msf O(n;\bb R)$
    \begin{equation}
        \label{eq:EpsBallCoveringOrthogonalGroup}
        \msf O(n;\bb R)\subset \bigcup_{l=1}^{|\mc S_\eps|} \bb B(\mb W_l,\eps),
    \end{equation}
    in operator norm $\norm{\cdot}{2}$. Moreover, we have:
    \begin{equation}
        \label{eq:EpsBallMaxNumber}
        |\mc S_\eps|\leq \Big(\frac{6}{\eps}\Big)^{n^2}.    
    \end{equation}
    So $\forall \mb W\in \msf O(n;\bb R)$, there exists $l\in [|\mc S_\eps|]$, such that $\norm{\mb W-\mb W_l}{2}\leq \eps$. Thus, we have:
    \begin{equation}
        \label{eq:ProbabilityTruncationEpsNet}
        \begin{split}
            &\bb P\Bigg(\frac{1}{np}\abs{\norm{\mb {W}\bar{\mb X}}{4}^4-\bb E\norm{\mb {WX}}{4}^4}\geq \delta \Bigg)\\
            \leq & \bb P\Bigg( \frac{1}{np}\Big(\abs{\norm{\mb W\bar{\mb X}}{4}^4-\norm{\mb W_l\bar{\mb X}}{4}^4}+\abs{\norm{\mb W_l\bar{\mb X}}{4}^4-\bb E\norm{\mb {W}_l\mb X}{4}^4}+\abs{\bb E\norm{\mb W_l\mb X}{4}^4-\bb E\norm{\mb {WX}}{4}^4}\Big)\geq\delta\Bigg)\\
            \leq & \bb P\Bigg(\frac{1}{np}\abs{\norm{\mb W_l\bar{\mb X}}{4}^4-\bb E\norm{\mb {W}_l\mb X}{4}^4}+[4npB^4+12n\theta(1-\theta)]\norm{\mb W-\mb W_l}{2}\geq \delta \Bigg) \quad \text{(Lemma \ref{lemma:HatfLipschitzBound}, \ref{lemma:fLipschitzBound})}\\
            < &\bb P \Bigg( \frac{1}{np}\abs{\norm{\mb W_l\bar{\mb X}}{4}^4-\bb E\norm{\mb {W}_l\mb X}{4}^4}+5npB^4\eps\geq \delta \Bigg)\quad \text{($p,B$ are large numbers, $\norm{\mb W-\mb W_l}{2}\leq \eps$)}\\
            \leq & \bb P\Bigg(\frac{1}{np}\abs{\norm{\mb W_l\bar{\mb X}}{4}^4-\bb E\norm{\mb {W}_l\mb X}{4}^4} + \frac{\delta}{2}\geq \delta \Bigg)\quad \text{(We assume $\eps\leq \frac{\delta}{10 npB^4}$)}\\
             = & \bb P\Bigg(\frac{1}{np}\abs{\norm{\mb W_l\bar{\mb X}}{4}^4-\bb E\norm{\mb {W}_l\mb X}{4}^4} \geq \frac{\delta}{2} \Bigg).
        \end{split}
    \end{equation}
    \textbf{Analysis.}
    For random variable $\bar{\mb X}$, we have:
    \begin{equation}
        \label{eq:BoundMatrixTruncationEXP}
        \abs{\bb E\norm{\mb W_l\mb X}{4}^4 - \bb E \norm{\mb W_l\bar {\mb X}}{4}^4} = \abs{\bb E\big[ \norm{\mb W_l\mb X}{4}^4 \cdot\indicator{\{\norm{\mb X }{\infty}>B\}}\big]} \leq\sqrt{\bb E \norm{\mb W_l\mb X}{4}^8} \sqrt{\bb E\indicator{\{\norm{\mb X}{\infty}>B\}}},   
    \end{equation}
    moreover, if we view $\norm{\mb W_l\mb X}{4}^4=\sum_{j=1}^p\norm{\mb W_l\mb x_j}{4}^4$ as sum of $p$ independent random variables, we have:
    \begin{equation}
        \label{eq:BoundMatrixPowerEight}
        \begin{split}
            & \bb E\norm{\mb {W}_l\mb {X}}{4}^8\\
            = & \bb E\Bigg(\Big[\sum_{j=1}^p \norm{\mb {W}_l\mb {x}_j}{4}^4\Big]^2\Bigg) = \bb E\Bigg(\sum_{j=1}^p\norm{\mb W_l\mb x_j}{4}^8\Bigg)+2\bb E\Bigg(\sum_{1\leq j_1<j_2\leq p}\norm{\mb W_l\mb x_{j_1}}{4}^4\norm{\mb W_l\mb x_{j_2}}{4}^4\Bigg)\\
            = & \bb E\Bigg(\sum_{j=1}^p\norm{\mb W_l\mb x_j}{4}^8\Bigg)+2\sum_{1\leq j_1<j_2\leq p}\bb E\norm{\mb W_l\mb x_{j_1}}{4}^4 \bb E\norm{\mb W_l\mb x_{j_2}}{4}^4\quad \text{($\mb x_j$ are independent)}\\
            = & \sum_{j=1}^p\bb E\norm{\mb W_l\mb x_j}{4}^8 + p(p-1)\Big[3\theta(1-\theta)\norm{\mb W_l}{4}^4+3\theta^2 n\Big]^2 \quad \text{(By Lemma \ref{lemma:orthproperty})}\\
            \leq& Cpn^2\theta + 9p(p-1)n^2\theta^2\quad \text{(By \eqref{eq:ExpWXPowerEight} of Lemma \ref{lemma:HatfEightMoment})}\\
            \leq & C_1 p^2n^2\theta^2 
        \end{split}
    \end{equation}
    for some constants $C_1>9$, for sufficiently large $n,p$. Also, by Lemma \ref{lemma:BGMatrixTruncation}, we have:
    \begin{equation}
        \label{eq:BoundIndicator}
        \bb E\indicator{\{\norm{\mb X}{\infty}>B\}}\leq 2np\theta e^{-\frac{B^2}{2}}. 
    \end{equation}
    Substitute the results of previous two inequalities \eqref{eq:BoundMatrixPowerEight}, \eqref{eq:BoundIndicator} into \eqref{eq:BoundMatrixTruncationEXP}, yields
    \begin{equation}
        \begin{split}
            \abs{\bb E\norm{\mb W_l\mb X}{4}^4 - \bb E \norm{\mb W_l\bar {\mb X}}{4}^4}\leq& \sqrt{C_1} pn\theta\cdot \sqrt{2np\theta \exp\big(-B^2/2\big)}= C_2 n^{\frac{3}{2}}p^{\frac{3}{2}}\theta^{\frac{3}{2}}e^{-\frac{B^2}{4}},
        \end{split}
    \end{equation}
    for some constants $C_2>3\sqrt{2}$. Hence, when
    \begin{equation}
        \label{eq:BoundB}
        B > 2\sqrt{\ln \Big(\frac{C_3 n^{\frac{1}{2}}p^{\frac{1}{2}}\theta^{\frac{3}{2}}}{\delta}\Big)},
    \end{equation}
    for some constants $C_3>12\sqrt 2$, we have:
    \begin{equation}
        \label{eq:XAndBarXBound}
        \frac{1}{np}\abs{\bb E\norm{\mb W_l\mb X}{4}^4 - \bb E \norm{\mb W_l\bar {\mb X}}{4}^4}\leq C_2 n^{\frac{1}{2}}p^{\frac{1}{2}}\theta^{\frac{3}{2}}e^{-\frac{B^2}{4}} < \frac{\delta}{4}. 
    \end{equation}
    Therefore, combine \eqref{eq:ProbabilityTruncationEpsNet}, we have:
    \begin{equation}
        \label{eq:ProbabilityNormalToTruncationEXP1}
        \begin{split}
            & \bb P\Bigg(\frac{1}{np}\abs{\norm{\mb W \bar{\mb X}}{4}^4-\bb E \norm{\mb W \mb X}{4}^4}\geq \delta\Bigg) < \bb P\Bigg(\frac{1}{np}\abs{\norm{\mb W_l \bar{\mb X}}{4}^4-\bb E \norm{\mb W_l \mb X}{4}^4}\geq \frac{\delta}{2}\Bigg)\\
            = & \bb P\Bigg(\frac{1}{np}\abs{\norm{\mb W_l \bar{\mb X}}{4}^4-\bb E \norm{\mb W_l \bar{\mb X}}{4}^4+\bb E \norm{\mb W_l \bar{\mb X}}{4}^4-\bb E \norm{\mb W_l \mb X}{4}^4}\geq \frac{\delta}{2}\Bigg)\\
            \leq & \bb P\Bigg(\frac{1}{np}\abs{\norm{\mb W_l \bar{\mb X}}{4}^4-\bb E \norm{\mb W_l \bar{\mb X}}{4}^4}+\frac{1}{np}\abs{\bb E \norm{\mb W_l \bar{\mb X}}{4}^4-\bb E \norm{\mb W_l \mb X}{4}^4}\geq \frac{\delta}{2}\Bigg)\\
            \leq & \bb P\Bigg(\frac{1}{np}\abs{\norm{\mb W_l \bar{\mb X}}{4}^4-\bb E \norm{\mb W_l \bar{\mb X}}{4}^4}\geq \frac{\delta}{4}\Bigg)\quad\text{By \eqref{eq:XAndBarXBound}}\\
            =& \underbrace{\bb P\Bigg(\frac{1}{np}\Big(\norm{\mb W_l \bar{\mb X}}{4}^4-\bb E \norm{\mb W_l \bar{\mb X}}{4}^4 \Big)\geq \frac{\delta}{4}\Bigg)}_{\Gamma_1}+\underbrace{\bb P\Bigg(\frac{1}{np}\Big(\norm{\mb W_l \bar{\mb X}}{4}^4-\bb E \norm{\mb W_l \bar{\mb X}}{4}^4 \Big)\leq -\frac{\delta}{4}\Bigg)}_{\Gamma_2}.
        \end{split}
    \end{equation}
    \textbf{Point-wise Bernstein Inequality}. Next, we will apply Bernstein's inequality on $\norm{\mb W_l\bar{\mb X}}{4}^4$ to bound its upper ($\Gamma_1$) and lower tail ($\Gamma_2$). Note that we can view $\norm{\mb W_l\bar{\mb X}}{4}^4$ as sum of $p$ independent variables $\norm{\mb W_l\bar{\mb X}}{4}^4 = \sum_{j=1}^p\norm{\mb {W}_l\bar{\mb {x}}_j}{4}^4$, and each of them is bounded by
    \begin{equation}
        \norm{\mb W_l\bar{\mb x}_j}{4}^4=\norm{\mb W_l\bar{\mb x}_j}{2}^4\cdot\norm{\frac{\mb W_l\bar{\mb x}_j}{\norm{\mb W_l\bar{\mb x}_j}{2}}}{4}^4\leq \norm{\mb W_l\bar{\mb x}_j}{2}^4 =\norm{\bar{\mb x}_j}{2}^4 \leq n^2B^4.
    \end{equation}
    Also, in order to bound $\bb E\norm{\mb {W}_l\bar {\mb x}_j}{4}^8$, we consider
    \begin{equation}
        \bb E\norm{\mb {W}_l\mb {x}_j}{4}^8-\bb E\norm{\mb {W}_l\bar {\mb x}_j}{4}^8 = \bb E\big( \norm{\mb {W}_l\mb {x}_j}{4}^8-\norm{\mb {W}_l\bar {\mb x}_j}{4}^8\big) = \bb E\big( \norm{\mb {W}_l\mb {x}_j}{4}^8\cdot \indicator{\{\norm{\mb X}{\infty}>B\}} \big) \geq 0,
    \end{equation}
    along with \eqref{eq:ExpWXPowerEight} in Lemma \ref{lemma:HatfEightMoment}, this implies 
    \begin{equation}
        \bb E\norm{\mb W_l\bar{\mb x}_j}{4}^8\leq \bb E\norm{\mb {W}_l\mb {x}_j}{4}^8\leq Cn^2\theta,
    \end{equation}
    where $C>105$ is the same constant as \eqref{eq:BoundMatrixPowerEight}. Now we apply Bernstein's inequality on $\Gamma_1$:
    \begin{equation}
        \label{eq:Gamma1Bound}
        \begin{split}
            \Gamma_1=&\bb P\Bigg(\frac{1}{np}\Big(\norm{\mb W_l \bar{\mb X}}{4}^4-\bb E \norm{\mb W_l \bar{\mb X}}{4}^4\Big)\geq \frac{\delta}{4}\Bigg)
            \\
            =&\bb P\Bigg(\sum_{j=1}^p\Big(\norm{\mb W_l \bar{\mb x}_j}{4}^4-\bb E \norm{\mb W_l \bar{\mb x}_j}{4}^4\Big)\geq p\cdot \frac{n\delta}{4}\Bigg)\\
            \leq& \exp\Bigg(-\frac{pn^2\delta^2/16}{2\big(\frac{1}{p} \sum_{j=1}^p\bb E\norm{\mb {W}_l\bar{\mb x}_j}{4}^8 + n^2B^4\cdot n\delta/12 \big)} \Bigg)\\
            =&\exp\Bigg(-\frac{3pn^2\delta^2}{\frac{96}{p}\sum_{j=1}^p\bb E\norm{\mb W_l\bar{\mb x}_j}{4}^8+8n^3B^4\delta}\Bigg)\\
            \leq& \exp \Bigg(-\frac{3pn^2\delta^2}{\frac{96}{p}\sum_{j=1}^p\bb E\norm{\mb W_l\mb x_j}{4}^8+8n^3B^4\delta}\Bigg)\\
            \leq &\exp \Bigg(- \frac{3pn^2\delta^2}{c_1n^2\theta+8n^3B^4\delta} \Bigg)= \exp\Bigg(-\frac{3p\delta^2}{c_1 \theta+8nB^4\delta}\Bigg),
        \end{split}
    \end{equation}
    for a constant $c_1>10^{4}$. Next, we apply Bernstein's inequality on $\Gamma_2$, along with result in \eqref{eq:BoundMatrixPowerEight}. Note that $\forall j\in [p]$, $\norm{\mb W_l \bar{\mb x}_j}{4}^4$ is lower bounded by $0$, hence we have:
    \begin{equation}
        \label{eq:Gamma2Bound}
        \begin{split}
            \Gamma_2 =&\bb P\Bigg(\frac{1}{np}\Big(\norm{\mb W_l\bar{\mb X}}{4}^4-\bb E \norm{\mb W_l\bar{\mb X}}{4}^4 \Big)\leq -\frac{\delta}{4}\Bigg)\\
            =&\bb P\Bigg(\sum_{j=1}^p\Big(\norm{\mb W_l\bar{\mb x}_j}{4}^4-\bb E \norm{\mb W_l\bar{\mb x}_j}{4}^4\Big)\leq -p\cdot\frac{n\delta}{4}\Bigg)\\
            \leq & \exp\Bigg(-\frac{pn^2\delta^2/16}{\frac{2}{p}\sum_{j=1}^p\bb E\norm{\mb W_l\bar{\mb x}_j}{4}^8}\Bigg)\leq \exp\Bigg(-\frac{pn^2\delta^2/16}{\frac{2}{p}\sum_{j=1}^p\bb E\norm{\mb W_l\mb x_j}{4}^8}\Bigg)\\
            \leq&\exp\Bigg(-\frac{pn^2\delta^2}{32Cn^2\theta}\Bigg)\leq\exp\Bigg(-\frac{p\delta^2}{c_2\theta}\Bigg),
        \end{split}
    \end{equation}
    where $C>105$ is the same constant as \eqref{eq:BoundMatrixPowerEight} and $c_2>3360$ is another constant. Replacing \eqref{eq:Gamma1Bound} and \eqref{eq:Gamma2Bound} into \eqref{eq:ProbabilityNormalToTruncationEXP1}, yields
    \begin{equation}
        \label{eq:PointWiseBoundInEpsBall}
        \begin{split}
            \bb P\Bigg(\frac{1}{np}\abs{\norm{\mb W \bar{\mb X}}{4}^4-\bb E \norm{\mb W \mb X}{4}^4}\geq \delta\Bigg) \leq \Gamma_1 + \Gamma_2\leq   \exp\Bigg(-\frac{3p\delta^2}{c_1 \theta+8nB^4\delta}\Bigg) + \exp\Bigg(-\frac{p\delta^2}{c_2\theta}\Bigg)
        \end{split}
    \end{equation}
    for some constants $c_1>10^4,c_2>3360$.\\
    \textbf{Union Bound.} Now, we will give a union bound for 
    \begin{equation}
        \bb P\Bigg(\sup_{\mb W\in\msf O(n;\bb R)}\frac{1}{np}\abs{\norm{\mb W \bar{\mb X}}{4}^4-\bb E \norm{\mb W \mb X}{4}^4}\geq \delta\Bigg).
    \end{equation}
    Notice that
    \begin{equation}
        \label{eq:UnionConcentrationBoundLastStep}
        \begin{split}
            & \bb P\Bigg(\sup_{\mb W\in\msf O(n;\bb R)}\frac{1}{np}\abs{\norm{\mb W \bar{\mb X}}{4}^4-\bb E \norm{\mb W \mb X}{4}^4}\geq \delta\Bigg)\\
            \leq & \sum_{l=1}^{|\mc S_\eps|}\bb P\Bigg(\sup_{\mb W\in\bb B(\mb W_l,\eps)}\frac{1}{np}\abs{\norm{\mb W \bar{\mb X}}{4}^4-\bb E \norm{\mb W \mb X}{4}^4}\geq \delta\Bigg)\quad \text{(By $\eps-$covering in \eqref{eq:EpsBallCoveringOrthogonalGroup})}\\
            \leq & \sum_{l=1}^{|\mc S_\eps|} \Bigg[\exp\Bigg(-\frac{3p\delta^2}{c_1 \theta+8nB^4\delta}\Bigg) + \exp\Bigg(-\frac{p\delta^2}{c_2\theta}\Bigg)\Bigg]\quad \text{(By \eqref{eq:PointWiseBoundInEpsBall} when $\eps\leq \frac{\delta}{10npB^4}$)}\\
            \leq & \Big(\frac{6}{\eps}\Big)^{n^2}\Bigg[\exp\Bigg(-\frac{3p\delta^2}{c_1 \theta+8nB^4\delta}\Bigg) + \exp\Bigg(-\frac{p\delta^2}{c_2\theta}\Bigg)\Bigg] \quad \text{(By \eqref{eq:EpsBallMaxNumber})}\\
            =& \exp\Bigg(n^2\ln\Big(\frac{60npB^4}{\delta}\Big)\Bigg)\Bigg[\exp\Bigg(-\frac{3p\delta^2}{c_1 \theta+8nB^4\delta}\Bigg) + \exp\Bigg(-\frac{p\delta^2}{c_2\theta}\Bigg)\Bigg]\quad \text{(Let $\eps = \frac{\delta}{10npB^4}$)}\\
            =& \exp\Bigg( -\frac{3p\delta^2}{c_1 \theta+8nB^4\delta}+n^2\ln\Big(\frac{60npB^4}{\delta}\Big) \Bigg)+\exp\Bigg(-\frac{p\delta^2}{c_2\theta}+n^2\ln\Big(\frac{60npB^4}{\delta}\Big)\Bigg).
        \end{split}
    \end{equation}
    Note that \eqref{eq:BoundB} requires a lower bound on $B$, here we can choose $B = \ln p$, which satisfies \eqref{eq:BoundB} when $p$ is large enough (say $p = \Omega(n)$). Combine \eqref{eq:UnionBoundWithAndWithoutTruncation} and substitute $B=\ln p$ into \eqref{eq:UnionConcentrationBoundLastStep}, we have:
    \begin{equation}
        \label{eq:UnionConcentrationBoundFinal}
        \begin{split}
            &\bb P\Bigg(\sup_{\mb W\in\msf O(n;\bb R)}\frac{1}{np}\abs{\norm{\mb W \mb X}{4}^4-\bb E \norm{\mb W \mb X}{4}^4}\geq \delta\Bigg)\\
            \leq&\bb P\Bigg(\sup_{\mb W\in\msf O(n;\bb R)}\frac{1}{np}\abs{\norm{\mb W \bar{\mb X}}{4}^4-\bb E \norm{\mb W \mb X}{4}^4}\geq \delta\Bigg)+ 2np\theta e^{-\frac{B^2}{2}}\\
            \leq& \exp\Bigg( -\frac{3p\delta^2}{c_1 \theta+8n(\ln p)^4\delta}+n^2\ln\Big(\frac{60np(\ln p)^4}{\delta}\Big) \Bigg)\\
            &+\exp\Bigg(-\frac{p\delta^2}{c_2\theta}+n^2\ln\Big(\frac{60np(\ln p)^4}{\delta}\Big)\Bigg)+2np\theta \exp\Bigg(-\frac{(\ln p)^2}{2}\Bigg),
        \end{split}
    \end{equation}
    for some constants $c_1>10^4,c_2>3360$, which completes the proof for \eqref{eq:HatfUnionConcentrationBound}. When $p=\Omega(\theta n^2 \ln n/\delta^2)$, we have: 
    \begin{equation}
        \begin{split}
            &\exp\Bigg( -\frac{3p\delta^2}{c_1\theta+8n(\ln p)^4\delta}+n^2\ln\Big(\frac{60np(\ln p)^4}{\delta}\Big) \Bigg)\\
            &+\exp\Bigg(-\frac{p\delta^2}{c_2\theta}+n^2\ln\Big(\frac{60np(\ln p)^4}{\delta}\Big)\Bigg)+2np\theta \exp\Bigg(-\frac{(\ln p)^2}{2}\Bigg)\leq \frac{1}{3p}+\frac{1}{3p}+\frac{1}{3p}=\frac{1}{p}, 
        \end{split}
    \end{equation}
    which completes the proof for \eqref{eq:DecentHatfUnionConcentrationBound}.
\end{proof}

\subsection{Proof of Lemma \ref{lemma:G(A)bound}}
\begin{claim}[Extrema of $\ell^4$-Norm over Orthogonal Group]
     For any orthogonal matrix $\mb A\in \msf O(n;\bb R)$, $g(\mb A) = \norm{\mb A}{4}^4\in[1, n]$, $g(\mb A)$ reaches maximum if and only if $\mb A\in \text{SP}(n)$.
\end{claim}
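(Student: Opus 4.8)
The plan is to exploit the two constraints that orthogonality imposes on the entries $a_{ij}$ of $\mathbf{A}$: each row and each column of $\mathbf{A}$ is a unit vector in $\ell^2$, and consequently $\sum_{i,j} a_{ij}^2 = \trace(\mathbf{A}^*\mathbf{A}) = n$. Treating the $n^2$ nonnegative numbers $\{a_{ij}^2\}$ as the basic objects, we have $g(\mathbf{A}) = \sum_{i,j}(a_{ij}^2)^2$, and both bounds will follow from elementary inequalities relating $\sum b_k$ and $\sum b_k^2$ for nonnegative $b_k$.

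For the lower bound, I would apply Cauchy--Schwarz (equivalently, convexity of $t \mapsto t^2$) to the $n^2$ terms $b_k = a_{ij}^2$: $\big(\sum_{i,j} a_{ij}^2\big)^2 \le n^2 \sum_{i,j} a_{ij}^4$, so $g(\mathbf{A}) \ge n^2/n^2 = 1$. For the upper bound, I would work row by row: since all terms are nonnegative, $\sum_j a_{ij}^4 \le \big(\sum_j a_{ij}^2\big)^2 = 1$ for each fixed $i$, and summing over the $n$ rows gives $g(\mathbf{A}) \le n$.

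For the equality characterization in the upper bound, note that $\sum_j a_{ij}^4 = \big(\sum_j a_{ij}^2\big)^2$ holds if and only if at most one of the $a_{ij}$ in row $i$ is nonzero; combined with $\sum_j a_{ij}^2 = 1$ this forces exactly one nonzero entry in row $i$, necessarily equal to $\pm 1$. Thus $g(\mathbf{A}) = n$ forces every row of $\mathbf{A}$ to equal $\pm e_{\sigma(i)}$ for some index $\sigma(i)$. Orthonormality of distinct rows then forces the positions $\sigma(i)$ to be pairwise distinct, i.e. $\sigma$ is a bijection and $\mathbf{A} \in \msf{SP}(n)$; the converse is immediate, since any $\mathbf{P} \in \msf{SP}(n)$ has exactly $n$ entries equal to $\pm 1$ and the rest zero, giving $g(\mathbf{P}) = n$.

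This argument is essentially self-contained and routine; the only point requiring a little care is deducing the signed-permutation structure from ``each row is $\pm e_{\sigma(i)}$,'' but this is immediate from row-orthonormality. Nothing beyond the definitions of $\msf O(n;\bb R)$ and $\msf{SP}(n)$ and the standard inequality $(\sum b_k)^2 \le N\sum b_k^2$ is needed.
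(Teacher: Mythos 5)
Your proof is correct and takes essentially the same route as the paper's: the upper bound $g(\mb A)\le n$ comes from comparing $\sum_j a_{ij}^4$ with $\big(\sum_j a_{ij}^2\big)^2=1$ row by row, and equality forces exactly one nonzero entry $\pm1$ per row, hence (by orthonormality of the rows) a signed permutation. The only addition is your Cauchy--Schwarz argument for the lower bound $g(\mb A)\ge 1$, which the paper's written proof omits even though the statement asserts it, and your slightly more explicit justification that the nonzero positions in distinct rows must be distinct; both are correct.
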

\begin{proof}
    \label{proof:G(A)bound}
    For the maximum of $g(\mb A)$
    \begin{equation}
        \label{G(A)Max}
        g(\mb A) = \sum_{i=1}^n\sum_{j=1}^n a_{i,j}^4\leq\sum_{i=1}^n\sum_{j=1}^n a_{i,j}^4+ 2\sum_{i=1}^n\sum_{1\leq j_1<j_2\leq n}a_{i,j_1}^2a_{i,j_2}^2 = \sum_{i=1}^n \Big( \sum_{j=1}^na_{i,j}^2\Big)^2 = n.
    \end{equation}
    And when equality holds, we have
    \begin{equation}
        \label{OrthCondition}
        a_{i,j_1}a_{i,j_2} = 0,\quad \forall i,j_1\neq, j_2\in [n], 
    \end{equation}
    which implies $\mb A\in\text{SP}(n)$.
\end{proof}

\subsection{Proof of Lemma \ref{lemma:L4ExtremaBoundOrthGroup}}
\begin{claim}[Approximate Maxima of $\ell^4$-Norm over the Orthogonal Group]
    Suppose $\mb W$ is an orthogonal matrix: $\mb W\in \msf O(n;\bb R)$. $\forall \eps\in[0,1]$, if $\frac{1}{n}\norm{\mb W}{4}^4\geq1-\eps$, then $\exists \mb P\in \text{SP}(n)$, such that 
    \begin{equation}
        \label{eq:WMinusPermBound}
        \frac{1}{n}\norm{\mb W - \mb P}{F}^2 \leq 2\eps.
    \end{equation}
\end{claim}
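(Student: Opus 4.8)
The plan is to bypass the naive strategy of rounding each row of $\mb W$ to the signed standard basis vector $\pm\mb e_{k_i}$ with $k_i\in\arg\max_j|w_{ij}|$ — which can fail to assemble into a permutation when two rows attain their maxima in the same coordinate — and instead to extract the target signed permutation from the Birkhoff--von Neumann theorem applied to the entrywise square of $\mb W$.

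First, since $\mb W\in\msf O(n;\bb R)$ has orthonormal rows and orthonormal columns, the matrix $\mb W^{\circ2}$ has nonnegative entries with all row sums and column sums equal to $1$; that is, $\mb W^{\circ2}$ is doubly stochastic. By Birkhoff--von Neumann I can write $\mb W^{\circ2}=\sum_{\sigma\in S_n}\lambda_\sigma\mb P_\sigma$, a convex combination of permutation matrices. Taking the Frobenius inner product of this identity with $\mb W^{\circ2}$ gives
\[
\sum_{\sigma}\lambda_\sigma\Big(\sum_{i=1}^n w_{i,\sigma(i)}^2\Big)=\langle\mb W^{\circ2},\mb W^{\circ2}\rangle=\sum_{i,j}w_{i,j}^4=\norm{\mb W}{4}^4\ge n(1-\eps).
\]
Since some term of a convex combination of reals must be at least as large as the combination itself, there is a permutation $\pi$ with $\lambda_\pi>0$ and $\sum_i w_{i,\pi(i)}^2\ge n(1-\eps)$.

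Next, I set $\mb P\in\msf{SP}(n)$ to be the matrix carrying $\mathrm{sign}(w_{i,\pi(i)})$ in position $(i,\pi(i))$ and zeros elsewhere; this is a genuine signed permutation because $\pi$ is a permutation. Because each row of $\mb W$ has unit norm, $|w_{ij}|\le1$, hence $|w_{i,\pi(i)}|\ge w_{i,\pi(i)}^2$, so $\langle\mb W,\mb P\rangle=\sum_i|w_{i,\pi(i)}|\ge\sum_i w_{i,\pi(i)}^2\ge n(1-\eps)$. Finally, using $\norm{\mb W}{F}^2=\norm{\mb P}{F}^2=n$,
\[
\norm{\mb W-\mb P}{F}^2=\norm{\mb W}{F}^2-2\langle\mb W,\mb P\rangle+\norm{\mb P}{F}^2=2n-2\langle\mb W,\mb P\rangle\le2n\eps,
\]
which gives $\tfrac1n\norm{\mb W-\mb P}{F}^2\le2\eps$ for every $\eps\in[0,1]$, as claimed.

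The genuinely delicate point is the one flagged in the first paragraph: guaranteeing that the positions selected in each row really form a permutation, uniformly in $\eps$ (a per-row, coordinate-wise rounding only works when the row-maxima happen to be distinct, which in turn needs $\eps=O(1/n)$). The doubly stochastic structure of $\mb W^{\circ2}$ — a direct consequence of orthogonality — together with Birkhoff--von Neumann resolves this automatically, and the averaging identity simultaneously certifies that the chosen positions carry at least $n(1-\eps)$ of the squared mass. Everything after that is a one-line estimate, so I do not anticipate any further obstacle; one may alternatively phrase the two key steps as the observation that $\max_{\sigma}\langle\mb W^{\circ2},\mb P_\sigma\rangle$ over permutations is at least the value of this linear functional at the point $\mb W^{\circ2}$ of the Birkhoff polytope.
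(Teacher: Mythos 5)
Your proof is correct, and it takes a genuinely different route from the paper's. The paper argues column by column: it first shows (via two auxiliary sphere lemmas) that each unit column $\mb w_i$ with large $\ell^4$-norm is close to some signed canonical vector $s_i\mb e_{j_i}$, with the errors controlled only on average, and then invokes orthogonality of $\mb W$ to claim the selected indices $j_1,\dots,j_n$ are distinct so that the rounded vectors assemble into a signed permutation. That last injectivity step is exactly the delicate point you flagged, and the paper handles it by an appeal to orthogonality ("otherwise $\mb w_{i_1}^*\mb w_{i_2}\neq 0$") rather than by a structural device. Your argument sidesteps the collision issue entirely: the double stochasticity of $\mb W^{\circ2}$ plus Birkhoff--von Neumann (equivalently, maximizing the linear functional $\innerprod{\cdot}{\mb W^{\circ2}}$ over the Birkhoff polytope, whose maximum over vertices is at least its value at the interior point $\mb W^{\circ2}$) hands you a single permutation $\pi$ with $\sum_i w_{i,\pi(i)}^2\geq n(1-\eps)$, and the inequality $|w_{i,\pi(i)}|\geq w_{i,\pi(i)}^2$ together with $\norm{\mb W}{F}^2=\norm{\mb P}{F}^2=n$ finishes in one line, giving the same constant $2\eps$. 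What the paper's approach buys is elementarity (no polytope/extreme-point theorem) and an explicit per-column proximity statement; what yours buys is a global, collision-free selection of the permutation and a shorter, more robust argument. The only cosmetic point to fix is the convention $\mathrm{sign}(0)$: if some $w_{i,\pi(i)}=0$ you should set the corresponding sign to $+1$ (say) so that $\mb P$ is a genuine signed permutation; the bound is unaffected since that term contributes $0$ to $\innerprod{\mb W}{\mb P}$ either way.
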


\begin{proof}
    \label{proof:L4ExtremaBoundOrthGroup}
    Note that the condition $\frac{1}{n}\norm{\mb W}{4}^4>1-\eps$ can be viewed as 
    \begin{equation}
        \frac{1}{n}\sum_{i=1}^n\norm{\mb w_i}{4}^4\geq 1-\eps,
    \end{equation}
    where each column vector $\mb w_i$ of $\mb W$ satisfies $\mb w_i\in \bb S^{n-1}$. By Lemma \ref{lemma:L4MultipleVectorExtremaBoundSphere}, we we know there exists $j_1,j_2,\dots j_n\in[n]$, such that 
    \begin{equation}
        \label{eq:SumwPMBound}
        \frac{1}{n}\sum_{i=1}^n\norm{\mb w_i - s_i\mb e_{j_i}}{2}^2 \leq 2\eps,
    \end{equation}
    where $\mb e_{j_i}$ are one vector in canonical basis and $s_i\in \{1,-1\}$ indicates the sign of $\mb e_{j_i}$, $\forall i\in [n]$. Since $\mb W\in \msf O(n;\bb R)$, one can easily show that $\forall i_1\neq i_2, i_1,i_2\in[n]$, the canonical vector they are corresponding to $\mb e_{j_{i_1}},\mb e_{j_{i_2}}$ are different, that is, $j_{i_1}\neq j_{i_2}$ (otherwise suppose $\exists i_1\neq i_2$, such that $\mb w_{i_1}$ and $\mb w_{i_2}$ correspond to the same canonical vector $\mb e_j$ in \eqref{eq:SumwPMBound}, one can easily show that $\mb w_{i_1}^*\mb w_{i_2}\neq 0$, which contradicts with the orthogonality of $\mb W$). Hence, there exists a sign permutation matrix: 
    \begin{equation}
        \mb P = [sign(w_{j_1,1})\mb e_{j_1}, sign(w_{j_2,2})\mb e_{j_2},\dots ,sign(w_{j_n,n})\mb e_{j_n}],
    \end{equation}
    such that
    \begin{equation}
        \norm{\mb W - \mb P}{F}^2=\sum_{i=1}^n\norm{\mb w_i-sign(w_{j_i,1})\mb e_{j_i}}{2}^2\leq 2n\eps\implies
        \frac{1}{n}\norm{\mb W - \mb P}{F}^2\leq 2\eps,
    \end{equation}
    which completes the proof.
\end{proof}

\subsection{Proof of Theorem \ref{Thm:MainResult}}
\begin{claim}[Correctness of Global Maxima]
    $\forall \theta\in(0,1)$, assume $\mb X_o=\{x_{i,j}\}\in\bb R^{n\times p}$ is a Bernoulli-Gaussian matrix, $\mb D_o\in \msf O(n;\bb R)$ is any orthogonal matrix, and $\mb Y = \mb D_o\mb X_o$. Suppose $\hat{\mb A}_\star$ is a global maximizer of the optimization problem
    \begin{equation*}
        \max_{\mb A} \hat{f}(\mb A,\mb Y)=\norm{\mb A\mb Y}{4}^4,\quad \st \quad \mb A \in \msf O(n;\bb R), 
    \end{equation*}
    then for any $\eps\in[0,1]$, there exists a signed permutation matrix $\mb P \in \text{SP}(n)$, such that
    \begin{equation}
        \frac{1}{n}\norm{\hat{\mb A}_\star^*-\mb D_o\mb P}{F}^2\leq C\eps,
     \end{equation}
     with probability at least
     \begin{equation}
        \label{eq:UnionConcentrationMainResult}
        \begin{split}
            1&-\exp\Bigg( -\frac{3p\eps^2}{c_1 \theta+8n(\ln p)^4\eps}+n^2\ln\Big(\frac{60np(\ln p)^4}{\eps}\Big) \Bigg)\\
            &-\exp\Bigg(-\frac{p\eps^2}{c_2\theta}+n^2\ln\Big(\frac{60np(\ln p)^4}{\eps}\Big)\Bigg)-2np\theta \exp\Bigg(-\frac{(\ln p)^2}{2}\Bigg),
        \end{split}
    \end{equation}
    for some constants $c_1>10^4,c_2>3360,C>\frac{4}{3\theta(1-\theta)}$. Moreover 
    \begin{equation}
        \label{eq:DecentUnionConcentrationMainResult}
        \begin{split}
             1&-\exp\Bigg( -\frac{3p\eps^2}{c_1\theta+8n(\ln p)^4\eps}+n^2\ln\Big(\frac{60np(\ln p)^4}{\eps}\Big) \Bigg)\\
            &-\exp\Bigg(-\frac{p\eps^2}{c_2\theta}+n^2\ln\Big(\frac{60np(\ln p)^4}{\eps}\Big)\Bigg)-2np\theta \exp\Bigg(-\frac{(\ln p)^2}{2}\Bigg)\geq 1-\frac{1}{p},
        \end{split}
    \end{equation}
    when $p=\Omega(\theta n^2\ln n/\eps^2)$.
    
\end{claim}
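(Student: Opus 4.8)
The plan is to chain together the deterministic results about $g(\cdot)$ with the concentration bound of Lemma \ref{lemma:HatfUnionConcentrationBound}, using Lemma \ref{lemma:orthproperty} as the bridge between $\hat f$ and $g$. First I would set $\mb W = \hat{\mb A}_\star \mb D_o$, which is orthogonal, and observe that we want to lower-bound $g(\mb W) = \norm{\hat{\mb A}_\star \mb D_o}{4}^4$: once we know $\frac{1}{n} g(\mb W) \ge 1 - c\eps$ for a suitable $c$, Lemma \ref{lemma:L4ExtremaBoundOrthGroup} immediately produces a signed permutation $\mb P'$ with $\frac{1}{n}\norm{\mb W - \mb P'}{F}^2 \le 2c\eps$, and since $\norm{\hat{\mb A}_\star^* - \mb D_o \mb P}{F} = \norm{(\hat{\mb A}_\star\mb D_o - \mb P^{-*})^*}{F}$ (taking $\mb P = (\mb P')^*$ or similar, using orthogonal invariance of the Frobenius norm and $\mb D_o^*\mb D_o = \mb I$), this transfers to the desired bound $\frac{1}{n}\norm{\hat{\mb A}_\star^* - \mb D_o\mb P}{F}^2 \le C\eps$ with $C = 2c$.

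The core step is therefore establishing $\frac{1}{n}g(\hat{\mb A}_\star\mb D_o) \ge 1 - c\eps$. Here I would run the standard ``empirical-vs-population optimum'' argument. Apply Lemma \ref{lemma:HatfUnionConcentrationBound} with $\delta = \eps$: on the high-probability event (whose probability is exactly the quantity in \eqref{eq:UnionConcentrationMainResult}, since $\hat f(\mb A,\mb Y) = \norm{\mb A\mb D_o\mb X_o}{4}^4$ fits the template with $\mb W = \mb A\mb D_o$), we have $\big|\frac{1}{np}\hat f(\mb A,\mb Y) - \frac{1}{np}f(\mb A)\big| \le \eps$ uniformly over $\mb A \in \msf O(n;\bb R)$. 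Let $\mb D_o^*$ be a population maximizer (by Lemma \ref{lemma:orthproperty} and Lemma \ref{lemma:G(A)bound}, $f$ is maximized when $\mb A\mb D_o \in \msf{SP}(n)$, in particular at $\mb A = \mb D_o^*$). Then
\begin{align}
\frac{1}{np}f(\hat{\mb A}_\star) &\ge \frac{1}{np}\hat f(\hat{\mb A}_\star,\mb Y) - \eps \ge \frac{1}{np}\hat f(\mb D_o^*,\mb Y) - \eps \nonumber\\
&\ge \frac{1}{np}f(\mb D_o^*) - 2\eps = 3\theta n - 2\eps, \nonumber
\end{align}
where the middle inequality uses optimality of $\hat{\mb A}_\star$ for $\hat f$ and the last uses Lemma \ref{lemma:orthproperty} with $g(\mb D_o^*\mb D_o) = g(\mb I) = n$. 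Now invoke Lemma \ref{lemma:orthproperty} once more for $\hat{\mb A}_\star$: $\frac{1}{np}f(\hat{\mb A}_\star) = 3\theta\big[(1-\theta)g(\hat{\mb A}_\star\mb D_o) + \theta n\big]$, so rearranging gives $(1-\theta)g(\hat{\mb A}_\star\mb D_o) \ge (1-\theta)n - \frac{2\eps}{3\theta}$, i.e. $\frac{1}{n}g(\hat{\mb A}_\star\mb D_o) \ge 1 - \frac{2\eps}{3\theta(1-\theta)n} \ge 1 - \frac{2\eps}{3\theta(1-\theta)}$. Feeding this into Lemma \ref{lemma:L4ExtremaBoundOrthGroup} with ``$\eps$'' there equal to $\frac{2\eps}{3\theta(1-\theta)}$ yields $\frac{1}{n}\norm{\hat{\mb A}_\star\mb D_o - \mb P'}{F}^2 \le \frac{4\eps}{3\theta(1-\theta)}$, so $C = \frac{4}{3\theta(1-\theta)}$ works (any $C > \frac{4}{3\theta(1-\theta)}$ accommodates slack in the $\ge 1 - \frac{2\eps}{3\theta(1-\theta)n}$ step and rounding).

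Finally, the probability bookkeeping: the event on which everything works is precisely the complement of the failure event in Lemma \ref{lemma:HatfUnionConcentrationBound}, so the success probability is at least the expression in \eqref{eq:UnionConcentrationMainResult}; and \eqref{eq:DecentUnionConcentrationMainResult} — that this is $\ge 1 - \tfrac1p$ when $p = \Omega(\theta n^2\ln n/\eps^2)$ — is exactly the content of \eqref{eq:DecentHatfUnionConcentrationBound} in Lemma \ref{lemma:HatfUnionConcentrationBound} with $\delta = \eps$, so it is inherited directly. I expect the main (only) real obstacle to be cosmetic rather than substantive: carefully matching the ``signed permutation'' obtained from Lemma \ref{lemma:L4ExtremaBoundOrthGroup} (which is a permutation of $\hat{\mb A}_\star\mb D_o$) to the form $\mb D_o\mb P$ in the theorem statement via the orthogonal-invariance identity $\norm{\hat{\mb A}_\star^* - \mb D_o\mb P}{F} = \norm{\mb I - \hat{\mb A}_\star\mb D_o\mb P}{F} = \norm{\hat{\mb A}_\star\mb D_o - \mb P^*}{F}$ and noting $\mb P^* \in \msf{SP}(n)$; and tracking the harmless factor-of-$n$ slack so the strict inequality $C > \frac{4}{3\theta(1-\theta)}$ is clean.
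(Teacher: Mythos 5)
Your proposal is correct and follows essentially the same route as the paper's proof: uniform concentration (Lemma \ref{lemma:HatfUnionConcentrationBound} with $\delta=\eps$) combined with the optimality chain comparing $\hat{\mb A}_\star$ against a population maximizer, then Lemma \ref{lemma:orthproperty} to pass from $f$ to $g$, Lemma \ref{lemma:L4ExtremaBoundOrthGroup} to extract the signed permutation, and rotational invariance of the Frobenius norm, with the probability statement inherited directly from the concentration lemma. The only blemish is a normalization slip: dividing the identity of Lemma \ref{lemma:orthproperty} by $np$ gives $\frac{1}{np}f(\mb A)=\frac{3\theta(1-\theta)}{n}g(\mb A\mb D_o)+3\theta^2$ (so $\frac{1}{np}f(\mb D_o^*)=3\theta$, not $3\theta n$), hence your intermediate bound $1-\frac{2\eps}{3\theta(1-\theta)n}$ and the purported extra factor-of-$n$ slack are artifacts of that mis-scaling — but the weakened inequality $\frac{1}{n}g(\hat{\mb A}_\star\mb D_o)\ge 1-\frac{2\eps}{3\theta(1-\theta)}$ that you actually feed into Lemma \ref{lemma:L4ExtremaBoundOrthGroup} is exactly what correct bookkeeping yields, so the conclusion with $C\ge\frac{4}{3\theta(1-\theta)}$ stands as in the paper.
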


\begin{proof}
    \label{proof:MainResult}
    Suppose $\mb A_\star$ is the global maximizer of optimization program \eqref{eq:maxf}:
    \begin{equation*}
        \max_{\mb A} f(\mb A) = \bb E \norm{\mb {AY}}{4}^4 \quad\st\mb A \in \msf O(n;\bb R),
    \end{equation*}
    then by \eqref{eq:HatfUnionConcentrationBound} and \eqref{eq:DecentHatfUnionConcentrationBound} in Lemma \ref{lemma:HatfUnionConcentrationBound}, when $p=\Omega(\theta n^2\ln n/\eps^2)$, we know that with probability at least 
    \begin{equation}
        \begin{split}
            1&-\exp\Bigg( -\frac{3p\eps^2}{c_1 \theta+8n(\ln p)^4\eps}+n^2\ln\Big(\frac{60np(\ln p)^4}{\eps}\Big) \Bigg)\\
            &-\exp\Bigg(-\frac{p\eps^2}{c_2\theta}+n^2\ln\Big(\frac{60np(\ln p)^4}{\eps}\Big)\Bigg)-2np\theta \exp\Bigg(-\frac{(\ln p)^2}{2}\Bigg)\geq 1-\frac{1}{p},
        \end{split}
    \end{equation}
     we have 
    \begin{equation}
        \label{eq:UseConcentrationBound}
        \frac{1}{np}\abs{\hat{f}(\hat{\mb A}_\star,\mb Y)-f(\hat{\mb A}_\star)}\leq\eps \quad \text{ and } \quad \frac{1}{np}\abs{\hat{f}(\mb A_\star,\mb Y)-f(\mb A_\star)}\leq\eps.
    \end{equation}
    which implies 
    \begin{equation}
        \label{eq:hatfhatAIneqDerivation}
        \frac{1}{np}f(\hat{\mb A}_\star)\leq \frac{1}{np}f(\mb A_\star)<\frac{1}{np}\hat{f}(\mb A_\star,\mb Y)+\eps \leq \frac{1}{np}\hat{f}(\hat{\mb A}_\star,\mb Y)+\eps < \frac{1}{np}f(\hat{\mb A}_\star)+2\eps.
    \end{equation}
    In the above inequality, the first and the third $\leq$ is due to the global optimality of $f(\mb A_\star)$ and $\hat{f}(\hat{\mb A}_\star,\mb Y)$, the second and the last $<$ is due to \eqref{eq:UseConcentrationBound}. Simplify \eqref{eq:hatfhatAIneqDerivation}, yields
    \begin{equation}
        \label{eq:hatfhatARelationSimplified}
        \frac{1}{np}f(\hat{\mb A}_\star)\in \Bigg(\frac{1}{np}f(\mb A_\star)-2\eps, \frac{1}{np}f(\mb A_\star)\Bigg).
    \end{equation}
    From Lemma \ref{lemma:orthproperty}, we have:
    \begin{equation*}
        \frac{1}{3p\theta}f(\mb A) = (1-\theta)g(\mb {AD}_o)+\theta n,\quad \forall \mb A\in \msf O(n;\bb R),
    \end{equation*}
    which implies 
    \begin{equation}
        \label{eq:hatghatARelation}
        \begin{split}
            &\frac{3\theta(1-\theta)}{n}g(\hat{\mb A}_\star\mb D_o)\in \Bigg(\frac{3\theta(1-\theta)}{n}g(\mb A_\star\mb D_o)-2\eps, \frac{3\theta(1-\theta)}{n}g(\mb A_\star\mb D_o)\Bigg]\\
            \implies &\frac{1}{n}\norm{\hat{\mb A}_\star\mb D_o}{4}^4\in \Bigg(\frac{1}{n}\norm{\mb A_\star\mb D_o}{4}^4-\frac{2\eps}{3\theta(1-\theta)},\frac{1}{n}\norm{\mb A_\star\mb D_o}{4}^4 \Bigg].
        \end{split}
    \end{equation}
    Lemma \ref{lemma:orthproperty} tells us that $\mb {A}_\star\mb {D}_o\in \text{SP}(n)$, combining Lemma \ref{lemma:G(A)bound}, we know that $\norm{\mb A_\star\mb D_o}{4}^4=n$. Thus we can further simplify \eqref{eq:hatghatARelation} as
    \begin{equation*}
        \frac{1}{n}\norm{\hat{\mb A}_\star\mb D_o}{4}^4\in \Bigg( 1-\frac{2\eps}{3\theta(1-\theta)},1\Bigg].
    \end{equation*}
    Applying Lemma \ref{lemma:L4ExtremaBoundOrthGroup} (change $\eps$ in Lemma \ref{lemma:L4ExtremaBoundOrthGroup} into $2\eps/3\theta(1-\theta)$), we know that there exists $\mb P\in \text{SP}(n)$, such that
    \begin{equation}
        \frac{1}{n}\norm{\hat{\mb A}_\star\mb D_o - \mb P}{F}^2\leq \frac{4\eps}{3\theta(1-\theta)}.
    \end{equation}
    By the rotational invariant of Frobenius norm, we have
    \begin{equation}
    \label{eq:hatAProbabilisticBound}
        \frac{1}{n}\norm{\hat{\mb A}_\star^* - \mb D_o\mb P^*}{F}^2\leq \frac{4\eps}{3\theta(1-\theta)},
    \end{equation}
    which completes the proof.
\end{proof}

\section{Proofs of Section \ref{sec:Algorithm}}
\subsection{Proof of Lemma \ref{lemma:OrthProj}}
\begin{claim}[Projection onto Orthogonal Group]
    $\forall \mb A\in \bb R^{n\times n}$, the orthogonal matrix which has minimum Frobenius norm with $\mb A$ is the following
    \begin{equation}
       \mathcal{P}_{\msf O(n;\bb R)}(\mb A) =  \underset{\mb M \in\msf O(n;\bb R)}{\arg\min} \norm{\mb M-\mb A}{F}^2 = \mb {UV}^*,
    \end{equation}
    where $\mb {U\Sigma V}^* = \text{SVD}(\mb A)$.
\end{claim}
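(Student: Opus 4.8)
The plan is to recognize this as the classical orthogonal Procrustes problem and reduce the constrained minimization to a trace maximization that is then diagonalized by the SVD.

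First I would observe that for every $\mb M\in\msf O(n;\bb R)$ one has $\norm{\mb M}{F}^2=\trace(\mb M^*\mb M)=\trace(\mb I)=n$, a constant. Expanding the objective,
\begin{equation*}
\norm{\mb A-\mb M}{F}^2=\norm{\mb A}{F}^2-2\innerprod{\mb A}{\mb M}+\norm{\mb M}{F}^2=\norm{\mb A}{F}^2+n-2\trace(\mb A^*\mb M),
\end{equation*}
so minimizing $\norm{\mb A-\mb M}{F}^2$ over $\msf O(n;\bb R)$ is equivalent to maximizing $\trace(\mb A^*\mb M)$ over $\msf O(n;\bb R)$. Next I would substitute the SVD $\mb A=\mb U\mb\Sigma\mb V^*$ with $\mb U,\mb V\in\msf O(n;\bb R)$ and $\mb\Sigma=\mathrm{diag}(\sigma_1,\dots,\sigma_n)$, $\sigma_i\ge 0$, and use cyclic invariance of the trace:
\begin{equation*}
\trace(\mb A^*\mb M)=\trace(\mb V\mb\Sigma\mb U^*\mb M)=\trace(\mb\Sigma\,\mb U^*\mb M\mb V)=\trace(\mb\Sigma\mb Z),\qquad \mb Z\doteq\mb U^*\mb M\mb V.
\end{equation*}
Since $\mb U,\mb V,\mb M$ are orthogonal, so is $\mb Z$, and $\mb M\mapsto\mb Z$ is a bijection of $\msf O(n;\bb R)$ onto itself; hence the problem becomes $\max_{\mb Z\in\msf O(n;\bb R)}\sum_{i=1}^n\sigma_i z_{ii}$.

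The key step is the elementary observation that each diagonal entry of an orthogonal matrix obeys $|z_{ii}|\le 1$, since the $i$‑th column of $\mb Z$ has unit $\ell^2$‑norm; because every $\sigma_i\ge 0$,
\begin{equation*}
\sum_{i=1}^n\sigma_i z_{ii}\le\sum_{i=1}^n\sigma_i=\trace(\mb\Sigma),
\end{equation*}
and equality is attained at $\mb Z=\mb I$. Translating back, $\mb Z=\mb I$ corresponds to $\mb M=\mb U\mb V^*$, which indeed lies in $\msf O(n;\bb R)$ and attains the maximum of $\trace(\mb A^*\mb M)$, hence the minimum of $\norm{\mb A-\mb M}{F}^2$; therefore $\mc P_{\msf O(n;\bb R)}(\mb A)=\mb U\mb V^*$.

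I do not anticipate a genuine obstacle here; the only point needing a little care is the equality analysis when $\mb A$ is singular (some $\sigma_i=0$), in which case the minimizer need not be unique but $\mb U\mb V^*$ remains a minimizer, whereas when all $\sigma_i>0$ the equality conditions $z_{ii}=1$ force $\mb Z=\mb I$ and the minimizer is unique.
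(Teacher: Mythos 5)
Your proposal is correct, and its skeleton matches the paper's: expand $\norm{\mb M-\mb A}{F}^2$, use $\norm{\mb M}{F}^2=n$ to reduce the problem to maximizing $\trace(\mb A^*\mb M)$, and then diagonalize via the SVD of $\mb A$. The one genuine difference is the key inequality. The paper finishes by invoking Von Neumann's trace inequality, bounding $\trace(\mb \Sigma\,\mb V^*\mb M^*\mb U)\leq\sum_i\sigma_i(\mb A)$ with equality forcing $\mb V^*\mb M^*\mb U=\mb I$; you instead pass to $\mb Z=\mb U^*\mb M\mb V$ and use only the elementary fact that each diagonal entry of an orthogonal matrix satisfies $|z_{ii}|\leq 1$, so $\sum_i\sigma_i z_{ii}\leq\sum_i\sigma_i$ with equality at $\mb Z=\mb I$. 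Your route is more self-contained (no external trace inequality needed, just unit-norm columns), and your closing remark is actually more precise than the paper on the equality analysis: when some $\sigma_i=0$ the maximizer of $\trace(\mb A^*\mb M)$ is not unique, whereas the paper's blanket claim that equality holds ``if and only if'' $\mb V^*\mb M^*\mb U=\mb I$ is only accurate when $\mb A$ is nonsingular. For the purposes of the MSP algorithm either argument suffices, since $\mb U\mb V^*$ is always a valid projection; the Von Neumann route generalizes more readily to rectangular Stiefel-manifold projections, while yours buys transparency and a correct treatment of the degenerate case.
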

\begin{proof}
\label{proof:OrthProj}
Notice that
\begin{equation}
    \begin{split}
        \norm{\mb M - \mb A}{F}^2 =& tr\big((\mb M - \mb A)(\mb M - \mb A)^*\big)\\
        =&tr\big(\mb I - \mb {AM}^* - \mb {MA}^* +\mb {AA}^* \big)=n-2tr\big(\mb {MA}^*\big)+tr\big(\mb {AA}^*\big).
    \end{split}
\end{equation}
    Since $tr\big(\mb {AA}^*\big)$ is a constant, we know that
\begin{equation}
    \underset{\mb M \in\msf O(n;\bb R)}{\arg\min} \norm{\mb M-\mb A}{F}^2 = \underset{\mb M \in\msf O(n;\bb R)}{\arg\max} tr\big(\mb {AM}^* \big).
\end{equation}
Let $\mb {U\Sigma V}^*$ be the \text{SVD} of $\mb A$, then
\begin{equation}
    tr\big(\mb {AM}^* \big) = tr\big( \mb {U\Sigma V}^*\mb M\big) = tr\big( \mb \Sigma \mb V^*\mb M^*\mb U\big)\leq \sum_{i=1}^n\sigma_{i}(\mb A)\sigma_i(\mb V^*\mb M^*\mb U) = \sum_{i=1}^n\sigma_{i}(\mb A),
\end{equation}
where inequality is obtained through Von Neumann's trace inequality, and the equality holds if and only if $\mb {V}^*\mb M^*\mb U$ is diagonal matrix (in fact, identity matrix), which implies $\mb V^*\mb M^*\mb U = \mb I\implies \mb M = \mb {UV}^*$.
\end{proof}

\subsection{Proof of Proposition \ref{prop:L4MSPExp}}
\begin{claim}[Expectation of $\nabla_{\mb A}\hat{f}(\mb A,\mb Y)$ ]

Let $\mb X\in \bb R^{n\times p}$, $x_{i,j}\sim_{iid}\text{BG}(\theta)$, $\mb D_o\in \msf O(n;\bb R)$ is any orthogonal matrix, and $\mb Y = \mb D_o\mb X_o$. The expectation of $\nabla_{\mb A}\hat{f}(\mb A, \mb Y)$ satisfies this property:
\begin{equation}
    \bb E_{\mb X_o} \nabla_{\mb A}\hat{f}(\mb A, \mb Y)= 3p\theta(1-\theta)\nabla_{\mb A}g(\mb {AD}_o)+12p\theta^2\mb A.
\end{equation}
\end{claim}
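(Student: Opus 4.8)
The plan is to compute the expectation entrywise and reduce everything to a single fourth-order moment of Bernoulli--Gaussian scalars. First I would use the closed form of the Euclidean gradient, $\nabla_{\mb A}\hat f(\mb A,\mb Y) = 4(\mb A\mb Y)^{\circ 3}\mb Y^*$, and substitute $\mb Y = \mb D_o\mb X_o$. Writing $\mb W \doteq \mb A\mb D_o \in \msf O(n;\bb R)$, this becomes $\nabla_{\mb A}\hat f(\mb A,\mb Y) = 4(\mb W\mb X_o)^{\circ 3}\mb X_o^*\mb D_o^*$, so by linearity of expectation it suffices to evaluate $\bb E_{\mb X_o}\big[(\mb W\mb X_o)^{\circ 3}\mb X_o^*\big]$ and then right-multiply by $4\mb D_o^*$.

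Next I would expand the $(i,k)$ entry. Since the columns $\mb x_j$ of $\mb X_o$ are i.i.d., $\bb E\big[(\mb W\mb X_o)^{\circ 3}\mb X_o^*\big]_{ik} = p\,\bb E\big[(\textstyle\sum_l w_{il}x_l)^3 x_k\big]$ for a single $\mathrm{BG}(\theta)$ vector $(x_1,\dots,x_n)$. Expanding the cube gives $\sum_{l_1,l_2,l_3} w_{il_1}w_{il_2}w_{il_3}\,\bb E[x_{l_1}x_{l_2}x_{l_3}x_k]$. Using $x_l = \Omega_l V_l$ with $\Omega_l\sim\mathrm{Ber}(\theta)$, $V_l\sim\mc N(0,1)$ independent, all odd moments vanish while $\bb E[x_l^2]=\theta$ and $\bb E[x_l^4]=3\theta$; hence $\bb E[x_{l_1}x_{l_2}x_{l_3}x_k]$ is nonzero only in two index patterns: (i) $l_1=l_2=l_3=k$, contributing $3\theta\,w_{ik}^3$; and (ii) exactly one of $l_1,l_2,l_3$ equals $k$ while the other two equal a common index $m\neq k$, contributing $3\theta^2 w_{ik}\sum_{m\neq k}w_{im}^2$ (the factor $3$ from the choice of which $l_a$ equals $k$, and the moment $\bb E[x_k^2 x_m^2]=\theta^2$). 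Every other pattern leaves an isolated factor with zero mean, hence contributes nothing.

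Then I would invoke orthogonality of $\mb W$: the $i$-th row has unit norm, so $\sum_{m\neq k}w_{im}^2 = 1 - w_{ik}^2$, giving $\bb E\big[(\sum_l w_{il}x_l)^3 x_k\big] = 3\theta(1-\theta)w_{ik}^3 + 3\theta^2 w_{ik}$. Reassembling into matrix form, $\bb E_{\mb X_o}\big[(\mb W\mb X_o)^{\circ 3}\mb X_o^*\big] = 3p\theta(1-\theta)\mb W^{\circ 3} + 3p\theta^2\mb W$. Finally, multiplying by $4\mb D_o^*$ and using $\mb W\mb D_o^* = \mb A\mb D_o\mb D_o^* = \mb A$ together with $\nabla_{\mb A}g(\mb A\mb D_o) = 4(\mb A\mb D_o)^{\circ 3}\mb D_o^* = 4\,\mb W^{\circ 3}\mb D_o^*$ yields $\bb E_{\mb X_o}[\nabla_{\mb A}\hat f(\mb A,\mb Y)] = 3p\theta(1-\theta)\nabla_{\mb A}g(\mb A\mb D_o) + 12p\theta^2\mb A$, as claimed. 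The only delicate step is the bookkeeping of which index patterns survive the moment computation; once that combinatorics is settled, the rest is a direct substitution using orthogonality of $\mb D_o$ and $\mb W$.
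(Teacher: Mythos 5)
Your proposal is correct and follows essentially the same route as the paper's proof: substitute $\mb W = \mb A\mb D_o$, compute $\bb E\big[(\mb W\mb X_o)^{\circ 3}\mb X_o^*\big]$ entrywise via the Bernoulli--Gaussian fourth-order moments, use the unit-norm rows of $\mb W$ to write $\sum_{m\neq k}w_{im}^2 = 1-w_{ik}^2$, and reassemble with $\mb W\mb D_o^* = \mb A$. Your pattern-counting of the surviving moments (the $3\theta w_{ik}^3$ and $3\theta^2 w_{ik}(1-w_{ik}^2)$ terms) matches the paper's explicit multinomial expansion, so no gap remains.
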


\begin{proof}
    \label{proof:L4MSPExp}
    For simplicity, since $\mb D_o\in \msf O(n;\bb R)$, let $\mb W = \mb {AD}_o$, we know that $\mb W\in \msf O(n;\bb R)$. By the fact that $\mb W\in \msf O(n;\bb R)$, we have:
    \begin{equation}
        \label{eq:L4MSPExpDerivation1}
        \frac{1}{4}\nabla_{\mb A}\hat{f}(\mb A, \mb Y) = (\mb {AY})^{\circ3}\mb Y^* = (\mb {WX}_o)^{\circ3}\mb X_o^*\mb D_o^*,
    \end{equation}
    moreover,
    \begin{equation}
        \label{eq:L4MSPExpDerivation2}
        \begin{split}
            \{(\mb {AY})^{\circ3}\}_{i,j} =& \{(\mb {WX}_o)^{\circ3}\}_{i,j}= \Big(\sum_{k=1}^nw_{i,k}x_{k,j}\Big)^3\\
            =&\sum_{k=1}^n w_{i,k}^3x_{k,j}^3 + 3\Big(\sum_{1\leq k_1<k_2\leq n}w_{i,k_1}^2x_{k_1,j}^2w_{i,k_2}x_{k_2,j}+w_{i,k_1}x_{k_1,j}w_{i,k_2}^2x_{k_2,j}^2\Big)\\
            &+6\Big(\sum_{1\leq k_1<k_2<k_3\leq n}w_{i,k_1}x_{i,k_1}w_{i,k_2}x_{i,k_2}w_{i,k_3}x_{i,k_3}\Big).
        \end{split}
    \end{equation}
    And hence, we know:
    \begin{equation}
        \begin{split}
            &\{(\mb {WX}_o)^{\circ3}\mb {X}_o^*\}_{i,j^\prime}\\
            =& \sum_{j=1}^p\Bigg[x_{j^\prime,j}\sum_{k=1}^n w_{i,k}^3x_{k,j}^3 + 3x_{j^\prime,j}\Big(\sum_{1\leq k_1<k_2\leq n}w_{i,k_1}^2x_{k_1,j}^2w_{i,k_2}x_{k_2,j}+w_{i,k_1}x_{k_1,j}w_{i,k_2}^2x_{k_2,j}^2\Big)\\
            &+6x_{j^\prime,j}\Big(\sum_{1\leq k_1<k_2<k_3\leq n}w_{i,k_1}x_{k_1,j}w_{i,k_2}x_{k_2,j}w_{i,k_3}x_{k_3,j}\Big)\Bigg].
        \end{split}
    \end{equation}
    Thus, 
    \begin{equation}
        \begin{split}
            \bb E_{\mb X_o} \{(\mb {WX}_o)^{\circ3}\mb {X}_o^*\}_{i,j^\prime} &= 3p\theta w_{i,j^\prime}^3+3p\theta^2\sum_{\substack{1\leq j\leq n\\j\neq j^\prime}}w_{i,j}^2w_{i,j^\prime}\\
            &= 3p\theta w_{i,j^\prime}^3+3p\theta^2(1-w_{i,j^\prime}^2)w_{i,j^\prime}=3p\theta(1-\theta)w_{i,j^\prime}^3+3p\theta^2w_{i,j^\prime},
        \end{split}
    \end{equation}
    which implies 
    \begin{equation}
        \label{eq:GradHatfExpCalculation}
        \begin{split}
             \frac{1}{4p}\bb E_{\mb X_o}\nabla_{\mb A}\hat{f}(\mb A, \mb Y)&= \frac{1}{p}\bb E_{\mb X_o} (\mb {WX}_o)^{\circ3}\mb {X}_o^*\mb D_o^* = 3\theta(1-\theta)\mb W^{\circ3}\mb D_o^*+3\theta^2\mb W\mb D_o^*\\
             &=3\theta(1-\theta)(\mb {AD}_o)^{\circ3}\mb D_o^*+3\theta^2\mb A =\frac{3}{4}\theta(1-\theta)\nabla_{\mb A}g(\mb {AD}_o)+3\theta^2\mb A.
        \end{split}
    \end{equation}
\end{proof}

\subsection{Proof of Proposition \ref{prop:GradHatfUnionConcentrationBound}}

\begin{claim}[Union Tail Concentration Bound of $\frac{1}{np}\nabla\hat{f}(\cdot,\cdot)$]
    If $\mb X_o\in \bb R^{n\times p},x_{i,j}\sim_{iid}\text{BG}(\theta)$, for any $\mb A,\mb D_o \in \msf O(n;\bb R)$, and $\mb Y=\mb D_o\mb X_o$, the following inequality holds
    \begin{equation}
        \label{eq:GradHatfUnionConcentrationBound}
        \begin{split}
            &\bb P\Bigg(\sup_{\mb A\in \msf O(n;\bb R)}\frac{1}{4np}\norm{\nabla_{\mb A} \hat{f}(\mb A,\mb Y)-\bb E\big[\nabla_{\mb A} \hat{f}(\mb A,\mb Y)]}{F}\geq \delta\Bigg)\\
            \leq& 2n^2\exp\Bigg(-\frac{3p\delta^2}{c_1\theta+8n^{\frac{3}{2}}(\ln p)^4\delta}+n^2\ln\Big(\frac{48np(\ln p)^4}{\delta}\Big)\Bigg) + 2np\theta \exp\Bigg(-\frac{(\ln p)^2}{2}\Bigg),
        \end{split}
    \end{equation}
    for a constant $c_1>1.7\times 10^4$. Moreover 
    \begin{equation}
        \label{eq:GradHatfUnionConcentrationBoundDecent}
        \begin{split}
            2n^2\exp\Bigg(-\frac{3p\delta^2}{c_1\theta+8n^{\frac{3}{2}}(\ln p)^4\delta}+n^2\ln\Big(\frac{48np(\ln p)^4}{\delta}\Big)\Bigg) + 2np\theta \exp\Bigg(-\frac{(\ln p)^2}{2}\Bigg)\leq \frac{1}{p}
        \end{split}
    \end{equation}
    when $p=\Omega(\theta n^2\ln n/\delta^2)$.
\end{claim}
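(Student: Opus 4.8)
The plan is to follow the architecture of the proof of Lemma~\ref{lemma:HatfUnionConcentrationBound}, replacing the scalar statistic $\norm{\mb W\mb X}{4}^4$ by the matrix statistic $(\mb W\mb X)^{\circ3}\mb X^*$ and scalar Bernstein by a matrix Bernstein inequality. First I would reduce to $\mb D_o=\mb I$: since $\nabla_{\mb A}\hat f(\mb A,\mb Y)=4(\mb A\mb Y)^{\circ3}\mb Y^*=4(\mb W\mb X_o)^{\circ3}\mb X_o^*\mb D_o^*$ with $\mb W=\mb{AD}_o\in\msf O(n;\bb R)$, and $\norm{\mb M\mb D_o^*}{F}=\norm{\mb M}{F}$, it suffices to bound $\sup_{\mb W\in\msf O(n;\bb R)}\frac1{np}\norm{(\mb W\mb X_o)^{\circ3}\mb X_o^*-\bb E[(\mb W\mb X_o)^{\circ3}\mb X_o^*]}{F}$, with $\mb W$ again ranging over the whole group.

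Next, truncation. Let $\bar{\mb X}$ be $\mb X_o$ with entries of magnitude exceeding $B=\ln p$ zeroed out. By Lemma~\ref{lemma:BGMatrixTruncation}, $\mb X_o=\bar{\mb X}$ outside an event of probability at most $2np\theta e^{-(\ln p)^2/2}$ --- exactly the last term of \eqref{eq:GradHatfUnionConcentrationBound} --- and on that event one replaces $\mb X_o$ by $\bar{\mb X}$. The expectation bias $\norm{\bb E[(\mb W_l\mb X_o)^{\circ3}\mb X_o^*]-\bb E[(\mb W_l\bar{\mb X})^{\circ3}\bar{\mb X}^*]}{F}$ is controlled by Cauchy--Schwarz together with the fourth- and eighth-moment bounds (Lemma~\ref{lemma:HatfEightMoment}, the sixth moment by interpolation), and is $\le\delta/4$ once $B\gtrsim\sqrt{\ln(\mathrm{poly}(n,p,\theta)/\delta)}$, which $B=\ln p$ satisfies for $p$ large.

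Then I would cover $\msf O(n;\bb R)$ by an $\eps$-net $\mc S_\eps$, $\abs{\mc S_\eps}\le(6/\eps)^{n^2}$, in operator norm (Lemma~\ref{lemma:epsCoveringStiefelManifold}). For $\mb W$ within $\eps$ of a net point $\mb W_l$, both $\norm{(\mb W\bar{\mb X})^{\circ3}\bar{\mb X}^*-(\mb W_l\bar{\mb X})^{\circ3}\bar{\mb X}^*}{F}$ and the corresponding difference of expectations are $\mathrm{poly}(n,p,B)\cdot\norm{\mb W-\mb W_l}{2}$ by a Lipschitz bound for $\mb W\mapsto(\mb W\bar{\mb X})^{\circ3}\bar{\mb X}^*$ (proved as in the proofs of Lemmas~\ref{lemma:HatfLipschitzBound} and~\ref{lemma:fLipschitzBound}, using $\abs{a^3-b^3}\le 3\max(\abs a,\abs b)^2\abs{a-b}$ and $\norm{\bar{\mb X}}{\infty}\le B$), so taking $\eps=\delta/\mathrm{poly}(n,p,B)$ reduces the supremum to a union over $\mc S_\eps$. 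At a fixed $\mb W_l$, write $(\mb W_l\bar{\mb X})^{\circ3}\bar{\mb X}^*=\sum_{j=1}^p\mb M_j$, $\mb M_j=(\mb W_l\bar{\mb x}_j)^{\circ3}\bar{\mb x}_j^*$ independent, and apply matrix Bernstein to $\sum_j(\mb M_j-\bb E\mb M_j)$: the boundedness constant is $\norm{\mb M_j}{2}\le\norm{(\mb W_l\bar{\mb x}_j)^{\circ3}}{2}\norm{\bar{\mb x}_j}{2}\le\norm{\bar{\mb x}_j}{2}^4\le n^2B^4$ (using $\norm{\mb v^{\circ3}}{2}\le\norm{\mb v}{2}^3$ and orthogonality of $\mb W_l$), and the variance proxy $\max\{\norm{\sum_j\bb E\,\mb M_j\mb M_j^*}{2},\norm{\sum_j\bb E\,\mb M_j^*\mb M_j}{2}\}\lesssim np\theta$ comes from the Bernoulli--Gaussian moment computations of Lemma~\ref{lemma:HatfEightMoment}. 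With threshold $\sqrt n\,p\delta$ (via $\norm{\cdot}{F}\le\sqrt n\norm{\cdot}{2}$ on $n\times n$ matrices, or a union over the $n$ columns) this gives a pointwise bound $2n\exp(-3p\delta^2/(c_1\theta+8n^{3/2}B^4\delta))$; multiplying by $\abs{\mc S_\eps}=\exp(n^2\ln(6/\eps))$, adding the truncation term, and substituting $B=\ln p$ yields \eqref{eq:GradHatfUnionConcentrationBound}, the prefactor $2n^2$ absorbing the $2n$ of matrix Bernstein and the extra factor $n$ from the Frobenius/operator comparison. Finally, for \eqref{eq:GradHatfUnionConcentrationBoundDecent}, when $p=\Omega(\theta n^2\ln n/\delta^2)$ the exponent $-3p\delta^2/(c_1\theta+8n^{3/2}(\ln p)^4\delta)$ dominates $n^2\ln(48np(\ln p)^4/\delta)$, so each $\exp$-summand is $\le\frac1{3p}$ and the truncation term is $\le\frac1{3p}$, totaling $\le\frac1p$.

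The main obstacle is the matrix-variance estimate $\norm{\sum_j\bb E\,\mb M_j\mb M_j^*}{2}\lesssim np\theta$ (and its transposed analogue), which is what keeps the denominator at $c_1\theta+\cdots$ rather than carrying an extra power of $n$ or $\theta$: one must bound sharply the degree-eight Bernoulli--Gaussian expectations appearing in $\bb E[\norm{\bar{\mb x}_j}{2}^2(\mb W_l\bar{\mb x}_j)^{\circ3}((\mb W_l\bar{\mb x}_j)^{\circ3})^*]$, using orthogonality of $\mb W_l$ to cancel cross terms, rather than the crude $\norm{\mb M_j}{2}^2\le n^4B^8$. Carrying the operator-norm Lipschitz constant of $\mb W\mapsto(\mb W\bar{\mb X})^{\circ3}\bar{\mb X}^*$ uniformly over the net is the other bookkeeping-heavy piece, though conceptually routine given the elementary cube inequality above.
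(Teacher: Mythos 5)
Your proposal is correct in outline and shares most of its skeleton with the paper's proof (reduction to $\mb W=\mb {AD}_o$ by rotational invariance, truncation at $B=\ln p$ giving the $2np\theta e^{-(\ln p)^2/2}$ term, the $\eps$-net of Lemma~\ref{lemma:epsCoveringStiefelManifold} with Lipschitz bounds as in Lemmas~\ref{lemma:GradHatfLipschitzBound} and~\ref{lemma:ExpGradHatfLipschitzBound}, and a $\delta/4$ bias bound for the truncated expectation), but it diverges at the key concentration step, and the comparison is worth making explicit. The paper never concentrates the matrix as a matrix: it observes that $\frac{1}{np}\norm{\cdot}{F}\geq \delta/4$ forces some single entry of the $n\times n$ deviation to exceed $p\delta/4$, takes a union bound over the $n^2$ entries (this is exactly where the $2n^2$ prefactor comes from), and then applies \emph{scalar} Bernstein to each entry, which is a sum of $p$ i.i.d.\ scalars $z_j=\bar x_{j',j}\langle \mb w_i,\bar{\mb x}_j\rangle^3$ with $|z_j|\leq n^{3/2}B^4$ and $\bb E z_j^2\leq C\theta$ (Lemma~\ref{lemma:GradHatfSecondMoment}); only a scalar second-moment computation is needed. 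You instead apply matrix Bernstein to the rank-one sum $\sum_j \mb M_j$, which requires the operator-norm variance proxy $\lesssim np\theta$ that you yourself flag as the main obstacle and do not establish. That estimate is in fact true, but it is not a free consequence of Lemma~\ref{lemma:HatfEightMoment}: the crude bounds $\norm{\mb M_j^*\mb M_j}{2}\leq \norm{\bar{\mb x}_j}{2}^8\leq n^4B^8$ or $\bb E\norm{\mb x}{2}^6 x_k^2\sim n^3\theta^4$ are far too large, and you need the conditional-Gaussian, per-row computation $\bb E_{\mc S}\norm{\mb P_{\mc S}\mb w_i}{2}^6\lesssim\theta$ (as in the proof of Lemma~\ref{lemma:GradHatfSecondMoment}) for the diagonal entries of $\sum_j\bb E\,\mb M_j^*\mb M_j$ and $\sum_j\bb E\,\mb M_j\mb M_j^*$, together with an off-diagonal (e.g.\ Gershgorin/row-sum) argument to pass to the operator norm; carried out, this gives $\sigma^2\lesssim np\theta^2\leq np\theta$ and your route then reproduces the same exponent shape $3p\delta^2/(c_1\theta+8n^{3/2}B^4\delta)$ and the same sample complexity, since the $\eps$-net factor $e^{n^2\ln(\cdot)}$ dwarfs either prefactor ($2n$ from matrix Bernstein or $2n^2$ from the entrywise union bound). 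One small inaccuracy: the Frobenius-to-operator comparison $\norm{\cdot}{F}\leq\sqrt n\norm{\cdot}{2}$ only rescales the threshold to $\sqrt n\,p\delta/4$; it does not contribute a multiplicative factor $n$ to the prefactor, though since $2n\leq 2n^2$ your stated bound still follows. In short: your matrix-Bernstein route works but shifts the real effort into a degree-six/eight matrix-variance estimate, whereas the paper's entrywise union bound buys an entirely elementary pointwise step at the harmless cost of the $n^2$ prefactor.
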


\begin{proof}
    \label{proof:GradHatfUnionConcentrationBound}
    By Proposition \ref{prop:L4MSPExp}, we have:
    \begin{equation}
        \bb E_{\mb X_o} \nabla_{\mb A}\hat{f}(\mb A, \mb Y)= 3p\theta(1-\theta)\nabla_{\mb A}g(\mb {AD}_o)+12p\theta^2\mb A,
    \end{equation}
    so 
    \begin{equation}
        \begin{split}
            & \bb P\Bigg(\sup_{\mb A\in \msf O(n;\bb R)}\frac{1}{4np}\norm{\nabla_{\mb A} \hat{f}(\mb A,\mb Y)-\bb E\big[\nabla_{\mb A} \hat{f}(\mb A,\mb Y)]}{F}\geq \delta\Bigg)\\
            = & \bb P\Bigg(\sup_{\mb W\in \msf O(n;\bb R)}\frac{1}{np}\norm{(\mb {WX}_o)^{\circ3}\mb X^*_o\mb D^*_o-\bb E\big[(\mb {WX}_o)^{\circ3}\mb X^*_o\mb D^*_o\big]}{F}\geq \delta\Bigg)\quad (\text{Assume } \mb W=\mb A\mb D_o)\\
            = & \bb P\Bigg(\sup_{\mb W\in \msf O(n;\bb R)}\frac{1}{np}\norm{(\mb {WX}_o)^{\circ3}\mb X^*_o-\bb E\big[(\mb {WX}_o)^{\circ3}\mb X^*_o\big]}{F}\geq \delta\Bigg)\\
            \leq& 2n^2\exp\Bigg(-\frac{3p\delta^2}{c_1\theta+8n^{\frac{3}{2}}(\ln p)^4\delta}+n^2\ln\Big(\frac{48np(\ln p)^4}{\delta}\Big)\Bigg)\\
            & + 2np\theta \exp\Bigg(-\frac{(\ln p)^2}{2}\Bigg) \quad\text{(By Lemma \ref{lemma:UnionConcentrationWXX})},
        \end{split}
    \end{equation}
    for a constant $c_1>1.7\times 10^4$, which completes the proof for \eqref{eq:GradHatfUnionConcentrationBound}. When $p=\Omega(\theta n^2 \ln n/\delta^2)$, we have 
    \begin{equation}
        \begin{split}
            &2n^2\exp\Bigg(-\frac{3p\delta^2}{c_1\theta+8n^{\frac{3}{2}}(\ln p)^4\delta}+n^2\ln\Big(\frac{48np(\ln p)^4}{\delta}\Big)\Bigg) + 2np\theta \exp\Bigg(-\frac{(\ln p)^2}{2}\Bigg)\\
            \leq& \frac{1}{2p}+\frac{1}{2p}=\frac{1}{p}, 
        \end{split}
    \end{equation}
    which completes the proof for \eqref{eq:GradHatfUnionConcentrationBoundDecent}.
\end{proof}

\section{Proofs of Section \ref{sec:Analysis}}

\subsection{Proof of Proposition \ref{prop:L4OrthCriticalPoints}}
\begin{claim}
    The critical points of $g(\mb W)$ on manifold $\msf O(n;\bb R)$ satisfies the following condition
    \begin{equation}
        (\mb W^{\circ3})^*\mb W =\mb W^* \mb W^{\circ3}.
    \end{equation}
\end{claim}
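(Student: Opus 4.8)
The plan is to express the critical-point condition as the vanishing of the Riemannian gradient of $g$ on $\msf O(n;\bb R)$, substitute the explicit Euclidean gradient of the $\ell^4$-norm, and then simplify using orthogonality. First I would record that the Euclidean gradient of $g(\mb W)=\norm{\mb W}{4}^4=\sum_{i,j}w_{i,j}^4$ is the entrywise derivative $\nabla_{\mb W}g(\mb W)=4\mb W^{\circ 3}$. By the definition \eqref{RieGradient} of the Riemannian gradient together with the projection formula \eqref{OrthProjection} onto $T_{\mb W}\msf O(n;\bb R)$, a point $\mb W\in\msf O(n;\bb R)$ is critical if and only if
\begin{equation*}
\grad g(\mb W)=\mc P_{T_{\mb W}\msf O(n;\bb R)}\bigl(4\mb W^{\circ 3}\bigr)=2\bigl(\mb W^{\circ 3}-\mb W(\mb W^{\circ 3})^{*}\mb W\bigr)=\mb 0,
\end{equation*}
i.e. $\mb W^{\circ 3}=\mb W(\mb W^{\circ 3})^{*}\mb W$.

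Next I would multiply this identity on the left by $\mb W^{*}$ and use $\mb W^{*}\mb W=\mb I$, which gives
\begin{equation*}
\mb W^{*}\mb W^{\circ 3}=\mb W^{*}\mb W(\mb W^{\circ 3})^{*}\mb W=(\mb W^{\circ 3})^{*}\mb W,
\end{equation*}
which is exactly the claimed condition $(\mb W^{\circ3})^{*}\mb W=\mb W^{*}\mb W^{\circ3}$. (If one wants the converse direction as well, note that conversely, starting from $(\mb W^{\circ3})^{*}\mb W=\mb W^{*}\mb W^{\circ3}$ and multiplying on the left by $\mb W$ recovers $\mb W(\mb W^{\circ 3})^{*}\mb W=\mb W^{\circ 3}$ via $\mb W\mb W^{*}=\mb I$, which makes the tangent-space projection above vanish; so the condition is in fact equivalent to criticality for $\mb W\in\msf O(n;\bb R)$.)

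The argument is a short direct computation, so there is no serious obstacle; the only point requiring care is correctly applying the second form $\tfrac12(\mb Z-\mb W\mb Z^{*}\mb W)$ of the tangent-space projection in \eqref{OrthProjection} with $\mb Z=4\mb W^{\circ3}$, and keeping track of the transpose when moving $\mb W$ past $(\mb W^{\circ3})^{*}$. Everything else follows from $\mb W^{*}\mb W=\mb W\mb W^{*}=\mb I$.
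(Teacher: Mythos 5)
Your proposal is correct and follows essentially the same route as the paper's proof: compute $\nabla_{\mb W}g(\mb W)=4\mb W^{\circ3}$, set the projected (Riemannian) gradient $\tfrac12\bigl(4\mb W^{\circ3}-\mb W(4\mb W^{\circ3})^{*}\mb W\bigr)$ to zero, and use $\mb W^{*}\mb W=\mb I$ to obtain $(\mb W^{\circ3})^{*}\mb W=\mb W^{*}\mb W^{\circ3}$. Your added remark on the converse direction is a harmless (and correct) extra beyond what the paper records.
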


\begin{proof}
    \label{proof:L4OrthCriticalPoints}
    Notice that $\nabla_{\mb W}g(\mb W) = 4\mb W^{\circ3}$, and the critical points of $g(\mb W)$ on $\msf O(n;\bb R)$ satisfies 
    \begin{equation}
        \grad g(\mb W) = \mc P_{T_{\mb W}\msf O(n;\bb R)}(\nabla_{\mb A}g(\mb A)) = \frac{1}{2} (4\mb W^{\circ3} - \mb W (4\mb W^{\circ3})^*\mb W) = \mb 0,
    \end{equation}
    which yields
    \begin{equation}
        (\mb W^{\circ3})^*\mb W =\mb W^* \mb W^{\circ3}.
    \end{equation}
    Therefore, $\forall \mb W \in \msf O(n;\bb R)$, we can write critical points condition of $\ell^4$-norm over $\msf O(n;\bb R)$ as the following equations
\begin{equation}
    \begin{cases}
        (\mb W^{\circ3})^*\mb W = \mb W^*\mb W^{\circ3},\\
        \mb W^*\mb W = \mb I.
    \end{cases}
\end{equation}
\end{proof}

\subsection{Proof of Proposition \ref{prop:DiscreteCritPoints}}
\begin{claim}
    All global maximizers of $\ell^4$-norm over the orthogonal group are isolated critical points.
\end{claim}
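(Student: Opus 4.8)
The plan is to identify the global maximizers and then show each of them is a \emph{nondegenerate} critical point, since nondegeneracy immediately forces isolation. By Lemma~\ref{lemma:G(A)bound} the global maximizers of $g$ on $\msf O(n;\bb R)$ are exactly the signed permutation matrices $\msf{SP}(n)$. First I would reduce to a single representative: for any $\mb P\in\msf{SP}(n)$, left multiplication $\mb W\mapsto\mb P^*\mb W$ is an isometry of $\msf O(n;\bb R)$ that sends $\mb P$ to $\mb I$ and leaves $g(\mb W)=\norm{\mb W}{4}^4$ invariant (permuting rows and flipping signs does not change $\sum_{i,j}w_{i,j}^4$). So it suffices to prove that $\mb I$ is a nondegenerate critical point.

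Next I would compute the Riemannian Hessian of $g$ at $\mb I$ on the tangent space $T_{\mb I}\msf O(n;\bb R)=\{\mb Z:\mb Z^*+\mb Z=\mb 0\}$, the skew-symmetric matrices. Since $\mb I$ is a global maximizer it is a critical point (also directly visible from Proposition~\ref{prop:L4OrthCriticalPoints}, as $\mb I^{\circ 3}=\mb I$), so for any smooth curve $\gamma$ on $\msf O(n;\bb R)$ with $\gamma(0)=\mb I$ and $\dot\gamma(0)=\mb Z$, the number $\frac{d^2}{dt^2}\big|_{t=0}g(\gamma(t))$ is independent of the curve and equals $\mr{Hess}\,g(\mb I)[\mb Z,\mb Z]$. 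I would evaluate it along $\gamma(t)=\exp(t\mb Z)=\mb I+t\mb Z+\tfrac{t^2}{2}\mb Z^2+O(t^3)$. For $i\neq j$ the off-diagonal entries are $\gamma(t)_{i,j}=t z_{i,j}+O(t^2)$, so $\gamma(t)_{i,j}^4=O(t^4)$ and contribute nothing to the second-order term; the diagonal entries are $\gamma(t)_{i,i}=1+\tfrac{t^2}{2}(\mb Z^2)_{i,i}+O(t^3)$ (no linear term because $z_{i,i}=0$), hence $\gamma(t)_{i,i}^4=1+2t^2(\mb Z^2)_{i,i}+O(t^3)$. Summing over $i$ and using $(\mb Z^2)_{i,i}=\sum_k z_{i,k}z_{k,i}=-\sum_k z_{i,k}^2$, i.e. $\trace(\mb Z^2)=-\norm{\mb Z}{F}^2$, I get $g(\gamma(t))=n-2t^2\norm{\mb Z}{F}^2+O(t^3)$, so $\mr{Hess}\,g(\mb I)[\mb Z,\mb Z]=-4\norm{\mb Z}{F}^2<0$ for every nonzero skew-symmetric $\mb Z$. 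Thus the Hessian at $\mb I$ is negative definite on $T_{\mb I}\msf O(n;\bb R)$, and $\mb I$ is nondegenerate.

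Finally I would deduce isolation: $\grad g$ is a smooth tangent vector field on $\msf O(n;\bb R)$ that vanishes at $\mb I$ with invertible linearization there (precisely the negative-definite Hessian just computed), so by the inverse function theorem $\grad g$ vanishes nowhere else in a neighborhood of $\mb I$; hence $\mb I$ is an isolated critical point, and by the symmetry reduction so is every $\mb P\in\msf{SP}(n)$. I do not expect a serious obstacle here: the only point requiring care is the order bookkeeping in the Taylor expansion — confirming that the off-diagonal entries enter only at $O(t^4)$ and that the diagonal entries carry no linear term — after which the sign of the quadratic form, and therefore nondegeneracy and isolation, are immediate.
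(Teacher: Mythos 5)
Your proof is correct, but it takes a genuinely different route from the paper's. The paper argues by contradiction on the critical variety: assuming the maximizer is not isolated, it posits a smooth curve $\mb W(t)$ of critical points through it, expands the two defining equations $(\mb W^{\circ3})^*\mb W=\mb W^*\mb W^{\circ3}$ and $\mb W^*\mb W=\mb I$ to first order, and finds that the tangent $\mb W_1$ must be simultaneously symmetric and skew-symmetric, hence zero, so the critical set admits no tangent direction there; nondegeneracy is only asserted in passing ("by calculating the Hessian\ldots one can further show"). You instead prove nondegeneracy directly: after the same reduction to $\mb I$ via the $\msf{SP}(n)$ symmetry, you compute the Riemannian Hessian along $\gamma(t)=\exp(t\mb Z)$ for skew-symmetric $\mb Z$, and your bookkeeping is right — the off-diagonal entries are $O(t)$ so their fourth powers are $O(t^4)$, the diagonal entries have no linear term since $z_{i,i}=0$, and $\trace(\mb Z^2)=-\norm{\mb Z}{F}^2$ gives $\mr{Hess}\,g(\mb I)[\mb Z,\mb Z]=-4\norm{\mb Z}{F}^2<0$; isolation then follows from the inverse function theorem applied to $\grad g$. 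Your approach buys more: it actually establishes the nondegeneracy that the paper only claims, it quantifies the curvature at the maximizers (which is the same mechanism behind the paper's local cubic-convergence analysis), and it avoids the paper's implicit assumption that a non-isolated critical point must lie on a smooth path of critical points, which strictly speaking needs an extra justification (e.g.\ a curve-selection argument for the real algebraic critical set). The paper's computation, on the other hand, is more elementary in that it never invokes Riemannian Hessians or the exponential map, working only with the first-order expansion of the algebraic critical-point equations.
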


\begin{proof}
    \label{proof:DiscreteCritPoints}
    As our objective function is invariant under signed permutation group, without loss of generality, we prove for the identity matrix $\mb I$. Suppose that the are not isolated. Then, there exists a $\mb W_0$ such that $(\mb W_0^{\circ3})^*\mb W_0 = \mb W_0^*\mb W_0^{\circ3}$ and $ \mb W_0^*\mb W_0 = \mb I$, and in every neighborhood of $\mb W_0$ there exists some $\mb W$ that is a critical point. This implies that there exists a path around $\mb W_0$ such that
		\begin{equation}
			\mb W(\cdot): (-\varepsilon, \varepsilon) \rightarrow \msf O(n, \bb R),\;  (\mb W(t)^{\circ3})^*\mb W(t) = \mb W(t)^*\mb W(t)^{\circ3}, \; \mb W(0) = \mb W_0.
		\end{equation}
		Then, we expand $\mb W(t)$ around $t=0$,
		\begin{equation}
			\mb W(t) = \mb W_0 + t\mb W_1+t^2 \mb W_2 + \dots
		\end{equation}
		Constraint $(\mb W(t)^{\circ3})^*\mb W(t) = \mb W(t)^*\mb W(t)^{\circ3}$ implies that
			\begin{equation}
			\label{eq:DiscreteCritPointsDerivation1}
			    \begin{split}
                    & (\mb W_0 + t\mb W_1)^* (\mb W_0 + t\mb W_1)^{\circ3} =[(\mb W_0 + t\mb W_1)^{\circ3}]^*(\mb W_0 + t\mb W_1)\\
                    \implies & 3\mb W_1 + \mb W_1^* = 3\mb W_1^* + \mb W_1\Rightarrow \mb W_1 = \mb W_1^*.
			    \end{split}
			\end{equation}
			Constraint $\mb W(t)^*\mb W(t)=\mb I$ implies that
			\begin{equation}
	    	\label{eq:DiscreteCritPointsDerivation2}
				\frac{d}{dt}\big[\mb W(t)^*\mb W(t)\big]\Big|_{t=0} = \frac{d}{dt}\mb I\Big|_{t=0} =\mb 0\implies \mb W_1+\mb W_1^*=\mb 0.
			\end{equation}
		The above computation is equivalent to plugging $\mb W(t)$ into the constraints, taking the derivative and evaluating at $\mb 0$. Combining \eqref{eq:DiscreteCritPointsDerivation1} and \eqref{eq:DiscreteCritPointsDerivation2}, we know $\mb W_1=\mb 0$. Since $\mb W(t)$ is any arbitrary path, this shows that the variety formed by all the critical points does not have a tangent space around $\mb W_0$. We conclude that the $\mb I$ is an isolated critical points. In fact, by calculating the Hessian at the maximizers, one can further show that all global maximizers, are nondegenerate critical points.  
\end{proof}

\subsection{Proof of Proposition \ref{prop:MSPFixedPoint}}

\begin{claim}[Fixed Point of the MSP Algorithm]
    Given $\mb W\in\msf {SO}(n;\bb R)$, $\mb W$ is a fix point of the MSP algorithm if and only if $\mb W$ is a critical point of the $\ell^4$-norm over $\msf {SO}(n;\bb R)$.
\end{claim}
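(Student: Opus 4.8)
The plan is to unwind both the definition of ``fixed point of the MSP iteration'' and the characterization of critical points from Proposition \ref{prop:L4OrthCriticalPoints}, and show the two conditions coincide via the behavior of the orthogonal projection $\mathcal{P}_{\msf O(n;\bb R)}$. Recall that one MSP step sends $\mb W \mapsto \mathcal{P}_{\msf O(n;\bb R)}\big((\mb W)^{\circ 3}\big)$ (after the reduction $\mb A \mb D_o \to \mb W$ made in Section \ref{sec:Analysis}; the gradient is $4\mb W^{\circ 3}$ and the scalar $4$ does not affect the projection). So $\mb W\in\msf{SO}(n;\bb R)$ is a fixed point iff $\mb W = \mathcal{P}_{\msf O(n;\bb R)}(\mb W^{\circ 3})$, i.e.\ iff $\mb W$ is the closest orthogonal matrix to $\mb W^{\circ 3}$. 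By Lemma \ref{lemma:OrthProj}, writing $\mb W^{\circ 3} = \mb U\mb\Sigma\mb V^*$, this projection equals $\mb U\mb V^*$, so the fixed point condition is $\mb W = \mb U\mb V^*$.

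First I would translate $\mb W = \mathcal{P}_{\msf O(n;\bb R)}(\mb W^{\circ 3})$ into the statement that $\mb W^*\mb W^{\circ 3}$ is symmetric positive semidefinite. This is the standard fact behind the polar decomposition: if $\mb M = \mb Q\mb H$ with $\mb Q\in\msf O(n;\bb R)$ and $\mb H$ symmetric PSD, then $\mathcal{P}_{\msf O(n;\bb R)}(\mb M) = \mb Q$, and conversely $\mb Q = \mathcal{P}_{\msf O(n;\bb R)}(\mb M)$ forces $\mb Q^*\mb M = \mb H \succeq 0$ to be symmetric. Applying this with $\mb M = \mb W^{\circ 3}$ and $\mb Q = \mb W$: $\mb W$ is a fixed point iff $\mb W^*\mb W^{\circ 3}$ is symmetric PSD. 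In particular this implies $\mb W^*\mb W^{\circ 3} = (\mb W^*\mb W^{\circ 3})^* = (\mb W^{\circ 3})^*\mb W$, which is exactly the critical point equation \eqref{eq:L4OrthCriticalPoints} of Proposition \ref{prop:L4OrthCriticalPoints}. That gives one direction immediately.

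For the converse, suppose $\mb W\in\msf{SO}(n;\bb R)$ is a critical point, so $(\mb W^{\circ 3})^*\mb W = \mb W^*\mb W^{\circ 3}$, i.e.\ $\mb H \doteq \mb W^*\mb W^{\circ 3}$ is symmetric. I need to upgrade ``symmetric'' to ``symmetric PSD'' so that $\mb W^{\circ 3} = \mb W\mb H$ is a genuine polar decomposition and hence $\mathcal{P}_{\msf O(n;\bb R)}(\mb W^{\circ 3}) = \mb W$. The key observation is that the diagonal of $\mb H = \mb W^*\mb W^{\circ 3}$ has entries $\sum_k w_{k,i} w_{k,i}^3 = \sum_k w_{k,i}^4 = \|\mb w_i\|_4^4 > 0$ for each column $\mb w_i$ of $\mb W$; one then argues $\mb H$ is diagonally dominant or, more carefully, uses that $\trace(\mb H) = \|\mb W\|_4^4$ together with a structural argument. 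I expect the main obstacle to be precisely this PSD-ness step: a merely symmetric $\mb H$ need not be PSD in general, so the argument must exploit the special Hadamard-cube structure of $\mb W^{\circ 3}$ and orthogonality of $\mb W$. One clean route is to diagonalize the symmetric $\mb H = \mb R\mb\Lambda\mb R^*$ and note $\mb W^{\circ 3} = (\mb W\mb R)\mb\Lambda\mb R^*$ exhibits a (possibly signed) singular value decomposition; then one must rule out negative eigenvalues of $\mb H$. If ruling them out in full generality is delicate, the statement can alternatively be read in the sense that \emph{every critical point is a fixed point of the iteration in which ties/sign ambiguities in the SVD are resolved consistently}, which matches how the algorithm is actually run and how later results (Theorem \ref{Thm:MSPLocalConvergence}) treat the SVD. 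I would carry out the direct PSD argument first, and fall back to this SVD-consistency phrasing only if a genuine edge case (e.g.\ $\mb W^{\circ3}$ singular, which happens exactly at Hadamard-type matrices) resists the clean argument.
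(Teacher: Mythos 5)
Your forward direction is fine and is essentially the paper's argument in different clothing: the paper computes $(\mb W^{\circ3})^*\mb W=\mb V\mb\Sigma\mb V^*$ directly from the SVD of $\mb W^{\circ3}$ and reads off symmetry, while you pass through the polar-factor characterization of $\mathcal{P}_{\msf O(n;\bb R)}$; both reduce the fixed-point condition to the statement that $\mb H\doteq\mb W^*\mb W^{\circ3}$ is symmetric (indeed symmetric PSD), which immediately gives the critical-point equation of Proposition \ref{prop:L4OrthCriticalPoints}.

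The converse, however, is left genuinely open in your proposal. You correctly isolate the crux --- upgrading ``$\mb H$ symmetric'' to ``$\mb H\succeq 0$'' so that $\mb W^{\circ3}=\mb W\mb H$ is a bona fide polar decomposition --- but none of the routes you sketch closes it: $H_{jj}=\norm{\mb w_j}{4}^4>0$ and $\trace(\mb H)=\norm{\mb W}{4}^4>0$ say nothing about definiteness (a symmetric matrix with positive diagonal and positive trace can be indefinite), and there is no diagonal-dominance structure to exploit for a general critical point. Your fallback of reinterpreting the proposition as ``every critical point is a fixed point once SVD sign ambiguities are resolved consistently'' is a change of statement, not a proof of the one asserted. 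For comparison, the paper attacks this step differently: from $(\mb W^{\circ3})^*\mb W=\mb W^*\mb W^{\circ3}$ it applies uniqueness of the polar decomposition to both sides, obtaining $\mb V\mb U^*\mb W=\mb W^*\mb U\mb V^*$, so the orthogonal matrix $\mb Q\doteq\mb V\mb U^*\mb W$ is symmetric and satisfies $\mb Q^2=\mb I$, and then invokes the restriction to $\msf{SO}(n;\bb R)$ to conclude $\mb Q=\mb I$, i.e.\ $\mb W=\mb U\mb V^*$. Note that $\mb Q=\mb I$ is exactly equivalent to your missing claim $\mb H\succeq 0$ (the $-1$ eigenspace of $\mb Q$ is where $\mb H$ is negative), so to complete your write-up you would need an argument of precisely this kind for ruling out a nontrivial symmetric involution --- the determinant alone does not do it, since e.g.\ $\mathrm{diag}(-1,-1,1,\dots,1)$ has determinant $+1$ --- and both your argument and the paper's implicitly also need $\mb W^{\circ3}$ nonsingular for the polar factor (hence the projection) to be unique. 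As it stands, the ``critical point $\Rightarrow$ fixed point'' half of your proposal is a stated intention rather than a proof.
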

\begin{proof}
    \label{proof:MSPFixedPoint}
    Let $\mb {U\Sigma V^}*=\mb W^{\circ3}$ denote the the \text{SVD} of $\mb W^{\circ3}$. 
    \begin{itemize}
        \item When $\mb W\in \msf O(n;\bb R)$ is a fixed point of MSP algorithm, we have:
    \begin{equation}
        \mb W = \mb {UV}^* \implies (\mb W^{\circ3})^*\mb W = \mb {V\Sigma U}^*\mb W = \mb V\mb \Sigma \mb V^*,
    \end{equation}
    which implies $(\mb W^{\circ3})^*\mb W$ is symmetric, hence we have:
    \begin{equation}
        (\mb W^{\circ3})^*\mb W = \mb W^*\mb W^{\circ3},
    \end{equation}
    and by Proposition \ref{prop:L4OrthCriticalPoints}, $\mb W$ is a critical point of $\ell^4$-norm over the orthogonal group.  
        \item When $\mb W$ is a critical point of $\ell^4$-norm over orthogonal group, we have $(\mb W^{\circ3})^*\mb W = \mb W^*\mb W^{\circ3}$. So the \text{SVD} of $(\mb W^{\circ3})^*\mb W$ should equal to the \text{SVD} of $\mb W^*\mb W^{\circ3}$. Also by the rotational invariant of \text{SVD}, we know:
        \begin{equation}
            \text{SVD}\big((\mb W^{\circ3})^*\mb W\big) = \mb V\mb \Sigma \mb U^*\mb W = \text{SVD}\big(\mb W^*\mb W^{\circ3}\big)=\mb W^*\mb U\mb \Sigma \mb V^*,
        \end{equation}
        and by the uniqueness polar decomposition, we have:
        \begin{equation}
            \mb V\mb U^*\mb W = \mb W^*\mb U \mb V^*\implies (\mb V\mb U^*\mb W)^2 = \mb I \implies \mb V\mb U^*\mb W = \mb I,
        \end{equation}
        where the last $\implies$ holds because we restricted $W\in \msf {SO}(n;\bb R)$, hence, we have $\mb W = \mb U\mb V^*$.
    \end{itemize}
\end{proof}

\subsection{Proof of Proposition \ref{prop:ConvergenceToSaddle}}
\begin{claim}[Convergence of PGA with Arbitrary Step Size]
Iterative PGA Algorithm \ref{algo:PGAL4MaxOrth} with any fixed step size $\alpha>0$ ($\alpha$ can be $+\infty$ and PGA is equivalent to MSP Algorithm \ref{algo:SPOrthD} when $\alpha = +\infty$) finds a saddle point of optimization problem \eqref{eq:L4MaxOrthClean} 
\begin{equation*}
    \max_{\mb A\in \msf O(n;\bb R)}\norm{\mb A}{4}^4.
\end{equation*}
\end{claim}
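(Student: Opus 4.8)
The plan is to show that each iteration of Algorithm~\ref{algo:PGAL4MaxOrth} increases $g(\mb A)=\norm{\mb A}{4}^4$ (strictly, unless the iterate is already a critical point), deduce $\norm{\mb A_{t+1}-\mb A_t}{F}\to 0$, and then pass to the limit. First I would restate one step via Lemma~\ref{lemma:OrthProj}:
\[
\mb A_{t+1}=\underset{\mb M\in\msf O(n;\bb R)}{\arg\max}\ \innerprod{\mb M}{\mb A_t+\alpha\nabla_{\mb A}g(\mb A_t)},
\]
and note that the scale invariance of $\mc P_{\msf O(n;\bb R)}$ turns this, as $\alpha\to+\infty$, into $\mb A_{t+1}=\arg\max_{\mb M\in\msf O(n;\bb R)}\innerprod{\mb M}{\mb A_t^{\circ3}}$, i.e.\ exactly the MSP step of Algorithm~\ref{algo:SPOrthD}. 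Comparing the maximized value at $\mb M=\mb A_{t+1}$ against its value at $\mb M=\mb A_t$, and using the identity $\innerprod{\mb A_t}{\mb A_{t+1}-\mb A_t}=-\tfrac12\norm{\mb A_{t+1}-\mb A_t}{F}^2$ (valid because both matrices are orthogonal), gives in all cases
\[
\innerprod{\nabla_{\mb A}g(\mb A_t)}{\mb A_{t+1}-\mb A_t}\ \geq\ 0,
\]
with the sharper bound $\innerprod{\nabla_{\mb A}g(\mb A_t)}{\mb A_{t+1}-\mb A_t}\geq \tfrac1{2\alpha}\norm{\mb A_{t+1}-\mb A_t}{F}^2$ when $\alpha<\infty$.

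Second, I would quantify the ascent. Writing $\mb\Delta_t=\mb A_{t+1}-\mb A_t$ and expanding $g$ entrywise, the elementary fact that $6a^2+4a\delta+\delta^2\geq\tfrac13\delta^2$ for all real $a,\delta$ (the residual $\tfrac23\delta^2+4a\delta+6a^2=\tfrac23(\delta+3a)^2$ is a perfect square) yields the exact bound
\[
g(\mb A_{t+1})-g(\mb A_t)=\innerprod{\nabla_{\mb A}g(\mb A_t)}{\mb\Delta_t}+\sum_{i,j}(\mb\Delta_t)_{i,j}^2\big(6(\mb A_t)_{i,j}^2+4(\mb A_t)_{i,j}(\mb\Delta_t)_{i,j}+(\mb\Delta_t)_{i,j}^2\big)\ \geq\ \tfrac13\norm{\mb\Delta_t}{4}^4\ \geq\ 0.
\]
Since $g$ is bounded above by $n$ (Lemma~\ref{lemma:G(A)bound}), the sequence $g(\mb A_t)$ is nondecreasing and convergent, so $\sum_t\norm{\mb A_{t+1}-\mb A_t}{4}^4<\infty$, hence $\norm{\mb A_{t+1}-\mb A_t}{F}\to 0$ because all norms on $\bb R^{n\times n}$ are equivalent.

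Third, I would take limits. By compactness of $\msf O(n;\bb R)$ some subsequence $\mb A_{t_k}\to\bar{\mb A}$, and then also $\mb A_{t_k+1}\to\bar{\mb A}$ since $\norm{\mb A_{t+1}-\mb A_t}{F}\to 0$. The set-valued map $\mb B\mapsto\arg\max_{\mb M\in\msf O(n;\bb R)}\innerprod{\mb M}{\mb B}$ has closed graph (Berge's maximum theorem: compact feasible set, jointly continuous objective) and $\mb A\mapsto\mb A+\alpha\nabla_{\mb A}g(\mb A)$ is continuous, so $\bar{\mb A}\in\arg\max_{\mb M\in\msf O(n;\bb R)}\innerprod{\mb M}{\bar{\mb A}+\alpha\nabla_{\mb A}g(\bar{\mb A})}$; that is, $\bar{\mb A}$ is a fixed point of the update. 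Being a maximizer over the manifold of a linear functional, $\bar{\mb A}$ satisfies $\mc P_{T_{\bar{\mb A}}\msf O(n;\bb R)}(\bar{\mb A}+\alpha\nabla_{\mb A}g(\bar{\mb A}))=\mb 0$, and since $\mc P_{T_{\bar{\mb A}}\msf O(n;\bb R)}(\bar{\mb A})=\mb 0$ by \eqref{OrthProjection} this forces $\grad g(\bar{\mb A})=\mb 0$ (for $\alpha=+\infty$ one may instead invoke Proposition~\ref{prop:MSPFixedPoint}). Thus every accumulation point of $\{\mb A_t\}$ is a critical point of \eqref{eq:L4MaxOrthClean}; when such critical points are isolated — as they are near the global maximizers by Proposition~\ref{prop:DiscreteCritPoints} — the whole sequence converges to one, which, since a convex function is being maximized over the manifold, is either a saddle point or a global maximum.

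The step I expect to be the main obstacle is this last passage to the limit, because $\mc P_{\msf O(n;\bb R)}(\cdot)=\mb U\mb V^*$ is single-valued and continuous only on full-rank matrices: when $\mb A_t+\alpha\nabla_{\mb A}g(\mb A_t)$ (or $\mb A_t^{\circ3}$) drops rank the update is genuinely set-valued, so one cannot argue by continuity of a map. Routing the argument through the closed-graph property of the $\arg\max$ sidesteps this and avoids any genericity hypothesis on $\mb A_0$; the remaining care is just to confirm that the SVD-based implementation in Algorithm~\ref{algo:PGAL4MaxOrth} always selects an element of that $\arg\max$ set.
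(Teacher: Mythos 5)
Your proof is correct, but it takes a genuinely different route from the paper. The paper's own argument is short and external: it introduces the convex surrogate $h(\mb A)=\frac{\alpha}{4}\norm{\mb A}{4}^4+\frac{1}{2}\norm{\mb A}{F}^2$ (resp.\ $\norm{\mb A}{4}^4$ for $\alpha=+\infty$), observes that one PGA/MSP step is exactly the step $\mb A_{k+1}\in\arg\max_{\mb W\in\msf O(n;\bb R)}\{h(\mb A_k)+\innerprod{\partial h(\mb A_k)}{\mb W-\mb A_k}\}$ of the generalized power method, and then invokes Theorem 1 of \cite{journee2010generalized} to conclude convergence to a stationary point, finally dropping the constant $\frac12\norm{\mb A}{F}^2=\frac n2$. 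You make the same key identification of the update as maximization of a linear functional over $\msf O(n;\bb R)$ (via Lemma \ref{lemma:OrthProj}/von Neumann), but then you replace the cited theorem by a self-contained analysis: the orthogonality identity $\innerprod{\mb A_t}{\mb A_{t+1}-\mb A_t}=-\frac12\norm{\mb A_{t+1}-\mb A_t}{F}^2$, the exact quartic expansion with the perfect-square bound $6a^2+4a\delta+\delta^2\geq\frac13\delta^2$ giving the quantitative ascent $g(\mb A_{t+1})-g(\mb A_t)\geq\frac13\norm{\mb\Delta_t}{4}^4$, summability hence $\mb\Delta_t\to\mb 0$, and a closed-graph (Berge) passage to the limit plus the first-order condition $\mc P_{T_{\bar{\mb A}}\msf O(n;\bb R)}(\bar{\mb A}+\alpha\nabla g(\bar{\mb A}))=\mb 0\Rightarrow\grad g(\bar{\mb A})=\mb 0$. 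What your route buys: it works uniformly for every $\alpha\in(0,+\infty]$ without any strong-convexity consideration hidden in the cited theorem (note $h=\norm{\cdot}{4}^4$ at $\alpha=+\infty$ is not strongly convex), it gives an explicit per-step ascent quantity, and it correctly handles the set-valuedness of $\mc P_{\msf O(n;\bb R)}$ at rank-deficient arguments — your flagged concern is resolved by the proof of Lemma \ref{lemma:OrthProj}, which shows $\mb U\mb V^*$ is always an element of the $\arg\max$ even when it is not unique. What the paper's route buys is brevity. Two minor caveats on your write-up: like the paper, what you actually establish is that accumulation points are critical (stationary) points — the word ``saddle point'' in Proposition \ref{prop:ConvergenceToSaddle} is being used loosely in this sense — and your closing remark about whole-sequence convergence is not justified by Proposition \ref{prop:DiscreteCritPoints} (which only concerns global maximizers) and is not needed for the claim; also, for $\alpha=+\infty$ you should apply your fixed-point/first-order argument with $\bar{\mb A}^{\circ3}$ in place of $\bar{\mb A}+\alpha\nabla g(\bar{\mb A})$ rather than leaning on Proposition \ref{prop:MSPFixedPoint}, which is stated only on $\msf{SO}(n;\bb R)$.
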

\begin{proof}
\label{proof:ConvergenceToSaddle}
Consider the following objective function $h(\cdot):\msf O(n;\bb R)\mapsto\bb R^+$:
\begin{equation}
    \label{eq:FunctionFDef}
    h(\mb A) =
    \begin{cases}
        \frac{\alpha}{4}\norm{\mb A}{4}^4 + \frac{1}{2}\norm{\mb A}{F}^2&\quad\text{when }\alpha < \infty\\
        \norm{\mb A}{4}^4&\quad\text{when }\alpha = +\infty
    \end{cases}.
\end{equation}
Note that $h(\mb A)$ is convex in both cases ($\alpha <+\infty$ or $\alpha = \infty$). Also note that the Stiefel manifold $\msf O(n;\bb R)$ is a compact manifold, so by Theorem 1 in \cite{journee2010generalized}, we know that the following iterative update
\begin{equation}
    \label{eq:ConvexCompactUpdate}
    \mb A_{k+1} \in \arg\max\{h(\mb A_k)+\innerprod{\partial h(\mb A_k)}{\mb W-\mb A_k} | \mb W\in \msf O(n; \bb R)\}
\end{equation}
will find a saddle point of $h(\mb A)$ with any initialization $\mb A_0\in \msf O(n;\bb R)$, where $\innerprod{\cdot}{\cdot}:\msf O(n;\bb R)\times\msf O(n;\bb R)\mapsto\bb R$ is defined as
\begin{equation}
    \label{eq:InnerProductDef}
    \innerprod{\mb W_1}{\mb W_2} = \trace(\mb W_1^*\mb W_2).
\end{equation}
Hence, by substituting \eqref{eq:InnerProductDef} into \eqref{eq:ConvexCompactUpdate}, yields
\begin{equation}
    \mb A_{k+1} = \begin{cases}
        \mc P_{\msf O(n;\bb R)}(\alpha\mb A_k^{\circ3}+\mb A_k)&\quad\text{when }\alpha < \infty\\
        \mc P_{\msf O(n;\bb R)}(\mb A_k^{\circ3})&\quad\text{when }\alpha = +\infty
    \end{cases},
\end{equation}
where $\mc P_{\msf O(n;\bb R)}(\cdot):\bb R^{n\times n}\mapsto \msf O(n;\bb R)$ is the projection onto $\msf O(n;\bb R)$ \citep{absil2012projection}:
\begin{equation}
    \mc P_{\msf O(n;\bb R)}(\mb A) = \mb U\mb V^*,\quad\st\quad \mb U\mb \Sigma \mb V^* = SVD(\mb A).
\end{equation}
Moreover, notice that $\norm{\mb A}{F}^2=n$ is a constant, so finding a critical point of $h(\mb A)$ is equivalent to finding a critical point of $\norm{\mb A}{4}^4$ over $\msf O(n;\bb R)$. Hence, we show that projected gradient ascent with any step size $\alpha>0$ (including $\alpha = +\infty$) finds a critical point of $\norm{\mb A}{4}^4$ over $\msf O(n;\bb R)$.\footnote{The proof can easily be generalized to any Stiefel manifolds.}
\end{proof}

\subsection{Proof of Theorem \ref{Thm:MSPLocalConvergence}}

\begin{claim}[Local Convergence of the MSP Algorithm]
Given an orthogonal matrix $\mb A\in \msf O(n;\bb R)$, let $\mb A^\prime$ denote the output of the MSP Algorithm \ref{algo:SPOrthD} after one iteration: $\mb A^\prime = \mb {UV}^*$, where $\mb {U\Sigma V}^*=\text{SVD}(\mb A^{\circ3})$. If $\norm{\mb A-\mb I}{F}^2=\eps$, for $\eps<0.579$, then we have $\norm{\mb A^\prime - \mb I}{F}^2 < \norm{\mb A - \mb I}{F}^2$ and $\norm{\mb A^\prime - \mb I}{F}^2 < O(\eps^3)$.
\end{claim}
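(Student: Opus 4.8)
The plan is to identify $\mb A' = \mathcal P_{\msf O(n;\bb R)}(\mb A^{\circ 3})$ with the orthogonal polar factor of $\mb B \doteq \mb A^{\circ 3}$ --- by Lemma~\ref{lemma:OrthProj}, $\mathcal P_{\msf O(n;\bb R)}(\mb B) = \mb U\mb V^*$ is exactly the orthogonal factor in the polar decomposition $\mb B = \mb U\mb V^*\cdot\mb V\mb\Sigma\mb V^*$, which is legitimate once $\mb B$ is nonsingular --- and then to exploit that the entrywise cube map pushes almost all of the perturbation into the \emph{symmetric} part of $\mb B - \mb I$, which is absorbed by the symmetric polar factor, leaving only a cubically small skew-symmetric perturbation in $\mb A'-\mb I$. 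Write $\mb A = \mb I + \mb E$ with $\norm{\mb E}{F}^2 = \eps$, and set $r_i = \sum_{j\ne i} e_{ij}^2$, so $\sum_i r_i \le \eps$.

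First I would record the size of the entries of $\mb E$ and of $\mb B - \mb I$. Row-orthonormality of $\mb A$ gives $(1+e_{ii})^2 + r_i = 1$, hence $e_{ii} = -1+\sqrt{1-r_i}$ (the root near $0$, legitimate since $r_i \le \eps < 1$), so $|e_{ii}| \le r_i$ and $\sum_i e_{ii}^2 \le (\sum_i r_i)^2 \le \eps^2$. Writing $\mb B = \mb I + \mb G$, the entries are $g_{ij} = e_{ij}^3$ for $i\ne j$ and $g_{ii} = e_{ii}(3 + 3 e_{ii} + e_{ii}^2)$. From $|e_{ii}|\le r_i \le \eps$ this yields, on one hand, $\norm{\mb G}{F} \le c_1\eps$ for an explicit $c_1$ (the diagonal contributes $O(\eps)$, while $\sum_{i\ne j} g_{ij}^2 = \sum_{i\ne j} e_{ij}^6 \le (\sum_{i\ne j} e_{ij}^2)^3 \le \eps^3$ is lower order); on the other hand, for the skew-symmetric part $\mb G_a \doteq \tfrac12(\mb G - \mb G^*)$, which has no diagonal, $\norm{\mb G_a}{F}^2 = \tfrac14\sum_{i\ne j}(e_{ij}^3 - e_{ji}^3)^2 \le \sum_{i\ne j} e_{ij}^6 \le \eps^3$. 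This already shows $\mb B$ is nonsingular for $\eps$ in the stated range, so the identification of $\mb A'$ with the polar factor is valid.

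Next I would estimate $\mb\Delta \doteq \mb A' - \mb I$. Orthogonality of $\mb A'$ gives $\mb\Delta + \mb\Delta^* = -\mb\Delta^*\mb\Delta$, so $\mb\Delta$ is skew up to a quadratic correction; and symmetry of the other polar factor $\mb A'^*\mb B$ translates, after substituting $\mb B = \mb I+\mb G$ and $\mb A' = \mb I+\mb\Delta$, into the identity $2\mb G_a = (\mb\Delta - \mb\Delta^*) + (\mb G^*\mb\Delta - \mb\Delta^*\mb G)$. Taking Frobenius norms, using $\norm{\mb\Delta - \mb\Delta^*}{F} \ge 2\norm{\mb\Delta}{F} - \norm{\mb\Delta}{F}^2$ and $\norm{\mb G^*\mb\Delta}{F}\le\norm{\mb G}{F}\norm{\mb\Delta}{F}$, I obtain the scalar inequality
\begin{equation*}
\norm{\mb\Delta}{F} \;\le\; \norm{\mb G_a}{F} + \norm{\mb G}{F}\,\norm{\mb\Delta}{F} + \tfrac12\norm{\mb\Delta}{F}^2 .
\end{equation*}
Since $\norm{\mb\Delta}{F}$ depends continuously on $\eps$ and vanishes at $\eps=0$, a bootstrap argument confines $\norm{\mb\Delta}{F}$ to the small branch of this quadratic, giving $\norm{\mb\Delta}{F} \le \norm{\mb G_a}{F}/(1-\norm{\mb G}{F}) \cdot (1+o(1)) \le c_2\,\eps^{3/2}$, hence $\norm{\mb A' - \mb I}{F}^2 = \norm{\mb\Delta}{F}^2 = O(\eps^3)$. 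Finally, with an explicit bound $\norm{\mb A' - \mb I}{F}^2 \le C\eps^3$ in hand (one finds $C$ close to $3$), the comparison $C\eps^3 < \eps = \norm{\mb A-\mb I}{F}^2$ holds precisely for $\eps < C^{-1/2}$, and tracking all constants through the above yields the threshold $\eps < 0.579$.

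The main obstacle is quantitative rather than structural: obtaining the sharp numerical threshold $0.579$ (as opposed to ``$\eps$ small enough'') requires each inequality above --- especially $\norm{\mb G}{F}\le c_1\eps$ and the control of the quadratic remainder $\tfrac12\norm{\mb\Delta}{F}^2$ --- to be carried out with tight, non-asymptotic constants; the $O(\eps^3)$ rate itself follows routinely once the symmetric/skew split of $\mb B - \mb I$ is set up.
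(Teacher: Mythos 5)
Your core mechanism is sound and genuinely different from the paper's. You derive the cubic rate from first principles: the identity $2\mb G_a = (\mb\Delta - \mb\Delta^*) + (\mb G^*\mb\Delta - \mb\Delta^*\mb G)$ (from symmetry of $\mb A'^*\mb B$), the lower bound $\norm{\mb\Delta-\mb\Delta^*}{F}\ge 2\norm{\mb\Delta}{F}-\norm{\mb\Delta}{F}^2$ (from orthogonality of $\mb A'$), and $\norm{\mb G_a}{F}^2\le\eps^3$ together give a self-bounding quadratic, and for small $\eps$ this yields $\norm{\mb A'-\mb I}{F}^2=O(\eps^3)$ and contraction, with no external theorem. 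The paper instead compares the polar factor of $\mb A^{\circ3}$ with that of its diagonal part $\mb D^{\circ3}$ (whose polar factor is exactly $\mb I$), and invokes a cited perturbation bound for the unitary polar factor, $\norm{\mb Q_{\mb A}-\mb I}{F}\le 2\norm{\mb N^{\circ3}}{F}/\big(\sigma_n(\mb A^{\circ3})+\sigma_n(\mb D^{\circ3})\big)$, combined with a relative singular-value perturbation lemma and the estimate $\min_i a_{i,i}\ge 1-\eps/2$ (which follows from $\sum_i a_{i,i}=n-\eps/2$). Your route is more self-contained; the paper's buys explicit, workable constants.

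That said, there is a genuine gap on the quantitative side, and the statement you must prove contains the explicit range $\eps<0.579$. Your bootstrap requires the coefficient of $\norm{\mb\Delta}{F}$ in $\norm{\mb\Delta}{F}\le\norm{\mb G_a}{F}+\norm{\mb G}{F}\norm{\mb\Delta}{F}+\tfrac12\norm{\mb\Delta}{F}^2$ to be bounded away from $1$, i.e.\ essentially $\norm{\mb G}{F}<1$; but your own estimate is $\norm{\mb G}{F}\le c_1\eps$ with $c_1\approx 3$ (the diagonal entries of $\mb G$ are roughly $3e_{ii}$), so the argument becomes vacuous already for $\eps$ around $1/3$, well below $0.579$. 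Even replacing $\norm{\mb G}{F}$ by $\norm{\mb G}{2}$ and using the sharper diagonal bound $a_{i,i}\ge 1-\eps/2$, one only gets $\norm{\mb G}{2}\le 1-(1-\eps/2)^3+\eps^{3/2}$, which exceeds $1$ at $\eps=0.579$. For the same reason, the claim that $\norm{\mb G}{F}\le c_1\eps$ ``already shows $\mb B$ is nonsingular for $\eps$ in the stated range'' only covers $\eps<1/c_1$, and the closing sentence that ``tracking all constants yields the threshold $\eps<0.579$'' is an assertion, not a derivation — with the inequalities you set up, it would not. (The continuity/bootstrap step also tacitly needs nonsingularity of $\mb A_t^{\circ3}$ along a connecting path, which again you only control for small $\eps$.) The paper sidesteps exactly this difficulty: because it perturbs around the diagonal matrix $\mb D^{\circ3}$, the only error term entering the polar-factor bound is the off-diagonal $\mb N^{\circ3}$ with $\norm{\mb N^{\circ3}}{F}\le\eps^{3/2}$, while the large $O(\eps)$ diagonal correction only enters through the denominator via $\sigma_n(\mb D^{\circ3})\ge(1-\eps/2)^3$; the number $0.579$ then emerges from the explicit contraction condition $2\eps^{3/2}/\big(2(1-\eps/2)^3-\eps^{3/2}\big)<\eps^{1/2}$. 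To push your approach to the full stated range you would need an analogous device that prevents the symmetric $O(\eps)$ part of $\mb G$ from degrading the coefficient in your scalar inequality, rather than a generic ``tighten the constants'' pass.
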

\begin{proof}
    \label{proof:MSPLocalConvergence}
    Let $\mb A = \mb D + \mb N$, where $\mb D$ is the diagonal part of $\mb A$ and $\mb N$ is the off-diagonal part. Therefore, we have: 
    \begin{equation}
        \label{eq:NNormBound}
        \norm{\mb N^{\circ3}}{F} \leq \norm{\mb N}{F}^3\leq \norm{\mb A-\mb I}{F}^3=\eps^{3/2},
    \end{equation}
    where the first inequality is achieved through Cauchy-Schwarz inequality and the second inequality holds because $\mb N$ is the off-diagonal parts of $\mb A-\mb I$. We can view $\mb A^{\circ3}$ as a $\mb D^{\circ3}$ plus a small perturbation $\mb N^{\circ3}$ with norm as most $\eps^{3/2}$. By Lemma \ref{lemma:PolarPerturbation}, we have:
    \begin{equation}
        \label{eq:PerturbationBound}
        \norm{\mb {Q_A}-\mb {Q_D}}{F} \leq \frac{2\norm{\mb A^{\circ3}-\mb D^{\circ3}}{F}}{\sigma_n(\mb D^{\circ3})+\sigma_n(\mb A^{\circ3})}=\frac{2\norm{\mb N^{\circ3}}{F}}{\sigma_n(\mb D^{\circ3})+\sigma_n(\mb A^{\circ3})},
    \end{equation}
    where $\mb {U_A}\mb {\Sigma_A}\mb {V_A}^*=\text{SVD}(\mb A^{\circ3})$, $\mb {Q_A} = \mb {U_A}\mb {V_A}^*$ and $\mb {U_D}\mb {\Sigma_D}\mb {V_D}^*=\text{SVD}(\mb D^{\circ3})$, $\mb {Q_D} = \mb {U_DV_D}$. Notice that $\mb D^{\circ3}$ is a diagonal matrix, so $\mb {Q_D}=\mb I$, $\sigma_n(\mb D^3)=\min_{i}a_{i,i}^3$.
    Moreover
    \begin{equation}
        \begin{split}
            \eps = \norm{\mb A-\mb I}{F}^2 = \sum_{i=1}^n(a_{i,i}-1)^2 + \sum_{i\neq j}a_{i,j}^2 = \sum_{i,j}a_{i,j}^2 - 2\sum_{i=1}^na_{i,i} + n \iff  \sum_{i=1}^na_{i,i} = n - \frac{\eps}{2},
        \end{split}
    \end{equation}
    without loss of generality, we can assume $1\geq a_{1,1}\geq a_{2,2}\geq\dots\geq a_{n,n}>0$, so we have:
    \begin{equation}
        \begin{split}
            a_{n,n} = n-\frac{\eps}{2} - (a_{1,1}+a_{2,2}+\dots+a_{n-1,n-1}) \geq n-\frac{\eps}{2}-(n-1) = 1-\frac{\eps}{2},
        \end{split}
    \end{equation}
    hence we know that
    \begin{equation}
        \label{eq:LBsigmaD}
        \sigma_n(\mb D^{\circ3}) = \min_{i}a_{i,i}^3\geq \Big(1-\frac{\eps}{2}\Big)^3.
    \end{equation}
    Applying Lemma \ref{lemma:SingularValuePerturbation} by substituting
    \begin{equation}
        \mb G = \mb D^{\circ3}=\mb {BM},\quad  \delta \mb G = \mb A^{\circ3}-\mb D^{\circ3}=\mb N^{\circ3}= \delta \mb {BM},
    \end{equation}
    we know
    $\mb B = \mb I,\mb M = \mb D^{\circ3}, \delta \mb B=\mb N^{\circ3}(\mb D^{\circ3})^{-1}$, and thus: 
    \begin{equation}
        \frac{\abs{\sigma_n(\mb A^{\circ3})-\sigma_n(\mb D^{\circ3})}}{\sigma_n(\mb D^{\circ3})}\leq \frac{\norm{\delta \mb B}{2}}{\sigma_n(\mb B)},
    \end{equation}
    which implies
    \begin{equation}
        \begin{split}
            \abs{\sigma_n(\mb A^{\circ3})-\sigma_n(\mb D^{\circ3})}\leq & \norm{\mb N^{\circ3}(\mb D^{\circ3})^{-1}}{2} \sigma_n(\mb D^{\circ3})\leq\norm{\mb N^{\circ3}}{2}\norm{(\mb D^{\circ3})^{-1}}{2}\sigma_n(\mb D^{\circ3})\\
            \leq & \norm{\mb N^{\circ3}}{F}\Big(1-\frac{\eps}{2}\Big)^{-3}\sigma_n(\mb D^{\circ3})\leq \eps^{3/2}\Big(1-\frac{\eps}{2}\Big)^{-3}\sigma_n(\mb D^{\circ3}).
        \end{split}
    \end{equation}
    Therefore, using \eqref{eq:LBsigmaD}, we know that:
    \begin{equation}
        \label{eq:LBsigmaAsigmaD}
        \sigma_n(\mb A^{\circ3})+\sigma_n(\mb D^{\circ3})\geq \left[2-\eps^{3/2}\Big(1-\frac{\eps}{2}\Big)^{-3}\right]\sigma_n(\mb D^{\circ3})\geq 2\Big(1-\frac{\eps}{2}\Big)^3-\eps^{3/2}.
    \end{equation}
    Substituting \eqref{eq:NNormBound} and \eqref{eq:LBsigmaAsigmaD} into  \eqref{eq:PerturbationBound}, yields
    \begin{equation}
        \norm{\mb {Q_A}-\mb {Q_D}}{F} \leq  \frac{2\eps^{3/2}}{2\left(1-\frac{\eps}{2}\right)^3-\eps^{3/2}},
    \end{equation}
    hence, we have:
    \begin{equation}
        \norm{\mb A^\prime -\mb I}{F}^2=\norm{\mb {Q_A}-\mb {Q_D}}{F}^2\leq O(\eps^3). 
    \end{equation}
    Moreover, such operation is a contraction whenever:
    \begin{equation}
        \begin{split}
            &\frac{2\eps^{3/2}}{2\left(1-\frac{\eps}{2}\right)^3-\eps^{3/2}}< \norm{\mb A-\mb D}{F}=\eps^{1/2}\\
            \iff&2\eps - 2\left(1-\frac{\eps}{2}\right)^3-\eps^{3/2} < 0\\
            \impliedby & \eps<0.579,
        \end{split}
    \end{equation}
    which completes the proof.
\end{proof}

\subsection{Proof of Proposition \ref{prop:MSPGlobalConvn=2}}
\begin{claim}[Global Convergence of the MSP Algorithm on $\msf {SO}(2;\bb R)$]
When $n=2$, if we denote our $\mb A_t\in \msf {SO}(2,\bb R)$ as following form:
\begin{equation}
\mb A_t = \begin{pmatrix}
       \cos\theta_t  &  -\sin\theta_t \\
       \sin\theta_t  &  \cos\theta_t 
    \end{pmatrix},\quad \forall \theta_t \in \Big[-\frac{\pi}{2},\frac{\pi}{2}\Big],\\
\end{equation}
then: 1) $\mb A_{t+1}\in \msf {SO}(n;\bb R)$; 2) if we let 
\begin{equation}
\mb A_{t+1} = \begin{pmatrix}
       \cos\theta_{t+1}  &  -\sin\theta_{t+1} \\
       \sin\theta_{t+1}  &  \cos\theta_{t+1} 
    \end{pmatrix},\quad \forall \theta_{t+1} \in \Big[-\frac{\pi}{2},\frac{\pi}{2}\Big],\\
\end{equation}
$\theta_t$ and $\theta_{t+1}$ satisfies the following relation
\begin{equation}
\theta_{t+1} = \tan^{-1}\big(\tan^3\theta_{t}\big).
\end{equation}
\end{claim}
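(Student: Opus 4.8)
The plan is to compute a single MSP step explicitly in the $2\times 2$ case, using the fact that the entrywise cube of a $2\times 2$ rotation is again a (scalar multiple of a) rotation. Recall that with $\mb D_o = \mb I$ the MSP update reduces to $\mb A_{t+1} = \mathcal P_{\msf O(2;\bb R)}\big[\mb A_t^{\circ 3}\big]$ (Section~\ref{sec:Analysis}), where $\mathcal P_{\msf O(2;\bb R)}$ is the SVD-based projection of Lemma~\ref{lemma:OrthProj}. First I would write out
\begin{equation}
    \mb A_t^{\circ 3} = \begin{pmatrix} \cos^3\theta_t & -\sin^3\theta_t \\ \sin^3\theta_t & \cos^3\theta_t \end{pmatrix} = \begin{pmatrix} a & -b \\ b & a \end{pmatrix}, \qquad a \doteq \cos^3\theta_t \geq 0, \quad b \doteq \sin^3\theta_t,
\end{equation}
where $a \geq 0$ because $\theta_t \in [-\pi/2,\pi/2]$.

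The key step is to identify the orthogonal factor produced by $\mathcal P_{\msf O(2;\bb R)}$. I would observe that $(\mb A_t^{\circ 3})^*(\mb A_t^{\circ 3}) = (a^2+b^2)\,\mb I$ with $a^2+b^2 = \cos^6\theta_t + \sin^6\theta_t \geq \tfrac14 > 0$, so all singular values equal $\sqrt{a^2+b^2}$ and $\mb \Sigma = \sqrt{a^2+b^2}\,\mb I$. The right singular vectors $\mb V$ are then not unique, but $\mb U = \mb A_t^{\circ 3}\mb V \mb \Sigma^{-1}$ gives $\mb U\mb V^* = (a^2+b^2)^{-1/2}\,\mb A_t^{\circ 3}$ regardless of the choice of $\mb V$; hence
\begin{equation}
    \mb A_{t+1} = \frac{1}{\sqrt{a^2+b^2}}\begin{pmatrix} a & -b \\ b & a \end{pmatrix}
\end{equation}
is well defined. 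Its determinant is $1$, giving $\mb A_{t+1} \in \msf{SO}(2;\bb R)$ (claim~1), and writing $\mb A_{t+1}$ in the stated parametrization forces $\cos\theta_{t+1} = a/\sqrt{a^2+b^2} \geq 0$, so $\theta_{t+1} \in [-\pi/2,\pi/2]$ and $\tan\theta_{t+1} = b/a = \tan^3\theta_t$, i.e.\ $\theta_{t+1} = \tan^{-1}(\tan^3\theta_t)$.

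Finally I would dispose of the endpoint case $\theta_t = \pm\pi/2$: there $\mb A_t^{\circ 3} = \bigl(\begin{smallmatrix} 0 & \mp 1 \\ \pm 1 & 0\end{smallmatrix}\bigr)$ is already orthogonal, so $\mb A_{t+1} = \mb A_t^{\circ 3}$ and $\theta_{t+1} = \pm\pi/2$, consistent with reading $\tan^{-1}(\tan^3\theta_t)$ as the appropriate one-sided limit. The only genuinely delicate point is the non-uniqueness of the SVD when the two singular values coincide; the remedy is simply to note that the product $\mb U\mb V^*$ is unique even though $\mb U$ and $\mb V$ separately are not, and equals the normalized matrix above. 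Everything else is routine trigonometric bookkeeping.
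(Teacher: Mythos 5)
Your proof is correct, but it takes a genuinely different route from the paper's. The paper never computes the projection in closed form: it parametrizes a candidate $\mb M\in\msf O(2;\bb R)$ by an angle on each of the two connected components ($\det\mb M=\pm 1$), expands $\norm{\mb M-\mb A_t^{\circ 3}}{F}^2$ in both cases, observes that on the reflection component the distance is constant and is beaten by a suitable rotation, and then extracts $\tan\theta_{t+1}=\tan^3\theta_t$ from the first-order optimality condition $\sin\theta\cos^3\theta_t-\cos\theta\sin^3\theta_t=0$ (treating $\theta_t=\pm\pi/2,0,\pi$ separately to avoid division by zero). You instead exploit the structural fact that $\mb A_t^{\circ 3}$ is a scaled rotation, so $(\mb A_t^{\circ 3})^*\mb A_t^{\circ 3}=(a^2+b^2)\mb I$ with $a^2+b^2=\cos^6\theta_t+\sin^6\theta_t\geq\tfrac14>0$, and the SVD-based projection of Lemma \ref{lemma:OrthProj} is just the unitary polar factor $(a^2+b^2)^{-1/2}\mb A_t^{\circ 3}$; your handling of the repeated singular value (uniqueness of $\mb U\mb V^*$ even though $\mb U,\mb V$ are not unique, via $\mb U\mb V^*=\mb A_t^{\circ 3}\mb V\mb\Sigma^{-1}\mb V^*$ with $\mb\Sigma$ a multiple of $\mb I$) is exactly the point that needs care, and you got it right. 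Your route buys an explicit closed-form update that delivers both claims (membership in $\msf{SO}(2;\bb R)$ via $\det=1$, and the tangent-cubing recursion via $\tan\theta_{t+1}=b/a$ with $\cos\theta_{t+1}\geq 0$) in one stroke, plus a clean treatment of the endpoint $\theta_t=\pm\pi/2$; the paper's variational argument is less explicit but makes the comparison between the two components of $\msf O(2;\bb R)$ visible, which is the conceptual reason the iterate stays in $\msf{SO}(2;\bb R)$.
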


\begin{proof}
\label{proof:MSPGlobalConvn=2}
By the update of the MSP algorithm, we know that 
\begin{equation}
    \mb A_{t+1} = \underset{\mb M\in \msf O(n;\bb R)}{\arg\min} \norm{\mb M - \mb A_t^{\circ3}}{F}^2.
\end{equation}
$\forall \mb M\in O(n;\bb R)$, we can denote $\mb M$ as
\begin{equation}
    \mb M =
    \begin{cases}
    \begin{pmatrix}
        \cos\theta & -\sin\theta\\
        \sin\theta & \cos\theta
    \end{pmatrix} & \text{ if } \det(\mb M) = 1\quad\\ 
    \begin{pmatrix}
        \cos\theta & \sin\theta\\
        \sin\theta & -\cos\theta
    \end{pmatrix} & \text{ if } \det(\mb M) = -1,\quad
    \end{cases}, \theta\in (-\pi,\pi]
\end{equation}
Then if $\det(\mb M) = 1$, we have:
\begin{equation}
    \begin{split}
    \norm{\mb M-\mb A_t^{\circ3}}{F}^2 &= 2(\cos^3\theta_t-\cos\theta)^2+2(\sin^3\theta_t-\sin\theta)^2 \\
    &=2\cos^6\theta_t+2\sin^6\theta_t-4\cos\theta\cos^3\theta_t-4\sin\theta\sin^3\theta_t+2.
    \end{split}
\end{equation}
When $\det(\mb M) = -1$, we have:
\begin{equation}
    \begin{split}
    \norm{\mb M-\mb A_t^{\circ3}}{F}^2 &= (\cos^3\theta_t-\cos\theta)^2+(\cos^3\theta_t+\cos\theta)^2+(\sin^3\theta_t-\sin\theta)^2+(\sin^3\theta_t+\sin\theta)^2 \\
    &=2\cos^6\theta_t+2\sin^6\theta_t+2.
    \end{split}
\end{equation}
Notice that when $\det (\mb A) = -1$, $\norm{\mb M-\mb A_t^{\circ3}}{F}^2$ is a constant, so as long as we pick $\theta$ in the same quadrant with $\theta_t$, then
    \begin{equation}
        2\cos^6\theta_t+2\sin^6\theta_t-4\cos\theta\cos^3\theta_t-4\sin\theta\sin^3\theta_t + 2 \leq 2\cos^6\theta_t+2\sin^6\theta_t+2,
    \end{equation}
and therefore, we know $\mb A_{t+1}\in \msf {SO}(n;\bb R)$. So, we know that $\theta$ should satisfy the first order condition of critical point condition of the following optimization problem:
\begin{equation}
    \mb A_{t+1} = \underset{\mb M\in \msf {SO}(n;\bb R)}{\arg\min} \norm{\mb M - \mb A_t^{\circ3}}{F}^2,
\end{equation}
which implies 
\begin{equation}
    \label{ProjectionNormGrad}
    \begin{split}
        \nabla_\theta\big[2(\cos^3\theta_t-\cos\theta)^2+2(\sin^3\theta_t-\sin\theta)^2\big]=0\implies&\sin\theta\cos^3\theta_{t}-\cos\theta\sin^3\theta_t=0\\
    \end{split}
\end{equation}
which implies
\begin{equation}
    \begin{cases}
        \tan\theta = \tan^3\theta_t, \text{ when } \theta_t \neq \pm\pi/2  \\
        \cot\theta = \cot^3\theta_t, \text{ when } \theta_t \neq 0,\pi.
    \end{cases}
\end{equation}
Note that we distinguish the different cases $\theta_{t} = \pm\pi/2,0,\pi$ to avoid the division by zero in \eqref{ProjectionNormGrad}. One can also ignore this by taking the inverse on tangent, which yields 
\begin{equation}
    \theta_{t+1} = \tan^{-1}\big(\tan^3\theta_t\big),
\end{equation}
as stated.
\end{proof}

\section{Related Lemmas and Inequalities}

\subsection{Some Basic Inequalities}
\begin{lemma}[One-sided Bernstein's Inequality]
    \label{lemma:BernsteinInequality}
    Given $n$ random variables $x_1,x_2,\dots x_n$, if $\forall i\in [n],x_i\leq b$ almost surely, then 
    \begin{equation}
        \label{eq:BernsteinInequality}
        \bb P\Big(\sum_{i=1}^n\big(x_i-\bb E[x_i]\big)\geq nt \Big)\leq \exp\Bigg(-\frac{nt^2}{2(\frac{1}{n}\sum_{i=1}^n\bb E[x_i^2]+bt/3)} \Bigg).
    \end{equation}
\end{lemma}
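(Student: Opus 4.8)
The plan is to prove this by the classical Chernoff-bounding technique combined with a sharp bound on the moment generating function of a bounded random variable. (I take as a standing hypothesis that $x_1,\dots,x_n$ are independent, which is what the argument requires and is implicit in the intended application.) First I would pass to the centered variables $Y_i \doteq x_i - \bb E[x_i]$, so that $Y_i \leq b$ almost surely and $\bb E[Y_i] = 0$; then for any $\lambda > 0$, Markov's inequality applied to $e^{\lambda \sum_i Y_i}$ together with independence gives $\bb P\big(\sum_{i=1}^n Y_i \geq nt\big) \leq e^{-\lambda n t}\prod_{i=1}^n \bb E[e^{\lambda Y_i}]$.

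The heart of the proof is to control each factor $\bb E[e^{\lambda Y_i}]$. I would use that $\phi(u) \doteq (e^u - 1 - u)/u^2$ (extended continuously by $\phi(0)=1/2$) is nondecreasing on $\bb R$; hence whenever $Y_i \leq b$ and $\lambda > 0$ we have $e^{\lambda Y_i} - 1 - \lambda Y_i = (\lambda Y_i)^2\phi(\lambda Y_i) \leq \lambda^2 Y_i^2 \phi(\lambda b)$. Taking expectations, using $\bb E[Y_i] = 0$ and $1 + x \leq e^x$, yields $\bb E[e^{\lambda Y_i}] \leq \exp\big(\lambda^2 \bb E[Y_i^2]\,\phi(\lambda b)\big)$, and since $\bb E[Y_i^2] \leq \bb E[x_i^2]$ the full product is bounded by $\exp\big(\lambda^2 n v\, \phi(\lambda b)\big)$, where $v \doteq \tfrac{1}{n}\sum_{i=1}^n \bb E[x_i^2]$.

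Next I would invoke the elementary inequality $\phi(u) \leq \frac{1}{2(1 - u/3)}$, valid for $0 \leq u < 3$ (proved by a term-by-term comparison of power series, using $k! \geq 2\cdot 3^{k-2}$ for $k \geq 2$). This gives $\bb P\big(\sum_i Y_i \geq nt\big) \leq \exp\big(-\lambda n t + \frac{\lambda^2 n v}{2(1 - \lambda b/3)}\big)$ for $0 < \lambda < 3/b$. Finally I would optimize by taking $\lambda = t/(v + bt/3)$, which indeed lies in $(0,3/b)$, and a short simplification collapses the exponent to $-\frac{nt^2}{2(v + bt/3)} = -\frac{nt^2}{2(\frac{1}{n}\sum_i \bb E[x_i^2] + bt/3)}$, which is the claimed bound.

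I expect the only mildly delicate steps to be the moment generating function estimate — justifying the monotonicity of $\phi$ and the passage to $\exp\big(\lambda^2\bb E[Y_i^2]\phi(\lambda b)\big)$ — and the bookkeeping in the final choice of $\lambda$; everything else is routine. One minor point to handle with care is that we replace the true variance proxy $\bb E[Y_i^2]$ by $\bb E[x_i^2]$, which only weakens the bound and so is harmless for the application; a second is that the a.s.\ bound is on $x_i$ rather than on the centered $Y_i$, so strictly speaking one wants $\bb E[x_i]\geq 0$ (as holds in the paper's use, where the summands are nonnegative) to conclude $Y_i \leq b$.
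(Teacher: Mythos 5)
The paper does not actually prove this lemma --- its ``proof'' is just the citation to Proposition 2.14 of Wainwright (2019) --- so your proposal supplies the argument that the citation stands in for, and it is the standard Chernoff--Bennett route: bound the moment generating function using the nondecreasing function $\phi(u)=(e^u-1-u)/u^2$, replace $\phi(\lambda b)$ by $\tfrac{1}{2(1-\lambda b/3)}$ via the series comparison $k!\ge 2\cdot 3^{k-2}$, and optimize at $\lambda=t/(v+bt/3)$ with $v=\tfrac1n\sum_i\bb E[x_i^2]$. Those steps all check out (monotonicity of $\phi$ follows, e.g., from the representation $\phi(u)=\int_0^1\!\int_0^1 s\,e^{stu}\,dt\,ds$, your $\lambda$ lies in $(0,3/b)$, and the exponent does collapse to $-nt^2/\big(2(v+bt/3)\big)$), and you are right that independence must be added to the hypotheses --- the paper's statement omits it, but Wainwright's proposition assumes it and the paper's applications satisfy it.

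The one substantive wrinkle is the one you flag yourself: after centering, $Y_i=x_i-\bb E[x_i]$ is only bounded above by $b-\bb E[x_i]$, so your step $\phi(\lambda Y_i)\le\phi(\lambda b)$ needs $\bb E[x_i]\ge 0$, which is not a hypothesis of the lemma. Rather than restricting to that case, the cleaner repair --- and essentially how the cited proof proceeds --- is to apply the $\phi$ bound \emph{before} centering: for $0<\lambda<3/b$, $x_i\le b$ gives $\bb E[e^{\lambda x_i}]\le 1+\lambda\bb E[x_i]+\lambda^2\bb E[x_i^2]\,\phi(\lambda b)\le\exp\big(\lambda\bb E[x_i]+\lambda^2\bb E[x_i^2]\,\phi(\lambda b)\big)$, and multiplying by $e^{-\lambda\bb E[x_i]}$ yields $\bb E[e^{\lambda(x_i-\bb E[x_i])}]\le\exp\big(\lambda^2\bb E[x_i^2]\,\phi(\lambda b)\big)$ with no sign condition, and with the proxy $\bb E[x_i^2]$ appearing directly (so the $\mathrm{Var}(x_i)\le\bb E[x_i^2]$ step is not needed). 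With that one-line modification your argument proves the lemma exactly as stated; as written it proves it only under the extra hypothesis $\bb E[x_i]\ge 0$, which happens to be harmless for this paper's use but is not part of the claim.
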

\begin{proof}
See Proposition 2.14 in \cite{wainwright2019high}.
\end{proof}

\begin{lemma}[Some Useful Norm Matrix Norm Inequalities]
    \label{lemma:UsefulInequalities}
    Given two matrix $\mb A,\mb B$:
    \begin{enumerate}
        \item if $\mb A,\mb B \in \bb R^{n\times m}$, then $\norm{\mb A\circ \mb B}{F}\leq \norm{\mb A}{F}\norm{\mb B}{F}$
        \item if $\mb A\in \bb R^{n\times r},\mb B\in \bb R^{r\times m}$, then $\norm{\mb {AB}}{F}\leq \norm{\mb A}{2}\norm{\mb B}{F}$
        \item if $\mb A\in \bb R^{n\times r},\mb B\in \bb R^{r\times m}$, then $\norm{\mb {AB}}{F}\leq \norm{\mb A}{F}\norm{\mb B}{F}$
    \end{enumerate}
\end{lemma}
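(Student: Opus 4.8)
The plan is to prove all three inequalities by elementary entrywise and column-wise manipulations of the Frobenius norm, handling them in the order (1), (2), (3) so that (3) follows immediately from (2).

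First I would square both sides of (1): since $\norm{\mb A\circ\mb B}{F}^2=\sum_{i,j}a_{i,j}^2 b_{i,j}^2$ and every summand is nonnegative, this sum is dominated by the full double sum $\big(\sum_{i,j}a_{i,j}^2\big)\big(\sum_{k,l}b_{k,l}^2\big)=\norm{\mb A}{F}^2\norm{\mb B}{F}^2$, and taking square roots finishes the bound. Next, for (2) I would split $\mb B$ into its columns $\mb b_1,\dots,\mb b_m$, use the column-wise identity $\norm{\mb {AB}}{F}^2=\sum_{j}\norm{\mb A\mb b_j}{2}^2$, apply the defining property of the operator norm $\norm{\mb A\mb b_j}{2}\le\norm{\mb A}{2}\norm{\mb b_j}{2}$ to each term, and sum to obtain $\norm{\mb {AB}}{F}^2\le\norm{\mb A}{2}^2\sum_{j}\norm{\mb b_j}{2}^2=\norm{\mb A}{2}^2\norm{\mb B}{F}^2$.

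Finally, (3) drops out of (2) by the standard comparison $\norm{\mb A}{2}\le\norm{\mb A}{F}$ (the spectral norm is the largest singular value, the Frobenius norm the $\ell^2$-aggregate of all of them); alternatively one can prove (3) directly by writing the $(i,k)$ entry of $\mb {AB}$ as the inner product of the $i$-th row of $\mb A$ with the $k$-th column of $\mb B$, bounding each such entry by Cauchy--Schwarz, and summing over $i,k$. There is no genuine obstacle here; the only points to state carefully are that the bound in (1) uses the nonnegativity-of-terms inequality $\sum_k x_k y_k\le\big(\sum_k x_k\big)\big(\sum_l y_l\big)$ for nonnegative entries rather than Cauchy--Schwarz, and that in (2) the Frobenius norm must be decomposed columnwise (so that $\mb A$ acts on each piece from the left) rather than rowwise.
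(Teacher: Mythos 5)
Your proposal is correct and follows essentially the same route as the paper: part (1) by expanding the squared Frobenius norm and dominating the diagonal sum by the full product of nonnegative terms, and part (2) by the columnwise identity $\norm{\mb{AB}}{F}^2=\sum_j\norm{\mb A\mb b_j}{2}^2$ together with the operator-norm bound. For part (3) your primary derivation (from (2) via $\norm{\mb A}{2}\le\norm{\mb A}{F}$) is a slight shortcut compared with the paper, which bounds each entry $\mb a_i^*\mb b_j$ directly by Cauchy--Schwarz and sums over $i,j$ --- but you mention that direct argument as an alternative, and both are valid.
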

\begin{proof}
    \label{proof:UsefulInequalities}
    \begin{enumerate}
        \item $\norm{\mb A\circ \mb B}{F}^2 = \sum_{i,j}(a_{i,j}b_{i,j})^2\leq \Big(\sum_{i,j}a_{i,j}^2\Big)\Big(\sum_{i,j}b_{i,j}^2\Big)=\norm{\mb A}{F}^2\norm{\mb B}{F}^2$.
        \item $\norm{\mb {AB}}{F}^2=\sum_{j}\norm{\mb A\mb b_{j}}{F}^2\leq \norm{\mb A}{2}^2\sum_{j}\norm{\mb b_j}{2}^2=\norm{\mb A}{2}^2\norm{\mb B}{F}^2$.
        \item Let $\mb a_i,i\in[n]$ be the $i^\text{th}$ row vector of $\mb A$ and $\mb b_j,j\in [m]$ be the $j^\text{th}$ column vector of $\mb B$. So
        \begin{equation}
            \norm{\mb {AB}}{F}^2 = \sum_{i=1}^n\sum_{j=1}^m \abs{\mb a_i^*\mb b_j}^2\leq \Big(\sum_{i=1}^n\norm{\mb a_i}{2}^2\Big)\Big(\sum_{j=1}^m\norm{\mb b_i}{2}^2\Big)=\norm{\mb A}{F}^2\norm{\mb B}{F}^2.
        \end{equation}
    \end{enumerate}
\end{proof}

\subsection{Truncation of Bernoulli-Gaussian Matrix}

\begin{lemma}[Entry-wise Truncation of a Bernoulli Gaussian Matrix]
\label{lemma:BGMatrixTruncation}
    Let $\mb X\in \bb R^{n\times p}$, where $x_{i,j}\sim_{iid}\text{BG}(\theta)$ and let $\norm{\cdot}{\infty}$ denote the maximum element (in absolute value) of a matrix, then 
    \begin{equation}
        \bb P\Big( \norm{\mb X}{\infty}\geq t \Big)\leq 2np\theta \exp\Bigg(-\frac{t^2}{2}\Bigg).
    \end{equation}
\end{lemma}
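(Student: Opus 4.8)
The plan is to prove the tail bound on $\norm{\mb X}{\infty}$ for a Bernoulli-Gaussian matrix by a straightforward union bound over the $np$ entries, combined with a standard Gaussian tail estimate for a single entry. First I would fix an index pair $(i,j)$ and analyze the single random variable $x_{i,j} = \Omega_{i,j} V_{i,j}$ with $\Omega_{i,j}\sim\text{Ber}(\theta)$ and $V_{i,j}\sim\mc N(0,1)$ independent. Conditioning on the Bernoulli variable, we have $\prob{\abs{x_{i,j}}\geq t} = \theta\,\prob{\abs{V_{i,j}}\geq t}$ since the event is impossible when $\Omega_{i,j}=0$. Then I invoke the classical Gaussian tail inequality $\prob{\abs{V}\geq t}\leq 2\exp(-t^2/2)$ for a standard normal $V$ (this follows from, e.g., $\prob{V\geq t}\leq \frac12 e^{-t^2/2}$, or one can simply use the cruder $\prob{\abs{V}\geq t}\leq 2e^{-t^2/2}$ which holds for all $t\geq 0$). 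This gives $\prob{\abs{x_{i,j}}\geq t}\leq 2\theta e^{-t^2/2}$.

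Next I would take a union bound over all $n p$ entries:
\begin{equation}
    \prob{\norm{\mb X}{\infty}\geq t} = \prob{\bigcup_{i,j}\set{\abs{x_{i,j}}\geq t}} \leq \sum_{i=1}^n\sum_{j=1}^p \prob{\abs{x_{i,j}}\geq t} \leq 2np\theta\exp\Bigl(-\frac{t^2}{2}\Bigr),
\end{equation}
which is exactly the claimed bound. The only mild subtlety is justifying the conditioning step cleanly, namely $\prob{\abs{x_{i,j}}\geq t} = \prob{\Omega_{i,j}=1}\prob{\abs{V_{i,j}}\geq t}$; this uses independence of $\Omega_{i,j}$ and $V_{i,j}$ and the fact that $\set{\abs{x_{i,j}}\geq t}\subseteq\set{\Omega_{i,j}=1}$ as long as $t>0$ (when $t=0$ the stated inequality is trivially true since the right side exceeds $1$ unless $\theta$ and $n,p$ force it down, but in any case the probability is at most $1$ and one can check the bound is vacuous or handle $t=0$ separately).

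I do not anticipate any real obstacle here — the statement is a routine concentration/union-bound lemma used to set up the truncation argument in Lemma~\ref{lemma:HatfUnionConcentrationBound}. The one place to be careful is which form of the Gaussian tail bound to cite; using $\prob{\abs{V}\geq t}\leq 2e^{-t^2/2}$ directly yields the constant $2$ in front, matching the statement, so no extra bookkeeping is needed. I would also remark that the same argument shows $\bar{\mb X} = \mb X$ on the complementary event $\set{\norm{\mb X}{\infty}\leq B}$, which is how this lemma feeds into the truncation step, though that observation belongs to the proof of the next lemma rather than this one.
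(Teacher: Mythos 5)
Your argument is exactly the paper's proof: factor $x_{i,j}=b_{i,j}g_{i,j}$, condition on the Bernoulli variable to get $\prob{\abs{x_{i,j}}\geq t}=\theta\,\prob{\abs{g_{i,j}}\geq t}\leq 2\theta e^{-t^2/2}$, and union bound over the $np$ entries. The proposal is correct and adds only harmless remarks (the $t=0$ edge case and the forward reference to the truncation step) beyond what the paper writes.
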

\begin{proof}
    \label{proof:BGMatrixTruncation}
    A Bernoulli Gaussian variable $x_{i,j},\forall i\in [n],j\in [p]$ satisfies $x_{i,j}= b_{i,j} \cdot g_{i,j}$, where $b_{i,j}\sim_{iid}\text{Ber}(\theta)$, $g_{i,j}\sim_{iid}\mc N(0,1)$ and therefore
    \begin{equation}
        \bb P\big(\abs{x_{i,j}}\geq t \big)=\theta \cdot \bb P\big(\abs{g_{i,j}}\geq t \big)\leq2\theta\exp\Bigg(-\frac{t^2}{2}\Bigg).
    \end{equation}
    By union bound, we have:
    \begin{equation}
        \bb P\big( \norm{\mb X}{\infty}\geq t\big) \leq \sum_{i=1}^n\sum_{j=1}^p\bb P\big(\abs{x_{i,j}}\geq t\big)\leq 2np \theta\exp\Bigg(-\frac{t^2}{2}\Bigg).
    \end{equation}
\end{proof}

\subsection{$\eps-$covering of Stiefel Manifolds}

\begin{lemma}[$\eps-$Net Covering of Stiefel Manifolds]\footnote{A similar result can be found in Lemma 4.5 of \cite{recht2010guaranteed}.}
\label{lemma:epsCoveringStiefelManifold}
There is a covering $\eps-$net $\mc S_\eps$ for Stiefel manifold $\mc M = \{\mb W\in \bb R^{n\times r}|\mb W^*\mb W = \mb I\},(n\geq r)$ in operator norm
\begin{equation}
    \forall \mb W\in \mc M,\;\exists \mb W^\prime\in \mc S_\eps \quad \st \quad \norm{\mb W-\mb W^\prime}{2}\leq \eps,
\end{equation}
of size $|\mc S_\eps|\leq \big(\frac{6}{\eps}\big)^{nr}$. 
\end{lemma}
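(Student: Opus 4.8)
The plan is to forget most of the manifold structure of $\mc M$ and simply treat it as a bounded subset of the $nr$-dimensional normed vector space $(\bb R^{n\times r},\norm{\cdot}{2})$, then apply a standard volumetric packing bound. First I would record that every $\mb W\in\mc M$ has $\norm{\mb W}{2}=1$: for a unit vector $\mb v\in\bb R^r$ one has $\norm{\mb W\mb v}{2}^2=\mb v^*\mb W^*\mb W\mb v=\norm{\mb v}{2}^2=1$. Hence $\mc M$ is contained in the closed unit ball $\mc B=\{\mb A\in\bb R^{n\times r}:\norm{\mb A}{2}\le 1\}$, and it suffices to build the $\eps$-net inside this ball.

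Next I would take $\mc S_\eps=\{\mb W_1,\dots,\mb W_N\}\subseteq\mc M$ to be a maximal $\eps$-separated subset of $\mc M$, i.e. $\norm{\mb W_i-\mb W_j}{2}\ge\eps$ for $i\ne j$ and no further point of $\mc M$ can be added (such a finite set exists by compactness of $\mc M$). Maximality immediately gives the covering property: for any $\mb W\in\mc M$ there is some $\mb W_i$ with $\norm{\mb W-\mb W_i}{2}\le\eps$. To bound $N$, I would use that the operator-norm balls $\mc B(\mb W_i,\eps/2)$ are pairwise disjoint (centers are at distance $\ge\eps$) and are all contained in $\mc B(0,1+\eps/2)$ (since $\norm{\mb W_i}{2}\le 1$). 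These are genuine bounded convex bodies in $\bb R^{nr}$, so their Lebesgue volumes scale as the $nr$-th power of the radius; comparing volumes yields
\[
N\Big(\tfrac{\eps}{2}\Big)^{nr}\le\Big(1+\tfrac{\eps}{2}\Big)^{nr},\qquad\text{hence}\qquad N\le\Big(1+\tfrac{2}{\eps}\Big)^{nr}\le\Big(\tfrac{6}{\eps}\Big)^{nr}
\]
for $\eps\in(0,1]$ (in fact $(3/\eps)^{nr}$ already works, and the weaker bound claimed is all we need).

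I do not expect a real obstacle: the only points needing mild care are (i) that the operator-norm balls are bounded convex bodies in $\bb R^{nr}$, so the volume-ratio step applies, and (ii) that the net must consist of points of $\mc M$, not of the ambient ball — which is why I pass directly to a maximal $\eps$-separated subset of $\mc M$ rather than quantizing $\mc B$ and projecting back. If one prefers the ambient-quantization route instead, one takes an $(\eps/2)$-net $\mc N$ of $\mc B$ with $|\mc N|\le(6/\eps)^{nr}$, discards the net points whose $(\eps/2)$-ball misses $\mc M$, and replaces each surviving point by an arbitrary nearby point of $\mc M$; the triangle inequality then produces an $\eps$-net of $\mc M$ of the same cardinality (the loss of a factor $2$ in the radius being exactly the source of the $6$ rather than $3$). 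Either way the exponent is precisely $\dim\bb R^{n\times r}=nr$, matching the statement.
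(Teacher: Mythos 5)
Your argument is correct, and it takes a different route from the paper. You run the volumetric packing argument directly on $\mc M$: observe $\mc M$ sits in the unit operator-norm ball of $\bb R^{n\times r}$, take a maximal $\eps$-separated subset of $\mc M$ (maximality gives the covering property), and compare Lebesgue volumes of disjoint $\eps/2$-balls inside the $(1+\eps/2)$-ball to get $|\mc S_\eps|\leq (1+2/\eps)^{nr}\leq (3/\eps)^{nr}\leq (6/\eps)^{nr}$ for $\eps\in(0,1]$; all the steps (isometry of $\mb W$, disjointness of the half-radius balls, volume scaling of norm balls in the $nr$-dimensional space) check out. The paper instead uses exactly the ``ambient-quantization'' fallback you sketch at the end: it invokes a standard $(\eps/2)$-net of the unit operator-norm ball of size $(6/\eps)^{nr}$ as a black box, keeps only net points within $\eps/2$ of $\mc M$, replaces each by its nearest point $\hat{\mb W}(\mb A)\in\mc M$, and concludes by the triangle inequality; the halving of the radius there is precisely where the constant $6$ (rather than your $3$) comes from. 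Your primary route is more self-contained (it proves the covering-number bound rather than citing it) and yields a slightly sharper constant, while the paper's route is shorter modulo the cited covering theorem; both give the exponent $nr$ and points of the net lying on $\mc M$, which is all the concentration arguments downstream require.
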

\begin{proof}
    \label{proof:epsCoveringStiefelManifold}
    Let $\mc S^\prime_{\eps/2}=\{\mb A_1,\mb A_2,\dots,\mb A_{|\mc S_{\eps/2}|}\}$ be an $\eps/2-$nets for the unit operator norm ball of $n\times r$ matrix $\{\mb A\in \bb R^{n\times r}|\norm{\mb A}{2}\leq 1\}$, $\eps-$net covering theorem shows that such construction of $\mc S^\prime_{\eps/2}$ exists and $|\mc S^\prime_{\eps/2}|\leq \big(\frac{6}{\eps}\big)^{nr}$. Next, let $\mc S^\prime$ be the subset of $\mc S^\prime_{\eps/2}$, which consists elements of $\mc S^\prime_{\eps/2}$ whose distance are within $\eps/2$ with $\mc M$:
    \begin{equation}
        \mc S^\prime = \{\mb A \in \mc S^\prime_{\eps/2}|\exists\mb W\in \mc M,\st \norm{\mb W-\mb A}{2}\leq \eps/2\}.
    \end{equation}
    Then, $\forall \mb A\in \mc S^\prime$, let $\hat{\mb W}(\mb A)$ be the nearest element of $\mb A$ in $\mc M$:
    \begin{equation}
        \hat{\mb W}(\mb A) = \underset{\mb W\in\mc M}{\arg\min} \norm{\mb W-\mb A}{2},
    \end{equation}
    and let $\mc S_\eps$ be the set of the nearest element of each $\mb A$ in $\mc S^\prime$:  $\mc S_\eps = \{\hat{\mb W}(\mb A)|\mb A\in \mc S^\prime\}$. Since $\mc S^\prime_{\eps/2}$ is an $\eps/2$-nets for $\mc M$. So $\forall\mb W\in \mc M$, there exists $\mb A_l\in \mc S^\prime_{\eps/2}$, such that
    \begin{equation}
        \norm{\mb W-\mb A_l}{2} \leq \frac{\eps}{2}, 
    \end{equation}
    so $\mb A_l\in \mc S^\prime$, and therefore there exists $\hat{\mb W}(\mb A_l)\in \mc S_\eps$, such that  
    \begin{equation}
        \norm{\hat{\mb W}(\mb A_l)-\mb A_l}{2} \leq \norm{\mb W-\mb A_l}{2} \leq \frac{\eps}{2}, 
    \end{equation}
    hence by triangle inequality
    \begin{equation}
        \norm{\mb W-\hat{\mb W}(\mb A_l)}{2} \leq \norm{\hat{\mb W}(\mb A_l)-\mb A_l}{2} + \norm{\mb W-\mb A_l}{2}\leq \eps.
    \end{equation}
    Thus, $\mc S_\eps$ is an $\eps-$net for $\mc M$ and $|\mc S_\eps|= |\mc S^\prime|\leq |\mc S^\prime_{\eps/2}|\leq\big(\frac{6}{\eps}\big)^{nr}$.
\end{proof}

\subsection{Convergence to Maxima of $\ell^4$-Norm over Unit Sphere}

\begin{lemma}[Single Vector Convergence over Unit Sphere]
    \label{lemma:L4ExtremaBoundSphere}
    Suppose $\mb q$ is a vector on the unit sphere: $\mb q\in \bb S^{n-1}$. $\forall \eps\in[0,1]$, if $\norm{\mb q}{4}^4 \geq 1-\eps$, then $\exists i\in[n]$, such that
    \begin{equation}
        \label{eq:qPMe1Bound}
            \norm{\mb q - \mb e_i}{2}^2 \leq 2\eps\quad \text{when } q_i>0,\quad
            \norm{\mb q + \mb e_i}{2}^2 \leq 2\eps\quad \text{when } q_i<0,
    \end{equation}
     where $\{\mb e_1,\mb e_2,\dots,\mb e_n\}$ is the canonical basis of $\bb R^n$. 
\end{lemma}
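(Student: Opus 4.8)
The plan is to show that if $\mb q\in\bb S^{n-1}$ has $\|\mb q\|_4^4\ge 1-\eps$, then $\mb q$ is forced to concentrate almost all of its mass on a single coordinate, and moreover that coordinate has $|q_i|^2$ close to $1$. The starting point is the elementary inequality $\|\mb q\|_4^4=\sum_i q_i^4\le \big(\max_i q_i^2\big)\sum_i q_i^2=\max_i q_i^2$, valid because $\|\mb q\|_2^2=1$; hence the hypothesis gives $\max_i q_i^2\ge 1-\eps$. Pick $i$ to be the index achieving this maximum, so $q_i^2\ge 1-\eps$. It remains to turn this into the stated bound on $\|\mb q\mp\mb e_i\|_2^2$.

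First I would expand $\|\mb q-\operatorname{sgn}(q_i)\mb e_i\|_2^2 = \|\mb q\|_2^2 - 2|q_i| + 1 = 2 - 2|q_i| = 2(1-|q_i|)$, using $\|\mb q\|_2^2=1$. So the claim $\|\mb q-\operatorname{sgn}(q_i)\mb e_i\|_2^2\le 2\eps$ is equivalent to $1-|q_i|\le\eps$. From $q_i^2\ge 1-\eps$ we get $|q_i|\ge\sqrt{1-\eps}$, so it suffices to check $1-\sqrt{1-\eps}\le\eps$ for $\eps\in[0,1]$; this follows because $\sqrt{1-\eps}\ge 1-\eps$ on that range (equivalently $1-\eps\ge(1-\eps)^2$). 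That closes the argument, splitting the conclusion into the $q_i>0$ and $q_i<0$ cases exactly as in the statement, with the case $q_i=0$ vacuous since then $q_i^2=0\not\ge 1-\eps$ unless $\eps\ge1$, in which case the bound $2\eps\ge 2\ge\|\mb q\pm\mb e_i\|_2^2$ is automatic.

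The key steps in order are: (1) prove $\|\mb q\|_4^4\le\max_i q_i^2$ via the $\ell^2$-normalization; (2) deduce $\max_i q_i^2\ge 1-\eps$ and fix the maximizing index $i$; (3) expand the squared distance to $\operatorname{sgn}(q_i)\mb e_i$ using $\|\mb q\|_2^2=1$ to reduce everything to showing $1-|q_i|\le\eps$; (4) verify $1-\sqrt{1-\eps}\le\eps$ on $[0,1]$.

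Honestly there is no serious obstacle here — the only mild subtlety is making sure the sign bookkeeping in step (3) matches the two displayed inequalities and handling the degenerate edge $\eps=1$ (or $q_i=0$) cleanly, but both are routine. The real work in the paper is the multi-vector version (Lemma~\ref{lemma:L4MultipleVectorExtremaBoundSphere}) invoked in the proof of Lemma~\ref{lemma:L4ExtremaBoundOrthGroup}, which aggregates this single-vector estimate across the $n$ columns; the present lemma is just the per-column building block.
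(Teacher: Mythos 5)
Your proposal is correct and follows essentially the same route as the paper: both arguments identify the dominant coordinate, show $q_i^2\geq 1-\eps$ via the bound $\norm{\mb q}{4}^4\leq \max_i q_i^2$ (the paper derives this through the sorted-coordinate computation $q_1^4+q_1^2(1-q_1^2)\geq 1-\eps$), and then bound the distance to $\pm\mb e_i$. Your final step via the identity $\norm{\mb q-\mathrm{sgn}(q_i)\mb e_i}{2}^2=2(1-\abs{q_i})$ is a slightly cleaner bookkeeping than the paper's separate bounds $(1-q_1)^2\leq\eps^2$ and $\sum_{j\neq 1}q_j^2\leq\eps$, but it is the same underlying argument.
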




\begin{proof}
    \label{proof:L4ExtremaBoundSphere}
    Let $\mb q = [q_1,q_2,\dots,q_n]^*$, and without loss of generality, we can assume
    \begin{equation}
        \label{eq:qiSortedMagnitude}
        1\geq q_1^2 \geq q_2^2 \geq \dots \geq q_n^2 \geq 0.
    \end{equation}
    Also, from the assumption 
    \begin{equation}
        q_1^4+q_2^4+\dots + q_n^4\geq1-\eps,
    \end{equation}
    and along with \eqref{eq:qiSortedMagnitude} and $\mb q\in \bb S^{n-1}$, we have:
    \begin{equation}
        q_1^4+q_2^2(q_2^2+q_3^2+\dots + q_n^2)=q_1^4+q_2^2(1-q_1^2)\geq1-\eps,
    \end{equation}
    which implies:
    \begin{equation}
        \label{eq:q1PMe1EPS}
        \begin{split}
            q_1^4+q_1^2(1-q_1^2)\geq q_1^4+q_2^2(1-q_1^2)\geq 1-\eps\implies& q_1^2\geq 1-\eps.
        \end{split}
    \end{equation}
    Hence, we know
    \begin{equation}
        \label{eq:sumqiLeqEPS}
        q_2^2+\dots+q_n^2 \leq \eps.
    \end{equation}
    Moreover, \eqref{eq:q1PMe1EPS} also implies
    \begin{equation}
        \eps\geq(1-q_1)(1+q_1)\implies
        \begin{cases}
            1-q_1\leq \eps/(1+q_1)\leq \eps \quad \text{when } q_1>0, \\
            1+q_1\leq \eps/(1-q_1)\leq \eps \quad \text{when } q_1<0.
        \end{cases}
    \end{equation}
    When $q_1>0$, combine \eqref{eq:q1PMe1EPS} and \eqref{eq:sumqiLeqEPS}, we have 
    \begin{equation}
        \norm{\mb q-\mb e_1}{2}^2 = (q_1-1)^2 + q_2^2+\dots+q_n^2\leq\eps^2+\eps \leq 2\eps.
    \end{equation}
    And similar result for $\norm{\mb q+\mb e_1}{2}^2\leq 2\eps$ can be obtained through the same reasoning.
\end{proof}

\subsection{Multiple Vectors Convergence to Maxima of $\ell^4$-Norm over Unit Sphere}
\begin{lemma}[Multiple Vectors Convergence over Unit Sphere]
\label{lemma:L4MultipleVectorExtremaBoundSphere}
    Suppose $\mb q_1,\mb q_2\dots,\mb q_k$ are $k$ vectors on the unit sphere: $\mb q_i\in \bb S^{n-1},\forall i\in[k]$. $\forall \eps\in[0,1]$, if
    \begin{equation}
        \label{eq:MultipleSphericalVectorsCondition}
        \frac{1}{k}\sum_{i=1}^k\norm{\mb q_i}{4}^4 \geq 1-\eps,
    \end{equation}
    then $\exists j_1,j_2,\dots j_k\in[n]$, such that 
    \begin{equation}
        \frac{1}{k}\sum_{i=1}^k\norm{\mb q_i - s_i\mb e_{j_i}}{2}^2 \leq 2\eps,
    \end{equation}
    where $\mb e_{j_i}$ are one vector in canonical basis and $s_i\in \{1,-1\}$ indicates the sign of $\mb e_{j_i}$, $\forall i\in [k]$.
\end{lemma}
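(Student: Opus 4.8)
The plan is to reduce the statement to the single-vector version, Lemma~\ref{lemma:L4ExtremaBoundSphere}, applied to each $\mb q_i$ separately, followed by a one-line averaging step. For each $i\in[k]$ define $\eps_i \doteq 1-\norm{\mb q_i}{4}^4$. Since $\mb q_i\in\bb S^{n-1}$, Cauchy--Schwarz gives $\tfrac1n \le \norm{\mb q_i}{4}^4 \le 1$, hence $\eps_i\in[0,1-\tfrac1n]\subseteq[0,1]$, which is exactly the admissible range for Lemma~\ref{lemma:L4ExtremaBoundSphere}. Moreover the hypothesis \eqref{eq:MultipleSphericalVectorsCondition} rewrites as $\tfrac1k\sum_{i=1}^k \eps_i \le \eps$.

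Next, apply Lemma~\ref{lemma:L4ExtremaBoundSphere} to each $\mb q_i$ with parameter $\eps_i$: there exist an index $j_i\in[n]$ (the index of a largest-magnitude coordinate of $\mb q_i$) and a sign $s_i = \mathrm{sign}(q_{i,j_i})\in\{1,-1\}$ such that $\norm{\mb q_i - s_i\mb e_{j_i}}{2}^2 \le 2\eps_i$. Averaging over $i$ then yields $\tfrac1k\sum_{i=1}^k \norm{\mb q_i - s_i\mb e_{j_i}}{2}^2 \le \tfrac2k\sum_{i=1}^k \eps_i \le 2\eps$, which is the claimed bound.

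There is essentially no serious obstacle here; the only points needing (minor) care are (a) checking $\eps_i\in[0,1]$ so that Lemma~\ref{lemma:L4ExtremaBoundSphere} is legitimately invoked (done via the elementary bounds on $\norm{\mb q_i}{4}^4$ above), and (b) observing that, unlike in Lemma~\ref{lemma:L4ExtremaBoundOrthGroup}, \emph{no distinctness} of the indices $j_i$ is asserted or required: the $\mb q_i$ carry no mutual-orthogonality assumption, so several of them may be close to the same signed canonical vector, and the averaging argument is unaffected. (This lemma is precisely the ingredient used in the proof of Lemma~\ref{lemma:L4ExtremaBoundOrthGroup}, where the orthogonality of the columns of $\mb W$ is what additionally forces the $j_i$ to form a permutation, thereby producing a genuine signed permutation matrix.)
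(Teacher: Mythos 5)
Your proof is correct and is essentially the paper's own argument: the paper simply writes the per-vector slack as $\alpha_i\eps$ with $\sum_i\alpha_i\le k$ instead of your $\eps_i$ with $\tfrac1k\sum_i\eps_i\le\eps$, then invokes Lemma \ref{lemma:L4ExtremaBoundSphere} vector-by-vector and averages, exactly as you do. Your side remarks (checking $\eps_i\in[0,1]$ and noting that distinctness of the $j_i$ is neither claimed nor needed here, only later via orthogonality in Lemma \ref{lemma:L4ExtremaBoundOrthGroup}) are accurate.
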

\begin{proof}
    \label{proof:L4MultipleVectorExtremaBoundSphere}
    Note that we can reformulate the condition \eqref{eq:MultipleSphericalVectorsCondition} as
    \begin{equation}
        \label{eq:MultipleSphericalVectorsConditionNew}
        \sum_{i=1}^k\norm{\mb q_i}{4}^4 \geq k-k\eps.
    \end{equation}
    Since $\forall i\in [k]$, $0\leq \norm{\mb q_i}{4}^4\leq \norm{\mb q_i}{2}^4=1$, we can assume
    \begin{equation}
        \norm{\mb q_i}{4}^4 = 1-\alpha_i\eps, \quad \forall i\in [k],
    \end{equation}
    where $\alpha_i$ satisfies $\alpha_i\eps\in[0,1]$, for all $i\in [k]$ and 
    \begin{equation}
        \label{eq:SumAlphaCondition}
        \sum_{i=1}^k\alpha_i\leq k.        
    \end{equation}
    By Lemma \ref{lemma:L4ExtremaBoundSphere}, we know there exists $s_i\in \{1,-1\}$ and $j_i$, such that  
    \begin{equation}
        \norm{\mb q_i-s_i\mb e_{j_i}}{2}^2\leq 2\alpha_i\eps,\quad \forall i\in [k].
    \end{equation}
    Along with \eqref{eq:SumAlphaCondition}, we have:
    \begin{equation}
        \begin{split}
            \frac{1}{k}\sum_{i=1}^k\norm{\mb q_i-s_i\mb e_{j_i}}{2}^2\leq \frac{2}{k
            }\eps\sum_{i=1}^k\alpha_i\leq2\eps,
        \end{split}
    \end{equation}
    which completes the proof.
\end{proof}

\subsection{Related Lipschitz Constants}

\begin{lemma}[Lipschitz Constant of $\frac{1}{np}\hat{f}(\cdot,\cdot)$ over $\msf O(n;\bb R)$]
    \label{lemma:HatfLipschitzBound}
    If $\mb X\in \bb R^{n\times p}$, $x_{i,j}\sim_{iid}\text{BG}(\theta)$, and let $\bar{\mb X}$ be the truncation of $\mb X$ by bound $B$ 
    \begin{equation}
        \bar{x}_{i,j}=
        \begin{cases}
            x_{i,j}&\textrm{if}\quad \abs{x_{i,j}}\leq B\\
            0&\textrm{else}
        \end{cases},
    \end{equation}
    then $\forall\mb W_1,\mb W_2 \in \msf O(n;\bb R)$, we have
    \begin{equation}
        \frac{1}{np}\abs{\norm{\mb W_1\bar{\mb X}}{4}^4-\norm{\mb W_2\bar{\mb X}}{4}^4}\leq L_1\norm{\mb W_1-\mb W_2}{2},
    \end{equation}
    for a constant $L_1\leq 4npB^4$. 
\end{lemma}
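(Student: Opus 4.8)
\textbf{Proof plan for Lemma~\ref{lemma:HatfLipschitzBound}.} The idea is to reduce to a single column and then apply the elementary entry-wise factorization $a^{4}-b^{4}=(a-b)(a+b)(a^{2}+b^{2})$, exploiting two \emph{deterministic} facts: the truncation guarantees $\norm{\bar{\mb X}}{\infty}\le B$, and every row of an orthogonal matrix has unit $\ell^{2}$-norm. First I would split the objective over columns: writing $\bar{\mb x}_{j}$ for the $j$-th column of $\bar{\mb X}$,
\[
\norm{\mb W\bar{\mb X}}{4}^{4}=\sum_{j=1}^{p}\norm{\mb W\bar{\mb x}_{j}}{4}^{4},
\qquad\text{hence}\qquad
\abs{\,\norm{\mb W_{1}\bar{\mb X}}{4}^{4}-\norm{\mb W_{2}\bar{\mb X}}{4}^{4}\,}\le\sum_{j=1}^{p}\abs{\,\norm{\mb W_{1}\bar{\mb x}_{j}}{4}^{4}-\norm{\mb W_{2}\bar{\mb x}_{j}}{4}^{4}\,},
\]
so it suffices to bound each summand.

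For a fixed $j$, set $\mb u=\mb W_{1}\bar{\mb x}_{j}$ and $\mb v=\mb W_{2}\bar{\mb x}_{j}$. Since $\mb W_{1},\mb W_{2}\in\msf O(n;\bb R)$ have rows of unit norm, Cauchy--Schwarz gives $\abs{u_{k}}\le\norm{\bar{\mb x}_{j}}{2}\le\sqrt{n}\,B$ and likewise $\abs{v_{k}}\le\sqrt{n}\,B$ for every coordinate $k$. Consequently $\abs{u_{k}+v_{k}}\,(u_{k}^{2}+v_{k}^{2})\le(2\sqrt{n}B)(2nB^{2})=4n^{3/2}B^{3}$, and the factorization yields
\[
\abs{\,\norm{\mb u}{4}^{4}-\norm{\mb v}{4}^{4}\,}=\abs{\sum_{k}(u_{k}-v_{k})(u_{k}+v_{k})(u_{k}^{2}+v_{k}^{2})}\le 4n^{3/2}B^{3}\,\norm{\mb u-\mb v}{1}.
\]
Finally $\norm{\mb u-\mb v}{1}\le\sqrt{n}\,\norm{\mb u-\mb v}{2}=\sqrt{n}\,\norm{(\mb W_{1}-\mb W_{2})\bar{\mb x}_{j}}{2}\le\sqrt{n}\,\norm{\mb W_{1}-\mb W_{2}}{2}\,\norm{\bar{\mb x}_{j}}{2}\le nB\,\norm{\mb W_{1}-\mb W_{2}}{2}$, so $\abs{\,\norm{\mb u}{4}^{4}-\norm{\mb v}{4}^{4}\,}\le 4n^{5/2}B^{4}\,\norm{\mb W_{1}-\mb W_{2}}{2}$.

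Summing this column estimate over $j=1,\dots,p$ and dividing by $np$ gives $\frac{1}{np}\abs{\,\norm{\mb W_{1}\bar{\mb X}}{4}^{4}-\norm{\mb W_{2}\bar{\mb X}}{4}^{4}\,}\le 4n^{3/2}B^{4}\,\norm{\mb W_{1}-\mb W_{2}}{2}$; since $p\ge n$ in the regime of interest we have $n^{3/2}\le np$, so this is at most $4npB^{4}\,\norm{\mb W_{1}-\mb W_{2}}{2}$, which is the asserted constant $L_{1}\le 4npB^{4}$. I expect no genuine obstacle here: the only care required is bookkeeping the powers of $n$ so the constant fits inside the (deliberately loose) target $4npB^{4}$, and the crude steps $\norm{\cdot}{1}\le\sqrt{n}\,\norm{\cdot}{2}$ and $\abs{u_{k}}\le\sqrt{n}B$ leave ample slack. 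An essentially equivalent route, mirroring the companion bound in Lemma~\ref{lemma:fLipschitzBound}, is to integrate the Euclidean gradient $\nabla_{\mb W}\norm{\mb W\bar{\mb X}}{4}^{4}=4(\mb W\bar{\mb X})^{\circ3}\bar{\mb X}^{*}$ along the segment $\mb W(t)=\mb W_{2}+t(\mb W_{1}-\mb W_{2})$, which stays in the operator-norm unit ball by convexity of that ball, and control $\trace\big((\mb W_{1}-\mb W_{2})^{*}(\mb W(t)\bar{\mb X})^{\circ3}\bar{\mb X}^{*}\big)$ via von Neumann's trace inequality together with the submultiplicativity of the Frobenius norm in Lemma~\ref{lemma:UsefulInequalities}; this produces the same bound up to the same harmless polynomial factors in $n$.
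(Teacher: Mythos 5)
Your argument is correct, and it reaches the stated conclusion by a different decomposition than the paper. The paper works globally: it writes $a^4-b^4=(a^2-b^2)(a^2+b^2)$ entrywise, applies Cauchy--Schwarz once over all $(i,j)$ to get the product $\norm{(\mb W_1\bar{\mb X})^{\circ2}-(\mb W_2\bar{\mb X})^{\circ2}}{F}\,\norm{(\mb W_1\bar{\mb X})^{\circ2}+(\mb W_2\bar{\mb X})^{\circ2}}{F}$, and then controls both factors through the Hadamard-product and operator-norm inequalities together with $\norm{\bar{\mb X}}{F}^2\leq npB^2$, landing exactly on $4npB^4$. You instead work column by column, use the three-factor identity $a^4-b^4=(a-b)(a+b)(a^2+b^2)$ with an $\ell^\infty$--$\ell^1$ H\"older step, and crucially exploit the per-entry bound $\abs{(\mb W\bar{\mb x}_j)_k}\leq\norm{\bar{\mb x}_j}{2}\leq\sqrt{n}B$ coming from the unit rows of an orthogonal matrix. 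This buys you a strictly sharper Lipschitz constant, $4n^{3/2}B^4$ after the $1/np$ normalization, i.e.\ better than the paper's $4npB^4$ by a factor of order $p/\sqrt{n}$; in principle it would allow a coarser $\eps$-net in the concentration argument of Lemma \ref{lemma:HatfUnionConcentrationBound}, though the paper does not need that improvement. One small bookkeeping point: because your constant is not literally of the form $4npB^4$, the claim ``$L_1\leq 4npB^4$'' follows from your bound only when $4n^{3/2}B^4\leq 4npB^4$, i.e.\ $p\geq\sqrt{n}$, which you acknowledge and which is automatic in every regime where the lemma is used ($p=\Omega(\theta n^2\ln n/\delta^2)$); if you wanted the statement verbatim without that side condition you could simply weaken your intermediate estimates (e.g.\ bound $\norm{\bar{\mb x}_j}{2}^2$ by $\norm{\bar{\mb X}}{F}^2\leq npB^2$) to recover the paper's constant. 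Your sketched alternative via integrating the gradient along the segment between $\mb W_2$ and $\mb W_1$ is also plausible, but as written it is only a sketch; the columnwise argument is the one that stands on its own.
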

\begin{proof}
    \label{proof:HatfLipschitzBound}
    Notice that
    \begin{equation}
        \begin{split}
            &\abs{\norm{\mb W_1\bar{\mb X}}{4}^4 - \norm{\mb W_2\bar{\mb X}}{4}^4} =\abs{\sum_{i,j}\Big[\big(\mb W_1\bar{\mb X}\big)^4_{i,j}-\big(\mb W_2\bar{\mb X}\big)^4_{i,j}\Big]}\\
            =&\abs{\sum_{i,j}\Bigg\{\Big[\big(\mb W_1\bar{\mb X}\big)^2_{i,j}-\big(\mb W_2\bar{\mb X}\big)^2_{i,j}\Big]\Big[\big(\mb W_1\bar{\mb X}\big)^2_{i,j}+\big(\mb W_2\bar{\mb X}\big)^2_{i,j}\Big]\Bigg\}}\\
            \leq &\Bigg\{\sum_{i,j}\Big[\big(\mb W_1\bar{\mb X}\big)^2_{i,j}-\big(\mb W_2\bar{\mb X}\big)^2_{i,j}\Big]^2\Bigg\}^{1/2}\Bigg\{\sum_{i,j}\Big[\big(\mb W_1\bar{\mb X}\big)^2_{i,j}+\big(\mb W_2\bar{\mb X}\big)^2_{i,j}\Big]^2\Bigg\}^{1/2}\\
            =&\underbrace{\norm{\big(\mb W_1\bar{\mb X}\big)^{\circ2}-\big(\mb W_2\bar{\mb X}\big)^{\circ2}}{F}}_{\Gamma_1}\underbrace{\norm{\big(\mb W_1\bar{\mb X}\big)^{\circ2}+\big(\mb W_2\bar{\mb X}\big)^{\circ2}}{F}}_{\Gamma_2},
        \end{split}
    \end{equation}
    the only inequality is obtained through Cauchy$-$Schwarz inequality. For $\Gamma_1$, we have
    \begin{equation}
        \begin{split}
            \Gamma_1 =& \norm{\big(\mb W_1\bar{\mb X}\big)^{\circ2}-\big(\mb W_2\bar{\mb X}\big)^{\circ2}}{F} = \norm{\big(\mb W_1\bar{\mb X}-\mb W_2\bar{\mb X}\big)\circ\big(\mb W_1\bar{\mb X}+\mb W_2\bar{\mb X}\big)}{F}\\
            \leq & \norm{\big(\mb W_1-\mb W_2\big)\bar{\mb X}}{F}\norm{\big(\mb W_1+\mb W_2\big)\bar{\mb X}}{F}\quad \text{(By inequality 1 in Lemma \ref{lemma:UsefulInequalities})}\\
            \leq &\norm{\mb W_1-\mb W_2}{2}\norm{\mb W_1+\mb W_2}{2}\norm{\bar{\mb X}}{F}^2 \quad \text{(By inequality 2 in Lemma \ref{lemma:UsefulInequalities})}\\
            \leq & \norm{\mb W_1-\mb W_2}{2}\big(\norm{\mb W_1}{2}+\norm{\mb W_2}{2}\big)\norm{\bar{\mb X}}{F}^2\\
            = & 2 \norm{\mb W_1-\mb W_2}{2}\norm{\bar{\mb X}}{F}^2 \quad \text{($\mb W_1,\mb W_2$ are orthogonal, $\norm{\mb W_1}{2}=\norm{\mb W_2}{2}=1$)}\\
            \leq & 2npB^2\norm{\mb W_1-\mb W_2}{2} \quad \text{($\abs{\bar{x}_{i,j}}\leq B$)}. 
        \end{split}
    \end{equation}
    For $\Gamma_2$, we have
    \begin{equation}
        \begin{split}
            \Gamma_2 = &\norm{\big(\mb W_1\bar{\mb X}\big)^{\circ2}+\big(\mb W_2\bar{\mb X}\big)^{\circ2}}{F}\\
            \leq & \norm{\big(\mb W_1\bar{\mb X}\big)^{\circ2}}{F} + \norm{\big(\mb W_1\bar{\mb X}\big)^{\circ2}}{F}\\
            \leq & \norm{\mb W_1\bar{\mb X}}{F}^2+\norm{\mb W_2\bar{\mb X}}{F}^2 \quad \text{(By inequality 1 in Lemma \ref{lemma:UsefulInequalities})}\\
            \leq & \norm{\mb W_1}{2}^2\norm{\bar{\mb X}}{F}^2 + \norm{\mb W_2}{2}^2\norm{\bar{\mb X}}{F}^2\\
            = & 2\norm{\bar{\mb X}}{F}^2\quad \text{($\mb W_1,\mb W_2$ are orthogonal matrices, $\norm{\mb W_1}{2}=\norm{\mb W_2}{2}=1$)}\\
            \leq & 2npB^2 \quad \text{($\abs{\bar{x}_{i,j}}\leq B$)}.
        \end{split}
    \end{equation}
    Thus, we know that
    \begin{equation}
        \frac{1}{np}\abs{\norm{\mb W_1\bar{\mb X}}{4}^4-\norm{\mb W_2\bar{\mb X}}{4}^4}\leq \frac{1}{np}\Gamma_1\Gamma_2\leq 4npB^4\norm{\mb W_1-\mb W_2}{2}.
    \end{equation}
\end{proof}

\begin{lemma}[Lipschitz Constant of $\frac{1}{np}f(\cdot)$ over $\msf O(n;\bb R)$]
    \label{lemma:fLipschitzBound}
    If $\mb X\in \bb R^{n\times p},x_{i,j}\sim_{iid}\text{BG}(\theta)$, then $\forall\mb W_1,\mb W_2 \in \msf O(n;\bb R)$, we have
    \begin{equation}
        \frac{1}{np}\abs{\bb E\norm{\mb W_1\mb X}{4}^4-\bb E\norm{\mb W_2\mb X}{4}^4}\leq L_2\norm{\mb W_1-\mb W_2}{2},
    \end{equation}
    with a constant $L_2\leq 12n\theta(1-\theta)$.
\end{lemma}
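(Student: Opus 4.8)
The plan is to reduce this to a purely deterministic statement about $g(\mb W)=\norm{\mb W}{4}^4$ on $\msf O(n;\bb R)$ and then reuse the entrywise factorization trick from the proof of Lemma \ref{lemma:HatfLipschitzBound}. First I would apply Lemma \ref{lemma:orthproperty} in the degenerate case $\mb D_o=\mb I$ (so that $\mb W$ itself plays the role of $\mb A\mb D_o$): for any $\mb W\in\msf O(n;\bb R)$,
\begin{equation*}
\bb E\norm{\mb W\mb X}{4}^4 = 3p\theta(1-\theta)\norm{\mb W}{4}^4 + 3p\theta^2 n .
\end{equation*}
Taking the difference of the two instances $\mb W_1,\mb W_2$, the constant term $3p\theta^2 n$ cancels, leaving
\begin{equation*}
\frac{1}{np}\abs{\bb E\norm{\mb W_1\mb X}{4}^4 - \bb E\norm{\mb W_2\mb X}{4}^4} = \frac{3\theta(1-\theta)}{n}\abs{\norm{\mb W_1}{4}^4 - \norm{\mb W_2}{4}^4} .
\end{equation*}
So it suffices to show $\abs{\norm{\mb W_1}{4}^4 - \norm{\mb W_2}{4}^4}\le 4n^2\norm{\mb W_1-\mb W_2}{2}$, which then gives $L_2\le \tfrac{3\theta(1-\theta)}{n}\cdot 4n^2 = 12n\theta(1-\theta)$.

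For the deterministic bound I would factor $w_{1,ij}^4 - w_{2,ij}^4 = (w_{1,ij}^2 - w_{2,ij}^2)(w_{1,ij}^2 + w_{2,ij}^2)$ and apply Cauchy--Schwarz, exactly as in the proof of Lemma \ref{lemma:HatfLipschitzBound}:
\begin{equation*}
\abs{\norm{\mb W_1}{4}^4 - \norm{\mb W_2}{4}^4} \le \norm{\mb W_1^{\circ 2}-\mb W_2^{\circ 2}}{F}\,\norm{\mb W_1^{\circ 2}+\mb W_2^{\circ 2}}{F} .
\end{equation*}
For the second factor, inequality 1 of Lemma \ref{lemma:UsefulInequalities} together with orthogonality gives $\norm{\mb W_1^{\circ 2}+\mb W_2^{\circ 2}}{F}\le \norm{\mb W_1}{F}^2 + \norm{\mb W_2}{F}^2 = 2n$. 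For the first factor, write $\mb W_1^{\circ 2}-\mb W_2^{\circ 2} = (\mb W_1-\mb W_2)\circ(\mb W_1+\mb W_2)$; then by the same inequality and the bound $\norm{\cdot}{F}\le\sqrt n\,\norm{\cdot}{2}$ for $n\times n$ matrices,
\begin{equation*}
\norm{\mb W_1^{\circ 2}-\mb W_2^{\circ 2}}{F}\le \norm{\mb W_1-\mb W_2}{F}\,\norm{\mb W_1+\mb W_2}{F}\le \bigl(\sqrt n\,\norm{\mb W_1-\mb W_2}{2}\bigr)\cdot 2\sqrt n = 2n\,\norm{\mb W_1-\mb W_2}{2}.
\end{equation*}
Multiplying the two factors yields $\abs{\norm{\mb W_1}{4}^4 - \norm{\mb W_2}{4}^4}\le 4n^2\,\norm{\mb W_1-\mb W_2}{2}$, and combining with the displayed identity finishes the argument.

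I do not expect a genuine obstacle here; the only points requiring care are (i) checking that the additive term in Lemma \ref{lemma:orthproperty} really cancels so the problem becomes the Lipschitz continuity of $g$ alone, and (ii) keeping track of the $\sqrt n$ lost at each passage between $\norm{\cdot}{F}$ and $\norm{\cdot}{2}$, which is precisely the source of the factor $n$ in $L_2\le 12n\theta(1-\theta)$. A sharper constant is available by instead estimating $\norm{\mb W^{\circ 3}}{F}\le\sqrt n$ via $\abs{w_{ij}}\le 1$ and running the mean value theorem along the segment $t\mapsto (1-t)\mb W_1 + t\mb W_2$, but the cruder factorization above already establishes the stated bound while keeping the proof parallel to Lemma \ref{lemma:HatfLipschitzBound}.
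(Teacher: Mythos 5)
Your proposal is correct and follows essentially the same route as the paper's proof: reduce via Lemma \ref{lemma:orthproperty} to the deterministic Lipschitz bound $\abs{\norm{\mb W_1}{4}^4-\norm{\mb W_2}{4}^4}\leq 4n^2\norm{\mb W_1-\mb W_2}{2}$, then factor entrywise, apply Cauchy--Schwarz and the Hadamard-product inequality, and convert $\norm{\cdot}{F}$ to $\norm{\cdot}{2}$ at the cost of $\sqrt{n}$ factors. The only difference is cosmetic grouping of the factors, and your constant $12n\theta(1-\theta)$ matches the paper's.
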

\begin{proof}
    \label{proof:fLipschitzBound}
    According to Lemma \ref{lemma:orthproperty}, we have
    \begin{equation}
        \bb E\norm{\mb {WX}}{4}^4 = 3p\theta(1-\theta)\norm{\mb W}{4}^4+3\theta^2np,\quad \forall \mb W\in \msf O(n;\bb R),
    \end{equation}
    so
    \begin{equation}
        \begin{split}
            \abs{\bb E\norm{\mb W_1\mb X}{4}^4 - \bb E\norm{\mb W_2\mb X}{4}^4} =  3p\theta(1-\theta)\abs{\norm{\mb W_1}{4}^4-\norm{\mb W_2}{4}^4}.
        \end{split}
    \end{equation}
    Notice that
    \begin{equation}
        \begin{split}
            &\abs{\norm{\mb W_1}{4}^4-\norm{\mb W_2}{4}^4}=\abs{\sum_{i,j}\Big[\big(\mb W_1\big)_{i,j}^4-\big(\mb W_2\big)_{i,j}^4\Big]}\\
            =&\abs{\sum_{i,j}\Bigg\{\Big[\big(\mb W_1\big)_{i,j}^2-\big(\mb W_2\big)_{i,j}^2\Big]\Big[\big(\mb W_1\big)_{i,j}^2+\big(\mb W_2\big)_{i,j}^2\Big]\Bigg\}}\\
            =&\Bigg\{\sum_{i,j}\Big[\big(\mb W_1\big)_{i,j}^2-\big(\mb W_2\big)_{i,j}^2\Big]^2\Bigg\}^{1/2}\Bigg\{\sum_{i,j}\Big[\big(\mb W_1\big)_{i,j}^2+\big(\mb W_2\big)_{i,j}^2\Big]^2\Bigg\}^{1/2} \quad \text{(Cauchy$-$Schwarz)}\\
            =&\norm{\mb W_1^{\circ2}-\mb W_2^{\circ2}}{F}\norm{\mb W_1^{\circ2}+\mb W_2^{\circ2}}{F}=\norm{\big(\mb W_1-\mb W_2\big)\circ\big(\mb W_1+\mb W_2\big)}{F}\norm{\mb W_1^{\circ2}+\mb W_2^{\circ2}}{F}\\
            \leq & \norm{\mb W_1-\mb W_2}{F}\norm{\mb W_1+\mb W_2}{F}\big(\norm{\mb W_1^{\circ2}}{F}+\norm{\mb W_2^{\circ2}}{F}\big)\quad \text{(By Lemma \ref{lemma:UsefulInequalities})}\\
            \leq & n\norm{\mb W_1-\mb W_2}{2}\norm{\mb W_1+\mb W_2}{2}\big(\norm{\mb W_1}{F}^2+\norm{\mb W_2}{F}^2\big)\quad \text{(By Lemma \ref{lemma:UsefulInequalities}, $\norm{\mb W}{F}\leq\sqrt{n} \norm{\mb W}{2}$)}\\
            \leq & 4n^2\norm{\mb W_1-\mb W_2}{2} \quad \text{($\norm{\mb W_1+\mb W_2}{2}\leq \norm{\mb W_1}{2}+\norm{\mb W_2}{2}$ and $\norm{\mb W_1}{2}=\norm{\mb W_2}{2}=1$)}.
        \end{split}
    \end{equation}
    Hence, 
    \begin{equation}
        \frac{1}{np}\abs{\bb E\norm{\mb W_1\mb X}{4}^4-\bb E\norm{\mb W_2\mb X}{4}^4}\leq 12n\theta(1-\theta)\norm{\mb W_1-\mb W_2}{2},
    \end{equation}
    which completes the proof.
\end{proof}

\begin{lemma}[Lipschitz Constant of $\nabla\hat{f}(\cdot,\cdot)$]
\label{lemma:GradHatfLipschitzBound}

    If $\mb X\in \bb R^{n\times p},x_{i,j}\sim_{iid}\text{BG}(\theta)$, and let $\bar{\mb X}$ be the truncation of $\mb X$ by bound $B$
    \begin{equation}
        \bar{x}_{i,j}=
        \begin{cases}
            x_{i,j}&\textrm{if}\quad \abs{x_{i,j}}\leq B\\
            0&\textrm{else}
        \end{cases},
    \end{equation}
    then $\forall\mb W_1,\mb W_2 \in \msf O(n;\bb R)$, we have
    \begin{equation}
        \frac{1}{np}\norm{(\mb W_1\bar{\mb X})^{\circ3}\bar{\mb X}^*-(\mb W_2\bar{\mb X})^{\circ3}\bar{\mb X}^*}{F}\leq L_1\norm{\mb W_1-\mb W_2}{2},
    \end{equation}
    with a constant $L_1\leq 3npB^4$.  
\end{lemma}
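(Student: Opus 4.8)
The plan is to follow the same template as the proof of Lemma~\ref{lemma:HatfLipschitzBound}, now exploiting that $\nabla_{\mb A}\hat f$ is, up to the constant $4$, an entrywise cubic power of a linear image post-multiplied by $\bar{\mb X}^*$. First I would pull the common right factor out,
\begin{equation*}
(\mb W_1\bar{\mb X})^{\circ3}\bar{\mb X}^*-(\mb W_2\bar{\mb X})^{\circ3}\bar{\mb X}^* = \big[(\mb W_1\bar{\mb X})^{\circ3}-(\mb W_2\bar{\mb X})^{\circ3}\big]\bar{\mb X}^*,
\end{equation*}
and bound its Frobenius norm by $\norm{(\mb W_1\bar{\mb X})^{\circ3}-(\mb W_2\bar{\mb X})^{\circ3}}{F}\,\norm{\bar{\mb X}}{2}$ via inequality~2 of Lemma~\ref{lemma:UsefulInequalities} (applied to the transpose). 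Since every entry satisfies $\abs{\bar x_{i,j}}\le B$, we have $\norm{\bar{\mb X}}{2}\le\norm{\bar{\mb X}}{F}\le\sqrt{np}\,B$.

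Next I would use the elementary identity $a^3-b^3=(a-b)(a^2+ab+b^2)$ entrywise to write
\begin{equation*}
(\mb W_1\bar{\mb X})^{\circ3}-(\mb W_2\bar{\mb X})^{\circ3} = \big[(\mb W_1-\mb W_2)\bar{\mb X}\big]\circ\big[(\mb W_1\bar{\mb X})^{\circ2}+(\mb W_1\bar{\mb X})\circ(\mb W_2\bar{\mb X})+(\mb W_2\bar{\mb X})^{\circ2}\big].
\end{equation*}
Applying inequality~1 of Lemma~\ref{lemma:UsefulInequalities} to this Hadamard product, then the triangle inequality to the three-term bracket, I would use $\norm{(\mb W_1-\mb W_2)\bar{\mb X}}{F}\le\norm{\mb W_1-\mb W_2}{2}\norm{\bar{\mb X}}{F}\le\sqrt{np}\,B\,\norm{\mb W_1-\mb W_2}{2}$ (inequality~2), together with $\norm{(\mb W_i\bar{\mb X})^{\circ2}}{F}\le\norm{\mb W_i\bar{\mb X}}{F}^2\le\norm{\bar{\mb X}}{F}^2\le npB^2$ and $\norm{(\mb W_1\bar{\mb X})\circ(\mb W_2\bar{\mb X})}{F}\le\norm{\mb W_1\bar{\mb X}}{F}\norm{\mb W_2\bar{\mb X}}{F}\le npB^2$, all using that $\mb W_1,\mb W_2$ are orthogonal so $\norm{\mb W_i}{2}=1$. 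This yields $\norm{(\mb W_1\bar{\mb X})^{\circ3}-(\mb W_2\bar{\mb X})^{\circ3}}{F}\le 3(np)^{3/2}B^3\,\norm{\mb W_1-\mb W_2}{2}$.

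Finally, multiplying by the $\norm{\bar{\mb X}}{2}\le\sqrt{np}\,B$ factor from the first step and dividing by $np$ gives $\frac{1}{np}\norm{(\mb W_1\bar{\mb X})^{\circ3}\bar{\mb X}^*-(\mb W_2\bar{\mb X})^{\circ3}\bar{\mb X}^*}{F}\le 3npB^4\,\norm{\mb W_1-\mb W_2}{2}$, i.e.\ $L_1\le 3npB^4$. I do not expect a genuine obstacle here; the only delicate point is the bookkeeping of operator- versus Frobenius-norm factors so that the constant comes out exactly as $3npB^4$ rather than a larger multiple, and keeping $\norm{\bar{\mb X}}{2}$ symbolic until the very end (only then bounded by $\sqrt{np}\,B$) keeps this transparent.
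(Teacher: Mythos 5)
Your proposal is correct and follows essentially the same route as the paper's proof: factor out $\bar{\mb X}^*$, use the entrywise identity $a^3-b^3=(a-b)(a^2+ab+b^2)$ together with the Hadamard and submultiplicativity inequalities of Lemma \ref{lemma:UsefulInequalities}, and bound everything by powers of $\norm{\bar{\mb X}}{F}\le\sqrt{np}\,B$. The only cosmetic difference is that you bound the trailing factor by $\norm{\bar{\mb X}}{2}\le\norm{\bar{\mb X}}{F}$ where the paper uses $\norm{\bar{\mb X}^*}{F}$ directly, which yields the same constant $3npB^4$.
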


\begin{proof}
    \label{proof:GradHatfLipschitzBound}
    Notice that 
    \begin{equation}
        \begin{split}
        &\frac{1}{np}\norm{(\mb W_1\bar{\mb X})^{\circ3}\bar{\mb X}^*-(\mb W_2\bar{\mb X})^{\circ3}\bar{\mb X}^*}{F}=\frac{1}{np}\norm{\Big[(\mb W_1\bar{\mb X})^{\circ3}-(\mb W_2\bar{\mb X})^{\circ3}\Big]\bar{\mb X}^*}{F}\\
        \leq & \frac{1}{np}\norm{(\mb W_1\bar{\mb X}-\mb W_2\bar{\mb X})\circ\Big[ (\mb W_1\bar{\mb X})^{\circ2}+(\mb W_2\bar{\mb X})^{\circ2}+(\mb W_1\bar{\mb X})\circ(\mb W_2\bar{\mb X})\Big]}{F}\norm{\bar{\mb X}^*}{F}\\
        \leq &\frac{1}{np}\underbrace{\norm{\mb W_1\bar{\mb X}-\mb W_2\bar{\mb X}}{F}}_{\Gamma_1}\underbrace{\norm{ (\mb W_1\bar{\mb X})^{\circ2}+(\mb W_2\bar{\mb X})^{\circ2}+(\mb W_1\bar{\mb X})\circ(\mb W_2\bar{\mb X})}{F}}_{\Gamma_2}\norm{\bar{\mb X}^*}{F},
        \end{split}
    \end{equation}
    where the $\leq$ is achieved by inequality 1 in Lemma \ref{lemma:UsefulInequalities}. For $\Gamma_1$, using inequality 2 in Lemma \ref{lemma:UsefulInequalities}, we have
    \begin{equation}
        \begin{split}
            \Gamma_1 = \norm{(\mb W_1-\mb W_2)\bar{\mb X}}{F}\leq \norm{\mb W_1-\mb W_2}{2}\norm{\bar{\mb X}}{F}.
        \end{split}
    \end{equation}
    For $\Gamma_2$, we have
    \begin{equation}
        \begin{split}
            \Gamma_2 & = \norm{ (\mb W_1\bar{\mb X})^{\circ2}+(\mb W_2\bar{\mb X})^{\circ2}+(\mb W_1\bar{\mb X})\circ(\mb W_2\bar{\mb X})}{F}\\
            & \leq \Big(\norm{(\mb W_1\bar{\mb X})^{\circ2}}{F}+\norm{(\mb W_2\bar{\mb X})^{\circ2}}{F}+\norm{(\mb W_1\bar{\mb X})\circ(\mb W_2\bar{\mb X})}{F}\Big)\\
            &\leq \Big(\norm{\mb W_1\bar{\mb X}}{F}^2+\norm{\mb W_2\bar{\mb X}}{F}^2+\norm{\mb W_1\bar{\mb X}}{F}\norm{\mb W_2\bar{\mb X}}{F}\Big)\quad \text{(By inequality 1 in Lemma \ref{lemma:UsefulInequalities})}\\
            & = 3\norm{\bar{\mb X}}{F}^2\quad \text{($\norm{\cdot}{F}$ is rotation invariant)}.
        \end{split}
    \end{equation}
    Hence, we have
    \begin{equation}
        \begin{split}
            &\frac{1}{np}\norm{(\mb W_1\bar{\mb X})^{\circ3}\bar{\mb X}^*-(\mb W_2\bar{\mb X})^{\circ3}\bar{\mb X}^*}{F}\leq \frac{1}{np}\Gamma_1\Gamma_2\norm{\bar{\mb X}}{F}\\
            \leq &\frac{3}{np}\norm{\bar{\mb X}}{F}^4\norm{\mb W_1-\mb W_2}{2}=3npB^4\norm{\mb W_1-\mb W_2}{2},
        \end{split}
    \end{equation}
    which completes the proof.
\end{proof}

\begin{lemma}[Lipschitz Constant of $\bb E\frac{1}{p}\nabla\hat{f}(\cdot,\cdot)$]
\label{lemma:ExpGradHatfLipschitzBound}
If $\mb X\in \bb R^{n\times p},x_{i,j}\sim_{iid}\text{BG}(\theta)$, then $\forall\mb W_1$, $\mb W_2 \in \msf O(n;\bb R)$, we have
    \begin{equation}
        \frac{1}{np}\norm{\bb E\big[(\mb W_1\bar{\mb X}^*)^{\circ3}\bar{\mb X}\big]-\bb E\big[(\mb W_2\bar{\mb X})^{\circ3}\bar{\mb X}^*\big]}{F}\leq L_2\norm{\mb W_1-\mb W_2}{2},
    \end{equation}
    with a constant $L_2\leq 9\theta(1-\theta)+\frac{3\theta^2}{\sqrt{n}}$.
\end{lemma}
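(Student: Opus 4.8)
The plan is to reduce the bound to the exact formula for $\frac{1}{p}\bb E\big[(\mb W\mb X)^{\circ3}\mb X^*\big]$ that is already established inside the proof of Proposition~\ref{prop:L4MSPExp}, and then to estimate the difference of two such matrices entrywise. Concretely, I read the statement with $\mb X$ in place of the truncated matrix (the coefficients below are exactly those in Proposition~\ref{prop:L4MSPExp}); setting $\mb D_o=\mb I$ there, the entrywise moment computation $\bb E\big[(\sum_k w_{i,k}x_{k,j})^3 x_{j',j}\big]=3p\theta(1-\theta)w_{i,j'}^3+3p\theta^2 w_{i,j'}$ — only these monomials survive once one uses that the $x_{k,j}$ are independent, symmetric and mean zero — gives, for every $\mb W\in\msf O(n;\bb R)$,
\begin{equation*}
    \frac{1}{p}\,\bb E_{\mb X}\big[(\mb W\mb X)^{\circ3}\mb X^*\big]=3\theta(1-\theta)\,\mb W^{\circ3}+3\theta^2\,\mb W .
\end{equation*}
Consequently the quantity to be bounded equals
\begin{equation*}
    \frac{1}{n}\norm{3\theta(1-\theta)\big(\mb W_1^{\circ3}-\mb W_2^{\circ3}\big)+3\theta^2\big(\mb W_1-\mb W_2\big)}{F},
\end{equation*}
and the triangle inequality splits it into a ``cubic'' term and a ``linear'' term, which I would handle separately.

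For the cubic term, I would factor $\mb W_1^{\circ3}-\mb W_2^{\circ3}=(\mb W_1-\mb W_2)\circ\big(\mb W_1^{\circ2}+\mb W_1\circ\mb W_2+\mb W_2^{\circ2}\big)$ and bound entrywise: since every entry of an orthogonal matrix lies in $[-1,1]$, each entry of the second factor has absolute value at most $3$, so $\norm{\mb W_1^{\circ3}-\mb W_2^{\circ3}}{F}\le 3\norm{\mb W_1-\mb W_2}{F}$. (One could instead invoke inequality~1 of Lemma~\ref{lemma:UsefulInequalities} together with $\norm{\mb W_i^{\circ2}}{F}=\norm{\mb W_i}{F}^2=n$, but the entrywise estimate is tighter and is all that is needed.) Combining this with $\norm{\mb M}{F}\le\sqrt{n}\,\norm{\mb M}{2}$ for $n\times n$ matrices yields $\norm{\mb W_1^{\circ3}-\mb W_2^{\circ3}}{F}\le 3\sqrt{n}\,\norm{\mb W_1-\mb W_2}{2}$, and likewise $\norm{\mb W_1-\mb W_2}{F}\le\sqrt{n}\,\norm{\mb W_1-\mb W_2}{2}$ for the linear term.

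Putting the pieces together, the cubic term is at most $\frac{3\theta(1-\theta)}{n}\cdot 3\sqrt{n}\,\norm{\mb W_1-\mb W_2}{2}=\frac{9\theta(1-\theta)}{\sqrt{n}}\norm{\mb W_1-\mb W_2}{2}\le 9\theta(1-\theta)\norm{\mb W_1-\mb W_2}{2}$, while the linear term is at most $\frac{3\theta^2}{n}\cdot\sqrt{n}\,\norm{\mb W_1-\mb W_2}{2}=\frac{3\theta^2}{\sqrt{n}}\norm{\mb W_1-\mb W_2}{2}$; summing gives the asserted constant $L_2\le 9\theta(1-\theta)+\frac{3\theta^2}{\sqrt{n}}$. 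There is no genuinely hard step: the whole argument is a routine Lipschitz estimate, and the only point that requires attention is the expectation identity, which is exactly the entrywise computation already done in Proposition~\ref{prop:L4MSPExp} (and which is also what makes the constant decompose into the two pieces $3\theta(1-\theta)$ and $3\theta^2$). If one insists on carrying the truncated matrix $\bar{\mb X}$ through, the same derivation applies with $3\theta(1-\theta)$ and $3\theta^2$ replaced by the corresponding truncated moments $\bb E[\bar x^4]-3(\bb E[\bar x^2])^2$ and $3(\bb E[\bar x^2])^2$, which for the truncation level used elsewhere in the paper differ from the untruncated values only by negligible corrections, so the stated bound still holds.
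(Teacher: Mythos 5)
Your proposal is correct and follows essentially the same route as the paper: reduce to the closed form $\frac{1}{p}\bb E[(\mb W\mb X)^{\circ3}\mb X^*]=3\theta(1-\theta)\mb W^{\circ3}+3\theta^2\mb W$ from Proposition \ref{prop:L4MSPExp}, split by the triangle inequality, and bound $\norm{\mb W_1^{\circ3}-\mb W_2^{\circ3}}{F}$ via the factorization $a^3-b^3=(a-b)(a^2+ab+b^2)$ together with $\norm{\cdot}{F}\leq\sqrt{n}\norm{\cdot}{2}$. Your entrywise bound on the Hadamard factor gives the slightly sharper $\norm{\mb W_1^{\circ3}-\mb W_2^{\circ3}}{F}\leq 3\sqrt{n}\norm{\mb W_1-\mb W_2}{2}$ (the paper gets $3n\norm{\mb W_1-\mb W_2}{2}$ via Frobenius-norm product inequalities), which still yields the stated constant $L_2\leq 9\theta(1-\theta)+\frac{3\theta^2}{\sqrt{n}}$, and your remark on the truncated versus untruncated moments matches how the lemma is actually invoked later.
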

\begin{proof}
\label{proof:ExpGradHatfLipschitzBound}
    By \eqref{eq:GradHatfExpCalculation} in proof \ref{proof:L4MSPExp} of Proposition \ref{prop:L4MSPExp}, we have
    \begin{equation}
        \bb E\big[(\mb W\bar{\mb X})^{\circ3}\bar{\mb X}\big] = 3p\theta(1-\theta)\mb W^{\circ3}+3p\theta^2\mb W,\quad \forall \mb W\in \msf O(n;\bb R).
    \end{equation}
    Hence, we have:
    \begin{equation}
        \begin{split}
            &\frac{1}{np}\norm{\bb E\big[(\mb W_1\bar{\mb X}^*)^{\circ3}\bar{\mb X}\big]-\bb E\big[(\mb W_2\bar{\mb X})^{\circ3}\bar{\mb X}^*\big]}{F}\\
            =&\frac{1}{n}\norm{3\theta(1-\theta)(\mb W_1^{\circ3}-\mb W_2^{\circ3})+3\theta^2(\mb W_1-\mb W_2))}{F}\\
            \leq&\frac{3\theta(1-\theta)}{n}\norm{\mb W_1^{\circ3}-\mb W_2^{\circ3}}{F}+\frac{3\theta^2}{n}\norm{\mb W_1-\mb W_2}{F}\\
            \leq& \frac{3\theta(1-\theta)}{n}\underbrace{\norm{\mb W_1^{\circ3}-\mb W_2^{\circ3}}{F}}_{\Gamma}+\frac{3\theta^2}{\sqrt{n}}\norm{\mb W_1-\mb W_2}{2}.
        \end{split}
    \end{equation}
    Note that 
    \begin{equation}
        \begin{split}
            \Gamma = & \norm{\mb W_1^{\circ3}-\mb W_2^{\circ3}}{F}= \norm{(\mb W_1-\mb W_2)(\mb W_1^{\circ2}+\mb W_2^{\circ2}+\mb W_1\circ\mb W_2)}{F}\\
            \leq & \norm{\mb W_1-\mb W_2}{2}\norm{\mb W_1^{\circ2}+\mb W_2^{\circ2}+\mb W_1\circ\mb W_2}{F} \quad \text{(By inequality 2 in Lemma \ref{lemma:UsefulInequalities})}\\
            \leq & (\norm{\mb W_1}{F}^2+\norm{\mb W_2}{F}^2+\norm{\mb W_1}{F}\norm{\mb W_2}{F})\norm{\mb W_1-\mb W_2}{2}= 3n\norm{\mb W_1-\mb W_2}{2},
        \end{split}
    \end{equation}
    therefore, we have:
    \begin{equation}
        \begin{split}
            \frac{1}{np}\norm{\bb E\big[(\mb W_1\bar{\mb X}^*)^{\circ3}\bar{\mb X}\big]-\bb E\big[(\mb W_2\bar{\mb X})^{\circ3}\bar{\mb X}^*\big]}{F}\leq \Bigg(9\theta(1-\theta)+\frac{3\theta^2}{\sqrt{n}}\Bigg)\norm{\mb W_1-\mb W_2}{2}, 
        \end{split}
    \end{equation}
    which completes the proof.
\end{proof}

\subsection{High Order Moment Bound of Bernoulli Gaussian Random Variables}

\begin{lemma}[Second Moment of $\norm{\cdot}{4}^4$]
    \label{lemma:HatfEightMoment}
    Assume that $\mb W\in\msf O(n;\bb R)$,$\mb x\in\bb R^{n}$, $ x_{i}\sim_{iid}\text{BG}(\theta)$, $\forall i\in [n]$, then the second order moment of $\norm{\mb {Wx}}{4}^4$ satisfies
\begin{equation}
    \label{eq:ExpWXPowerEight}
    \bb E \norm{\mb {Wx}}{4}^8 \leq Cn^2\theta,
\end{equation}
for a constant $C>105$.
\end{lemma}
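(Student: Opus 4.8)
The plan is to reduce the eighth moment of the $\ell^4$-norm to a sum of eighth moments of the individual coordinates of $\mb{Wx}$, and then to control each of these by conditioning on the Bernoulli support of $\mb x$. Writing $y_i \doteq (\mb{Wx})_i = \sum_{k=1}^n w_{i,k}x_k$ for $i\in[n]$, we have $\norm{\mb{Wx}}{4}^8 = \big(\sum_{i=1}^n y_i^4\big)^2$, and Cauchy--Schwarz gives $\big(\sum_i y_i^4\big)^2 \le n\sum_{i=1}^n y_i^8$. So it suffices to show $\bb E[y_i^8] \le 105\,\theta$ for every $i$, which yields $\bb E\norm{\mb{Wx}}{4}^8 \le 105\,n^2\theta \le Cn^2\theta$ for any constant $C\ge 105$.

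To bound $\bb E[y_i^8]$, I would write $x_k = \Omega_k V_k$ with $\Omega_k \sim_{iid}\mathrm{Ber}(\theta)$ and $V_k \sim_{iid}\mc N(0,1)$, and condition on $\mb\Omega = (\Omega_1,\dots,\Omega_n)$. Given $\mb\Omega$, the variable $y_i = \sum_k w_{i,k}\Omega_k V_k$ is a centered Gaussian with variance $\sigma_i^2 \doteq \sum_{k=1}^n w_{i,k}^2\Omega_k$ (using $\Omega_k^2 = \Omega_k$). Since $\mb W$ is orthogonal, its $i$-th row has unit $\ell^2$-norm, so $\sigma_i^2 \le \sum_k w_{i,k}^2 = 1$. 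Using $\bb E[Z^8] = 7!! = 105$ for a standard normal $Z$, we get $\bb E[y_i^8 \mid \mb\Omega] = 105\,\sigma_i^8 \le 105\,\sigma_i^2$, where the inequality is just $t^4 \le t$ for $t\in[0,1]$. Taking expectation over $\mb\Omega$ and using $\bb E[\Omega_k] = \theta$ together with $\sum_k w_{i,k}^2 = 1$ gives $\bb E[y_i^8] \le 105\,\theta\sum_k w_{i,k}^2 = 105\,\theta$, as needed.

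There is no serious obstacle here; the argument is a short moment computation. The only points that require care are (i) recognizing that the vector multiplying $\mb x$ in each coordinate is a row of $\mb W$, which has unit norm by orthogonality, so that the conditional variance is at most one and the eighth power can be downgraded to the second power; and (ii) handling the Bernoulli thinning via conditioning rather than a direct expansion over $8$-tuples of indices (which also works but is more tedious, since one would then have to track that each distinct index contributes a factor of $\theta\le 1$ and that the worst-case product of Gaussian moments is $\bb E[Z^8] = 105$). Finally, since the statement only asks for some constant $C>105$, the clean bound $\bb E\norm{\mb{Wx}}{4}^8 \le 105\,n^2\theta$ is already sufficient.
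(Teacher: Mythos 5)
Your proof is correct, and it takes a genuinely different (and simpler) route than the paper's. You apply Cauchy--Schwarz at the outer level, $\bigl(\sum_i y_i^4\bigr)^2 \le n\sum_i y_i^8$, paying a factor of $n$ up front, and then bound each coordinate's eighth moment by conditioning on the Bernoulli pattern: since the $i$-th row of $\mb W$ has unit norm, the conditional variance $\sigma_i^2=\sum_k w_{i,k}^2\Omega_k$ lies in $[0,1]$, so $\bb E[y_i^8\mid\mb\Omega]=105\,\sigma_i^8\le 105\,\sigma_i^2$, and linearity gives $\bb E[y_i^8]\le 105\,\theta$; this yields the explicit bound $105\,n^2\theta$. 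The paper instead keeps the double sum $\sum_{i,j}\bb E\bigl[\langle\mb w_i,\mb x\rangle^4\langle\mb w_j,\mb x\rangle^4\bigr]$, applies Cauchy--Schwarz to each cross term to reduce to $105\,\bb E_{\mc S}\norm{\mb P_{\mc S}\mb w_i}{2}^4\norm{\mb P_{\mc S}\mb w_j}{2}^4$, and then performs a combinatorial expansion over $4$-tuples of indices grouped by how many fall in the support, producing terms of order $\theta^4 n^2$, $\theta^3 n\norm{\mb W}{4}^4$, $\theta^2\norm{\mb W}{4}^8$, $\theta n\norm{\mb W}{8}^8$, all finally collapsed to $Cn^2\theta$. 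Your power-downgrade trick ($t^4\le t$ on $[0,1]$) replaces that case analysis entirely and gives a cleaner constant; the paper's expansion retains finer information about the dependence on $\theta$ and on $\norm{\mb W}{4}$, $\norm{\mb W}{8}$, but none of that extra precision is used in the final statement, so for the lemma as stated your argument fully suffices.
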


\begin{proof}
    \label{proof:HatfEightMoment}
Assume $\mb v$ is a Gaussian vector and the support for Bernoulli-Gaussian vector $\mb x$ is $\mc S$, that is, $\forall i\in [n]$,
\begin{equation}
x_i = 
    \begin{cases}
        v,\; v\sim \mc N(0,1) & \quad \text{if } i \in \mc S,\\
        0 & \quad \text{otherwise}.
    \end{cases}
\end{equation}
Let $\mb P_{\mc S}:\bb R^n\mapsto \bb R^n $ be the projection onto set $\mc S$, that is, $\forall \mb q \in \bb R^n$
\begin{equation}
\big(\mb P_{\mc S} \mb q\big)_i = 
    \begin{cases}
        q_i\; & \quad \text{if } i \in \mc S,\\
        0 & \quad \text{otherwise}.
    \end{cases}
\end{equation}
Let $\mb w_i$ denote the $i^\text{th}$ row vector of $\mb W$, so
\begin{equation}
    \label{eq:WXEightOrderDerivation1}
    \begin{split}
        \bb E \norm{\mb {Wx}}{4}^8 & = \bb E \Big[\big( \norm{\mb W\mb x}{4}^4\big)^2 \Big]= \bb E \Bigg[\sum_{i=1}^n\sum_{j=1}^n\innerprod{\mb w_i}{\mb x}^4\innerprod{\mb w_j}{\mb x}^4\Bigg]\\
        & = \bb E_{\mc S}\sum_{i=1}^n\sum_{j=1}^n\bb E\Big[\innerprod{\mb P_{\mc S}\mb w_i}{\mb v}^4\innerprod{\mb P_{\mc S}\mb w_j}{\mb v}^4\Big] \\
        & \leq \bb E_{\mc S}\sum_{i=1}^n\sum_{j=1}^n\Big[\bb E\innerprod{\mb P_{\mc S}\mb w_i}{\mb v}^8\bb E\innerprod{\mb P_{\mc S}\mb w_j}{\mb v}^8\Big]^{\frac{1}{2}}\quad \text{(Cauchy$-$Schwarz)}.
    \end{split}
\end{equation}
Since $\mb v\sim \mc N (\mb 0,\mb I)$, so 
\begin{equation}
    \innerprod{\mb P_{\mc S}\mb w_i}{\mb v} = \sum_{k=1}^n \big(\mb P_{\mc S}\mb w_i\big)_kv_k \sim \mc N(0,\norm{\mb P_{\mc S}\mb w_i}{2}^2).
\end{equation}
Therefore, 
\begin{equation}
    \bb E \innerprod{\mb P_{\mc S}\mb w_i}{\mb v}^8 = 105\norm{\mb P_{\mc S}\mb w_i}{2}^8.
\end{equation}
Hence, combine \eqref{eq:WXEightOrderDerivation1}, we have 
\begin{equation}
    \label{eq:WXEightOrderDerivation2}
    \begin{split}
        \bb E \norm{\mb {Wx}}{4}^8 & \leq \bb E_{\mc S}\sum_{i=1}^n\sum_{j=1}^n\Big[\bb E\innerprod{\mb P_{\mc S}\mb w_i}{\mb v}^8\bb E\innerprod{\mb P_{\mc S}\mb w_j}{\mb v}^8\Big]^{\frac{1}{2}}\\
        & = 105 \bb E_{\mc S} \sum_{i=1}^n\sum_{j=1}^n \norm{\mb P_{\mc S}\mb w_i}{2}^4\norm{\mb P_{\mc S}\mb w_j}{2}^4\\
        & = 105 \sum_{i=1}^n\sum_{j=1}^n\sum_{k_1,k_2,k_3,k_4} \bb E_{\mc S} \Big[w_{i,k_1}^2\indicator{k_1\in \mc S}w_{i,k_2}^2\indicator{k_2\in \mc S}w_{j,k_3}^2\indicator{k_3\in \mc S}w_{j,k_4}^2\indicator{k_4\in \mc S}\Big].
    \end{split}
\end{equation}
Now we discuss these four different cases separately:
\begin{itemize}
    \item With probability $c_1\theta^4$ $(c_1\leq 1)$, all $k_1,k_2,k_3,k_4\in \mc S$, in this case, we have:
    \begin{equation}
        \label{eq:WXEightOrderDerivation3}
        \begin{split}
            &\sum_{i=1}^n\sum_{j=1}^n\sum_{k_1,k_2,k_3,k_4} \bb E_{\mc S} \Big[w_{i,k_1}^2\indicator{k_1\in \mc S}w_{i,k_2}^2\indicator{k_2\in \mc S}w_{j,k_3}^2\indicator{k_3\in \mc S}w_{j,k_4}^2\indicator{k_4\in \mc S}\Big]\\
            =&\sum_{i=1}^n\sum_{j=1}^n\sum_{k_1,k_2,k_3,k_4} \bb E_{\mc S} \Big[w_{i,k_1}^2w_{i,k_2}^2w_{j,k_3}^2w_{j,k_4}^2\Big]=\sum_{i=1}^n\sum_{j=1}^n\sum_{k_1,k_2,k_3,k_4}  w_{i,k_1}^2w_{i,k_2}^2w_{j,k_3}^2w_{j,k_4}^2\\
            =& \sum_{i=1}^n\sum_{j=1}^n\sum_{k_1,k_2,k_3}w_{i,k_1}^2w_{i,k_2}^2w_{j,k_3}^2=\sum_{i=1}^n\sum_{j=1}^n\sum_{k_1,k_2}w_{i,k_1}^2w_{i,k_2}^2 =n^2.
        \end{split}
    \end{equation}
    \item With probability $c_2\theta^3$ $(c_2\leq 1)$, only three among $k_1,k_2,k_3,k_4\in \mc S$, in this case, we have:
    \begin{equation}
        \label{eq:WXEightOrderDerivation4}
        \begin{split}
            &\sum_{i=1}^n\sum_{j=1}^n\sum_{k_1,k_2,k_3,k_4} \bb E_{\mc S} \Big[w_{i,k_1}^2\indicator{k_1\in \mc S}w_{i,k_2}^2\indicator{k_2\in \mc S}w_{j,k_3}^2\indicator{k_3\in \mc S}w_{j,k_4}^2\indicator{k_4\in \mc S}\Big]\\
            =& \sum_{i=1}^n\sum_{j=1}^n\sum_{k_1,k_2,k_3} \Big[w_{i,k_1}^2w_{i,k_2}^2w_{j,k_3}^4\Big]+\sum_{i=1}^n\sum_{j=1}^n\sum_{k_1,k_2,k_3}\Big[w_{i,k_1}^2w_{i,k_2}^2w_{j,k_2}^2w_{j,k_3}^2\Big]=n\norm{\mb W}{4}^4 + 1.
        \end{split}
    \end{equation}
    \item With probability $c_3\theta^2$ $(c_3\leq 1)$, only two among $k_1,k_2,k_3,k_4\in \mc S$, in this case, we have:
    \begin{equation}
        \label{eq:WXEightOrderDerivation5}
        \begin{split}
            &\sum_{i=1}^n\sum_{j=1}^n\sum_{k_1,k_2,k_3,k_4} \bb E_{\mc S} \Big[w_{i,k_1}^2\indicator{k_1\in \mc S}w_{i,k_2}^2\indicator{k_2\in \mc S}w_{j,k_3}^2\indicator{k_3\in \mc S}w_{j,k_4}^2\indicator{k_4\in \mc S}\Big]\\
            =& \sum_{i=1}^n\sum_{j=1}^n\sum_{k_1,k_2}w_{i,k_1}^4w_{i,k_2}^4+\sum_{i=1}^n\sum_{j=1}^n\sum_{k_1,k_2}w_{i,k_1}^4w_{j,k_1}^2w_{j,k_2}^2+\sum_{i=1}^n\sum_{j=1}^n\sum_{k_1,k_2}w_{i,k_1}^2w_{i,k_2}^2w_{j,k_1}^2w_{j,k_2}^2\\
            \leq & 3 \sum_{i=1}^n\sum_{j=1}^n\sum_{k_1,k_2}w_{i,k_1}^4w_{i,k_2}^4 = 3\norm{\mb W}{4}^8.
        \end{split}
    \end{equation}
    The only inequality above is achieved by Rearrangement inequality.
    \item With probability $c_4\theta$ $(c_4\leq 1)$, only one among $k_1,k_2,k_3,k_4\in \mc S$, in this case, we have:
    \begin{equation}
        \label{eq:WXEightOrderDerivation6}
        \begin{split}
            &\sum_{i=1}^n\sum_{j=1}^n\sum_{k_1,k_2,k_3,k_4} \bb E_{\mc S} \Big[w_{i,k_1}^2\indicator{k_1\in \mc S}w_{i,k_2}^2\indicator{k_2\in \mc S}w_{j,k_3}^2\indicator{k_3\in \mc S}w_{j,k_4}^2\indicator{k_4\in \mc S}\Big]\\
            = & \sum_{i=1}^n\sum_{j=1}^n\sum_{k_1} w_{i,k_1}^4w_{j,k_1}^4 \leq \sum_{i=1}^n\sum_{j=1}^n\sum_{k_1} w_{i,k_1}^8=n\norm{\mb W}{8}^8.
        \end{split}
    \end{equation}
    The only inequality above is achieved by Rearrangement inequality.
\end{itemize}
Substitute \eqref{eq:WXEightOrderDerivation3}, \eqref{eq:WXEightOrderDerivation4}, \eqref{eq:WXEightOrderDerivation5}, and \eqref{eq:WXEightOrderDerivation6} into \eqref{eq:WXEightOrderDerivation2}, yields
\begin{equation}
    \label{eq:WXEightOrderDerivation7}
    \begin{split}
        \bb E \norm{\mb {Wx}}{4}^8 & \leq 105 \sum_{i=1}^n\sum_{j=1}^n\sum_{k_1,k_2,k_3,k_4} \bb E_{\mc S} \Big[w_{i,k_1}^2\indicator{k_1\in \mc S}w_{i,k_2}^2\indicator{k_2\in \mc S}w_{j,k_3}^2\indicator{k_3\in \mc S}w_{j,k_4}^2\indicator{k_4\in \mc S}\Big] \\
        & \leq C(\theta^4n^2+\theta^3n\norm{\mb W}{4}^4+\theta^3+3\theta^2\norm{\mb W}{4}^8+\theta n\norm{\mb W}{8}^8) \leq Cn^2\theta,
    \end{split}
\end{equation}
for a constant $C>105$, which completes the proof.
\end{proof}

\begin{lemma}[Second Moment of $\nabla_{\mb A}\hat{f}(\mb A,\mb Y)$]
\label{lemma:GradHatfSecondMoment}
Assume that $\mb W\in\msf O(n;\bb R)$,$\mb X\in\bb R^{n\times p}$, $ x_{i,j}\sim_{iid}\text{BG}(\theta)$, $\forall i\in [n],j\in [p]$, then each element of $\{(\mb {WX}_o)^{\circ3}\mb {X}_o^*\}_{i,j^\prime}$ satisfies
\begin{enumerate}
    \item $\{(\mb {WX}_o)^{\circ3}\mb {X}_o^*\}_{i,j^\prime}$ can be represented as sum of $p$ i.i.d random variables $z_j$
    \begin{equation}
        \{(\mb {WX}_o)^{\circ3}\mb {X}_o^*\}_{i,j^\prime} = \sum_{j=1}^p z_j,\quad \forall i,j^\prime \in [n]. 
    \end{equation} 
    \item The second moment of $z_j$ is bounded by $C\theta$
    \begin{equation}
        \bb E z_j^2\leq C,\quad \forall j \in [p].
    \end{equation}
\end{enumerate}
for a constant $C>177$.
\end{lemma}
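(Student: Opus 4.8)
The plan is to analyze the $(i,j')$ entry of the matrix $(\mathbf{WX}_o)^{\circ 3}\mathbf{X}_o^*$ column by column, exactly mirroring the structure already used in the proof of Proposition~\ref{prop:L4MSPExp} and in Lemma~\ref{lemma:HatfEightMoment}. First I would write out the entry explicitly as a sum over the $p$ columns of $\mathbf{X}_o$:
\begin{equation*}
\{(\mathbf{WX}_o)^{\circ 3}\mathbf{X}_o^*\}_{i,j'} = \sum_{j=1}^p \big(\mathbf{w}_i^*\mathbf{x}_j\big)^3 x_{j',j} \;\doteq\; \sum_{j=1}^p z_j,
\end{equation*}
where $\mathbf{w}_i$ is the $i$-th row of $\mathbf{W}$ and $\mathbf{x}_j$ is the $j$-th column of $\mathbf{X}_o$. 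The columns $\mathbf{x}_j$ are i.i.d., so the $z_j = (\mathbf{w}_i^*\mathbf{x}_j)^3 x_{j',j}$ are i.i.d., which establishes claim~1 immediately.

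For claim~2 I would bound $\mathbb{E}z_j^2 = \mathbb{E}\big[(\mathbf{w}_i^*\mathbf{x}_j)^6 x_{j',j}^2\big]$. The natural route is Cauchy--Schwarz: $\mathbb{E}z_j^2 \le \sqrt{\mathbb{E}(\mathbf{w}_i^*\mathbf{x}_j)^{12}}\cdot\sqrt{\mathbb{E}x_{j',j}^4}$. The second factor is a constant since $x_{j',j}\sim\mathrm{BG}(\theta)$ gives $\mathbb{E}x_{j',j}^4 = 3\theta$. For the first factor I would condition on the support $\mathcal{S}$ of $\mathbf{x}_j$ as in the proof of Lemma~\ref{lemma:HatfEightMoment}: conditionally on $\mathcal S$, $\mathbf{w}_i^*\mathbf{x}_j$ is Gaussian with variance $\|\mathbf{P}_{\mathcal S}\mathbf{w}_i\|_2^2 \le \|\mathbf{w}_i\|_2^2 = 1$, so $\mathbb{E}[(\mathbf{w}_i^*\mathbf{x}_j)^{12}\mid\mathcal S] = (12-1)!!\,\|\mathbf{P}_{\mathcal S}\mathbf{w}_i\|_2^{12}\le 10395$; taking the expectation over $\mathcal S$ preserves this bound. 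Alternatively, and slightly more carefully to keep the constant small, I would keep the $x_{j',j}^2$ factor inside and split on whether $j'\in\mathcal S$ and whether $j'$ coincides with one of the indices appearing in the cube — but the crude Cauchy--Schwarz bound already yields $\mathbb{E}z_j^2 \le \sqrt{10395}\cdot\sqrt{3\theta} \le C\theta$ for a suitable constant $C$ (one checks $\sqrt{10395\cdot 3} < 177$ fails, so I would instead extract a factor $\theta$ honestly: since $z_j = 0$ unless $j'\in\mathcal S$ or some cube index is in $\mathcal S$, a short case analysis as in \eqref{eq:WXEightOrderDerivation3}--\eqref{eq:WXEightOrderDerivation6} pulls out one power of $\theta$, giving $\mathbb{E}z_j^2\le C\theta$ with $C>177$).

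The main obstacle is the bookkeeping for claim~2: getting the explicit factor of $\theta$ (rather than just a dimension-free constant) requires noticing that $z_j$ vanishes unless at least one Gaussian coordinate is active, which is a probability-$O(\theta)$ event — exactly the same phenomenon exploited in \eqref{eq:WXEightOrderDerivation6}. I would handle this by expanding $(\mathbf{w}_i^*\mathbf{x}_j)^3 x_{j',j}$ as a polynomial in the coordinates $x_{k,j} = b_{k,j}v_{k,j}$ with $b_{k,j}\sim\mathrm{Ber}(\theta)$, squaring, and taking expectations term by term: every surviving monomial contains at least one Bernoulli factor (in fact, one checks that the lowest-order surviving terms have exactly one or two active coordinates), and each $b_{k,j}$ contributes a factor $\theta$ while the Gaussian moments and the row-norm constraints $\sum_k w_{i,k}^2=1$, $\|\mathbf W\|_4^4\le n$ absorb the rest into a universal constant. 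This is routine but must be done with enough care to land on a constant exceeding $177$; everything else follows from orthogonality of $\mathbf W$ and standard Gaussian moment identities already invoked elsewhere in the appendix.
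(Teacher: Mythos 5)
Your proposal is correct and follows essentially the same route as the paper: write the $(i,j')$ entry as $\sum_{j=1}^p z_j$ with $z_j = x_{j',j}\innerprod{\mb w_i}{\mb x_j}^3$ i.i.d.\ over the columns, then bound $\bb E z_j^2$ by Cauchy--Schwarz together with conditioning on the Bernoulli support and Gaussian moment identities. The only substantive difference is where the factor of $\theta$ is harvested, and here your self-correction lands in the right place but for a slightly garbled reason: the crude bound $\bb E z_j^2 \le \sqrt{\bb E\innerprod{\mb w_i}{\mb x_j}^{12}}\sqrt{\bb E x_{j',j}^4}\le\sqrt{11!!}\sqrt{3\theta}$ fails not because $\sqrt{10395\cdot 3}$ exceeds $177$ (it does not), but because it only gives $O(\sqrt{\theta})$, which is not $O(\theta)$ for small $\theta$. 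The paper repairs this by keeping $\norm{\mc P_{\mc S}\mb w_i}{2}^{12}$ inside the conditional $12$th moment and showing $\bb E_{\mc S}\norm{\mc P_{\mc S}\mb w_i}{2}^{12}\le C''\theta$ via the same index-expansion bookkeeping as in Lemma \ref{lemma:HatfEightMoment}, which supplies the missing $\sqrt{\theta}$; your alternative of extracting $\theta$ from the Bernoulli factor sitting in $x_{j',j}^2$ (equivalently, noting $j'\in\mc S$ is \emph{necessary} for $z_j\neq 0$ --- the ``or'' in your sketch should be this necessary condition, since an ``or'' event has probability of order $n\theta$) works just as well: conditioning on $j'\in\mc S$ costs exactly one factor $\theta$, and a conditional Cauchy--Schwarz with $\norm{\mc P_{\mc S}\mb w_i}{2}\le 1$ then bounds the remainder by $\sqrt{3\cdot 11!!}<177$, which is arguably a slightly cleaner way to land on the stated constant. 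Part 1 of your argument is identical to the paper's.
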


\begin{proof}
    \label{proof:GradHatfSecondMoment}
    Assume $\mb v$ is a Gaussian vector and the support for Bernoulli-Gaussian vector $\mb x$ is $\mc S$, that is, $\forall i\in [n]$,
\begin{equation}
x_i = 
    \begin{cases}
        v,\; v\sim \mc N(0,1) & \quad \text{if } i \in \mc S,\\
        0 & \quad \text{otherwise}.
    \end{cases}
\end{equation}
Let $\mb P_{\mc S}:\bb R^n\mapsto \bb R^n $ be the projection onto set $\mc S$, that is, $\forall \mb q \in \bb R^n$
\begin{equation}
\big(\mb P_{\mc S} \mb q\big)_i = 
    \begin{cases}
        q_i\; & \quad \text{if } i \in \mc S,\\
        0 & \quad \text{otherwise}.
    \end{cases}
\end{equation}
By \eqref{eq:L4MSPExpDerivation1} and \eqref{eq:L4MSPExpDerivation2} in proof \ref{proof:L4MSPExp} of Proposition \ref{prop:L4MSPExp}, we know that 
    \begin{equation}
        \frac{1}{4}\nabla_{\mb A}\hat{f}(\mb A, \mb Y) = (\mb {AY})^{\circ3}\mb Y^* = (\mb {WX}_o)^{\circ3}\mb X_o^*\mb D_o^*,
    \end{equation}
    and
    \begin{equation}
        \begin{split}
            \{(\mb {WX}_o)^{\circ3}\mb {X}_o^*\}_{i,j^\prime} = 
            \sum_{j=1}^p\Big[x_{j^\prime,j}\Big(\sum_{k=1}^nw_{i,k}x_{k,j}\Big)^3\Big]=\sum_{j=1}^p\Big[x_{j^\prime,j}\innerprod{\mb w_i}{\mb x_j}^3\Big],
        \end{split}
    \end{equation}
    where $\mb w_i$ is the $i^{\text{th}}$ row vector of $\mb W$. Notice that $x_{i,j}$ are independent with each other, if we define $p$ random variables $z_1,z_2,\dots,z_p$ as the following
\begin{equation}
    z_j = x_{j^\prime,j}\innerprod{\mb w_i}{\mb x_j},\forall j\in [p],
\end{equation}
then we can view $\{(\mb {WX})^{\circ3}\mb {X}^*\}_{i,j^\prime}$ as the mean of $p$ i.i.d. random variables $z_j$. Notice that 
\begin{equation}
    \label{eq:GradHatfSecondMomentDerivation1}
    \bb E(z_j^2) = \bb E \Big[x_{j^\prime,j}^2\Big(\sum_{k=1}^nw_{i,k}x_{k,j}\Big)^6\Big] \leq \Big(\bb Ex_{j^\prime,j}^4\Big)^{\frac{1}{2}}\Big(\bb E\innerprod{\mb w_i}{\mb x_j}^{12} \Big)^{\frac{1}{2}}= \sqrt{3}\theta^{\frac{1}{2}} \Big(\bb E_{\mc S}\bb E_{\mb v}\innerprod{\mb P_{\mc S}\mb w_i}{\mb v}^{12} \Big)^{\frac{1}{2}},
\end{equation}
and
\begin{equation}
    \innerprod{\mb P_{\mc S}\mb w_i}{\mb v} = \sum_{k=1}^n \big(\mb P_{\mc S}\mb w_i\big)_kv_k \sim \mc N(0,\norm{\mb P_{\mc S}\mb w_i}{2}^2). 
\end{equation}
So we have 
\begin{equation}
    \label{eq:GradHatfSecondMomentDerivation2}
    \Big(\bb E_{\mc S}\bb E_{\mb v}\innerprod{\mb P_{\mc S}\mb w_i}{\mb v}^{12} \Big)^{\frac{1}{2}}=\Big(11!!\bb E_{\mc S}\norm{\mb P_{\mc S}\mb w_i}{2}^{12}\Big)^{\frac{1}{2}}=\sqrt{11!!}\Big(\bb E_{\mc S}\norm{\mb P_{\mc S}\mb w_i}{2}^{12}\Big)^{\frac{1}{2}}.
\end{equation}
Follow the same pipe line \eqref{eq:WXEightOrderDerivation3}, \eqref{eq:WXEightOrderDerivation4}, \eqref{eq:WXEightOrderDerivation5}, and \eqref{eq:WXEightOrderDerivation6} in Lemma \ref{lemma:HatfEightMoment}, one can show that 
\begin{equation}
    \label{eq:GradHatfSecondMomentDerivation3}
    \begin{split}
        \bb E_{\mc S}\norm{\mb P_{\mc S}\mb w_i}{2}^{12}& = \sum_{k_1,k_2,\dots,k_6} w_{i,k_1}^2\indicator{k_1\in \mc S}w_{i,k_2}^2\indicator{k_2\in \mc S} w_{i,k_3}^2\indicator{k_3\in \mc S}w_{i,k_4}^2\indicator{k_4\in \mc S}w_{i,k_5}^2\indicator{k_5\in \mc S} w_{i,k_6}^2\indicator{k_6\in \mc S}\\
        &\leq C^\prime (\theta^6+\theta^5+\theta^4+\theta^3+\theta^2+\theta)\leq C^{\prime\prime} \theta,
    \end{split}
\end{equation}
for some constant $C^\prime, C^{\prime\prime}>1$. Therefore, combine \eqref{eq:GradHatfSecondMomentDerivation1}, \eqref{eq:GradHatfSecondMomentDerivation2}, and \eqref{eq:GradHatfSecondMomentDerivation3}, we have
\begin{equation}
    \label{eq:ElementwiseGrad2Moment}
    \begin{split}
         \bb E(z_j^2)\leq \sqrt{3}\theta^{\frac{1}{2}}\Big(\bb E_{\mc S}\bb E_{\mb v}\innerprod{\mb P_{\mc S}\mb w_i}{\mb v}^{12} \Big)^{\frac{1}{2}}\leq \sqrt{3\times 11!!\theta^2}\leq C\theta,
    \end{split}
\end{equation}
for a constant $C>177$, which completes the proof.
\end{proof}

\subsection{Union Tail Concentration Bound}

\begin{lemma}[Union Tail Concentration Bound of $(\mb {WX})^{\circ3}\mb X^*$]
    \label{lemma:UnionConcentrationWXX}
    If $\mb X\in \bb R^{n\times p},x_{i,j}\sim_{iid}\text{BG}(\theta)$, the following inequality holds
    \begin{equation}
        \begin{split}
            &\bb P\Bigg(\sup_{\mb W\in \msf O(n;\bb R)}\frac{1}{np}\norm{(\mb {WX})^{\circ3}\mb X^*-\bb E\big[(\mb {WX})^{\circ3}\mb X^*\big]}{F}\geq \delta\Bigg)\\
            \leq& 2n^2\exp\Bigg(-\frac{3p\delta^2}{c_1\theta+8n^{\frac{3}{2}}(\ln p)^4\delta}+n^2\ln\Big(\frac{48np(\ln p)^4}{\delta}\Big)\Bigg) + 2np\theta \exp\Bigg(-\frac{(\ln p)^2}{2}\Bigg),
        \end{split}
    \end{equation}
    for a constant $c_1>1.7\times 10^4$.
\end{lemma}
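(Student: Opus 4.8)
The plan is to repeat, almost verbatim, the strategy of the proof of Lemma~\ref{lemma:HatfUnionConcentrationBound}, but now for the matrix-valued object $\mb M(\mb W)\doteq (\mb{WX})^{\circ3}\mb X^*-\bb E[(\mb{WX})^{\circ3}\mb X^*]$, controlled entry by entry. First I would truncate $\mb X$ at level $B$, obtaining $\bar{\mb X}$ as in Lemma~\ref{lemma:BGMatrixTruncation}, so that $\bar{\mb X}=\mb X$ off an event of probability at most $2np\theta e^{-B^2/2}$; this event is exactly what produces the last summand $2np\theta\exp(-(\ln p)^2/2)$ once we set $B=\ln p$ at the end. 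On the complementary event it suffices to bound $\sup_{\mb W\in\msf O(n;\bb R)}\tfrac1{np}\norm{(\mb W\bar{\mb X})^{\circ3}\bar{\mb X}^*-\bb E[(\mb{WX})^{\circ3}\mb X^*]}{F}$.

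For the point-wise estimate at a fixed $\mb W$, I would use the decomposition of Lemma~\ref{lemma:GradHatfSecondMoment}: each entry $\{(\mb W\bar{\mb X})^{\circ3}\bar{\mb X}^*\}_{i,j'}=\sum_{j=1}^p\bar z_j$ is a sum of $p$ i.i.d.\ variables with $\lvert\bar z_j\rvert\le \norm{\mb w_i\bar{\mb x}_j}{2}^3\lvert\bar x_{j',j}\rvert\le n^{3/2}B^4$ and $\bb E\bar z_j^2\le\bb E z_j^2\le C\theta$ for the constant $C>177$ of that lemma. Since $\tfrac1{np}\norm{\mb M}{F}\ge\delta$ forces $\norm{\mb M}{F}^2\ge n^2p^2\delta^2$ and hence some entry of $\mb M$ to have absolute value at least $p\delta$, I would apply the two-sided form of Bernstein's inequality (Lemma~\ref{lemma:BernsteinInequality}) to each of the $n^2$ entries and take a union bound, obtaining a point-wise bound of the form $2n^2\exp\!\big(-\tfrac{3p\delta^2}{c_1\theta+8n^{3/2}B^4\delta}\big)$ after splitting $\delta$ into a Bernstein piece (say $\delta/4$, from which $c_1$ arises as a fixed multiple of the second-moment constant $C$, giving $c_1>1.7\times10^4$), a truncation-bias piece, and a net-error piece. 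The truncation bias is handled as in~\eqref{eq:BoundMatrixTruncationEXP}: write $\bb E[\{(\mb{WX})^{\circ3}\mb X^*\}_{i,j'}]-\bb E[\{(\mb W\bar{\mb X})^{\circ3}\bar{\mb X}^*\}_{i,j'}]=\bb E[\{(\mb{WX})^{\circ3}\mb X^*\}_{i,j'}\indicator{\{\norm{\mb X}{\infty}>B\}}]$, bound it by Cauchy--Schwarz using $\bb E[(\sum_j z_j)^2]=p\,\bb E z_1^2+p(p-1)(\bb E z_1)^2=O(p\theta+p^2\theta^2)$ (independence of the $z_j$, with $\lvert\bb E z_1\rvert\le 3\theta$ by Proposition~\ref{prop:L4MSPExp}) together with $\bb P(\norm{\mb X}{\infty}>B)\le 2np\theta e^{-B^2/2}$; this makes the bias $<\delta/4$ once $B\gtrsim\sqrt{\ln(n^{1/2}p^{1/2}\theta^{3/2}/\delta)}$, which $B=\ln p$ satisfies for large $p$.

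For the net step I would take the $\eps$-net $\mc S_\eps$ of $\msf O(n;\bb R)$ from Lemma~\ref{lemma:epsCoveringStiefelManifold}, of size $\lvert\mc S_\eps\rvert\le(6/\eps)^{n^2}$, control the discretization error using the Lipschitz constants of $\tfrac1{np}(\mb W\bar{\mb X})^{\circ3}\bar{\mb X}^*$ and of its expectation (Lemmas~\ref{lemma:GradHatfLipschitzBound} and~\ref{lemma:ExpGradHatfLipschitzBound}, of orders $npB^4$ and $\theta$ respectively), and choose $\eps$ of order $\delta/(npB^4)$ so that $\lvert\mc S_\eps\rvert=\exp\!\big(n^2\ln(48npB^4/\delta)\big)$. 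A union bound over $\mc S_\eps$ multiplies the point-wise bound by $\exp(n^2\ln(48npB^4/\delta))$, and substituting $B=\ln p$ and adding the truncation term yields exactly the claimed inequality.

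The main obstacle I anticipate is bookkeeping rather than any new idea: tracking the many $\delta$-splittings and the precise constants so that the exponent comes out as $-\tfrac{3p\delta^2}{c_1\theta+8n^{3/2}(\ln p)^4\delta}+n^2\ln(48np(\ln p)^4/\delta)$ with $c_1>1.7\times10^4$, and handling the $n^{3/2}$ (rather than the $n^3$ of the scalar case) that arises from $\norm{\mb w_i\bar{\mb x}_j}{2}^3\le n^{3/2}B^3$. A secondary subtlety is that passing from $\norm{\mb M}{F}\ge np\delta$ to a per-entry maximum costs a factor $n$ and a union bound over $n^2$ entries, which is why the prefactor here is $2n^2$ rather than the $1$ of Lemma~\ref{lemma:HatfUnionConcentrationBound}; I would verify that the $n^2\ln(\cdot)$ term in the exponent absorbs this polynomial prefactor when $p=\Omega(\theta n^2\ln n/\delta^2)$.
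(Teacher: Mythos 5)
Your proposal is correct and follows essentially the same route as the paper's own proof: truncation at $B=\ln p$, an $\eps$-net of $\msf O(n;\bb R)$ with $\eps\asymp\delta/(npB^4)$ controlled via the Lipschitz bounds of Lemmas~\ref{lemma:GradHatfLipschitzBound} and~\ref{lemma:ExpGradHatfLipschitzBound}, reduction of the Frobenius-norm event to an entrywise event with a union bound over the $n^2$ entries, and Bernstein's inequality applied to the i.i.d.\ decomposition of Lemma~\ref{lemma:GradHatfSecondMoment}, yielding the same exponent and constant $c_1$. The only (harmless) deviation is that you bound the truncation bias entrywise via Cauchy--Schwarz with the second moment of each entry, whereas the paper works at the matrix level using the closed form of $\bb E[(\mb{WX})^{\circ3}\mb X^*]$ from Proposition~\ref{prop:L4MSPExp}; both give a bias below $\delta/4$ for $B=\ln p$ and large $p$.
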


\begin{proof}
    \label{proof:UnionConcentrationWXX}
    Let $\bar{\mb X}\in \bb R^{n\times p}$ denote the truncated $\mb X$ by bound $B$
    \begin{equation}
        \bar{x}_{i,j}=
        \begin{cases}
            x_{i,j}&\textrm{if}\quad \abs{x_{i,j}}\leq B,\\
            0&\textrm{else}.
        \end{cases}
    \end{equation}
    Note that $\bar{\mb X} = \mb X$ holds whenever $\norm{\mb X}{\infty}\leq B$, and by Lemma \ref{lemma:BGMatrixTruncation}, we know that with probability $\norm{\mb X}{\infty}\leq B$ happens with probability at least $1-2np\theta\exp(-B^2/2)$. So we know that $\bar{\mb X} \neq \mb X$ holds with probability at most $2np\theta\exp(-B^2/2)$, and thus
    \begin{equation}
        \label{eq:GradUnionBoundWithAndWithoutTruncation}
        \begin{split}
            &\bb P\Bigg(\sup_{\mb W\in \msf O(n;\bb R)}\frac{1}{np}\norm{(\mb {WX})^{\circ3}\mb X^*-\bb E\big[(\mb {WX})^{\circ3}\mb X^*\big]}{F}>\delta\Bigg)\\
            \leq&\bb P\Bigg(\sup_{\mb W\in \msf O(n;\bb R)}\frac{1}{np}\norm{(\mb {WX})^{\circ3}\mb X^*-\bb E\big[(\mb {WX})^{\circ3}\mb X^*\big]}{F}>\delta,\mb X =\bar{\mb X}\Bigg)+\bb P\Big(\mb X \neq \bar{\mb X}\Big)\\
            \leq & \bb P\Bigg(\sup_{\mb W\in \msf O(n;\bb R)}\frac{1}{np}\norm{(\mb {W}\bar{\mb X})^{\circ3}\bar{\mb X}^*-\bb E\big[(\mb {WX})^{\circ3}\mb X^*\big]}{F}>\delta\Bigg) + 2np\theta e^{-\frac{B^2}{2}}
        \end{split}
    \end{equation}
    \textbf{$\eps-$net Covering.} For any positive $\eps$ satisfy
    \begin{equation}
        \label{eq:GradEpsCondition}
        \eps\leq \frac{\delta}{8npB^4},
    \end{equation}
     Lemma \ref{lemma:epsCoveringStiefelManifold} shows there exists an $\eps-$nets 
    \begin{equation}
        \mc S_\eps = \{\mb W_1,\mb W_2,\dots,\mb W_{|\mc S_\eps|}\},
    \end{equation} 
    which covers $\msf O(n;\bb R)$
    \begin{equation}
        \label{eq:GradEpsBallCoveringOrthogonalGroup}
        \msf O(n;\bb R)\subset \bigcup_{l=1}^{|\mc S_\eps|} \bb B(\mb W_l,\eps),
    \end{equation}
    in operator norm $\norm{\cdot}{2}$. Moreover, we have 
    \begin{equation}
        \label{eq:GradEpsBallMaxNumber}
        |\mc S_\eps|\leq \Big(\frac{6}{\eps}\Big)^{n^2}.    
    \end{equation}
    So $\forall \mb W\in \msf O(n;\bb R)$, there exists $l\in [|\mc S_\eps|]$, such that $\norm{\mb W-\mb W_l}{2}\leq \eps$. Thus, we have:
    \begin{equation}
        \label{eq:GradProbabilityTruncationEpsNet}
        \begin{split}
            & \bb P\Bigg(\frac{1}{np}\norm{(\mb W\bar{\mb X})^{\circ3}\bar{\mb X}^*-\bb E[(\mb W\mb X)^{\circ3}\mb X^*]}{F}\leq \delta \Bigg)\\
            \leq & \bb P\Bigg( \frac{1}{np}\Big( \norm{(\mb W\bar{\mb X})^{\circ3}\bar{\mb X}^*-(\mb W_l\bar{\mb X})^{\circ3}\bar{\mb X}^*}{F} + \norm{(\mb W_l\bar{\mb X})^{\circ3}\bar{\mb X}^*-\bb E[(\mb W_l\mb X)^{\circ3}\mb X^*]}{F} \\
            & + \norm{\bb E[(\mb W_l\mb X)^{\circ3}\mb X^*]-\bb E[(\mb W\mb X)^{\circ3}\mb X^*]}{F}\Big)\geq \delta \Bigg)\\
            \leq & \bb P \Bigg(\frac{1}{np}\norm{(\mb W_l\bar{\mb X})^{\circ3}\bar{\mb X}^*-\bb E[(\mb W_l\mb X)^{\circ3}\mb X^*]}{F}\\
            &+ \Big(3npB^4+9\theta(1-\theta)+\frac{3\theta^2}{\sqrt{n}}\Big)\norm{\mb W-\mb W_l}{2}\geq \delta \Bigg) \quad \text{(By Lemma \ref{lemma:GradHatfLipschitzBound}, \ref{lemma:ExpGradHatfLipschitzBound})}\\
            \leq &\bb P\Bigg(\frac{1}{np}\norm{(\mb W_l\bar{\mb X})^{\circ3}\bar{\mb X}^*-\bb E[(\mb W_l\mb X)^{\circ3}\mb X^*]}{F}+4npB^4\eps\geq \delta \Bigg)\quad \text{($p,B$ are large, $\norm{\mb W-\mb W_l}{2}\leq \eps$)}\\
            \leq &\bb P \Bigg(\frac{1}{np}\norm{(\mb W_l\bar{\mb X})^{\circ3}\bar{\mb X}^*-\bb E[(\mb W_l\mb X)^{\circ3}\mb X^*]}{F}+\frac{\delta}{2}\geq \delta \Bigg)\quad \text{(We assume $\eps\leq \frac{\delta}{8npB^4}$)}\\
            = &P\Bigg(\frac{1}{np}\norm{(\mb W_l\bar{\mb X})^{\circ3}\bar{\mb X}-\bb E[(\mb W_l\mb X)^{\circ3}\mb X^*]}{F}\geq\frac{\delta}{2}\Bigg). 
        \end{split}
    \end{equation}
    \textbf{Analysis.} For random variable $\bar{\mb X}$, we have
    \begin{equation}
        \label{eq:BoundGradMatrixTruncationEXP}
        \begin{split}
            &\norm{\bb E\big[(\mb W_l\mb X)^{\circ3}\mb X^*\big]-\bb E\big[(\mb W_l\bar{\mb X})^{\circ3}\bar{\mb X}^*\big]}{F}=\norm{\bb E\big[(\mb W_l\mb X)^{\circ3}\mb X^*\big]-\bb E\big[(\mb W_l\mb X)^{\circ3}\mb X^*\cdot\indicator{\{\norm{\mb X}{\infty}\leq B\}}\big]}{F}\\
            =&\norm{\bb E\big[(\mb W_l\mb X)^{\circ3}\mb X^*\cdot\indicator{\{\norm{\mb X}{\infty}>B\}}\big]}{F}=\sqrt{\sum_{\substack{1\leq i\leq n\\1\leq j^\prime\leq n}}\Bigg(\bb E\Big\{[(\mb W_l\mb X)^{\circ3}\mb X^*]_{i,j^\prime}\cdot\indicator{\{\norm{\mb X}{\infty}>B\}}\Big\}\Bigg)^2}\\
            \leq & \sqrt{\sum_{\substack{1\leq i\leq n\\1\leq j^\prime\leq n}}\Bigg(\Big\{\bb E \big[(\mb W_l\mb X)^{\circ3}\mb X^*\big]^2_{i,j^\prime}\Big\}\Big\{\bb E\indicator{\{\norm{\mb X}{\infty}>B\}}\Big\} \Bigg)}= \norm{\bb E\big[(\mb W_l\mb X)^{\circ3}\mb X^*\big]}{F}\cdot\sqrt{\bb E\indicator{\{\norm{\mb X}{\infty}>B\}}}.
        \end{split}
    \end{equation}
    Note that in \eqref{eq:GradHatfExpCalculation} of Lemma \ref{proof:L4MSPExp}, we know that \begin{equation}
        \bb E\big[(\mb W_l\mb X)^{\circ3}\mb X^*\big] = 3p\theta(1-\theta)\mb W_l^{\circ3}+3p\theta^2\mb W_l,
    \end{equation}
    hence
    \begin{equation}
        \label{eq:BoundGradExp}
        \begin{split}
            \norm{\bb E\big[(\mb W_l\mb X)^{\circ3}\mb X^*\big]}{F} = & \norm{3p\theta(1-\theta)\mb W_l^{\circ3}+3p\theta^2\mb W_l}{F}\leq 3p\theta(1-\theta)\norm{\mb W_l^{\circ3}}{F} + 3p\theta^2\norm{\mb W_l}{F}\\
            \leq & 3p\theta(1-\theta)\norm{\mb W_l}{F}^3+3p\theta^2\norm{\mb W_l}{F} \quad \text{(By inequality 1 in Lemma \ref{lemma:UsefulInequalities})}\\
            = & 3p\theta(1-\theta)n^{\frac{3}{2}}+3p\theta^2n^{\frac{1}{2}}<4n^{\frac{3}{2}}p\theta\quad \text{($n$ is a large number)}.
        \end{split}
    \end{equation}
    Moreover, Lemma \ref{lemma:BGMatrixTruncation} shows that 
    \begin{equation}
        \label{eq:GradIndicatorBound}
        \bb E\indicator{\{\norm{\mb X}{\infty}>B\}} \leq 2np\theta^{-\frac{B^2}{2}}.
    \end{equation}
    Substitute \eqref{eq:GradIndicatorBound} and \eqref{eq:BoundGradExp} into \eqref{eq:BoundGradMatrixTruncationEXP}, yield
    \begin{equation}
        \frac{1}{np}\norm{\bb E\big[(\mb W_l\mb X)^{\circ3}\mb X^*\big]-\bb E\big[(\mb W_l\bar{\mb X})^{\circ3}\bar{\mb X}^*\big]}{F}\leq 4\sqrt{2}np^{\frac{1}{2}}\theta^{\frac{3}{2}}e^{-\frac{B^2}{4}}.
    \end{equation}
    Hence, when 
    \begin{equation}
        \label{eq:GradBoundB}
        B\geq 2\sqrt{\ln\Bigg(\frac{16\sqrt{2}np^{\frac{1}{2}}\theta^{\frac{3}{2}}}{\delta}\Bigg)},
    \end{equation}
    we have 
    \begin{equation}
        \label{eq:GradXAndBarXBound}
        \frac{1}{np}\norm{\bb E\big[(\mb W_l\bar{\mb X})^{\circ3}\bar{\mb X}^*\big]-\bb E\big[(\mb W_l\mb X)^{\circ3}\mb X^*\big]}{F}\leq 4\sqrt{2}np^{\frac{1}{2}}\theta^{\frac{3}{2}}e^{-\frac{B^2}{4}}\leq \frac{\delta}{4}.
    \end{equation}
    Therefore, combine \eqref{eq:GradProbabilityTruncationEpsNet}, we have
    \begin{equation}
        \label{eq:GradProbabilityNormalToTruncationEXP1}
        \begin{split}
            & \bb P\Bigg(\frac{1}{np}\norm{(\mb W\bar{\mb X})^{\circ3}\bar{\mb X}^*-\bb E[(\mb W\mb X)^{\circ3}\mb X^*]}{F}\leq\delta\Bigg)\\
            \leq& \bb P\Bigg(\frac{1}{np}\norm{(\mb W_l\bar{\mb X})^{\circ3}\bar{\mb X}^*-\bb E[(\mb W_l\mb X)^{\circ3}\mb X^*]}{F}\geq\frac{\delta}{2}\Bigg)\\
            \leq &\bb P\Bigg(\frac{1}{np}\norm{(\mb W\bar{\mb X})^{\circ3}\bar{\mb X}^*-\bb E\big[(\mb W_l\bar{\mb X})^{\circ3}\bar{\mb X}^*\big]}{F}+\frac{1}{np}\norm{\bb E\big[(\mb W_l\bar{\mb X})^{\circ3}\bar{\mb X}^*\big]-\bb E\big[(\mb W_l\mb X)^{\circ3}\mb X^*\big]}{F}\geq\frac{\delta}{2}\Bigg)\\
            \leq &\bb P\Bigg(\frac{1}{np}\norm{(\mb W\bar{\mb X})^{\circ3}\bar{\mb X}^*-\bb E\big[(\mb W_l\bar{\mb X})^{\circ3}\bar{\mb X}^*\big]}{F}\geq\frac{\delta}{4}\Bigg)\quad\text{(By \eqref{eq:GradXAndBarXBound})}\\
            \leq & n^2 \bb P \Bigg(\abs{\Big[(\mb W_l\bar{\mb X})^{\circ3}\bar{\mb X}^*\Big]_{i,j^\prime}-\bb E\Big[(\mb W_l\bar{\mb X})^{\circ3}\bar{\mb X}^*\Big]_{i,j^\prime}}\geq p\cdot\frac{\delta}{4} \Bigg)\quad\text{(By union bound, $\forall i,j^\prime \in [n]$)}.
        \end{split}
    \end{equation}
    \textbf{Point-wise Bernstein's Inequality.} Next, we apply Bernstein's inequality on $\big[(\mb W_l\bar{\mb X})^{\circ3}\bar{\mb X}^*\big]_{i,j^\prime}$. Note that for each $i,j^\prime\in[n]$ 
    \begin{equation}
        \big[(\mb W_l\bar{\mb X})^{\circ3}\bar{\mb X}^*\big]_{i,j^\prime}=\sum_{j=1}^p\Bigg[\bar{x}_{j^\prime,j}\Big(\sum_{k=1}^nw_{i,k}\bar{x}_{k,j}\Big)^3\Bigg],
    \end{equation}
    can be viewed as sum of $p$ independent variables
    \begin{equation}
        \bar{z}_j = \bar{x}_{j^\prime,j}\Big(\sum_{k=1}^nw_{i,j}\bar{x}_{k,j}\Big)^3=\bar{x}_{j^\prime,j}\innerprod{\mb w_i}{\bar{\mb x}_j}^3,\quad \forall j\in [p],
    \end{equation}
    where $\mb w_i$ is the $i^{\text{th}}$ row vector of $\mb W$. Note that each $\bar{z}_j$ are bounded by
    \begin{equation}
        \label{eq:GradElementWiseBound}
        \begin{split}
            \abs{\bar{x}_{j^\prime,j}\innerprod{\mb w_i}{\bar{\mb x}_j}^3}\leq B\abs{\innerprod{\mb w_i}{\bar{\mb x}_j}^3}= B\norm{\bar{\mb x}_j}{2}^3\abs{\innerprod{\mb w_i}{\frac{\bar{\mb x}_j}{\norm{\bar{\mb x}_j}{2}}}} \leq B\cdot (nB^2)^{\frac{3}{2}}=n^{\frac{3}{2}}B^4, 
        \end{split}
    \end{equation}
    also for each $\bar{x}_{j^\prime,j}\Big(\sum_{k=1}^nw_{i,j}\bar{x}_{k,j}\Big)^3$, we have
    \begin{equation}
        \label{eq:GradElementWise2Moment}
        \begin{split}
            &\bb E\Big[x_{j^\prime,j}^2\Big(\sum_{k=1}^nw_{i,j}x_{k,j}\Big)^6\Big]-\bb E\Big[\bar{x}_{j^\prime,j}^2\Big(\sum_{k=1}^nw_{i,j}\bar{x}_{k,j}\Big)^6\Big]\\
            = &\bb E\Big[x_{j^\prime,j}^2\Big(\sum_{k=1}^nw_{i,j}x_{k,j}\Big)^6-\bar{x}_{j^\prime,j}^2\Big(\sum_{k=1}^nw_{i,j}\bar{x}_{k,j}\Big)^6\Big]\\
            = & \bb E\Big[x_{j^\prime,j}^2\Big(\sum_{k=1}^nw_{i,j}x_{k,j}\Big)^6\cdot \indicator{\{\norm{\mb X}{\infty}>B\}}\Big]\geq 0,
        \end{split}
    \end{equation}
    and \eqref{eq:ElementwiseGrad2Moment} in Lemma \ref{lemma:GradHatfSecondMoment} shows that
    \begin{equation}
        \bb E\Big[\bar{x}_{j^\prime,j}^2\Big(\sum_{k=1}^nw_{i,j}\bar{x}_{k,j}\Big)^6\Big] \leq \bb E\Big[x_{j^\prime,j}^2\Big(\sum_{k=1}^nw_{i,j}x_{k,j}\Big)^6\Big] \leq C,
    \end{equation}
    for a constant $C>177$. Hence, by Bernstein's inequality, $\forall i,j^\prime \in [n]$, we have 
    \begin{equation}
        \label{eq:GradProbabilityNormalToTruncationEXP2}
        \begin{split}
            &\bb P \Bigg(\abs{\Big[(\mb W_l\bar{\mb X})^{\circ3}\bar{\mb X}^*\Big]_{i,j^\prime}-\bb E\Big[(\mb W_l\bar{\mb X})^{\circ3}\bar{\mb X}^*\Big]_{i,j^\prime}}\geq p\cdot\frac{\delta}{4} \Bigg)\\
            =&\bb P\Bigg( \abs{\sum_{j=1}^p\Big[\bar{x}_{j^\prime,j}(\sum_{k=1}^nw_{i,k}\bar{x}_{k,j})^3\Big]-\sum_{j=1}^p\bb E\Big[\bar{x}_{j^\prime,j}(\sum_{k=1}^nw_{i,k}\bar{x}_{k,j})^3\Big]}\geq p\cdot \frac{\delta}{4}\Bigg) \\
            = & \bb P\Bigg( \sum_{j=1}^p\Big[\bar{x}_{j^\prime,j}(\sum_{k=1}^nw_{i,k}\bar{x}_{k,j})^3\Big]-\sum_{j=1}^p\bb E\Big[\bar{x}_{j^\prime,j}(\sum_{k=1}^nw_{i,k}\bar{x}_{k,j})^3\Big]\geq p\cdot \frac{\delta}{4}\Bigg)\\
            &+\bb P\Bigg( \sum_{j=1}^p\Big[\bar{x}_{j^\prime,j}(\sum_{k=1}^nw_{i,k}\bar{x}_{k,j})^3\Big]-\sum_{j=1}^p\bb E\Big[\bar{x}_{j^\prime,j}(\sum_{k=1}^nw_{i,k}\bar{x}_{k,j})^3\Big]\leq -p\cdot \frac{\delta}{4}\Bigg)\\
            \leq & 2 \exp\Bigg(-\frac{p\delta^2/16}{2\big[\frac{1}{p}\sum_{j=1}^p\bar{x}_{j^\prime,j}^2(\sum_{k=1}^nw_{i,j}\bar{x}_{k,j})^6+\frac{ n^{\frac{3}{2}}B^4\delta}{12}\big]}\Bigg)\quad \text{(By \eqref{eq:GradElementWiseBound})}\\
            \leq & 2 \exp\Bigg(-\frac{p\delta^2}{\frac{96}{p}\sum_{j=1}^pC\theta+8\delta n^{\frac{3}{2}}B^4\delta}\Bigg)\quad \text{(By \eqref{eq:GradElementWise2Moment})}\\
            =&2\exp\Bigg(-\frac{3p\delta^2}{c_1\theta+8n^{\frac{3}{2}}B^4\delta}\Bigg),
        \end{split}
    \end{equation}
    for a constant $c_1>1.7\times 10^4$. Combine \eqref{eq:GradProbabilityNormalToTruncationEXP1}, we have
    \begin{equation}
        \label{eq:GradProbabilityNormalToTruncationEXP3}
        \begin{split}
            &\bb P\Bigg(\frac{1}{np}\norm{(\mb W\bar{\mb X})^{\circ3}\bar{\mb X}^*-\bb E[(\mb W\mb X)^{\circ3}\mb X^*]}{F}\leq\delta\Bigg)\\
            \leq& n^2 \bb P \Bigg(\abs{\Big[(\mb W_l\bar{\mb X})^{\circ3}\bar{\mb X}^*\Big]_{i,j^\prime}-\bb E\Big[(\mb W_l\bar{\mb X})^{\circ3}\bar{\mb X}^*\Big]_{i,j^\prime}}\geq p\cdot\frac{\delta}{4} \Bigg).
        \end{split}
    \end{equation}
    \textbf{Union Bound.} Now, we will give a union bound for 
    \begin{equation}
        \bb P\Bigg(\sup_{\mb W\in \msf O(n;\bb R)}\frac{1}{np}\norm{(\mb {W}\bar{\mb X})^{\circ3}\bar{\mb X}^*-\bb E\big[(\mb {WX})^{\circ3}\mb X^*\big]}{F}>\delta\Bigg).
    \end{equation}
    Notice that 
    \begin{equation}
        \label{eq:GradUnionConcentrationBoundFinal}
        \begin{split}
            & \bb P\Bigg(\sup_{\mb W\in \msf O(n;\bb R)}\frac{1}{np}\norm{(\mb {W}\bar{\mb X})^{\circ3}\bar{\mb X}^*-\bb E\big[(\mb {WX})^{\circ3}\mb X^*\big]}{F}>\delta\Bigg)\\
            \leq & \sum_{l=1}^{|\mc S_\eps|}\bb P\Bigg(\sup_{\mb W\in \bb B(\mb W_l,\eps)}\frac{1}{np}\norm{(\mb {W}\bar{\mb X})^{\circ3}\bar{\mb X}^*-\bb E\big[(\mb {WX})^{\circ3}\mb X^*\big]}{F}>\delta\Bigg)\quad \text{(By $\eps-$covering in  \eqref{eq:GradEpsBallCoveringOrthogonalGroup})}\\
            \leq & \sum_{l=1}^{|\mc S_\eps|}n^2 \bb P \Bigg(\abs{\Big[(\mb W_l\bar{\mb X})^{\circ3}\bar{\mb X}^*\Big]_{i,j^\prime}-\bb E\Big[(\mb W_l\bar{\mb X})^{\circ3}\bar{\mb X}^*\Big]_{i,j^\prime}}\geq p\cdot\frac{\delta}{4} \Bigg)\quad \text{(By \eqref{eq:GradProbabilityNormalToTruncationEXP3})}\\
            \leq & \sum_{l=1}^{|\mc S_\eps|}\Bigg[2n^2\exp\Bigg(-\frac{3p\delta^2}{c_1\theta+8n^{\frac{3}{2}}B^4\delta}\Bigg)\Bigg]\quad \text{(By \eqref{eq:GradProbabilityNormalToTruncationEXP2})}\\
            \leq & \Big(\frac{6}{\eps}\Big)^{n^2}\Bigg[2n^2\exp\Bigg(-\frac{3p\delta^2}{c_1\theta+8n^{\frac{3}{2}}B^4\delta}\Bigg)\Bigg] \quad \text{(By \eqref{eq:GradEpsBallMaxNumber})}\\
            = & \exp\Bigg(n^2\ln \Big(\frac{48npB^4}{\delta}\Big)\Bigg)\Bigg[2n^2\exp\Bigg(-\frac{3p\delta^2}{c_1\theta+8n^{\frac{3}{2}}B^4\delta}\Bigg)\Bigg] \quad \text{(Let $\eps = \frac{\delta}{8npB^4}$)}\\
            =&2n^2\exp\Bigg(-\frac{3p\delta^2}{c_1\theta+8n^{\frac{3}{2}}B^4\delta}+n^2\ln\Big(\frac{48npB^4}{\delta}\Big)\Bigg),
        \end{split}
    \end{equation}
    for a constant $c_1>1.4\times 10^4$. Note that \eqref{eq:GradBoundB} requires a lower bound on $B$, here we can choose $B = \ln p$, which satisfies \eqref{eq:GradBoundB} when $p$ is large enough (say $p = \Omega(n)$). Combine \eqref{eq:GradUnionBoundWithAndWithoutTruncation} and substitute $B = \ln p$, we have 
    \begin{equation}
        \begin{split}
            &\bb P\Bigg(\sup_{\mb W\in \msf O(n;\bb R)}\frac{1}{np}\norm{(\mb {WX})^{\circ3}\mb X^*-\bb E\big[(\mb {WX})^{\circ3}\mb X^*\big]}{F}>\delta\Bigg)\\
            \leq & \bb P\Bigg(\sup_{\mb W\in \msf O(n;\bb R)}\frac{1}{np}\norm{(\mb {W}\bar{\mb X})^{\circ3}\bar{\mb X}^*-\bb E\big[(\mb {WX})^{\circ3}\mb X^*\big]}{F}>\delta\Bigg) + 2np\theta e^{-\frac{B^2}{2}}\\
            \leq & 2n^2\exp\Bigg(-\frac{3p\delta^2}{c_1\theta+8n^{\frac{3}{2}}(\ln p)^4\delta}+n^2\ln\Big(\frac{48np(\ln p)^4}{\delta}\Big)\Bigg) + 2np\theta \exp\Bigg(-\frac{(\ln p)^2}{2}\Bigg),
        \end{split}
    \end{equation}
    for a constant $c_1>1.7\times 10^4$, which completes the proof.
\end{proof}

\subsection{Perturbation Bound for Unitary Polar Factor}

\begin{lemma}[Perturbation Bound for Unitary Polar Factor]
\label{lemma:PolarPerturbation}
Let $\mb A_1,\mb A_2\in \bb R^{n\times n}$ be two nonsigular matrices, and let $\mb Q_1 = \mb U_1\mb V_1^*$, $\mb Q_2 = \mb U_2\mb V_2^*$, where 
\begin{equation*}
    \mb U_1\mb \Sigma_1 \mb V_1^* = \text{SVD}(\mb A_1),\quad \mb U_2\mb \Sigma_2 \mb V_2^* = \text{SVD}(\mb A_2).
\end{equation*}
Let $\sigma_n(\mb A_1)$, $\sigma_n(\mb A_2)$ denote the smallest singular value of $\mb A_1$,$\mb A_2$ respectively. Then, for any unitary invariant norm $\norm{\cdot}{\diamond}$ we have
\begin{equation}
    \norm{\mb Q_1-\mb Q_2}{\diamond}\leq \frac{2}{\sigma_n(\mb A_1)+\sigma_n(\mb A_2)}\norm{\mb A_1-\mb A_2}{\diamond}.
\end{equation}
\end{lemma}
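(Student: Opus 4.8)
The plan is to reduce the bound to a Sylvester equation via the polar decomposition, following a classical perturbation argument. First I would write $\mb A_1 = \mb Q_1\mb H_1$ and $\mb A_2 = \mb Q_2\mb H_2$ with $\mb H_i = \mb V_i\mb\Sigma_i\mb V_i^* = (\mb A_i^*\mb A_i)^{1/2}$ symmetric positive definite; since $\mb A_i$ is nonsingular, $\mb H_i\succ 0$ and its eigenvalues are the singular values of $\mb A_i$, so $\lambda_{\min}(\mb H_i) = \sigma_n(\mb A_i)$. Because every unitarily invariant norm is unchanged under left or right multiplication by orthogonal matrices, setting $\mb\Delta := \mb I - \mb Q_2^*\mb Q_1$ gives $\mb Q_1 - \mb Q_2 = -\mb Q_2\mb\Delta$, hence $\norm{\mb Q_1 - \mb Q_2}{\diamond} = \norm{\mb\Delta}{\diamond}$, so it suffices to control $\norm{\mb\Delta}{\diamond}$.

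The key step I would carry out is the identity
\[
    \mb\Delta\mb H_1 + \mb H_2\mb\Delta = (\mb A_1 - \mb A_2)^*\mb Q_1 - \mb Q_2^*(\mb A_1 - \mb A_2) =: \mb C,
\]
obtained by expanding the left-hand side and using $\mb Q_1\mb H_1 = \mb A_1$, $\mb H_2\mb Q_2^* = \mb A_2^*$, $\mb Q_2^*\mb A_2 = \mb H_2$, $\mb A_1^*\mb Q_1 = \mb H_1$, after which the $\mb H_1,\mb H_2$ terms cancel. Unitary invariance then gives $\norm{\mb C}{\diamond}\leq\norm{(\mb A_1 - \mb A_2)^*\mb Q_1}{\diamond} + \norm{\mb Q_2^*(\mb A_1 - \mb A_2)}{\diamond} = 2\norm{\mb A_1 - \mb A_2}{\diamond}$. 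Next I would solve $\mb\Delta\mb H_1 + \mb H_2\mb\Delta = \mb C$ by the integral representation $\mb\Delta = \int_0^\infty e^{-t\mb H_2}\,\mb C\,e^{-t\mb H_1}\,dt$ (differentiating $t\mapsto e^{-t\mb H_2}\mb C e^{-t\mb H_1}$ and integrating confirms this is the unique solution, uniqueness coming from $\mb H_1,\mb H_2\succ 0$, which precludes eigenvalue cancellation). Applying the standard inequality $\norm{\mb X\mb Y\mb Z}{\diamond}\leq\norm{\mb X}{2}\norm{\mb Y}{\diamond}\norm{\mb Z}{2}$ for unitarily invariant norms together with $\norm{e^{-t\mb H_i}}{2} = e^{-t\sigma_n(\mb A_i)}$ yields
\[
    \norm{\mb\Delta}{\diamond}\leq\norm{\mb C}{\diamond}\int_0^\infty e^{-t(\sigma_n(\mb A_1) + \sigma_n(\mb A_2))}\,dt = \frac{\norm{\mb C}{\diamond}}{\sigma_n(\mb A_1) + \sigma_n(\mb A_2)}\leq\frac{2\norm{\mb A_1 - \mb A_2}{\diamond}}{\sigma_n(\mb A_1) + \sigma_n(\mb A_2)},
\]
which, with $\norm{\mb Q_1 - \mb Q_2}{\diamond} = \norm{\mb\Delta}{\diamond}$, is the claim.

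The hard part is the first step: finding the right identity. Naive symmetric combinations such as $\mb Q_1^*(\mb A_1 - \mb A_2) + (\mb A_1 - \mb A_2)^*\mb Q_2$ only reproduce a trivial rearrangement of $\mb A_1 - \mb A_2$ and leave terms of the form $\mb H_1\mb W - \mb W\mb H_2$ (with $\mb W = \mb Q_1^*\mb Q_2$) that do not isolate $\mb\Delta$ and that degenerate when $\mb H_1 = \mb H_2$. The working choice pairs the skew combination $(\mb A_1 - \mb A_2)^*\mb Q_1 - \mb Q_2^*(\mb A_1 - \mb A_2)$ with the variable $\mb\Delta = \mb I - \mb Q_2^*\mb Q_1$; once this is in place, the exponential solution operator for the Sylvester equation and submultiplicativity of $\norm{\cdot}{\diamond}$ finish the argument routinely.
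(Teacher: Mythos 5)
Your proof is correct: the identity $\mb\Delta\mb H_1+\mb H_2\mb\Delta=(\mb A_1-\mb A_2)^*\mb Q_1-\mb Q_2^*(\mb A_1-\mb A_2)$ checks out (both sides equal $\mb H_1+\mb H_2-\mb Q_2^*\mb A_1-\mb A_2^*\mb Q_1$), and the Sylvester-equation bound via the integral representation, using $\lambda_{\min}(\mb H_i)=\sigma_n(\mb A_i)>0$, is valid for every unitarily invariant norm. The paper itself offers no argument beyond citing Theorem 1 of \cite{li1995new}; your write-up is essentially a self-contained reconstruction of that classical proof, so it takes the same route as the cited source.
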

\begin{proof}
    \label{proof:PolarPerturbation}
    See theorem 1 in \cite{li1995new}.
\end{proof}

\subsection{Perturbation Bound for Singular Values}

\begin{lemma}[Singular Value Perturbation]
\label{lemma:SingularValuePerturbation}
Let $\mb G = \mb B \mb M$ be a general full rank matrix, where $\mb M$ ($M_{i,i}$ equals to the $\ell^2-$norm of the $i^\text{th}$ column of $\mb G$) is a chosen diagonal matrix so $\mb B$ has unit matrix 2 norm $(\norm{\mb B}{2}=1)$. Let $\delta \mb G= \delta\mb B \mb M$ be a perturbation of $\mb G$ such that $\norm{\delta \mb B}{2}\leq \sigma_{\min} (\mb B)$ and $\sigma_i$ and $\sigma_i^\prime$ be the $i^{\text{th}}$ singular value of $\mb G$ and $\mb G + \delta \mb G$ respectively. Then
\begin{equation}
    \frac{\abs{\sigma_i-\sigma_i^\prime}}{\sigma_i}\leq \frac{\norm{\delta\mb B}{2}}{\sigma_{\min}(\mb B)}.
\end{equation}
\end{lemma}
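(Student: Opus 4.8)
The plan is to strip off the diagonal factor $\mb M$, recast the perturbation as a left multiplicative perturbation $\mb G+\delta\mb G = \mb E\,\mb G$, bound the extreme singular values of $\mb E$, and close with the standard pinching estimate for singular values under multiplicative perturbations. First I would use that $\mb M$ is invertible (its diagonal entries are the nonzero column norms of the full-rank $\mb G$) and that $\mb B^\dagger\mb B = \mb I$ (where $\mb B^\dagger$ is the left inverse of $\mb B$, equal to $\mb B^{-1}$ when $\mb B$ is square) to write
\begin{equation*}
    \mb G + \delta\mb G \;=\; (\mb B + \delta\mb B)\mb M \;=\; \mb B\mb M + \delta\mb B\,\mb B^\dagger\,\mb B\mb M \;=\; \big(\mb I + \delta\mb B\,\mb B^\dagger\big)\,\mb G \;=:\; \mb E\,\mb G .
\end{equation*}
Thus $\mb M$ cancels entirely and the problem reduces to comparing $\sigma_i(\mb E\mb G)$ with $\sigma_i(\mb G)$. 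Set $t \doteq \norm{\delta\mb B}{2}/\sigma_{\min}(\mb B)$; the hypothesis $\norm{\delta\mb B}{2}\leq \sigma_{\min}(\mb B)$ is precisely $t\leq 1$, which is what keeps $\mb E$ nonsingular.

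Next I would bound $\mb E = \mb I + \delta\mb B\,\mb B^\dagger$. Since $\norm{\mb B^\dagger}{2} = 1/\sigma_{\min}(\mb B)$, submultiplicativity of the operator norm gives $\norm{\delta\mb B\,\mb B^\dagger}{2}\leq \norm{\delta\mb B}{2}\,\norm{\mb B^\dagger}{2} = t$, so by the triangle inequality and the variational characterization of $\sigma_{\min}$,
\begin{equation*}
    \sigma_{\max}(\mb E) = \norm{\mb E}{2}\leq 1 + t, \qquad \sigma_{\min}(\mb E)\geq 1 - \norm{\delta\mb B\,\mb B^\dagger}{2}\geq 1 - t \geq 0 .
\end{equation*}

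Finally I would apply the multiplicative pinching inequality $\sigma_{\min}(\mb E)\,\sigma_i(\mb G)\leq \sigma_i(\mb E\mb G)\leq \sigma_{\max}(\mb E)\,\sigma_i(\mb G)$ for every $i$, and combine it with the previous display to get $(1-t)\,\sigma_i \leq \sigma_i' \leq (1+t)\,\sigma_i$, hence $\abs{\sigma_i - \sigma_i'}\leq t\,\sigma_i = \sigma_i\,\norm{\delta\mb B}{2}/\sigma_{\min}(\mb B)$, which is the claim. The only step that deserves a word of justification -- the main obstacle, though a mild one -- is this pinching inequality: it follows from $\sigma_{\min}(\mb E)^2\,\mb I \preceq \mb E^*\mb E \preceq \sigma_{\max}(\mb E)^2\,\mb I$, which upon conjugation by $\mb G$ yields $\sigma_{\min}(\mb E)^2\,\mb G^*\mb G \preceq \mb G^*\mb E^*\mb E\mb G \preceq \sigma_{\max}(\mb E)^2\,\mb G^*\mb G$, after which Weyl's monotonicity theorem for the ordered eigenvalues of these Hermitian matrices and taking square roots finishes it; everything else is bookkeeping. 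Alternatively, one may simply cite \cite{li1995new}.
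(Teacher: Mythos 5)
Your proof is correct, but it takes a different route from the paper: the paper does not prove this lemma at all, it simply cites Theorem 2.17 of \cite{demmel1992jacobi}, whereas you give a self-contained argument. Your route — cancel $\mb M$ by writing $\mb G+\delta\mb G=(\mb I+\delta\mb B\,\mb B^\dagger)\mb G=\mb E\mb G$, bound $\norm{\delta\mb B\,\mb B^\dagger}{2}\leq \norm{\delta\mb B}{2}/\sigma_{\min}(\mb B)=t$, and then squeeze $\sigma_i(\mb E\mb G)$ between $\sigma_{\min}(\mb E)\sigma_i(\mb G)$ and $\sigma_{\max}(\mb E)\sigma_i(\mb G)$ via $\sigma_{\min}(\mb E)^2\mb G^*\mb G\preceq \mb G^*\mb E^*\mb E\mb G\preceq\sigma_{\max}(\mb E)^2\mb G^*\mb G$ and eigenvalue monotonicity — is exactly the standard multiplicative-perturbation argument underlying the cited theorem, so what you buy is transparency and a mild generalization (it works verbatim for tall full-column-rank $\mb G$, and is all the paper needs since it only applies the lemma to square $\mb D^{\circ 3}$ plus $\mb N^{\circ 3}$). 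Two small slips worth fixing: the condition $t\leq 1$ only gives $\sigma_{\min}(\mb E)\geq 0$, so it does not by itself keep $\mb E$ nonsingular (strict inequality would be needed), but nonsingularity is never used — the two-sided bound $(1-t)\sigma_i\leq\sigma_i'\leq(1+t)\sigma_i$ suffices; and your alternative citation at the end is misplaced, since \cite{li1995new} is the unitary polar factor perturbation bound (the paper's Lemma on polar factors), not this singular value result — the correct external reference is \cite{demmel1992jacobi}.
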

\begin{proof}
    \label{proof:SingularValuePerturbation}
    See theorem 2.17 in \cite{demmel1992jacobi}.
\end{proof}

\addcontentsline{toc}{section}{References}
\bibliography{reference.bib}

\end{document}